\documentclass[letterpaper, 10 pt, journal]{IEEEtran}

\IEEEoverridecommandlockouts                             
\overrideIEEEmargins



\usepackage{comment}
\usepackage[binary-units]{siunitx}
\usepackage{relsize}
\usepackage{ifthen}
\usepackage[colorinlistoftodos]{todonotes}






\usepackage[vlined,ruled,linesnumbered]{algorithm2e}
\usepackage{graphics} 
\usepackage{rotating}
\usepackage{color}
\usepackage{enumerate}
\usepackage[T1]{fontenc}
\usepackage{psfrag}
\usepackage{epsfig} 
\usepackage{booktabs}
\usepackage{graphicx,url}
\usepackage{multirow}
\usepackage{array}
\usepackage{latexsym}
\usepackage{amsfonts}
\usepackage{amsmath}
\usepackage{amssymb}
\usepackage{xstring}
\usepackage[noend]{algorithmic}
\usepackage{multirow}
\usepackage{xcolor}
\usepackage{prettyref}
\usepackage{flexisym}
\usepackage{bigdelim}
\usepackage{breqn} 
\usepackage{listings}

\usepackage{enumitem}
\usepackage{xspace}
\usepackage{bm}
\graphicspath{{./figures/}}
\usepackage{tikz}
\usetikzlibrary{matrix,calc}


%

\usepackage{mdwlist}

\makecompactlist{itemize}{stditemize}


\usepackage{amsthm,amsmath}
\newtheoremstyle{mystyle}%
  {}%
  {}%
  {\itshape}%
  {}%
  {\bfseries}%
  {.}%
  { }%
  {\thmname{#1}\thmnumber{ #2}\thmnote{ (#3)}}%
\theoremstyle{mystyle}



\newrefformat{prob}{Problem\,\ref{#1}}
\newrefformat{def}{Definition\,\ref{#1}}
\newrefformat{sec}{Section\,\ref{#1}}
\newrefformat{sub}{Section\,\ref{#1}}
\newrefformat{prop}{Proposition\,\ref{#1}}
\newrefformat{app}{Appendix\,\ref{#1}}
\newrefformat{alg}{Algorithm\,\ref{#1}}
\newrefformat{cor}{Corollary\,\ref{#1}}
\newrefformat{thm}{Theorem\,\ref{#1}}
\newrefformat{lem}{Lemma\,\ref{#1}}
\newrefformat{fig}{Fig.\,\ref{#1}}
\newrefformat{tab}{Table\,\ref{#1}}

\newtheorem{theorem}{Theorem}
\newtheorem{problem}{Problem}

\newtheorem{corollary}[theorem]{Corollary}

\newtheorem{lemma}[theorem]{Lemma}

\newtheorem{definition}[theorem]{Definition}
\newtheorem{proposition}[theorem]{Proposition}
\newtheorem{remark}[theorem]{Remark}

\newcommand{\cf}{\emph{cf.}\xspace}

\newcommand{\bdmath}{\begin{dmath}}
\newcommand{\edmath}{\end{dmath}}
\newcommand{\beq}{\begin{equation}}
\newcommand{\eeq}{\end{equation}}
\newcommand{\bdm}{\begin{displaymath}}
\newcommand{\edm}{\end{displaymath}}
\newcommand{\bea}{\begin{eqnarray}}
\newcommand{\eea}{\end{eqnarray}}
\newcommand{\beal}{\beq \begin{array}{ll}}
\newcommand{\eeal}{\end{array} \eeq}
\newcommand{\beas}{\begin{eqnarray*}}
\newcommand{\eeas}{\end{eqnarray*}}
\newcommand{\ba}{\begin{array}}
\newcommand{\ea}{\end{array}}
\newcommand{\bit}{\begin{itemize}}
\newcommand{\eit}{\end{itemize}}
\newcommand{\ben}{\begin{enumerate}}
\newcommand{\een}{\end{enumerate}}


\newcommand{\calA}{{\cal A}}
\newcommand{\calB}{{\cal B}}

\newcommand{\calE}{{\cal E}}

\newcommand{\calG}{{\cal G}}

\newcommand{\calI}{{\cal I}}

\newcommand{\calM}{{\cal M}}
\newcommand{\calN}{{\cal N}}

\newcommand{\calP}{{\cal P}}

\newcommand{\calS}{{\cal S}}

\newcommand{\calV}{{\cal V}}
\newcommand{\calW}{{\cal W}}

\newcommand{\calY}{{\cal Y}}



\newcommand{\etal}{\emph{et~al.}\xspace}
\newcommand{\setal}{~\emph{et~al.}\xspace}
\newcommand{\eg}{\emph{e.g.,}\xspace}
\newcommand{\ie}{\emph{i.e.,}\xspace}
\newcommand{\myParagraph}[1]{{\bf #1.}\xspace}

\newcommand{\M}[1]{{\bm #1}} 
\renewcommand{\boldsymbol}[1]{{\bm #1}}


\newcommand{\LC}[1]{{\color{red} \textbf{LC}: #1}}

\newcommand{\hide}[1]{}
\newcommand{\wrt}{w.r.t.\xspace}

\newcommand{\grayout}[1]{{\color{gray} #1}}

\newcommand{\hiddenText}{{\color{gray} hidden text.}}
\newcommand{\hideWithText}[1]{\hiddenText}

\newcommand{\kron}{\otimes}


\newcommand{\one}{ {\mathbf{1}} }
\newcommand{\subject}{\text{ subject to }}
\DeclareMathOperator*{\argmax}{arg\,max}
\DeclareMathOperator*{\argmin}{arg\,min}


\newcommand{\norm}[1]{\left\| #1 \right\|}

\newcommand{\prob}[1]{{\mathbb P}\left(#1\right)}
\newcommand{\tran}{^{\mathsf{T}}}

\newcommand{\trace}[1]{\mathrm{tr}\left(#1\right)}

\newcommand{\rank}[1]{\mathrm{rank}\left(#1\right)}

\newcommand{\inv}{^{-1}}

\newcommand{\ones}{{\mathbf 1}}
\newcommand{\zero}{{\mathbf 0}}
\newcommand{\eye}{{\mathbf I}}
\newcommand{\vect}[1]{\left[\begin{array}{c}  #1  \end{array}\right]}
\newcommand{\matTwo}[1]{\left[\begin{array}{cc}  #1  \end{array}\right]}

\newcommand{\Real}[1]{ { {\mathbb R}^{#1} } }

\newcommand{\setdef}[2]{ \{#1 \; {:} \; #2 \} }


\newcommand{\SEthree}{\ensuremath{\mathrm{SE}(3)}\xspace}

\newcommand{\SOthree}{\ensuremath{\mathrm{SO}(3)}\xspace}

\newcommand{\MA}{\M{A}}
\newcommand{\MB}{\M{B}}
\newcommand{\MC}{\M{C}}

\newcommand{\MG}{\M{G}}
\newcommand{\MM}{\M{M}}

\newcommand{\MP}{\M{P}}
\newcommand{\MQ}{\M{Q}}

\newcommand{\MR}{\M{R}}

\newcommand{\MI}{\M{I}}

\newcommand{\MH}{\M{H}}

\newcommand{\MX}{\M{X}}
\newcommand{\MY}{\M{Y}}
\newcommand{\MW}{\M{W}}

\newcommand{\va}{\boldsymbol{a}} 
\newcommand{\vh}{\boldsymbol{h}} 
\newcommand{\vb}{\boldsymbol{b}}
\newcommand{\vc}{\boldsymbol{c}}

\newcommand{\ve}{\boldsymbol{e}}
\newcommand{\vf}{\boldsymbol{f}}
\newcommand{\vg}{\boldsymbol{g}}

\newcommand{\vn}{\boldsymbol{n}}
\newcommand{\vo}{\boldsymbol{o}}
\newcommand{\vp}{\boldsymbol{p}}

\newcommand{\vr}{\boldsymbol{r}}
\newcommand{\vs}{\boldsymbol{s}}
\newcommand{\vu}{\boldsymbol{u}}
\newcommand{\vv}{\boldsymbol{v}}
\newcommand{\vt}{\boldsymbol{t}}
\newcommand{\vxx}{\boldsymbol{x}} 
\newcommand{\vy}{\boldsymbol{y}}

\newcommand{\vtheta}{\boldsymbol{\theta}}

\newcommand{\vepsilon}{\boldsymbol{\epsilon}}






\newcommand{\scenario}[1]{{\smaller \sf#1}\xspace}


%
%

\newcommand{\blue}[1]{{\color{blue}#1}}

\newcommand{\red}[1]{{\color{red}#1}}

\newcommand{\linkToPdf}[1]{\href{#1}{\blue{(pdf)}}}
\newcommand{\linkToPpt}[1]{\href{#1}{\blue{(ppt)}}}
\newcommand{\linkToCode}[1]{\href{#1}{\blue{(code)}}}
\newcommand{\linkToWeb}[1]{\href{#1}{\blue{(web)}}}
\newcommand{\linkToVideo}[1]{\href{#1}{\blue{(video)}}}
\newcommand{\linkToMedia}[1]{\href{#1}{\blue{(media)}}}
\newcommand{\award}[1]{\xspace} 




\usepackage{capt-of}
\usepackage[caption=false]{subfig}
\usepackage{overpic}
\usepackage{multicol}

\newcommand{\JS}[1]{#1}
\newcommand{\JSTwo}[1]{#1}
\renewcommand{\LC}[1]{#1}

\renewcommand{\subject}{\text{s.t.}}

\newcommand{\nchoosek}[2]{\left( \substack{#1 \\ #2}\right)}
\newcommand{\TLS}{\scenario{TLS}}
\newcommand{\tls}{\scenario{TLS}}
\newcommand{\gm}{\scenario{GM}}
\newcommand{\GNC}{\scenario{GNC}}
\newcommand{\gnc}{\scenario{GNC}}
\newcommand{\BnB}{\scenario{BnB}}

\newcommand{\ransac}{\scenario{RANSAC}}

\newcommand{\vectorize}[1]{\ensuremath{\mathrm{vec}\left(#1 \right) }}

\newcommand{\hatvt}{\hat{\vt}}

\newcommand{\sym}[1]{\mathcal{S}^{#1}}

\newcommand{\measured}[1]{\vy_{#1}}
\newcommand{\params}[1]{\vtheta_{#1}}
\newcommand{\nrMeasurements}{N}

\newcommand{\proj}{\text{proj}}

\newcommand{\parentheses}[1]{\left( #1 \right)}

\renewcommand{\norm}[1]{\left\| #1 \right\|}

\newcommand{\omitted}[1]{}

\newcommand{\bmat}{\left[ \begin{array}}
\newcommand{\emat}{\end{array}\right]}

\newcommand{\teaserpp}{\scenario{TEASER++}}

\newcommand{\robin}{\scenario{ROBIN}}

\newrefformat{eq}{Eq.\,\ref{#1}}
\newcommand{\subMeas}[1]{\calM} %

\newcommand{\domainX}{\mathbb{X}}

\newcommand{\compatible}{compatible\xspace}

\newcommand{\compatibility}{compatibility\xspace}
\newcommand{\Compatibility}{Compatibility\xspace}

\newcommand{\abs}[1]{\left|#1\right|}

\newcommand{\apollo}{\scenario{ApolloScape}}
\newcommand{\apolloCar}{\scenario{ApolloCar3D}}
\newcommand{\pascal}{\scenario{PASCAL3D+}}

\newcommand{\keypointnet}{\scenario{KeypointNet}}

\newcommand{\vertex}{node\xspace}

\newcommand{\robinLong}{Reject Outliers Based on INvariants} %
\newcommand{\invariant}{invariant\xspace}
\newcommand{\invariants}{invariants\xspace}
\newcommand{\Invariant}{Invariant\xspace}

\newcommand{\barvy}{\bar{\vy}}

\newcommand{\barMB}{\bar{\MB}}
\newcommand{\vrhomo}{\tilde{\vr}}
\newcommand{\vF}{\boldsymbol{F}}

\newcommand{\measTwo}[1]{\vp_{\text{\scriptsize{2D}},#1}}   %
\newcommand{\bmeasTwo}[1]{\bar{\vp}_{\text{\scriptsize{2D}},#1}}   %
\newcommand{\tmeasTwo}[1]{\tilde{\vp}_{\text{\scriptsize{2D}},#1}}
\newcommand{\measThree}[1]{\vp_{\text{\scriptsize{3D}},#1}} %
\newcommand{\epsThree}[1]{\vepsilon_{\text{\scriptsize{3D}},#1}} %
\newcommand{\epsTwo}[1]{\vepsilon_{\text{\scriptsize{2D}},#1}} %
\newcommand{\basis}[2]{\vb^{#1}_{#2}}
\newcommand{\shapeParam}[1]{c_{#1}}
\newcommand{\shapeVector}{\vc}

\newcommand{\invFunThree}{\boldsymbol{F}_{3D}}
\newcommand{\invFunTwo}{\boldsymbol{F}_{2D}}
\newcommand{\invfunThree}{\boldsymbol{f}_{3D}}
\newcommand{\invfunTwo}{\boldsymbol{f}_{2D}}

\newcommand{\normal}[2]{\vn^{#1}_{#2}}

\DeclareMathOperator{\sgn}{sgn}

\newcommand{\vis}{visibility\xspace}
\newcommand{\Vis}{Visibility\xspace}
\newcommand{\covis}{covisibility\xspace}
\newcommand{\Covis}{Covisibility\xspace}
\newcommand{\region}{region\xspace}
\newcommand{\Region}{Region\xspace}
\newcommand{\regions}{regions\xspace}
\newcommand{\Regions}{Regions\xspace}

\newcommand{\CovisSet}{\mathcal{C}\xspace}

\newcommand\scalemath[2]{\scalebox{#1}{\mbox{\ensuremath{\displaystyle #2}}}}

\newcommand{\nrShapes}{K}

\newcommand{\inthrThree}{\beta_{3D}}
\newcommand{\inthrTwo}{\beta_{2D}}
\newcommand{\barcsqThree}{\inthrThree^2}
\newcommand{\barcsqTwo}{\inthrTwo^2}
\newcommand{\MHtl}{\bar{\MH}}

\newcommand{\PACETwo}{\scenario{PACE2D$^\star$}}
\newcommand{\PACETwoLZero}{\scenario{PACE2D$^\star \text{(OH)}$}}
\newcommand{\PACETwoLTwo}{\scenario{PACE2D$^\star$}}
\newcommand{\PACEThree}{\scenario{PACE3D$^\star$}}
\newcommand{\PACEThreeLong}{\emph{Pose and shApe estimation for  3D-3D Category-level pErception}}%
\newcommand{\PACETwoLong}{\emph{Pose and shApe estimation for  2D-3D Category-level pErception}}%

\newcommand{\PACErobust}{\scenario{PACE\#}}
\newcommand{\PACErobustTwo}{\scenario{PACE2D\#}}
\newcommand{\PACErobustTwoLTwo}{\scenario{PACE2D\#}}

\newcommand{\PACETwoGNC}{\scenario{GNC-PACE2D$^{\star}$}}
\newcommand{\PACETwoGNCLTwo}{\scenario{GNC-PACE2D$^{\star}$}}

\newcommand{\PACEThreeGNC}{\scenario{GNC-PACE3D$^{\star}$}}
\newcommand{\PACErobustThree}{\scenario{PACE3D\#}}

\newcommand{\cliquePACETwoLTwo}{\scenario{Clique-PACE2D$^\star$}}
\newcommand{\cliquePACEThree}{\scenario{Clique-PACE3D$^\star$}}
\newcommand{\ransacPACEThree}{\scenario{RANSAC-PACE3D$^\star$}}

\newcommand{\kpnp}{\scenario{KPnP}}

\newcommand{\kpnpRobust}{\scenario{KPnP-Robust}}

\newcommand{\nameRobust}{\scenario{PACE\#}}

\newcommand{\altern}{\scenario{Altern}}
\newcommand{\meanPnP}{\scenario{MS-PnP}}
\newcommand{\ransacMeanPnP}{\scenario{RANSAC-MS-PnP}}
\newcommand{\cliqueMeanPnP}{\scenario{Clique-MS-PnP}}
\newcommand{\kostas}{\scenario{Zhou}}
\newcommand{\kostasRobust}{\scenario{Zhou-Robust}}
\newcommand{\shapestar}{\scenario{Shape$^{\star}$}}

\newcommand{\irlsgm}{\scenario{IRLS-GM}}
\newcommand{\irlstls}{\scenario{IRLS-TLS}}

\newcommand{\aka}{\emph{a.k.a.}}

\newcommand{\toCheckTwo}[1]{{#1}}
\newcommand{\outPaceSharp}{$70-90\%$\xspace}
\newcommand{\outPaceSharpTwo}{$10\%$\xspace}

\newcommand{\outGNC}{$50-60\%$\xspace}
\newcommand{\outGNCTwo}{$10\%$\xspace}
\newcommand{\weight}{\omega}

\newcommand{\best}[1]{{\bf #1}}
\newcommand{\secondBest}[1]{{#1}}

\newcommand{\hypergraph}{hypergraph\xspace}
\newcommand{\Hypergraph}{Hypergraph\xspace}
\newcommand{\Hypergraphs}{Hypergraphs\xspace}
\newcommand{\hyperclique}{hyperclique\xspace}
\newcommand{\hypercliques}{hypercliques\xspace}
\newcommand{\Hyperclique}{Hyperclique\xspace}
\newcommand{\Hypercliques}{Hypercliques\xspace}

\newcommand{\tldMW}{\widetilde{\MW}}

\newcommand{\inprod}[2]{\left\langle #1, #2 \right\rangle}
\newcommand{\fstar}{f^\star}
\newcommand{\pstar}{p^\star}
\newcommand{\MXstar}{\MX^\star}
\newcommand{\vxxstar}{\vxx^\star}
\newcommand{\MRstar}{\MR^\star}
\newcommand{\vcstar}{\vc^\star}
\newcommand{\vtstar}{\vt^\star}
\newcommand{\hatMR}{\widehat{\MR}}
\newcommand{\hatvc}{\widehat{\vc}}
\newcommand{\hatp}{\widehat{p}}

\newcommand{\constraintc}{\ones\tran \vc = 1}

\newcommand{\urlPACE}{{{\tt \smaller\url{https://github.com/MIT-SPARK/PACE}}}}

\usepackage[bookmarks=true]{hyperref}
\usepackage{adjustbox}
\usepackage{nicefrac}

\usepackage{import}
\usepackage{xifthen}
\usepackage{pdfpages}
\usepackage{transparent}

\usepackage{rotating}

\newcommand{%
    
    \import{./figures/}{.pdf_tex}
}[1]{%
    
    \import{./figures/}{#1.pdf_tex}
}

\usepackage{microtype}

\newcommand{\isExtended}[2]{#1}

\newcommand{\supplementary}[1]{App.~\ref{#1} in~\cite{Shi22arxiv-PACE}\xspace}

\setlength{\abovedisplayskip}{4pt}
\setlength{\belowdisplayskip}{4pt}
\setlength{\abovedisplayshortskip}{3pt}
\setlength{\belowdisplayshortskip}{3pt}

\title{\LARGE \bf
 Optimal and Robust Category-level Perception: Object Pose \\  and Shape Estimation from 2D and 3D Semantic Keypoints
}

\author{Jingnan Shi, Heng Yang, Luca Carlone
\thanks{J.\,Shi (\texttt{jnshi@mit.edu}) and L.\,Carlone (\texttt{lcarlone@mit.edu}) are with the Laboratory for
  Information \& Decision Systems (LIDS), Massachusetts Institute of Technology, Cambridge, MA 02139, USA.
  H.\,Yang (\texttt{hankyang@seas.harvard.edu}) is with the School of Engineering and Applied Sciences at Harvard University, Cambridge, MA 02139, USA.
  Corresponding author: Jingnan Shi .}
\thanks{The authors would like to thank the editor and the anonymous reviewers for their constructive feedback, Rajat Talak for discussion about pose estimation networks, and Charleen Tan for labeling keypoints. This work was partially funded by ARL DCIST CRA W911NF-17-2-0181, ONR RAIDER N00014-18-1-2828, Lincoln Laboratory ``Resilient Perception in Degraded Environments’’, an Amazon Research Award, and Carlone’s NSF CAREER.
}
}

\begin{document}

\makeatletter
\let\@oldmaketitle\@maketitle%
\renewcommand{\@maketitle}{
\@oldmaketitle%
\vspace{2mm}
 \begin{minipage}{\textwidth}
    \begin{center}
        \begin{tabular}{cc}
        \hspace{-2mm}%
        \includegraphics[width=\columnwidth]{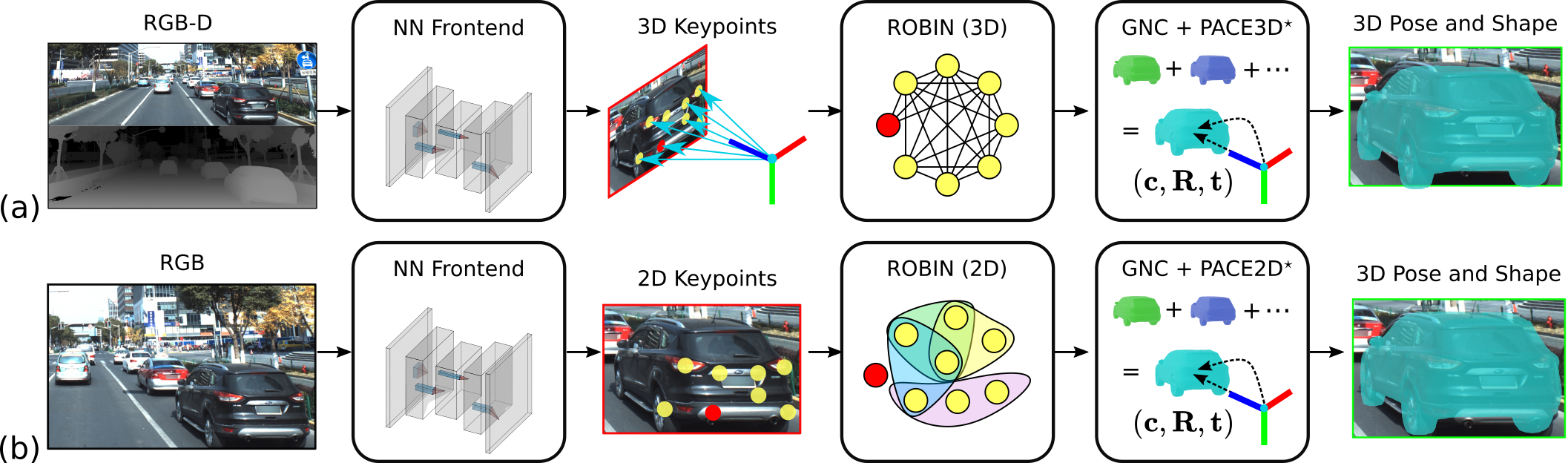}
        \end{tabular}
    \end{center}
    \vspace{-4mm}
  \end{minipage}
  \\

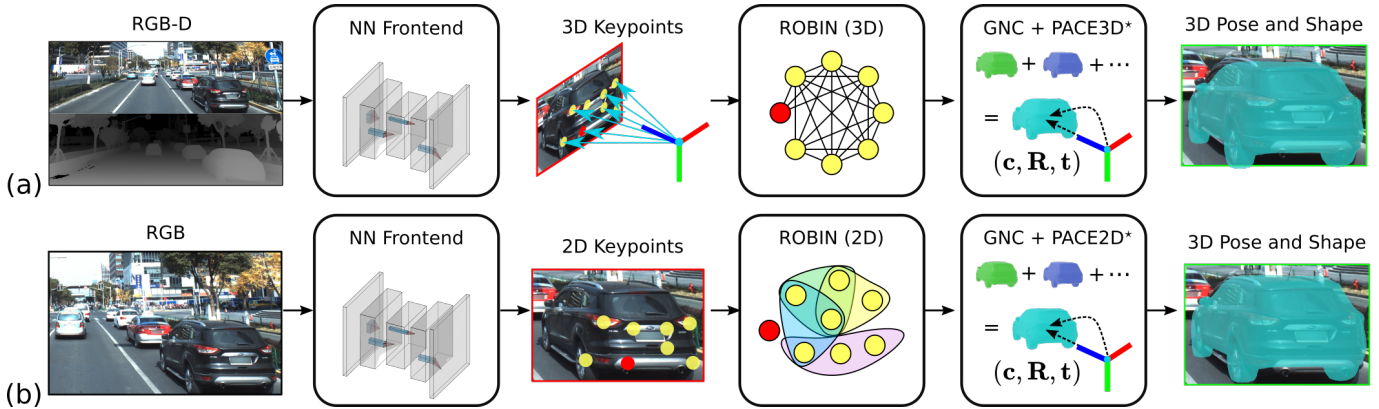
\captionof{figure}{ We develop algorithms for 
3D-3D and 2D-3D category-level perception, which estimate 3D pose and shape of an object from 3D and 2D sensor data, respectively. 
    (a) \PACErobustThree performs 3D-3D category-level perception and works on RGB-D inputs.
    We pass the sensor data through a neural network front-end to obtain 3D keypoint detections. We then use \robin with 3D-3D compatibility (hyper)graphs (Section~\ref{sec:robin-category-level}) to prune outliers (outliers are shown as red dots, while inliers are yellow). We finally pass the resulting measurements to our optimal solver (\PACEThree, Section~\ref{sec:optimalSolver-3d3d}) wrapped in a graduated non-convexity (\GNC) scheme (Section~\ref{sec:gnc-category-level}), which estimates the object pose and shape coefficients. %
   (b) \PACErobustTwo performs 2D-3D category-level perception and works on RGB inputs. The images are passed through a neural network to obtain 2D keypoints. We then use \robin with 2D-3D compatibility hypergraphs (Section~\ref{sec:robin-category-level}) to prune outliers.
    We finally pass the resulting measurements to our optimal solver (\PACETwo, Section~\ref{sec:optimalSolver-2d3d}) with \GNC to obtain a pose and shape estimate. 
    \label{fig:method-overview}
 \vspace{-8mm}}
}
\makeatother

\maketitle
\begin{tikzpicture}[overlay, remember picture]
\path (current page.north east) ++(-4.0,-0.4) node[below left] {
This paper has been accepted for publication at the IEEE Transactions on Robotics, 2023.
};
\end{tikzpicture}
\begin{tikzpicture}[overlay, remember picture]
\path (current page.north east) ++(-2.1,-0.8) node[below left] {
Please cite the paper as: J. Shi, H. Yang, and L. Carlone, ``Optimal and Robust Category-level Perception: Object Pose
};
\end{tikzpicture}
\begin{tikzpicture}[overlay, remember picture]
\path (current page.north east) ++(-4.0,-1.2) node[below left] {
and Shape Estimation from 2D and 3D Semantic Keypoints,'' \emph{IEEE Trans. Robotics}, 2023.
};
\end{tikzpicture}

\begin{abstract}

We consider a \emph{category-level perception} problem, where one is given
2D or 3D sensor data picturing an object of a given category (\eg a car), and has to reconstruct
 the 3D pose and shape of the object despite intra-class variability (\ie different car models have different shapes).
 We consider an \emph{active shape model}, where ---for an
object category--- we are given a library of potential CAD models describing objects in that category,
and
we adopt a standard formulation where pose and shape are estimated from 2D or 3D keypoints via
non-convex optimization.
Our first contribution is to develop \PACEThree and \PACETwo,
 the first \emph{certifiably optimal} solvers for pose and shape estimation using 3D and 2D keypoints, respectively.
Both solvers rely on the design of tight (\ie exact) semidefinite relaxations.
Our second contribution is to develop outlier-robust versions of both solvers, named \PACErobustThree and \PACErobustTwo.
Towards this goal, we propose \robin, a general graph-theoretic framework to prune outliers, which uses
\emph{compatibility hypergraphs} to model measurements' compatibility.
We show that in category-level perception problems these hypergraphs can be built from the winding orders of the keypoints (in 2D)
or their convex hulls (in 3D), and many outliers can be filtered out via maximum \hyperclique computation.
The last contribution is an extensive experimental evaluation.
Besides providing an ablation study on
 simulated datasets and on the \pascal dataset, we combine our solver
with a deep keypoint detector, and {show that \PACErobustThree  improves over the state of the art in vehicle pose estimation in the \apollo~datasets, and its runtime is compatible with practical applications.} We release our code at \urlPACE.
\end{abstract}

\begin{keywords}
\LC{category-level object perception, RGB-D perception, outliers-robust estimation, certifiable algorithms.}
\end{keywords}

\section{Introduction}
\label{sec:intro}
Robotics applications, from self-driving cars to domestic robotics, demand
robots to be able to identify and estimate the
pose and shape \LC{of objects in the environment.
The perception system of a self-driving car needs to}
estimate the poses of other vehicles, identify traffic lights and signs, and
detect pedestrians. Similarly, domestic applications require estimating the location and shape of objects to support effective interaction and manipulation~\cite{Manuelli19-kpam,Gao21-kpam2,Pavlakos17icra-semanticKeypoints}.
Object pose estimation is made harder by the large intra-class shape variability of common objects:
for instance, the shape of a car largely varies depending on the model (\eg take a van versus a sedan).

Despite the fast-paced progress, reliable 3D object pose estimation remains a challenge,
as witnessed by recent self-driving car accidents caused by
misdetections~\cite{McCausland-UberCrash}. %
Deep learning has been making great strides in enabling robots to detect objects;
popular tools such as YOLO~\cite{Redmon16cvpr} and Mask-RCNN~\cite{He17iccv-maskRCNN} have made object detection possible on commodity hardware and with reasonable performance for in-distribution test data.
However, detections are typically at the level of \mbox{categories (\eg car) rather than}
at the level of instances (\eg a specific car model).
In turn, category-level perception renders the use of standard tools for pose estimation (from point cloud registration~\cite{Yang20tro-teaser,Horn87josa,Bustos18pami-GORE} to the Perspective-n-Point problem~\cite{Zheng2013ICCV-revisitPnP,Kneip2014ECCV-UPnP,Schweighofer2008bmvc-SOSforPnP}) ineffective, since they rely on the knowledge of the shape of the  object. %

These limitations have triggered robotics and computer vision
research on category-level 3D object pose and shape estimation (see Section~\ref{sec:relatedWork} for an in-depth review).
Traditional methods include the popular
\emph{active shape model}~\cite{Cootes95cviu,Zhou15cvpr,Yang20cvpr-shapeStar}, where
one attempts to estimate the pose and shape of an object given a large database of 3D CAD models.
Despite its popularity (\eg the model is also used in human shape estimation and face detection~\cite{Zhou15cvpr}), 
 estimation with active shape models leads to a non-convex optimization
problem and local solvers get stuck in poor solutions, and are sensitive to outliers~\cite{Zhou15cvpr,Yang20cvpr-shapeStar}.
More recently, research effort has been devoted to end-to-end learning-based 3D pose estimation
with encouraging results in human pose estimation~\cite{Kolotouros19cvpr-shapeRec} and vehicle pose estimation~\cite{Chabot17-deepMANTA,Ke20-gsnet,Lopez19-vehicle,Kundu18-3dRCNN,Suwajanakorn18-latent3Dkeypoints};
these approaches still require a large amount of 3D labeled data, \mbox{which is time-consuming (or expensive) to obtain in the wild.}

\myParagraph{Contribution}
We address the shortcomings of existing approaches for pose and shape estimation based on the active shape model and
propose the first approaches that can compute optimal estimates and are robust to outliers.
We consider a \emph{category-level perception} problem, where one is given
keypoint detections of an object belonging to a given category (\eg detections of the wheels, rear-view mirrors, and other interest points of a car), and has to reconstruct
the pose and shape of the object.
 We assume the availability of
a library of CAD models of objects in that category;
such a library is typically available, since CAD models are extensively used in the design, manufacturing, and simulation of 3D objects.
\JS{
Note that the definition of category-level perception is ambiguous in the literature:
we follow the setup in~\cite{Sahin18eccv-categoryFromDepth, Manuelli19-kpam, Wang19-normalizedCoordinate, Li20-categoryArticulated, Wang21iros-categoryObjectCascade, Gao21-kpam2}, while some authors use the same name for problems
where the objects seen at test-time are different from the ones seen during training~\cite{Li21nips-LeveragingSE}, while being in the same category.}

Our first contribution is to develop the first \emph{certifiably optimal} solvers for pose and shape estimation using 3D and 2D keypoints. %
In the 3D case, we show that ---despite the non-convexity of the problem--- rotation estimation can be decoupled from the estimation of object translation and shape, and
we demonstrate that (i) the optimal object rotation can be computed via a tight (small-size) semidefinite relaxation,
and (ii) the translation and shape parameters can be computed in closed form given the rotation.
We call the resulting solver \PACEThree (\PACEThreeLong).
In the 2D case, we formulate pose and shape estimation using an algebraic point-to-line cost,
and leverage \emph{Lasserre's hierarchy of semidefinite relaxations}~\cite{Lasserre01siopt-LasserreHierarchy} to solve the
problem to certifiable global optimality.
We call the resulting solver \PACETwo (\PACETwoLong).
Contrarily to \PACEThree, \PACETwo leads to semidefinite {relaxations whose size increases with the 
number of CAD models in the active shape model.}

Our second contribution is to develop an outlier rejection scheme applicable to both \PACEThree and \PACETwo.
 Towards this goal, we introduce a general framework for graph-theoretic outlier pruning, named \robin, which 
 generalizes our previous work~\cite{Yang20tro-teaser, Shi21icra-robin} to use hypergraphs.
 \robin models compatibility between subset of measurements 
using a \emph{compatibility hypergraph}. 
We show that the compatibility hypergraph can be efficiently constructed by inspecting the 
winding orders of the keypoints in 2D, or the convex hulls of the keypoints in 3D.
We then prove that all the inliers are contained in a single \hyperclique of 
the compatibility hypergraph and can be typically found within the maximum \hyperclique.
\robin is able to remove a large fraction of outliers. %
The resulting measurements are then passed to our optimal solvers (\PACEThree and \PACETwo), that we also wrap in 
a standard \emph{graduated non-convexity}~\cite{Yang20ral-GNC} scheme 
to mitigate the impact of outliers that survived \robin.
The resulting outlier-robust approaches are named \PACErobustThree and \PACErobustTwo 
and are illustrated in Fig.~\ref{fig:method-overview}.

Our last contribution is an extensive experimental evaluation in both synthetic experiments and real datasets~\cite{Wang19pami-apolloscape}. %
We provide an ablation study on
simulated datasets and on the \pascal dataset, and show that
\JS{(i) \PACEThree is more accurate than state-of-the-art iterative solvers,
(ii) \PACETwo is more accurate than baseline local solvers and convex relaxations based on the weak perspective projection model~\cite{Zhou15cvpr,Yang20cvpr-shapeStar},
(iii) \PACErobustThree~\toCheckTwo{dominates} other robust solvers and is robust to \outPaceSharp outliers, 
and (iv) \PACErobustTwo is robust to \outPaceSharpTwo outliers.} %
Finally, we integrate our solvers in a realistic system ---including a deep keypoint detector--- and apply it to vehicle pose and shape estimation in the \apollo~\cite{Wang19pami-apolloscape} driving datasets. 
{While \PACErobustTwo is currently slow and suffers from the low quality of the deep keypoint detections, 
\PACErobustThree largely outperforms the state of the art and a non-optimized implementation runs in a fraction of a second. We also show that \robin is even able to detect mislabeled keypoints 
used to train the keypoint detector in the \apollo dataset~\cite{Wang19pami-apolloscape}.}

\myParagraph{Novelty with Respect to~\cite{Shi21icra-robin, Shi21rss-pace}}
In our previous works,
we introduced \robin~\cite{Shi21icra-robin}, a graph theoretic outlier rejection framework,
and two solvers~\cite{Shi21rss-pace} (\PACEThree and \PACErobustThree) for pose and shape estimation from 3D keypoints.
The present paper %
(i) \LC{allows \robin~to handle more general {compatibility tests},} and
(ii) extends the concept of inlier selection to maximum \hypercliques on \emph{compatibility hypergraphs}.
This paper also develops 
\PACETwo and \PACErobustTwo, which estimate pose and shape from only 2D (instead of 3D) keypoints.
In addition, we report a more comprehensive experimental evaluation in simulation and on the \apollo dataset.

\myParagraph{Paper Structure}
Section~\ref{sec:problemStatement} formulates the category-level perception problem.
Section~\ref{sec:approachOverview} provides a brief overview of the proposed approaches (also summarized in Fig.~\ref{fig:method-overview}). 
Sections~\ref{sec:robin-formulation} and~\ref{sec:robin-category-level} present our graph-theoretic outlier 
pruning (\robin) and its application to category-level perception. 
Section~\ref{sec:optimalSolvers-category-level} introduces our certifiably optimal solvers for category-level perception 
and Section~\ref{sec:gnc-category-level} recalls how to wrap the solvers in a graduated non-convexity scheme.
Section~\ref{sec:experiments} discusses experimental results.
Section~\ref{sec:relatedWork} provides an in-depth review of related work.
Section~\ref{sec:conclusion} concludes the paper. %

\newcommand{\mpwthree}{5.9cm}
\newcommand{\mpwthreetwo}{11.4cm}

\begin{figure*}[ht!]
    \centering%
    \includegraphics[width=\textwidth]{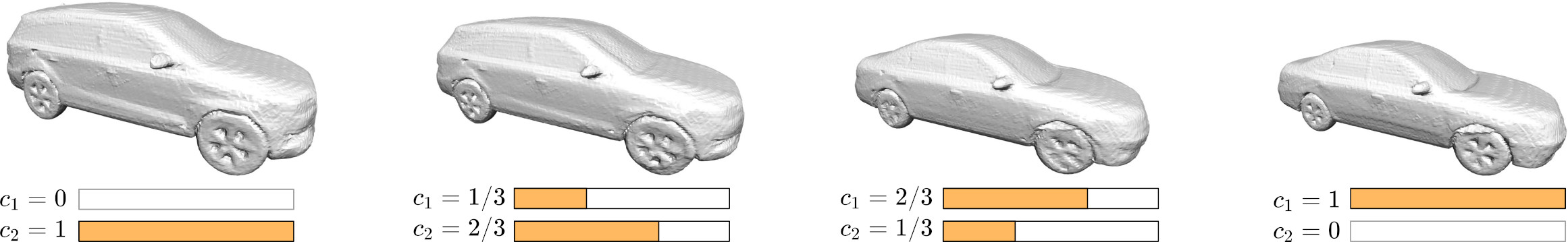}
	\caption{Illustration of a simple active shape model with two CAD models: a Hyundai Sonata (with shape coefficient $c_{1}$) and an Audi Q7 (with shape coefficient $c_{2}$).
      From left to right, we show four convex combinations of the CAD models, with $c_{1}$ increasing and $c_{2}$ decreasing while keeping $c_{1} + c_{2} = 1$.
      The shape variations between the four combinations show the expressivity of the active shape model and its capability to interpolate between shapes.
    \label{fig:active-shape-demo}}
    \vspace{-5mm}
\end{figure*}

\section{Problem Statement: \\ 3D-3D and 2D-3D Category-Level Perception}
\label{sec:problemStatement}

In this section, we formulate the 3D-3D and 2D-3D category-level perception problems.
The goal is to compute the 3D pose and shape of an object, given 3D or 2D sensor data.
We focus on a \emph{multi-stage} setup (\cf Section~\ref{sec:relatedWork}) where a front-end is used to extract 2D or 3D semantic keypoints from the sensor data,
which are then used by a back-end solver to estimate the object's 3D pose $(\MR,\vt)$ and shape,
where $\MR \in \SOthree$ and $\vt \in \Real{3}$ are the 3D rotation and translation, respectively.
The front-end is typically implemented using standard deep networks~\cite{Pavlakos17icra-semanticKeypoints,Schmeckpeper22arxiv-singleRGBpose}, while our goal here is to design more accurate and robust back-ends.

In the following, we first introduce a standard parametrization of the object shape (Section~\ref{sec:activeShape}), 
and then formalize the 3D-3D category-level perception problem (Section~\ref{sec:3D3Dformulation}) 
and its 2D-3D counterpart (Section~\ref{sec:2D3Dformulation}).

\subsection{Active Shape Model}\label{sec:activeShape}

We assume the object shape to be partially specified: we are given a library
of 3D CAD models $\calB^{k}$, $k=1,\ldots,\nrShapes$, and assume that the unknown object shape $\calS$
(modeled as a collection of 3D points) can be written
as a combination of the points on the given CAD models.
More formally,
each point $\vs_{i}$ of the
shape $\calS$ can be written as:
\bea
\label{eq:activeShape}
\vs_i = \textstyle\sum_{k=1}^{\nrShapes} \shapeParam{k} \basis{k}{i}
\eea
where $\basis{k}{i}$ is a given point on the surface of the CAD model $\calB^{k}$;
the \emph{shape parameters} $\shapeVector \triangleq [\shapeParam{1} \ldots \shapeParam{\nrShapes}]\tran$ are unknown, and
the entries of $\shapeVector$ are assumed to be non-negative and sum up to 1,
\ie $\vc$ belongs to the $\nrShapes$-dimensional probability simplex $\Delta_K := \{ \vc \in \Real{\nrShapes} \mid \vc \geq \zero, \sum_{k=1}^{\nrShapes} c_k = 1 \}$.
For instance, if ---upon estimation---
the vector $\shapeVector$ has the $l$-th entry equal to 1 and the remaining entries equal to zero in~\eqref{eq:activeShape}, then the 
estimated shape matches the $l$-th CAD model in the library; therefore, the estimation of the 
shape parameters $\shapeVector$ can be understood as a fine-grained classification of the object among the instances in the library. However, the model is even more expressive, since it allows the object shape to be a convex 
combination of CAD models, which enables the active shape model~\eqref{eq:activeShape} to interpolate between different shapes in the library; see Fig.~\ref{fig:active-shape-demo}. %

\subsection{3D-3D Category-Level Perception}\label{sec:3D3Dformulation}

In the 3D-3D category-level perception problem, the goal is to
estimate an object's pose and shape, given a set of $N$ 3D keypoint detections, typically obtained
using learning-based keypoint detectors.
Such detectors are
trained to detect semantic features of the 3D object (\eg wheels of a car),
and can be
applied to RGB-D or RGB+Lidar data (\eg~\cite{Pavlakos17icra-semanticKeypoints}).

 We assume each \textbf{3D measurement} $\measThree{i}$ ($i=1, \ldots, N$) is a noisy measurement of a keypoint $\vs_{i}$ 
 of our target object, in the coordinate frame of the sensor.
More formally, each $\measThree{i}$ is described by the following generative model:
\begin{equation}
\label{eq:generativeModel-3d}
\measThree{i}=\MR \textstyle\sum_{k=1}^{\nrShapes}\shapeParam{k} \basis{k}{i} + \vt + \epsThree{i} \qquad i=1, \ldots, N
\end{equation}
where the measurement $\measThree{i}$ pictures a 3D point on the object (written as a linear combination $\sum_{k=1}^{\nrShapes}\shapeParam{k} \basis{k}{i}$ of the shapes in the library as in~\eqref{eq:activeShape}), after the point is rotated and translated
according to the 3D pose $(\MR,\vt)$ of the object, and where $\epsThree{i}$ represents measurement noise.
Intuitively, each measurement corresponds to a noisy measurement of a semantic feature of the object
(\eg wheel center or rear-view mirrors of a car) and each $\basis{k}{i}$ corresponds to the corresponding feature (\eg wheel or mirror) location for a specific CAD model.

\begin{problem}[Robust 3D-3D Category-Level Perception]\label{prob:3d3d-statement}
Compute the 3D pose $(\MR,\vt)$ and shape $(\shapeVector)$ of an object given $N$ 3D keypoint measurements in the
form~\eqref{eq:generativeModel-3d}, possibly corrupted by outliers, \ie measurements with large error $\epsThree{i}$.
\end{problem}

\subsection{2D-3D Category-Level Perception}\label{sec:2D3Dformulation}
In the 2D-3D category-level perception problem,
we want to estimate an object's 3D pose and shape, given only
2D projections of keypoints.
In this case,
we describe each \textbf{2D measurement} using the following generative model:
\begin{equation}
\label{eq:generativeModel-2d}
{\measTwo{i}} = \pi\left( \MR \textstyle\sum_{k=1}^{\nrShapes}\shapeParam{k} \basis{k}{i} + \vt \right) + \epsTwo{i} \qquad i=1, \ldots, N
\end{equation}
where 
$\measTwo{i}$ represents a 2D (pixel) measurement,
 $\pi(\cdot)$ is the canonical perspective projection,\footnote{For a 3D vector $\vp \!=\!\! \matTwo{\!\!p_x\!\! \\ \!\!p_y\!\! \\ \!\!p_z\!\!}$, 
the canonical projection is $\pi(\vp) \!=\!\! \matTwo{\!\! \nicefrac{p_x}{p_z}\!\! \\\!\! \nicefrac{p_y}{p_z} \!\!}$.} 
and $\epsTwo{i}$ is the measurement noise. %
Intuitively, the measurements in~\eqref{eq:generativeModel-2d} correspond to pixel projections 
of the object keypoints onto an image.  

\begin{problem}[Robust 2D-3D Category-Level Perception]\label{prob:2d3d-statement}
Compute the 3D pose $(\MR,\vt)$ and shape $(\shapeVector)$ of an object given $N$ 2D keypoint measurements in the
form~\eqref{eq:generativeModel-2d}, possibly corrupted by outliers, \ie measurements with large error $\epsTwo{i}$.
\end{problem}

\section{Overview of \PACErobust: Pose and Shape Estimation for Robust Category-level Perception}
\label{sec:approachOverview}

  Our approach, named \PACErobust, 
  is summarized in Fig.~\ref{fig:method-overview}, for both the 3D-3D case 
  (\PACErobustThree, Fig.~\ref{fig:method-overview}(a)) and the 2D-3D case (\PACErobustTwo, Fig.~\ref{fig:method-overview}(b)).
  We assume access to a perception front-end that detects semantic keypoints given sensor data.
  Our work forms the back-end, and consists of two stages.
In the first stage,
we employ a graph-theoretic framework, named \robin, to pre-process the keypoints and
 prune gross outliers without explicitly solving the underlying estimation problem.
We then pass the filtered measurements to the second stage,
where an optimal solver (wrapped in a graduated non-convexity scheme) computes a pose and shape estimate.\footnote{We use a two-stage approach (\ie \robin followed by graduated non-convexity) for two reasons.
While our solver with graduated non-convexity is robust against \outGNCTwo of outliers in 2D-3D problems and \outGNC of outliers in 3D-3D problems,
our goal is to further increase its robustness. Indeed we show that \robin
 boosts robustness to \outPaceSharpTwo in 2D-3D problems and \outPaceSharp in 3D-3D problems.
In addition, since the first stage prunes outliers independently from the solver,
\robin can be used in a plug-and-play manner with other existing solvers to boost their robustness.}

In the following we introduce Stage 1 by first presenting \robin, a general framework for outlier pruning (Section~\ref{sec:robin-formulation})  
and then discussing its application to category-level perception (Section~\ref{sec:robin-category-level}); 
we then discuss Stage 2
by presenting our optimal solvers for 3D-3D (\PACEThree, Section~\ref{sec:optimalSolver-3d3d}) and 2D-3D (\PACETwo, Section~\ref{sec:optimalSolver-2d3d}) category-level perception, and a brief review of graduated non-convexity~\cite{Yang20ral-GNC} (Section~\ref{sec:gnc-category-level}).

\section{Stage 1: Graph-theoretic Outlier Pruning With \robin}
\label{sec:robin-formulation}

This section develops a general framework to prune gross outliers from a set of measurements without
explicitly computing an estimate for the variables of interest.
In particular, we introduce the notion of \emph{$n$-invariant} to check if a  subset of measurements contains outliers. 
 We then use these checks to construct 
\emph{\compatibility hypergraphs} that describe mutually compatible measurements,
 and show how to reject outliers by computing maximum \hypercliques of these graphs.
Combining these insights, we obtain
\robin (\emph{\robinLong}), our graph-theoretic algorithm for pruning outliers.

This section presents our framework in full generality %
and then we tailor it to category-level perception in Section~\ref{sec:robin-category-level}.
In particular, here we consider a more general measurement model 
that relates measurements $\measured{i}$ to 
the to-be-estimated variable  $\vxx \in \domainX$ (where $\domainX$ is the domain of $\vxx$, \eg the set of 3D poses) and a given model $\vtheta_{i}$ (\eg our CAD models):
\beq
\label{eq:robin-measurements}
\measured{i} = \vh(\vxx, \params{i}, \vepsilon_i), \qquad i=1,\ldots,\nrMeasurements
\eeq
where $\vepsilon_i$ denotes the measurement noise. 
Clearly, eqs.~\eqref{eq:generativeModel-3d} and~\eqref{eq:generativeModel-2d} can be understood as 
special instances of~\eqref{eq:robin-measurements}, where $\vxx$ includes the unknown pose and shape of the 
object, and the (given) model $\vtheta$ corresponds to the CAD models.

\subsection{From Measurements to Invariants}

This section formalizes the concepts of $n$-\invariant and generalized $n$-\invariant, which 
are the building blocks of our outlier pruning framework.
  The main motivation is to use invariance to establish checks on the (inlier) measurements that hold true
regardless of the state under estimation; we are later going to use these checks to detect outliers.

Let us consider the measurements in eq.~\eqref{eq:robin-measurements} and
denote the indices of the measurements as $\calY \doteq \{1,\ldots,N\}$.
For a given integer $n \leq N$, let $\subMeas{n} \subset \calY$ be a subset of $n$ indices in $\calY$, %
and denote with $\subMeas{n}_j$ the $j$-th element of this subset (with $j=1,\ldots,n)$.
Then, we use the following notation:
\beq
\measured{\subMeas{n}} = \matTwo{ \measured{\subMeas{n}_1} \\ \measured{\subMeas{n}_2} \\ \vdots \\\measured{\subMeas{n}_n}   },
\;\;
\params{\subMeas{n}} = \matTwo{ \params{\subMeas{n}_1} \\ \params{\subMeas{n}_2} \\ \vdots \\\params{\subMeas{n}_n}   },
\;\;
\vepsilon_{\subMeas{n}} = \matTwo{ \vepsilon_{\subMeas{n}_1} \\ \vepsilon_{\subMeas{n}_2} \\ \vdots \\ \vepsilon_{\subMeas{n}_n}   }
\eeq
which is simply stacking together measurements $\vy_i$,
parameters $\vtheta_i$, and noise $\vepsilon_i$ for the subset of measurements $i \in \subMeas{n}$.

Let us now formalize the notion of noiseless invariance.
\begin{definition}[Noiseless $n$-\Invariant]\label{def:n-invariant}
Consider the generative model~\eqref{eq:robin-measurements} and assume there is no measurement noise (\ie 
$\measured{i} = \vh(\vxx, \params{i})$).
Then a function $\vf$ is called a noiseless
\emph{$n$-\invariant} if {for any arbitrary $\subMeas{n} \!\subset\!\calY$ of size $n$}, 
the following relation holds, regardless of the choice of $\vxx$:
  \begin{equation}
\label{eq:n-invariant}
\vf( \measured{\subMeas{n}}  ) = \vf(\params{\subMeas{n}})
  \end{equation}
\end{definition}

Intuitively, an $n$-\invariant function $\vf$ takes a subset of models $\params{\subMeas{n}}$ and computes
a quantity $\vf(\params{\subMeas{n}})$ that remains constant
when the models are transformed by the measurement model $\vh$ to generate the measurements $\measured{\subMeas{n}}$.

\myParagraph{Example} To concretely understand invariance,
consider a simpler instance of~\eqref{eq:generativeModel-3d} where there is a single shape:
\begin{equation}
  \label{eq:p-reg-gen-model}
  \measured{i}=\MR \textstyle \params{i} + \vt + \vepsilon_{i} \qquad i=1, \ldots, N
\end{equation}
Eq.~\eqref{eq:p-reg-gen-model} is also known as the
\emph{point cloud registration} problem~\cite{Arun87pami,Horn87josa} and consists in finding a rigid body transformation $(\MR,\vt)$ (where $\MR \in \SOthree$ and $\vt \in \Real{3}$) that
aligns two sets of 3D points $\measured{i} \in \Real{3}$ and $\params{i} \in \Real{3}$, with $i=1,\ldots,N$.
The corresponding measurement model can again be seen to be an instance of the general model~\eqref{eq:robin-measurements}.

In the noiseless case $(\vepsilon_{i} = \zero)$, it follows from~\eqref{eq:p-reg-gen-model} that
\begin{align}\label{eq:p-reg-noiseless-invariant}
\| \measured{j} - \measured{i}\| &= \| (\MR \textstyle \params{j} + \vt) - (\MR \textstyle \params{i} + \vt)\| = \\ 
 &= \| \MR (\params{j} - \params{i}) \| = \| \params{j} - \params{i} \| \nonumber
\end{align}
for any pair of measurements $i,j$, where $\|\cdot\|$ is the 2-norm and we used the fact that  
the 2-norm is invariant under rotation.
This is an example of noiseless 2-invariant, $\vf(\measured{i}, \measured{j}) \doteq \| \measured{j} - \measured{i} \| = \vf(\params{i}, \params{j}) $, which relates measurements and model regardless of the choice of $\vxx$,
and formalizes the intuition that 
the distance between pairs of points in a point cloud is invariant
under rigid transformations.

While Definition~\ref{def:n-invariant} provides a general definition of noiseless invariance,
 practical problems always involve noise.
Therefore we need to generalize the notion of invariance as follows.

\begin{definition}[Generalized $n$-\Invariant]\label{def:generalized-n-invariant}
Given eq.~\eqref{eq:robin-measurements} and assuming the measurement noise is bounded $\|\vepsilon_{i}\| \leq \beta$ ($i=1,\ldots,N$), a pair of functions $(\vf, \vF)$ is called a
\emph{generalized $n$-\invariant} if {for any arbitrary $\subMeas{n} \!\subset\!\calY$ of size $n$},
the following relation holds, regardless of the choice of $\vxx$:
  \begin{equation}
\label{eq:generalized-n-invariant}
\vf( \measured{\subMeas{n}}) \in \vF(\vtheta_{\subMeas{n}}, \beta)
  \end{equation}
  where $\vF(\vtheta_{\subMeas{n}}, \beta)$ is a set-valued function (independent on $\vxx$).
\end{definition}

Intuitively, because of the noise, now the measurements can produce a number of different invariants 
$\vf( \measured{\subMeas{n}})$, but we can still define a set $\vF(\vtheta_{\subMeas{n}}, \beta)$ that contains all potential invariants produced by the measurements. An example is in order.

\myParagraph{Example}
Going back to the example of point cloud registration in eq.~\eqref{eq:p-reg-gen-model},
with $\|\vepsilon_{i}\| \leq \beta$ and $\|\vepsilon_{j}\| \leq \beta$, we have
\begin{align}
  \| \measured{j}\!-\!\measured{i} \| &=  \| \MR (\params{j}  - \params{i}) + \vepsilon_j - \vepsilon_i\|  
\end{align}
If we apply the triangle inequality to the right-hand-side of~\eqref{eq:p-reg-noisy-invariant} we obtain:

\vspace{-8mm}
\begin{align}
 \label{eq:test-p-ref2}
 \| \params{j}  - \params{i}\| - \| \vepsilon_j - \vepsilon_i\|  &\leq&  
 \overbrace{ 
 \| \MR (\params{j}  - \params{i}) + \vepsilon_j - \vepsilon_i)\| }^{ = \| \measured{j}\!-\!\measured{i} \| \text{ per eq.}~\eqref{eq:p-reg-noisy-invariant}}
 \\  
 &\leq&  \| \params{j}  - \params{i}\| + \| \vepsilon_j - \vepsilon_i\| \nonumber
\end{align}
Now $\|\vepsilon_i\|\leq\!\beta$ and $\|\vepsilon_j\|\leq\!\beta$ imply
$\| \vepsilon_j - \vepsilon_i \| \leq 2\beta$. Substituting in~\eqref{eq:test-p-ref2} we obtain:
 \beq
 \label{eq:test-p-ref3}
 \| \params{j} - \params{i}\| - 2\beta  \leq  \| \measured{j}\!-\!\measured{i} \|  \leq  \| \params{j}  - \params{i}\| + 2\beta
 \eeq
 or, in other words:
 \beq
 \label{eq:p-reg-noisy-invariant}
\overbrace{\| \measured{j}\!-\!\measured{i} \|}^{\vf( \measured{\subMeas{n}})} 
\in  
\overbrace{
\Big[ \| \params{j}  - \params{i}\| - 2\beta, \| \params{j}  - \params{i}\| + 2\beta \Big]
}^{ \vF(\vtheta_{\subMeas{n}}, \beta) }
 \eeq
 which corresponds to our definition of generalized $n$-invariant (with $n=2$).
 Geometrically, eq.~\eqref{eq:p-reg-noisy-invariant} states the distances between pairs of measured points 
 ($\measured{j}$ and $\measured{i}$) must match
 corresponding distances between points in our model ($\params{j}$ and $\params{i}$) up to noise. 
Note that when $\beta = 0$ (noiseless case), eq.~\eqref{eq:p-reg-noisy-invariant} reduces back to eq.~\eqref{eq:p-reg-noiseless-invariant}, as the set 
$\vF(\vtheta_{\subMeas{n}}, \beta)  \doteq \big[ \| \params{j}  - \params{i}\| - 2\beta, \| \params{j}  - \params{i}\| + 2\beta \big]$ reduced to the singleton $\| \params{j}  - \params{i}\|$.

Definition~\ref{def:generalized-n-invariant} generalizes Definition~\ref{def:n-invariant} to account for the presence of noise. Moreover, as we will see in Section~\ref{sec:robin-category-level}, we can develop generalized $n$-invariants for category-level perception even when noiseless invariants are difficult to find.
In other words, while it may be difficult to pinpoint a noiseless invariant function, it is often easier to 
find a set of values that a suitable function $\vf( \measured{\subMeas{n}})$ must belong to. 
In the following sections, unless otherwise specified, we use the term $n$-\invariants to refer to 
generalized $n$-\invariants.

\subsection{From Invariants to \Compatibility Tests for Outlier Pruning}
\label{sec:compatibilityTests}

While the previous section developed invariants without distinguishing inliers from outliers, this section shows that
the resulting invariants can be directly used to check if a subset of measurements contains an outlier.
Towards this goal, we formalize the notion of inlier and outlier. %

\begin{definition}[Inliers and Outliers]\label{def:inlier-outlier}
Given measurements~\eqref{eq:robin-measurements} and a threshold $\beta {\geq} 0$,
a measurement $i$ is an \emph{inlier} if the corresponding noise satisfies $\|\vepsilon_i\| \leq \!\beta$
and is an \emph{outlier} otherwise.
\end{definition}

The notion of invariants introduced in the previous section allowed us to obtain relations that
depend on the measurements and a noise bound, 
but are independent on $\vxx$, see eq.~\eqref{eq:generalized-n-invariant}.
Therefore, we can directly use these relations to check if a subset of $n$ measurements contains outliers:
if eq.~\eqref{eq:generalized-n-invariant} 
 is not satisfied by a subset of measurements $\measured{\subMeas{n}}$
then the corresponding subset of measurements \emph{must} contain an outlier.
We call the corresponding check a \emph{\compatibility test}.
In the following, we provide an example of \compatibility test for point cloud registration.
The reader can find more examples of compatibility tests for other applications in~\cite{Shi21icra-robin}.

\myParagraph{Example of \Compatibility Test}
For our point cloud registration example,
 eq.~\eqref{eq:p-reg-noisy-invariant} states that 
 any pair of measurements $\measured{i}$ and $\measured{j}$
  with noise $\|\vepsilon_i\|\leq\!\beta$ and $\|\vepsilon_j\|\leq\!\beta$
 must satify:
  \beq
\| \measured{j}\!-\!\measured{i} \|
\in  
\Big[ \| \params{j}  - \params{i}\| - 2\beta, \| \params{j}  - \params{i}\| + 2\beta \Big]
 \eeq
 If the relation is satisfied, we say that $\measured{i}$ and $\measured{j}$ are \compatible with each other 
 (\ie they can potentially be both inliers); however, if the relation is \emph{not} satisfied, then one of the measurements \emph{must} be an outlier.
Generalizing this example, we obtain the following general definition of \emph{\compatibility test}.

\begin{definition}[\Compatibility Test]
  \label{def:comp-test}
Given a subset of $n$ measurements and the corresponding $n$-\invariant, a \compatibility test
 is a binary condition (computed using the invariant), such that if the condition fails, then the set of measurements
 \emph{must} contain at least {one} outlier.
\end{definition}

Note that we require the test to be \emph{sound} (\ie it does not detect outliers when testing a set of inliers), but
may not be \emph{complete} (\ie the test might pass even in the presence of {outliers}).
This property is important since our goal is to prune as many outliers as we can, while preserving the inliers.
Also note that the test detects if the set contains {outliers}, but does not provide information on \emph{which} measurements are outliers. We are going to fill in this gap below.

\subsection{From \Compatibility Tests to \Compatibility \Hypergraph}
\label{sec:inv-graph}

For a problem with an $n$-\invariant,
we describe the results of the compatibility tests for all subsets of $n$ measurements using a
\emph{\compatibility hypergraph}.

\begin{definition}[Compatibility \Hypergraph]\label{def:comp-hypergraph}
  Given a compatibility test with $n$ measurements, define the compatibility \hypergraph $\calG(\calV,\calE)$ as
  an $n$-uniform undirected hypergraph,\footnote{In an $n$-uniform hypergraph, each hyperedge involves exactly $n$ nodes.} where
each \vertex $v$ in the \vertex set $\calV$ is associated to a measurement in~\eqref{eq:robin-measurements}, and
an hyperedge $e$ (connecting a subset of $n$ measurements) %
belongs to 
the edge set $\calE$ if and only if its subset of measurements passes the compatibility test.
\end{definition}

Note that in the case where $n=2$,
the above definition reduces to a regular undirected graph.
Building the \compatibility graph requires looping over all subsets of $n$ measurements and, whenever the subset
passes the \compatibility test, adding a hyperedge between the corresponding $n$ nodes in the graph.
Note that these checks are very fast and easy to parallelize since they only involve checking boolean conditions (\eg~\eqref{eq:test-p-ref3})  without computing an estimate (as opposed to \ransac).

\myParagraph{Inlier Structures in \Compatibility \Hypergraphs} 
Here we show that the inliers in the set of measurements~\eqref{eq:robin-measurements} are contained in a 
single hyperclique of the \compatibility \hypergraph. 
Let us start with some definitions.

\begin{definition}[\Hypercliques and Maximum \Hyperclique in Hypergraphs]
A \hyperclique of an $n$-uniform hypergraph $\calG$ is a set of vertices such that any subsets of $n$  vertices is connected by an hyperedge in $\calG$.
The \emph{maximum} \hyperclique of $\calG$ is the \hyperclique with the largest number of vertices.
\end{definition}

Again, in the case where $n=2$, the above definition reduces to the usual clique and maximum clique definition.
Given a \compatibility \hypergraph $\calG$, the following result relates the set of inliers in~\eqref{eq:robin-measurements}
with \hypercliques in $\calG$ ({proof in \supplementary{sec:app-proofGraph}}).

\begin{theorem}[Inliers and \Hypercliques]\label{thm:inliers-form-clique}
Assume~we are given measurements~\eqref{eq:robin-measurements} (with inlier noise bound $\beta$) and the corresponding $n$-\invariants;
call $\calG$ the corresponding \compatibility hypergraph.
  Then, assuming there are at least $n$ inliers, all inliers belong to a single \hyperclique in $\calG$. %
\end{theorem}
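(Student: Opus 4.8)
The plan is to prove the statement by simply unfolding the definitions and exhibiting one explicit \hyperclique that contains every inlier, namely the set of all inlier indices itself. First I would introduce $\calI \subseteq \calY$ as the set of indices of all inlier measurements, so that by Definition~\ref{def:inlier-outlier} we have $\|\vepsilon_i\| \leq \beta$ for every $i \in \calI$; by hypothesis $|\calI| \geq n$, so the claim is nonvacuous. The goal is then to show that $\calI$ is a \hyperclique of $\calG$, from which the theorem follows immediately since $\calI$ contains all inliers.

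The core step is to pick an arbitrary subset $\subMeas{n} \subseteq \calI$ of exactly $n$ indices and show it spans a hyperedge of $\calG$. Since all elements of $\subMeas{n}$ are inliers, the generalized $n$-invariant property (Definition~\ref{def:generalized-n-invariant}, eq.~\eqref{eq:generalized-n-invariant}) applies and yields $\vf(\measured{\subMeas{n}}) \in \vF(\vtheta_{\subMeas{n}}, \beta)$, regardless of the unknown $\vxx \in \domainX$. By the soundness requirement on the compatibility test (Definition~\ref{def:comp-test} and the discussion following it), a failed test witnesses at least one outlier, so a subset containing no outliers cannot fail; hence $\subMeas{n}$ passes the compatibility test, and by Definition~\ref{def:comp-hypergraph} the corresponding $n$ \vertices are joined by a hyperedge $e \in \calE$. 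As $\subMeas{n}$ was an arbitrary $n$-subset of $\calI$, every $n$-subset of $\calI$ spans a hyperedge, which is precisely the definition of $\calI$ being a \hyperclique of $\calG$, completing the argument.

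I do not expect a serious obstacle here, since the result is essentially a direct consequence of the soundness of the compatibility test, but two points deserve care. First, one must invoke the \emph{generalized} $n$-invariant (Definition~\ref{def:generalized-n-invariant}) rather than the noiseless one (Definition~\ref{def:n-invariant}), because for the category-level problems treated later only the generalized version is available; the argument above uses only eq.~\eqref{eq:generalized-n-invariant}, so this causes no trouble. Second, and more importantly, I would be careful not to over-claim: the theorem asserts only that the inliers lie in \emph{a} single \hyperclique, not in the \emph{maximum} \hyperclique, since adversarial outliers could in principle form a larger mutually compatible set; the stronger "maximum \hyperclique" statement is only a typical/generic behavior (as discussed in the introduction) and is deliberately not what is being proved here.
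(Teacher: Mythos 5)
Your proposal is correct and follows essentially the same route as the paper's own proof: exhibit the set of all inlier indices, note that soundness of the compatibility test guarantees every $n$-subset of inliers passes and hence spans a hyperedge, and conclude that this set is a hyperclique. Your additional caveat that the result concerns \emph{a} hyperclique rather than the maximum one is consistent with the paper's discussion and does not change the argument.
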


Theorem~\ref{thm:inliers-form-clique} implies that we can look for inliers by computing \hypercliques in the \compatibility graph.
Since we expect to have more \compatible inliers than outliers, here we propose to compute the maximum \hyperclique;
this approach is shown to work extremely well in practice in Section~\ref{sec:experiments}.
\supplementary{sec:app-milp} describes an algorithm to find
the maximum \hyperclique. %

\JS{
\myParagraph{Comparison with~\cite{Shi21icra-robin}}
  In our previous work~\cite{Shi21icra-robin} ---where we first proposed \robin--- we defined compatibility graphs as ordinary graphs, and inliers structures as maximum cliques.
  In the present paper, we define compatibility graphs as hypergraphs (Definition~\ref{def:comp-hypergraph}), and inlier structures as maximum hypercliques (Theorem~\ref{thm:inliers-form-clique}).
  This new formulation is equivalent to~\cite{Shi21icra-robin} for $2$-invariants, but different otherwise.
  Topologically, the compatibility graphs in~\cite{Shi21icra-robin} can be seen as \emph{clique-expanded} compatibility hypergraphs, where each hyperedge on a subset of $n$ nodes is substituted with pairwise edges between all nodes in the subset (\ie a clique in the graph).
  Compared to~\cite{Shi21icra-robin}, our new formulation leads to pruning a larger number of outliers (see \supplementary{sec:app-hypergraph-v-graph} for a concrete example).
}

\setlength{\textfloatsep}{0pt}%
\begin{algorithm}[t]
{\footnotesize
\SetAlgoLined
\textbf{Input:} \ set of measurements $\calY$ and model~\eqref{eq:robin-measurements}; $n$-\invariant functions $(\vf,\vF)$ (for some $n$); inlier noise bound $\beta$\;
\textbf{Output:} \  subset $\calY^\star \subset \calY$ \;
\% Initialize \compatibility graph \label{line:startGraph} \\
$\calV  = \calY$; \!\!\quad  \% each node is a measurement \label{line:vertices}\\
$\calE  = \emptyset$; \quad \% start with empty hyperedge set \label{line:emptyEdges}\\
\% Perform \compatibility tests \\
\For{{\bf all subsets } $\subMeas{n} \subset \calY$ {\bf of size $n$}}{
  \If{ {\rm \texttt{testCompatibility}}($\subMeas{n},\vf,\vF$) = {\rm pass} }{
    add a hyperedge $e = \subMeas{n}$ to $\calE$;   \label{line:addEdges} \\
  }
} \label{line:endGraph}
\% Find \compatible measurements \label{line:graphTheory} \\
$\calY^\star = {\texttt{max\_hyperclique}}(\calV,\calE)$\;
 \textbf{return:} $\calY^\star$. \label{line:return}
 \caption{\robin \label{alg:robin}}
}
\end{algorithm}

\subsection{\robin: Graph-theoretic Outlier Rejection}
This section summarizes all the findings above into a single algorithm for graph-theoretic outlier pruning, named
\robin (\emph{\robinLong}).
\robin's pseudocode is given in Algorithm~\ref{alg:robin}.
The algorithm takes a set of measurements $\calY$ in input, and outputs a subset $\calY^\star \subset \calY$ from which many outliers have been pruned.
\LC{Given an $n$-invariant,} \robin first performs \compatibility tests on all subsets of $n$ measurements
and builds the corresponding \compatibility hypergraph (lines~\ref{line:startGraph}-\ref{line:endGraph}).
Then, it uses a maximum \hyperclique solver %
\LC{to compute} the subset of measurements surviving outlier pruning %
(lines~\ref{line:graphTheory}-\ref{line:return}).
We have implemented the maximum hyperclique solver described in~\supplementary{sec:app-milp} using CVXPY~\cite{Diamond16cvxpy};
in the special case where $n=2$, \ie the \compatibility graph is an ordinary graph,
we use the parallel maximum clique solver from~\cite{Rossi15parallel}.
We remark that \robin is not guaranteed to reject all outliers, \ie some outliers may still pass the \compatibility 
tests and end up in the maximum \hyperclique.
 Indeed,  as mentioned in the introduction, \robin is designed to be a preprocessing
step to prune gross outliers and enhance the robustness of existing robust estimators.
  Note that Algorithm~\ref{alg:robin} can be applied to various estimation problems,
  as long as suitable \compatibility tests are defined (\ie \texttt{testCompatibility} function).
  In~\cite{Shi21icra-robin}, we report applications to many geometric perception problems, including point cloud registration,
   point-with-normal registration, and camera pose estimation. 
  In the following section, we develop \compatibility tests for 3D-3D and 2D-3D category-level perception problems.

\section{Stage 1 (continued): Application to Category-Level Perception}
\label{sec:robin-category-level}

This section tailors \robin to category-level perception.
Specifically, we develop \emph{\compatibility tests}
for Problem~\ref{prob:3d3d-statement} (Section~\ref{sec:3d3d-comp-test}) and Problem~\ref{prob:2d3d-statement} (Section~\ref{sec:2d3d-comp-test}).

\subsection{3D-3D Category-level Compatibility Test}
\label{sec:3d3d-comp-test}

We develop a 3D-3D category-level compatibility test to check if a pair of 3D keypoint measurements are mutually compatible; our test generalizes results  
 on instance-level perception, where the object shape is known~\cite{Yang20tro-teaser,Enqvist09iccv,Shi21icra-robin}.

\begin{figure}[t]
    \centering
    \includegraphics[width=0.7\columnwidth]{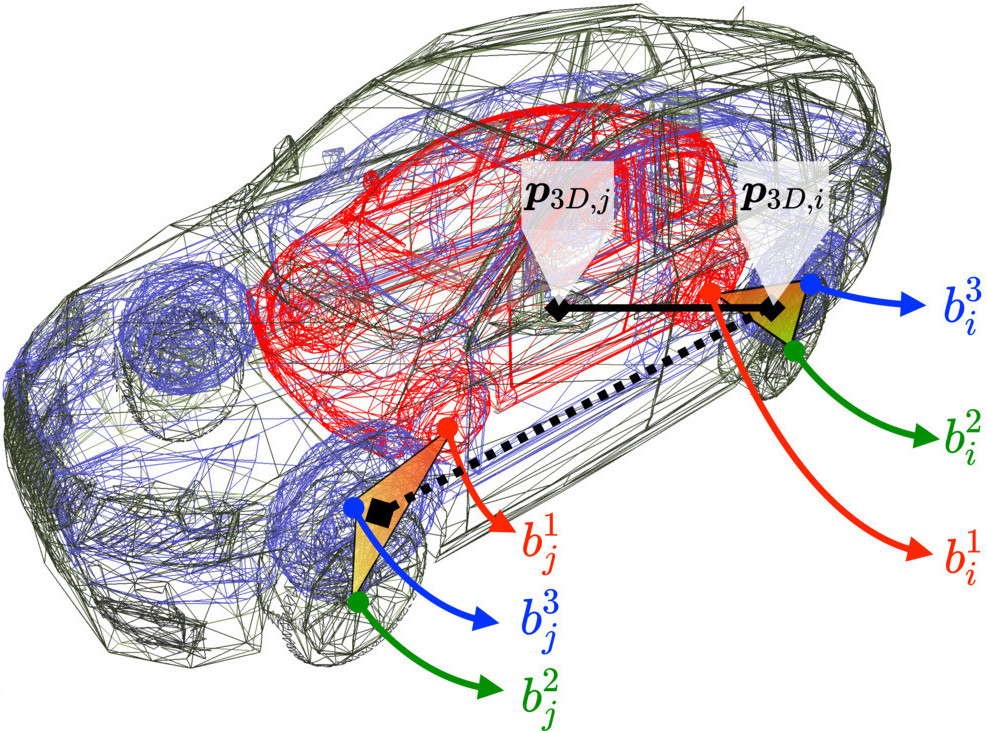} \vspace{-3mm}
    \caption{Example of compatibility test %
    with 3 CAD models of cars (red, dark green, blue, indexed from 1 to 3). (Noiseless) inliers (\eg the detection of the back wheel $\measThree{i}$ in the figure)
    must fall in the convex hull of the corresponding points on the CAD models (\eg triangle $\basis{1}{i}-\basis{2}{i}-\basis{3}{i}$ encompassing the back wheel positions across CAD models).
    This restricts the relative distance between two inliers and allows filtering out outliers.
     For instance, the dashed black line shows a distance that is compatible with the location of the convex hulls, while
     the solid black line is too short compared to the relative position of the wheels (for any car model) and
     allows pointing out that there is an outlier (\ie $\measThree{j}$ in the figure).
     \label{fig:3d-compatibility} }
\end{figure}

We derive a pairwise \invariant (\ie a 2-invariant) according to Definition~\ref{def:generalized-n-invariant}.
The challenge is to develop a set-valued function $\vF$ 
that does not depend on the pose and shape parameters, which are unknown.
Towards this goal, we show how to manipulate~\eqref{eq:generativeModel-3d} to obtain a function $\vF$ that does not depend on $\MR$, $\vt$, and $\vc$.
Taking the difference between measurement $i$ and $j$ in~\eqref{eq:generativeModel-3d} leads to:
\bea
\measThree{j} - \measThree{i}=
\MR \textstyle\sum_{k=1}^{\nrShapes}c^{k} ( \basis{k}{j} - \basis{k}{i}) + (\epsThree{j} - \epsThree{i} ) \nonumber
\eea
where the translation cancels out in the subtraction.
Now taking the $\ell_2$ norm of both members we obtain: %
\bea
\| \measThree{j} - \measThree{i} \|=
\big\| \MR \textstyle\sum_{k=1}^{\nrShapes}c^{k} ( \basis{k}{j} - \basis{k}{i}) + (\epsThree{j} - \epsThree{i}) \big\| \nonumber
\eea
Using the triangle inequality, we have
\begin{align}
  - \| \epsThree{j} - \epsThree{i} \| &\leq \|  \measThree{j} - \measThree{i} \| - 
  \bigg\| \MR \textstyle\sum_{k=1}^{\nrShapes}c_{k} ( \basis{k}{j} - \basis{k}{i}) \bigg\| \nonumber \\
  &\leq \| \epsThree{j} - \epsThree{i} \|
\end{align}
Now observing that the $\ell_2$ norm is invariant to rotation and rearranging the terms:
\begin{align}
  \bigg\| \textstyle\sum_{k=1}^{\nrShapes}&c^{k} ( \basis{k}{j} - \basis{k}{i}) \bigg\| - \| \epsThree{j} - \epsThree{i} \| 
  \leq \| \measThree{j} - \measThree{i} \| 
  \nonumber \\
  &
  \leq \bigg\| \textstyle\sum_{k=1}^{\nrShapes}c^{k} ( \basis{k}{j} - \basis{k}{i}) \bigg\| + \| \epsThree{j} - \epsThree{i} \| 
\end{align}
Taking the extreme cases over the  possible shape coefficients:

\vspace{-5mm}
\begin{align} \label{eq:definebminbmax}
  &\overbrace{\min_{ \vc \geq 0, \ones\tran \vc = 1} \bigg\| \textstyle \sum_{k=1}^{\nrShapes}c^{k} ( \basis{k}{j} - \basis{k}{i}) \bigg\|}^{b_{ij}^\min} - \| \epsThree{j} - \epsThree{i} \| \\
  &\leq \| \measThree{j} - \measThree{i} \| \nonumber \\
  &\leq \underbrace{ \max_{\vc \geq 0, \ones\tran \vc = 1} \bigg\| \textstyle \sum_{k=1}^{\nrShapes}c^{k} ( \basis{k}{j} - \basis{k}{i}) \bigg\|}_{ b_{ij}^\max } + \| \epsThree{j} - \epsThree{i} \| \nonumber
\end{align}
Since $\sum_{k=1}^{\nrShapes}c^{k} \basis{k}{j}$ is a convex combinations of the points $\basis{k}{j}$ ($k=1\ldots,\nrShapes$) and hence lies in the convex hull of such points, the term $\|  \sum_{k=1}^{\nrShapes}c^{k} ( \basis{k}{j} - \basis{k}{i}) \|$ represents the distance between two (unknown) points in the two
convex hulls defined by the set of points $\basis{k}{j}$ and $\basis{k}{i}$ ($k=1\ldots,\nrShapes$) (Fig.~\ref{fig:3d-compatibility}).
 The minimum $b_{ij}^\min$ and the maximum
$b_{ij}^\max$ over the convex hulls can be easily computed, either in closed form or via small convex programs (see details in \supplementary{sec:app-bmin-bmax}).
Accordingly,
\begin{align} \label{eq:3d3d-pt-dist-bound}
  \| \measThree{j}& - \measThree{i} \| \\
  &\in \left[ b_{ij}^\min - \| \epsThree{j} - \epsThree{i} \|, b_{ij}^\max + \| \epsThree{j} - \epsThree{i} \| \right] \nonumber
\end{align}
Note that $b_{ij}^\min$ and $b_{ij}^\max$ only depend on the \LC{given CAD library}, and are independent on $(\MR, \vt, \vc)$. Therefore, they can be pre-computed.
We can now define the pairwise invariant for Problem~\ref{prob:3d3d-statement} with generative model defined in eq.~\eqref{eq:generativeModel-3d}:
\begin{proposition}[3D-3D Category-level Pairwise \Invariant and \Compatibility Test]\label{prop:3d3d-invariant}
  Assume bounded noise $\|\epsThree{i} \| \leq \inthrThree$ for $i=1,\ldots,N$.
  The function $ \invfunThree(\measThree{i}, \measThree{j}) \doteq \| \measThree{j} - \measThree{i} \|$ is a pairwise invariant for eq.~\eqref{eq:generativeModel-3d}, with
  \begin{align}
    \invFunThree(\vtheta_{i}, &\vtheta_{j}, \inthrThree )= \left[ b_{ij}^\min - 2\inthrThree, b_{ij}^\max + 2\inthrThree \right] \nonumber
  \end{align}
  where $\vtheta_{i} \!=\! \{ \basis{k}{i} \!\mid\! k=1,\ldots,K\}$ and $\vtheta_{j} \!=\! \{ \basis{k}{j} \!\mid\! k=1, \ldots, K \}$. Therefore, two measurements $\measThree{i}$ and $\measThree{j}$ are compatible if $\invfunThree(\measThree{i}, \measThree{j}) \in \invFunThree(\vtheta_{i}, \vtheta_{j}, \inthrThree )$.
\end{proposition}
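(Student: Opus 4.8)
The plan is to verify the chain of inequalities already assembled in the text preceding the statement, checking that every step is valid for \emph{every} admissible state $(\MR,\vt,\vc)$, which is exactly what Definition~\ref{def:generalized-n-invariant} demands (here with $n=2$, $\vf=\invfunThree$, $\vF=\invFunThree$). First I would subtract the two instances of~\eqref{eq:generativeModel-3d}, noting that the translation cancels, so $\measThree{j}-\measThree{i} = \MR\sum_k c^{k}(\basis{k}{j}-\basis{k}{i}) + (\epsThree{j}-\epsThree{i})$. Taking $\ell_2$ norms and applying the reverse triangle inequality gives $\big|\,\|\measThree{j}-\measThree{i}\| - \|\MR\sum_k c^{k}(\basis{k}{j}-\basis{k}{i})\|\,\big| \le \|\epsThree{j}-\epsThree{i}\|$, and since $\MR\in\SOthree$ is orthogonal we have $\|\MR\sum_k c^{k}(\basis{k}{j}-\basis{k}{i})\| = \|\sum_k c^{k}(\basis{k}{j}-\basis{k}{i})\|$, which eliminates the dependence on $\MR$ (and $\vt$).

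Second, I would remove the dependence on $\vc$. The map $\vc\mapsto\|\sum_k c^{k}(\basis{k}{j}-\basis{k}{i})\|$ is continuous on the probability simplex $\Delta_K$, which is compact, so the extrema $b_{ij}^\min$ and $b_{ij}^\max$ defined in~\eqref{eq:definebminbmax} are attained; hence for every feasible $\vc$ one has $b_{ij}^\min \le \|\sum_k c^{k}(\basis{k}{j}-\basis{k}{i})\| \le b_{ij}^\max$. I would also observe (as the surrounding discussion and \supplementary{sec:app-bmin-bmax} explain) that $b_{ij}^\min$ and $b_{ij}^\max$ are the minimum and maximum distance between the convex hulls $\conv{\{\basis{k}{i}\}_k}$ and $\conv{\{\basis{k}{j}\}_k}$, so they depend only on the CAD library entries $\vtheta_i,\vtheta_j$ and not on $(\MR,\vt,\vc)$ — this is the point that makes the resulting bound a genuine, state-independent invariant. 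Combining with the previous step yields $b_{ij}^\min - \|\epsThree{j}-\epsThree{i}\| \le \|\measThree{j}-\measThree{i}\| \le b_{ij}^\max + \|\epsThree{j}-\epsThree{i}\|$.

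Third, I would discharge the noise term: from $\|\epsThree{i}\|\le\inthrThree$ and $\|\epsThree{j}\|\le\inthrThree$ the triangle inequality gives $\|\epsThree{j}-\epsThree{i}\|\le 2\inthrThree$, so the interval widens to $\invFunThree(\vtheta_i,\vtheta_j,\inthrThree) = [\,b_{ij}^\min - 2\inthrThree,\; b_{ij}^\max + 2\inthrThree\,]$ and $\invfunThree(\measThree{i},\measThree{j})=\|\measThree{j}-\measThree{i}\|$ lies inside it regardless of $(\MR,\vt,\vc)$; this is precisely~\eqref{eq:generalized-n-invariant}, so $(\invfunThree,\invFunThree)$ is a generalized pairwise invariant. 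The compatibility-test claim then follows from Definition~\ref{def:comp-test}: if $\measThree{i}$ and $\measThree{j}$ were both inliers, the derivation above forces $\invfunThree(\measThree{i},\measThree{j})\in\invFunThree(\vtheta_i,\vtheta_j,\inthrThree)$ (soundness), so whenever this membership fails the pair must contain at least one outlier.

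The step I expect to demand the most care is the second one — establishing that the extremal values $b_{ij}^\min,b_{ij}^\max$ over the simplex exist (compactness), coincide with the convex-hull distances, and are independent of the unknown pose and shape. The remainder is routine triangle-inequality and rotation-invariance bookkeeping.
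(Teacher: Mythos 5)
Your proposal is correct and follows essentially the same route as the paper: the derivation in Section~\ref{sec:3d3d-comp-test} cancels the translation, applies the triangle inequality and rotation-invariance of the $\ell_2$ norm, extremizes over the simplex to obtain $b_{ij}^\min$ and $b_{ij}^\max$, and then bounds $\|\epsThree{j}-\epsThree{i}\|\le 2\inthrThree$, exactly as you do. Your added remark on compactness of $\Delta_K$ guaranteeing attainment of the extrema is a small but welcome tightening of a step the paper leaves implicit.
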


The proof of the proposition trivially follows from~eq.~\eqref{eq:3d3d-pt-dist-bound} and from the 
observation that $\|\epsThree{i}\|\leq\!\inthrThree$ and $\|\epsThree{j}\|\leq\!\inthrThree$ imply
$\| \epsThree{j} - \epsThree{i} \| \leq 2\inthrThree$.

Proposition~\ref{prop:3d3d-invariant} provides a compatibility test according to Definition~\ref{def:comp-test}.
In words, a pair of measurements is mutually compatible if their distance 
$\| \measThree{j} - \measThree{i} \|$ matches the corresponding distances in the CAD models
(lower-bounded by $b_{ij}^\min$ and upper-bounded by $b_{ij}^\max$) up to measurement noise $\inthrThree$.
If a pair of measurements fails the compatibility test, then one of them must be an outlier.
A geometric interpretation of the compatibility test (for $\inthrThree = 0$) is given in
Fig.~\ref{fig:3d-compatibility}.

\subsection{2D-3D Category-level Compatibility Test} \label{sec:2d3d-comp-test}

The pairwise invariant presented in the previous section was inspired by the fact that the distance between 
pairs of points is (a noiseless) invariant to rigid transformations. When it comes to 2D-3D problems, it is known that there is no 
(noiseless) invariant for 3D points in generic configurations
under perspective projection~\cite{Mundy92book}. 
One option would be to use invariants for special configurations of points, such as cross ratios for collinear points~\cite{Shi21icra-robin}; however, this would not be generally applicable to our problem, where 3D keypoints are arbitrarily distributed on the CAD models.
Here, we take an alternative route and we directly design a {generalized} 3-invariant for 
generic 3D point projections. %

Our 2D-3D category-level compatibility test draws inspiration from back-face culling in computer graphics~\cite{Eberly06book-gameEngineDesign}.
The key idea is that when observing an object, the \emph{winding order}  of 
triplet of keypoints seen in the image (roughly speaking: if the points are arranged in clockwise or counterclockwise order) must be consistent with the arrangement of the 
corresponding triplet of keypoints in the CAD models.
Therefore, 
we formulate a test by checking whether the observed winding order
matches our expectation from the CAD models.
In this section, we first define the notion of 2D and 3D winding orders, 
as well as the \vis and \covis \regions (camera locations where triplets of points are covisible);
then we introduce a $3$-invariant involving winding orders;
finally, we combine winding orders and \covis \regions to develop a %
2D-3D category-level compatibility test.

\renewcommand{\mpwthree}{5.9cm}
\renewcommand{\mpwthreetwo}{11.4cm}

\begin{figure}[t!]
    \centering%
    \includegraphics[width=0.7\columnwidth]{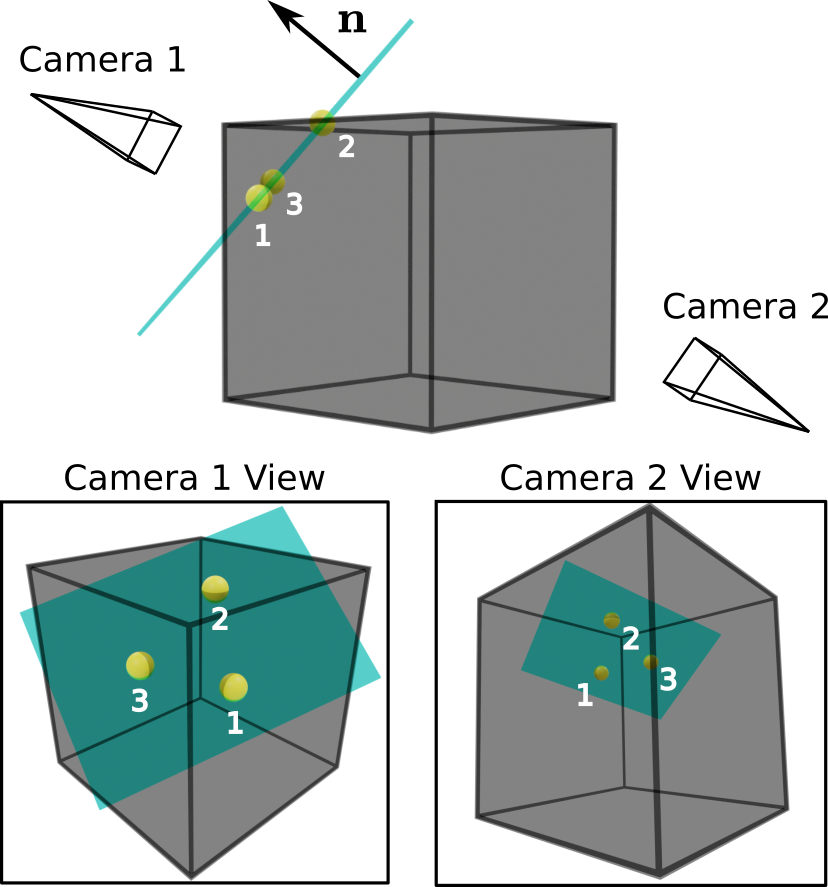}
	\caption{Illustration of the concept of winding orders on a cube with three keypoints (in yellow) on its surface.
	 The cube is opaque, but shown as semi-transparent for visualization.
	 The plane cutting through all three keypoints (blue) has its normal $\vn$ pointing outwards from the cube,
     following eq.~\eqref{eq:single-shape-triplet-normal}.
	 Camera 1 is in front of the plane, and the observed 2D winding order (Camera 1 View) is counterclockwise (enumerating in the order $1 \rightarrow 2 \rightarrow 3$),
     following the right-hand rule with thumb pointing towards the viewer.
	 Camera 2 is behind the plane, and would observe a clockwise winding order.
     The only feasible winding order is counterclockwise as observed by Camera 1, since the
      keypoints would be occluded by the cube in Camera 2.
   \label{fig:2d3d-winding-cube}}
\end{figure}

\myParagraph{2D Winding Orders}
Winding orders refer to whether an ordered triplet of projected points are arranged in a clockwise or counterclockwise order.
For example, if we enumerate the points in Fig.~\ref{fig:2d3d-winding-cube} in ascending order of their indices (\ie 1, 2, 3),
camera 1 sees points in counterclockwise order, while camera 2 sees them in clockwise order.
\omitted{In this section, we adopt the convention where the camera frame is right-handed, with $z$-axis pointing forward, $x$-axis pointing right and $y$-axis pointing downwards.
The projected points lie on the calibrated image plane at $z=1$, with the 2D image frame's $x$-axis pointing right and $y$-axis pointing down.}

\begin{definition}[2D Winding Order] \label{def:winding-order}
Given three 2D image points $\measTwo{i}$, $\measTwo{j}$, and $\measTwo{m}$ where $i < j < m$,
their winding order
is the orientation \{clockwise, counterclockwise\} of points
when enumerating them in the order $i \rightarrow j \rightarrow m$.
\end{definition}

The following proposition allows computing the 2D winding order algebraically (proof in \supplementary{sec:app-compute-winding-order}). %
\begin{proposition}[2D Winding Order Computation]\label{prop:compute-winding-order}
Assume three non-collinear 2D image points $\measTwo{i}$, $\measTwo{j}$, and $\measTwo{m}$,
then the
winding order $W$ can be computed as:
\begin{align}
  W = \begin{cases}
\text{clockwise} &  \text{if } V > 0 \\
\text{counterclockwise} &  \text{otherwise}
\end{cases}
\end{align}
where $V = \det \left( \begin{bmatrix}
        \measTwo{j} - \measTwo{i} \;\;
        \measTwo{m} - \measTwo{i}
   \end{bmatrix}   \right)$ and 
$\det(\cdot)$ denotes the matrix determinant.
\end{proposition}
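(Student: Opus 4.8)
The plan is to reduce the statement to the classical signed-area (cross-product) test for the orientation of an ordered triple of points in the plane, taking care that the image frame has its $y$-axis pointing downward, which reverses the orientation relative to the usual $y$-up convention and hence flips the sign of $V$.

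First I would establish the elementary fact behind the formula. For two planar vectors with coordinates $a = (a_1,a_2)$ and $b = (b_1,b_2)$, the determinant $\det\left(\begin{bmatrix} a \;\; b \end{bmatrix}\right) = a_1 b_2 - a_2 b_1$ is exactly the third coordinate of the $3$D cross product of $a$ and $b$ viewed as vectors in the $z = 0$ plane, so it equals $\|a\|\,\|b\|\sin\vartheta$, where $\vartheta \in \angledomain$ is the angle from $a$ to $b$ measured counterclockwise in a right-handed frame. Applying this with $a = \measTwo{j} - \measTwo{i}$ and $b = \measTwo{m} - \measTwo{i}$, the ordered triple $\measTwo{i} \rightarrow \measTwo{j} \rightarrow \measTwo{m}$ turns counterclockwise at $\measTwo{j}$ iff $V > 0$, clockwise iff $V < 0$, and the three points are collinear iff $V = 0$ — this last equivalence is why the non-collinearity hypothesis is needed. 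A way to make the angle argument fully rigorous, if desired, is to translate so that $\measTwo{i} = \zero$ and rotate so that $\measTwo{j} - \measTwo{i}$ lies along the positive $x$-axis; neither operation changes $V$ or the orientation of the triple, and afterwards $V$ reduces to $\|\measTwo{j} - \measTwo{i}\|$ times the second coordinate of the rotated $\measTwo{m} - \measTwo{i}$, whose sign is visibly the orientation of the turn.

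Next I would bring in the image-coordinate convention. Since image pixel coordinates use $x$ to the right and $y$ downward, reading off a configuration in this frame amounts to composing the picture above with a reflection across a horizontal line, which is orientation-reversing. Hence a triple that appears clockwise to a viewer of the image satisfies $V > 0$ when $V$ is computed from the image coordinates, and a triple that appears counterclockwise satisfies $V < 0$. Combined with the previous paragraph and the fact that non-collinearity forces $V \neq 0$ (so $\{V > 0\}$ and $\{V < 0\}$ exhaust all admissible cases and the ``otherwise'' branch is exactly $V < 0$), this gives precisely the claimed case split, matching the visual notion of winding order in Definition~\ref{def:winding-order}.

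I do not expect a real obstacle here: the argument is a one-line determinant identity plus bookkeeping. The only delicate point is the sign/handedness bookkeeping — keeping track of the downward $y$-axis so that ``$V > 0$'' is matched with ``clockwise'' rather than ``counterclockwise''. The remaining ingredients (invariance of $V$ under in-plane translations and rotations, and the collinearity edge case) are routine.
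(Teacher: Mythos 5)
Your proof is correct and follows essentially the same route as the paper's: both reduce $V$ to the $z$-component of the cross product $(\measTwo{j}-\measTwo{i})\times(\measTwo{m}-\measTwo{i})$ and then fix the sign via the handedness of the image frame (the paper by working in the right-handed frame with $z$ into the image plane, you by the equivalent observation that the $y$-down convention is an orientation-reversing reflection of the standard frame). Your explicit treatment of the collinear edge case is a minor addition the paper leaves implicit.
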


{In words, Proposition~\ref{prop:compute-winding-order} states that the 2D
winding order can be calculated from the signed area of the parallelogram formed by $\measTwo{j} - \measTwo{i}$ and $\measTwo{m} - \measTwo{i}$.}{}

\myParagraph{Half-spaces and 3D Winding Orders}
While Proposition~\ref{prop:compute-winding-order} provides a simple way to compute the winding order for
a triplet of 2D image points, towards our 2D-3D category-level invariant,
we need to establish a notion of winding order also for the 3D shape keypoints on a CAD model.
In the following we show that the winding order for a triplet of 3D points can be uniquely
determined given the location of the camera. 
To develop some intuition, consider Fig.~\ref{fig:2d3d-winding-cube}, where we have three 3D keypoints on the faces of a cube.
The plane passing across the triplet of 3D keypoints divides the space into two half-spaces.
Theorem~\ref{thm:winding-order-half-space} below shows that, whenever the camera lies within one of the  half-spaces, only one winding order is possible. Therefore, if the keypoints are only covisible by camera locations in one of the half-spaces, their winding order is uniquely determined. 

Let us formally define the half-spaces induced by the triplet plane.
Let $\basis{k}{i}$, $\basis{k}{j}$, and $\basis{k}{m}$ ($i < j < m$) be three model keypoints on the $k$-th CAD model.
Define the triplet normal vector in the model's frame as follows (\cf $\normal{}{}$ in Fig.~\ref{fig:2d3d-winding-cube}):
\begin{align}
  \label{eq:single-shape-triplet-normal}
\normal{k}{i,j,m} &= (\basis{k}{j} - \basis{k}{i}) \times (\basis{k}{m} - \basis{k}{i})
\end{align}
So the triplet plane equation is:
$(\vo - \basis{k}{i}) \cdot \normal{k}{i,j,m} = 0$ for any $\vo \in \Real{3}$.
If $(\vo - \basis{k}{i}) \cdot \normal{k}{i,j,m} > 0$ (resp. $(\vo - \basis{k}{i}) \cdot \normal{k}{i,j,m} < 0$), $\vo$ lies in the positive (resp. negative) half-space. 
In this section, one can think about $\vo$ as the optical center of the camera in the CAD model's frame, hence 
the inequalities above capture which half-space the camera is located in.

The next theorem connects winding order with the two half-spaces created by the triplet plane (proof in~\supplementary{sec:app-winding-order-half-space}).
\begin{theorem}[Half-spaces and 3D Winding Orders]\label{thm:winding-order-half-space}
Under perspective projection per eq.~\eqref{eq:generativeModel-2d} with zero noise ($\epsTwo{i} = \epsTwo{j} = \epsTwo{m} = 0$), the following equality holds:
\begin{align}\label{eq:winding-order-eq-half-space}
  &\sgn{((\vo - \basis{k}{i}) \cdot \normal{k}{i,j,m})} \nonumber \\
  =& -\sgn{\left( \det \begin{bmatrix}  \measTwo{j} - \measTwo{i} & \measTwo{m} - \measTwo{i} \end{bmatrix} \right)}
\end{align}
where $\sgn(\cdot)$ is the signum function.
\end{theorem}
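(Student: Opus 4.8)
The plan is to carry everything into the camera frame, where both sides of~\eqref{eq:winding-order-eq-half-space} turn out to be the \emph{same} $3\times 3$ determinant up to a strictly positive scalar. Assume the observed object coincides with the $k$-th CAD model, so that the noiseless model~\eqref{eq:generativeModel-2d} gives $\measTwo{\ell} = \pi(\vp^C_\ell)$, where $\vp^C_\ell \doteq \MR\,\basis{k}{\ell} + \vt$ are the camera-frame coordinates of keypoint $\ell$, for $\ell \in \{i,j,m\}$. Letting $\vo$ be the optical center expressed in the model frame, \ie the point with $\MR\vo+\vt = \zero$, I would first record the two elementary identities $\MR(\vo-\basis{k}{i}) = -\vp^C_i$ and $\MR(\basis{k}{j}-\basis{k}{i}) = \vp^C_j-\vp^C_i$ (and the analogue with $m$).

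For the left-hand side, I would use~\eqref{eq:single-shape-triplet-normal} to read $(\vo-\basis{k}{i})\cdot\normal{k}{i,j,m}$ as the scalar triple product of $\vo-\basis{k}{i}$, $\basis{k}{j}-\basis{k}{i}$ and $\basis{k}{m}-\basis{k}{i}$. Since $\MR\in\SOthree$ has unit determinant, this triple product is unchanged when all three vectors are left-multiplied by $\MR$, so it equals $-\vp^C_i\cdot\big((\vp^C_j-\vp^C_i)\times(\vp^C_m-\vp^C_i)\big)$; expanding and discarding the triple products containing a repeated $\vp^C_i$ collapses it to $-\vp^C_i\cdot(\vp^C_j\times\vp^C_m) = -\det\begin{bmatrix}\vp^C_i & \vp^C_j & \vp^C_m\end{bmatrix}$. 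For the right-hand side, I would invoke the classical signed-area identity $\det\begin{bmatrix}\measTwo{j}-\measTwo{i} & \measTwo{m}-\measTwo{i}\end{bmatrix} = \det\begin{bmatrix}\measTwo{i} & \measTwo{j} & \measTwo{m}\\ 1 & 1 & 1\end{bmatrix}$ and substitute the homogeneous representation $\begin{bmatrix}\measTwo{\ell}\\ 1\end{bmatrix} = (p^C_{\ell,z})^{-1}\,\vp^C_\ell$, which is immediate from the definition of $\pi(\cdot)$; factoring the three scalars out of the determinant gives $\det\begin{bmatrix}\measTwo{j}-\measTwo{i} & \measTwo{m}-\measTwo{i}\end{bmatrix} = (p^C_{i,z}\,p^C_{j,z}\,p^C_{m,z})^{-1}\det\begin{bmatrix}\vp^C_i & \vp^C_j & \vp^C_m\end{bmatrix}$.

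Combining the two computations, $(\vo-\basis{k}{i})\cdot\normal{k}{i,j,m} = -\,p^C_{i,z}\,p^C_{j,z}\,p^C_{m,z}\,\det\begin{bmatrix}\measTwo{j}-\measTwo{i} & \measTwo{m}-\measTwo{i}\end{bmatrix}$, and applying $\sgn(\cdot)$ to both sides yields~\eqref{eq:winding-order-eq-half-space} \emph{provided} $p^C_{i,z}\,p^C_{j,z}\,p^C_{m,z}>0$. I expect this positivity to be the main obstacle, in the sense that it is the one step where the hypotheses genuinely enter: it is the cheirality/covisibility condition that the three keypoints lie strictly in front of the camera ($p^C_{\ell,z}>0$), implicit in their being observed under perspective projection, and I would state it explicitly in the noiseless setup. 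Everything else --- the $\SOthree$-invariance of the scalar triple product, the homogeneous signed-area identity, and tracking the single minus sign that comes from $\MR(\vo-\basis{k}{i}) = -\vp^C_i$ --- is routine, so the argument reduces to a change of coordinates plus careful sign bookkeeping.
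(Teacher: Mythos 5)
Your proposal is correct and follows essentially the same route as the paper's proof: both reduce the left-hand side to $-\det\begin{bmatrix}\MR\basis{k}{i}+\vt & \MR\basis{k}{j}+\vt & \MR\basis{k}{m}+\vt\end{bmatrix}$ via the scalar triple product and rotation invariance, relate this to the $2\times 2$ determinant by factoring out the camera-frame $z$-coordinates through the homogeneous signed-area identity, and invoke the in-front-of-the-camera (cheirality) assumption to make the product of those $z$-coordinates positive. Your explicit flagging of the positivity condition is a welcome clarification of what the paper states only in passing, but it is not a different argument.
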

The theorem states that the half-space the camera is located in (identified by $\sgn{((\vo - \basis{k}{i}) \cdot \normal{k}{i,j,m})}$)
uniquely determines the 2D winding order of the projection of the 3D points.
This is not informative if the camera can be anywhere, since both winding orders are possible.
 In the following, we use the idea of \covis \regions to restrict potential 
 locations of the camera, such that we can associate a possibly unique winding order to triplets of 3D keypoints.

\begin{figure*}[t!]
    \centering%
    \includegraphics[width=\textwidth]{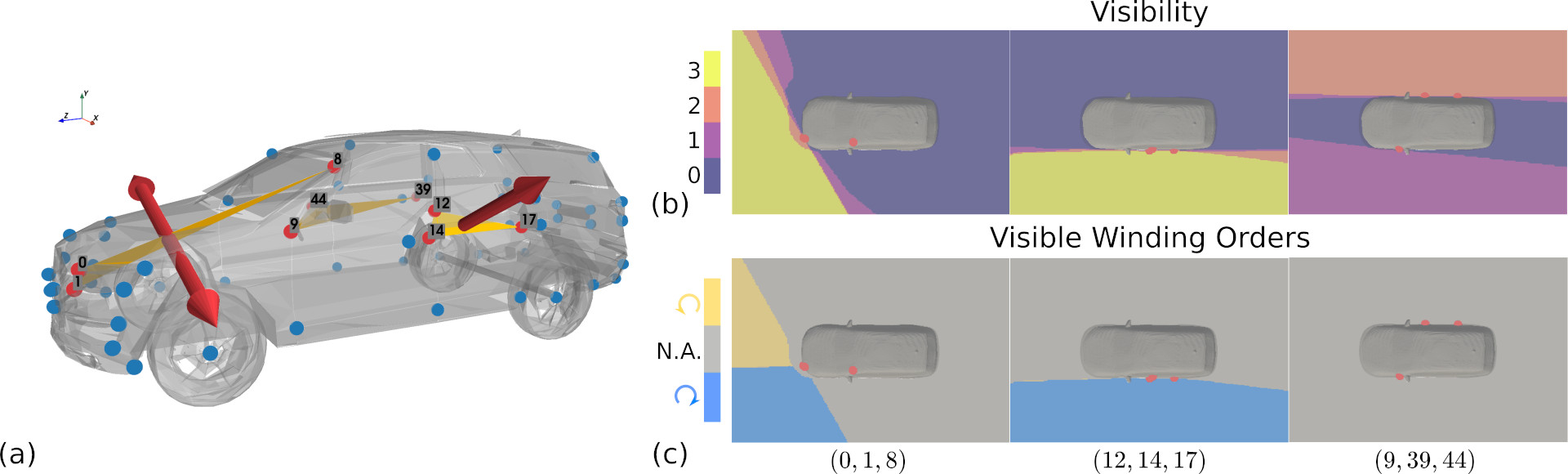}
    \vspace{-6mm}
	\caption{Visualization of 3D winding orders of three triplets of keypoints on the model \texttt{Q7-SUV} from the \apollo dataset.
    (a) CAD model with keypoints.
      Red points represent the selected  keypoint triplets, while yellow planes indicate the planes formed by the triplets.
      Arrows point towards the half-space in which the triplets are covisible.
      Triplet (0, 1, 8) can be viewed in both clockwise and counterclockwise winding order in their covisibility region.
      Triplet (12, 14, 17) can be only viewed in counterclockwise winding order.
      Triplet (9, 39, 44) cannot be viewed in either winding order, as their covisibility region is empty.
      (b-c) Winding orders and visibility of the triplets in a volume surrounding the car.
      The top row (b) shows visibility of the keypoints on a slice along the xz-plane. Color-coded values represent how many keypoints are visible in the triplet; covisibility regions are shaded in yellow.
      The bottom row (c) shows winding orders of the triplets in the covisibility region, with beige denoting counterclockwise, blue clockwise, and gray corresponding to cases where the \covis \region is empty.
      The visibility \region and winding order plots are generated following the ray tracing method described in \supplementary{sec:app-winding-order-dictionary}.
    }
    \label{fig:apollo-car-winding-order}
    \vspace{-5mm}
\end{figure*}

\myParagraph{\Vis~and~\Covis~\Regions}
\Vis and \covis \regions describe the set of camera locations from which keypoints are visible.
The 3D object observed by the camera is assumed to be opaque, hence a keypoint visible from one viewpoint, might not 
be visible from another, due to self-occlusions.
Fig.~\ref{fig:2d3d-winding-cube} demonstrates this concept:
due to self-occlusions,
the three keypoints on the cube are visible to Camera 1, but occluded (\ie not visible) in Camera 2.

We now define the \vis \region of keypoints on a shape,
following the standard definition~\cite{Hughes14book-computerGraphics}.%
\begin{definition}[\Vis \Region]\label{def:pt-visibility-region}
  The \vis~\region~of a keypoint $\basis{k}{i}$ %
  is the
  set of 3D points $\{\vo\}$ such that line segments connecting $\vo$ and $\basis{k}{i}$ do not intersect the $k$-th shape model.
\end{definition}
We also define \covis \regions for keypoints triplets,
which are the 3D space in which all three keypoints are visible.
\begin{definition}[\Covis \Region]\label{def:pt-covisibility-region}
  The \covis~\region~of a triplet of keypoints $\basis{k}{i}$, $\basis{k}{j}$, and $\basis{k}{m}$ is the intersection
  of the \vis \region of each keypoint.
\end{definition}

Fig.~\ref{fig:apollo-car-winding-order}(b) shows \vis and \covis \regions surrounding the \texttt{Q7-SUV} model from the \apollo dataset for selected triplets of keypoints.
For the triplet $(0, 1, 8)$, their \covis \regions are to the front of the car,
whereas for $(12, 14, 17)$, their \covis \regions are to the left of the car.
Notice how the relative positions between triplet half-spaces and \covis \regions affect the visible winding orders; see Fig.~\ref{fig:apollo-car-winding-order}(c).
The plane formed by $(0, 1, 8)$ cuts through their \covis \regions, hence both winding orders are visible.
For $(12, 14, 17)$, their \covis \regions are on one side of the keypoints plane, so the only visible winding order is clockwise. This observation is formalized below.

\begin{corollary}[\Covis-constrained 3D Winding Orders]\label{cor:winding_orders_feasibility}
  The projection of a triplet of 3D keypoints $\basis{k}{i}$, $\basis{k}{j}$, $\basis{k}{m}$ is arranged
  in \LC{counterclockwise winding order if and only if}
\begin{align}
  \{ \vo \in \Real{3} \mid  (\vo - \basis{k}{i}) \cdot \normal{k}{i,j,m}  > 0, \vo \in \CovisSet \} \neq \varnothing \label{eq:ijm_order_feasible}
\end{align}
where $\CovisSet$ is the \covis~\region of $\basis{k}{i}$, $\basis{k}{j}$, and $\basis{k}{m}$ on the $k$-th shape.
Similarly, \LC{the keypoints can be viewed}
in clockwise winding order if and only if
\begin{align}
  \{ \vo \in \Real{3} \mid  (\vo - \basis{k}{i}) \cdot \normal{k}{i,j,m}  < 0, \vo \in \CovisSet \} \neq \varnothing \label{eq:imj_order_feasible}
\end{align}
\end{corollary}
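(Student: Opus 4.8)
The plan is to obtain the statement as a direct concatenation of three results already in hand: Theorem~\ref{thm:winding-order-half-space}, which links the half-space containing the camera's optical center to the sign of the $2\times 2$ determinant of the projected edge vectors; Proposition~\ref{prop:compute-winding-order}, which reads the 2D winding order off the sign of that same determinant; and Definition~\ref{def:pt-covisibility-region}, which says that $\CovisSet$ is \emph{exactly} the set of optical centers from which $\basis{k}{i}$, $\basis{k}{j}$, and $\basis{k}{m}$ are simultaneously unoccluded. The reading of ``the projection of the triplet is arranged in counterclockwise winding order'' is: there exists an admissible observation realizing it, i.e., an optical center $\vo$ (expressed in the $k$-th model's frame) such that (i) $\vo \in \CovisSet$, so all three keypoints are visible and the image triplet $(\measTwo{i},\measTwo{j},\measTwo{m})$ is well defined, and (ii) that image triplet is counterclockwise.

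First I would record the equivalence for condition (ii). By Proposition~\ref{prop:compute-winding-order}, for a non-collinear image triplet the winding order is counterclockwise precisely when $V \doteq \det\begin{bmatrix} \measTwo{j}-\measTwo{i} & \measTwo{m}-\measTwo{i}\end{bmatrix} < 0$. By Theorem~\ref{thm:winding-order-half-space}, $\sgn(V) = -\sgn\big((\vo-\basis{k}{i})\cdot\normal{k}{i,j,m}\big)$, so $V<0$ holds if and only if $(\vo-\basis{k}{i})\cdot\normal{k}{i,j,m} > 0$, i.e.\ $\vo$ lies in the positive half-space of the triplet plane. Conjoining with (i), a counterclockwise observation of the triplet exists iff there is some $\vo$ with $\vo\in\CovisSet$ and $(\vo-\basis{k}{i})\cdot\normal{k}{i,j,m}>0$, which is exactly nonemptiness of the set in~\eqref{eq:ijm_order_feasible}. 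The clockwise case is identical after flipping the sign: clockwise corresponds to $V>0$, hence by Theorem~\ref{thm:winding-order-half-space} to $(\vo-\basis{k}{i})\cdot\normal{k}{i,j,m}<0$, yielding~\eqref{eq:imj_order_feasible}.

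The only delicate point is the degenerate configuration where the image triplet is collinear, equivalently $V=0$, equivalently $\vo$ lies on the triplet plane $(\vo-\basis{k}{i})\cdot\normal{k}{i,j,m}=0$: there no winding order is defined and Proposition~\ref{prop:compute-winding-order} does not apply. This is why~\eqref{eq:ijm_order_feasible}--\eqref{eq:imj_order_feasible} use strict inequalities, and I would remark that such optical centers form a measure-zero slice of $\CovisSet$ that carries no well-defined winding order, so restricting to the two open half-spaces is both necessary and harmless. A secondary remark I would make explicit is that the determinant sign in Proposition~\ref{prop:compute-winding-order} and Theorem~\ref{thm:winding-order-half-space} is taken in a fixed right-handed image convention and that the rigid model-to-camera change of coordinates implicit in~\eqref{eq:generativeModel-2d} has $\MR\in\SOthree$, hence is orientation-preserving and flips none of the signs in the chain. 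I expect the collinear/boundary bookkeeping to be the only real subtlety; everything else is a mechanical composition of the two preceding results with the definition of $\CovisSet$.
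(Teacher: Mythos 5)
Your proposal is correct and follows exactly the route the paper intends: the paper gives no separate proof beyond stating that the corollary ``follows directly from Proposition~\ref{prop:compute-winding-order}, Theorem~\ref{thm:winding-order-half-space}, and Definition~\ref{def:pt-covisibility-region},'' and your chain of sign equivalences (counterclockwise $\Leftrightarrow V<0 \Leftrightarrow (\vo-\basis{k}{i})\cdot\normal{k}{i,j,m}>0$, intersected with $\vo\in\CovisSet$) is precisely that composition with the signs handled correctly. Your added remarks on the collinear (measure-zero) degenerate case and on $\MR\in\SOthree$ being orientation-preserving are sound but go beyond what the paper records.
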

\noindent
This corollary follows directly from Proposition~\ref{prop:compute-winding-order},
Theorem~\ref{thm:winding-order-half-space}, and Definition~\ref{def:pt-covisibility-region}.
Remarkably, the two feasibility problems in eqs.~\eqref{eq:ijm_order_feasible} and~\eqref{eq:imj_order_feasible} and can be solved \emph{a priori}, since they only depend on the 
triplets of 3D keypoints and their normal.
For polyhedral shapes, the constraint $\vo \in \CovisSet$ can be expressed via linear constraints; 
in addition, if we replace the strict inequalities in eqs.~\eqref{eq:ijm_order_feasible} and~\eqref{eq:imj_order_feasible}
with non-strict ones \JS{and replace zeros with a small positive constant,} the problems can be solved via linear programming.
For complex shapes, we can check the conditions in eqs.~\eqref{eq:ijm_order_feasible} and~\eqref{eq:imj_order_feasible} 
by splitting the space around the CAD models into voxels (\ie we discretize the set of possible $\vo$) and 
ray tracing the keypoint to check visibility (see \supplementary{sec:app-winding-order-dictionary}).
In summary, given a CAD model and for each triplet of 3D keypoints, using eqs.~\eqref{eq:ijm_order_feasible} and~\eqref{eq:imj_order_feasible} we are able to predict if a covisible triplet
will produce clockwise or counterclockwise keypoint projections.

\begin{definition}[Feasible Winding Order Dictionary]\label{def:feasible_orders}
  For shape $k$, its feasible winding order dictionary $\calW_{k} : \{ 1, \ldots, N\}^{3} \to 
  \calP(\{-1,+1\})$ (where $\calP(\cdot)$ denotes the power set, \ie $\{+1, -1, \pm 1, \emptyset\}$) is defined as %
  \begin{align}\label{eq:feasible-winding-order-sets}
    \calW_{k} (i,j,m) = \{ \sgn{((\vo - \basis{k}{i}) \cdot \normal{k}{i,j,m})} \mid \forall \vo \in \CovisSet\}
  \end{align}
  where $\{ \sgn{((\vo \!-\! \basis{k}{i}) \!\cdot\! \normal{k}{i,j,m})} \!\mid\! \forall  \vo\in\! \CovisSet\}$
  is empty if both eq.~\eqref{eq:ijm_order_feasible} and eq.~\eqref{eq:imj_order_feasible} are false (\ie when the triplet is never covisible);
  it contains $+1$ if~\eqref{eq:ijm_order_feasible} is true;
  it contains $-1$ if~\eqref{eq:imj_order_feasible} is true; 
  it contains both $+1$ and $-1$ if both~\eqref{eq:ijm_order_feasible} and~\eqref{eq:imj_order_feasible} are true.
\end{definition}

We are now ready to define our generalized 3-invariant and the corresponding compatibility test.

\myParagraph{2D-3D Invariant and Compatibility Test}
We solve the two feasibility problems~\eqref{eq:ijm_order_feasible} and~\eqref{eq:imj_order_feasible}
for all triplets and CAD models and obtain a \emph{dictionary of feasible winding orders}.
In essence, each dictionary serves as a compatibility test for a single shape:
for observed keypoints $\measTwo{i}$, $\measTwo{j}$, and $\measTwo{m}$,
if the observed winding orders are not in $\calW_{k} (i,j,m)$, then the triplets are not compatible.
However, this dictionary is only for a single shape.
To formulate a 2D-3D category-level invariant,
we need to create a dictionary of feasible winding orders for all $K$ shapes.
We address this in Proposition~\ref{prop:2d3d-comp-test} below.

\begin{proposition}[2D-3D Category-level \Invariant and Compatibility Test]\label{prop:2d3d-comp-test}
  Assume the keypoints in eq.~\eqref{eq:generativeModel-2d} are generated by one of the shapes $\{1,\ldots,\nrShapes\}$, 
  that the reprojection noise is bounded by $\beta$ (\ie $\| \epsTwo{i} \| \leq\!\!\beta, \| \epsTwo{j} \| \leq \beta$, $\| \epsTwo{m} \| \leq \beta$), and that
  $\beta$ is small enough for Theorem~\ref{thm:winding-order-half-space} to hold true;
 then the functions $(\invfunTwo, \invFunTwo)$ is a $3$-\invariant for eq.~\eqref{eq:generativeModel-2d}, with
  \begin{align}
    \invfunTwo(&\measTwo{i}, \measTwo{j}, \measTwo{m}) \nonumber \\
               &\doteq \det \left( \begin{bmatrix} \measTwo{j} - \measTwo{i} & \measTwo{m} - \measTwo{i} \end{bmatrix}\right) \\
    \invFunTwo(&\vtheta_{i}, \vtheta_{j}, \vtheta_{m}) = \bigcup\limits_{k=1}^{K} \calW_{k}(i, j, m)
  \end{align}
  where $\vtheta_{\cdot} = \{ \basis{k}{\cdot} \mid k=1, \ldots, K \}$ and $\calW_{k}$ is the winding order dictionary for shape $k$, as per Definition~\ref{def:feasible_orders}.
  Therefore, a triplet of measurements $\measTwo{i}, \measTwo{j}$ and $\measTwo{m}$ is compatible if
   $\invfunTwo(\measTwo{i}, \measTwo{j}, \measTwo{m}) \in \invFunTwo(\vtheta_{i}, \vtheta_{j}, \vtheta_{m})$.
\end{proposition}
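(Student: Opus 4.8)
The plan is to verify the defining condition of Definition~\ref{def:generalized-n-invariant} for $n=3$ and then read off the compatibility test from Definition~\ref{def:comp-test}. Concretely, I would fix an arbitrary triplet $\{i,j,m\}\subset\calY$ (ordered $i<j<m$) and an arbitrary feasible state $\vxx=(\MR,\vt,\vc)$ consistent with~\eqref{eq:generativeModel-2d} and the bound $\|\epsTwo{\cdot}\|\le\beta$, and show that $\invfunTwo(\measTwo{i},\measTwo{j},\measTwo{m})$ (through its sign) lies in $\invFunTwo(\vtheta_i,\vtheta_j,\vtheta_m)=\bigcup_{k=1}^{\nrShapes}\calW_k(i,j,m)$, regardless of $\vxx$. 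The hypothesis that the keypoints are produced by one of the $\nrShapes$ shapes means $\vc=\ve_{k^\star}$ for some $k^\star$, so the noiseless 3D keypoints are $\basis{k^\star}{i},\basis{k^\star}{j},\basis{k^\star}{m}$ and the relevant dictionary entry is $\calW_{k^\star}(i,j,m)$; it then suffices to land the observed winding order in $\calW_{k^\star}(i,j,m)$, and the union over $k$ is an immediate over-approximation.

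The heart of the proof chains three facts already available. (i) Writing $\vo=-\MR\tran\vt$ for the optical center of the camera expressed in the CAD-model frame, the fact that all three keypoints are detected --- hence visible --- places $\vo$ in the visibility region of each of $\basis{k^\star}{i},\basis{k^\star}{j},\basis{k^\star}{m}$, so $\vo\in\CovisSet$, their covisibility region on shape $k^\star$ (Definitions~\ref{def:pt-visibility-region}--\ref{def:pt-covisibility-region}). (ii) Because the three noiseless projected points are non-collinear, the determinant defining $\invfunTwo$ evaluated on them is bounded away from zero, so once $\beta$ is small enough --- exactly the standing hypothesis ``$\beta$ small enough for Theorem~\ref{thm:winding-order-half-space} to hold'' --- the sign of $\invfunTwo(\measTwo{i},\measTwo{j},\measTwo{m})$ on the noisy measurements coincides with its noiseless value, which by Proposition~\ref{prop:compute-winding-order} is the observed 2D winding order; Theorem~\ref{thm:winding-order-half-space} then equates this sign (up to the fixed orientation convention) with $\sgn((\vo-\basis{k^\star}{i})\cdot\normal{k^\star}{i,j,m})$, i.e.\ with the half-space of the triplet plane containing $\vo$. (iii) By Definition~\ref{def:feasible_orders} that half-space sign is, by construction, an element of $\calW_{k^\star}(i,j,m)$ precisely because $\vo\in\CovisSet$ (cf.~Corollary~\ref{cor:winding_orders_feasibility}). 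Composing (i)--(iii) gives $\invfunTwo(\measTwo{i},\measTwo{j},\measTwo{m})\in\calW_{k^\star}(i,j,m)\subseteq\invFunTwo(\vtheta_i,\vtheta_j,\vtheta_m)$, which is the required relation~\eqref{eq:generalized-n-invariant}; the compatibility test is then immediate from Definition~\ref{def:comp-test}, with soundness (``no all-inlier triplet is flagged'') being exactly this relation, while completeness is not claimed.

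I expect the delicate step to be (ii): Theorem~\ref{thm:winding-order-half-space} is proved only in the noiseless regime, so turning ``$\beta$ small enough'' into a usable statement means exhibiting a uniform positive lower bound on $|\det(\cdot)|$ over the camera poses compatible with covisibility of the triplet, together with a Lipschitz-type estimate converting the 2D residual bound $\|\epsTwo{\cdot}\|\le\beta$ into a perturbation bound on that determinant --- only then is the sign provably preserved. A second point worth spelling out is the modeling assumption that a detected keypoint is genuinely visible, without which step (i) (and hence membership in $\calW_{k^\star}$) fails. Finally, the triplets whose noiseless projections are collinear form a degenerate (measure-zero in $\vxx$) set on which $\invfunTwo=0$ and the winding order is undefined; these must be excluded both from the invariance claim and from the test. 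Everything else is routine bookkeeping of orientation conventions across Proposition~\ref{prop:compute-winding-order}, Theorem~\ref{thm:winding-order-half-space}, and Definition~\ref{def:feasible_orders}.
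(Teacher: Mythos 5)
Your proposal is correct and follows essentially the same route the paper intends: the proposition is obtained by chaining Proposition~\ref{prop:compute-winding-order}, Theorem~\ref{thm:winding-order-half-space}, the (co)visibility definitions, and Definition~\ref{def:feasible_orders}, with the noise issue absorbed into the hypothesis that $\beta$ is small enough for Theorem~\ref{thm:winding-order-half-space} to hold. Your flagged caveats (the uniform determinant margin needed to make ``$\beta$ small enough'' precise, the assumption that detected keypoints are visible, the exclusion of collinear triplets, and the sign flip between $\det(\cdot)$ and the half-space test in Theorem~\ref{thm:winding-order-half-space}) are exactly the points the paper leaves implicit, so they strengthen rather than deviate from its argument.
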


Intuitively, the proposition states that the observed winding order must match at least one of the winding orders contained  in the feasible winding order dictionary of all shapes.
The non-compatible triplets from Proposition~\ref{prop:2d3d-comp-test} 
are points with noise so large that the measured winding orders become inconsistent with the models.
Under technical conditions (discussed in \supplementary{sec:app-2d3d-invariant-valid-condition}), Proposition~\ref{prop:2d3d-comp-test} holds true even when keypoints are generated by convex combinations of $\basis{k}{i}$.

\section{Stage II: Certifiably Optimal Solvers for Category-Level Perception}
\label{sec:optimalSolvers-category-level}

While Stage 1 serves the purpose of filtering out a large fraction of gross outliers
(without even computing an estimate),
Stage 2 aims to use the remaining measurements to estimate the pose and shape parameters.
In this section, we develop \emph{certifiably optimal} solvers
for Problem~\ref{prob:3d3d-statement} and~\ref{prob:2d3d-statement} in the \emph{outlier-free} case (\ie assuming that 
\robin removed all the outliers).
In Section~\ref{sec:gnc-category-level},
we further improve robustness by incorporating a \emph{graduated non-convexity} scheme to handle potential 
left-over outliers in the measurements after \robin.

\subsection{Certifiably Optimal Solver for Outlier-free \\ 3D-3D Category-Level Perception}
\label{sec:optimalSolver-3d3d}

We show how to solve Problem~\ref{prob:3d3d-statement} in the outlier-free case, where
the noise $\epsThree{i}$ is assumed to follow a zero-mean Gaussian distribution.
In the outlier-free case, a standard formulation for the pose and shape estimation problem leads to a
\emph{regularized non-linear least squares} problem: %
\beq
\tag{3D-3D}
\label{eq:probOutFree-3D3Dcatlevel}
  \hspace{-5mm} \min_{\substack{\MR \in \SOthree, \\ \vt \in \Real{3}, \constraintc } }  \displaystyle
   \hspace{0mm} \sum_{i=1}^{N} w_i \left\| \measThree{i} - \MR \sum_{k=1}^{\nrShapes} c_{k} \basis{k}{i} - \vt \right\|^{2} + \lambda \norm{\vc}^2
\eeq
where the first summand in the objective minimizes the residual error \wrt the generative model~\eqref{eq:generativeModel-3d} ($w_i \geq 0, i=1,\dots,N$ are given weights), %
and the second term provides an $\ell_2$ regularization (\aka~\emph{Tikhonov regularization}~\cite{Tikhonov13book-numericalIllposed}) of the shape coefficients $\vc$
(controlled by the user-specified parameter $\lambda \geq 0$).
Note that for mathematical convenience we replaced the constraint $\vc \in \Delta_\nrShapes$ with
the constraint $\ones\tran \vc  = 1$, where~$\ones$ is a vector with all entries equal to 1;
in other words, we force the shape coefficients to sum-up to 1 but allow them to be negative.
\JS{
Note that 
  the regularization term serves the dual purpose of penalizing the occurrence of large negative entries in $\vc$ and
  keeping the problem well-posed regardless of
  the number of shapes in the library.
}
From the probabilistic standpoint, problem~\eqref{eq:probOutFree-3D3Dcatlevel} is a \emph{maximum a posteriori}
estimator assuming that the keypoints measurement noise follows a zero-mean Gaussian with covariance $\frac{1}{w_i} \eye_3$ (where $\eye_3$ is the 3-by-3 identity matrix) and we have a zero-mean Gaussian prior with covariance $\frac{1}{\lambda}$ over the shape parameters $\vc$ (see \supplementary{sec:app-mapOutlierFree}).
Problem~\eqref{eq:probOutFree-3D3Dcatlevel} is non-convex due to the product \LC{between $\MR$ and  $\vc$} in the objective, and due to the nonconvexity of the constraint set $\SOthree$ the rotation $\MR$
is required to belong to, see \eg~\cite{Rosen18ijrr-sesync}. Therefore, existing approaches based on
 local search~\cite{Lin14eccv-modelFitting,Gu06cvpr-faceAlignment,Ramakrishna12eccv-humanPose} are prone to converge to local minima corresponding to incorrect estimates.

\myParagraph{3D-3D solver overview}
 We develop the first certifiably optimal algorithm to solve~\eqref{eq:probOutFree-3D3Dcatlevel}.
Towards this goal we show that
(i) the translation $\vt$ in~\eqref{eq:probOutFree-3D3Dcatlevel} can be solved in closed form given the
rotation and shape parameters (Section~\ref{sec:translation-cat-level}),
(ii) the shape parameters $\vc$ can be solved in closed form given the rotation  (Section~\ref{sec:shape-cat-level}),
and (iii) the rotation can be estimated (independently on shape and translation) using a
tight semidefinite relaxation (Section~\ref{sec:rotation-cat-level}).

\subsubsection{Closed-form Translation Estimation}
\label{sec:translation-cat-level}

\LC{By inspection of~\eqref{eq:probOutFree-3D3Dcatlevel}, we observe that $\vt$ is unconstrained and appears quadratically in the cost. 
Hence, for any choice of $\MR$ and $\vc$, the optimal translation 
can be computed in closed-form:}
\bea
\label{eq:optTran}
\vt^{\star} (\MR,\vc) = \vy_w - \MR \textstyle\sum_{k=1}^{\nrShapes} c_{k} \vb_{k,w} 
\eea

\vspace{-2mm}
\noindent
where
\begin{align}
\label{eq:weightedCentroids}
\!\!\vy_w \!\triangleq \!\frac{1}{(\sum_{i=1}^N \!w_i)} \!\textstyle\sum_{i=1}^N \!w_i \measThree{i},
\;\;\;\; 
\vb_{k,w} \!\triangleq \!\frac{1}{(\sum_{i=1}^N \!w_i)} \!\sum_{i=1}^N  \!w_i \basis{k}{i}, \!\!\!
\nonumber
\end{align}

\normalsize
\noindent
are the weighted centroids of $\measThree{i}$ and $\basis{k}{i}$'s. 
This manipulation is common in related work, \eg~\cite{Zhou15cvpr,Yang20cvpr-shapeStar}.

\subsubsection{Closed-form Shape Estimation}
\label{sec:shape-cat-level}

Substituting the optimal translation~\eqref{eq:optTran} (as a function of $\MR$ and $\vc$) 
back into~\eqref{eq:probOutFree-3D3Dcatlevel}, we obtain an optimization that only 
depends on $\MR$ and $\vc$:
\bea\label{eq:translation-free-problem-1}
  \hspace{-8mm} \min_{\substack{\MR \in \SOthree \\ \constraintc} } \!\!&\!\! \sum_{i=1}^{N} \left\Vert \bar{\vy}_{i}  - \MR \sum_{k=1}^{\nrShapes} c_{k} \bar{\vb}^{k}_{i} \right\Vert^{2} + \lambda \norm{\vc}^2
\eea
where
\begin{align}
  \bar{\vy}_{i} \triangleq \sqrt{w_i} (\measThree{i} - \vy_w), \ \
  \bar{\vb}^{k}_{i} \triangleq \sqrt{w_i} (\basis{k}{i} - \vb_{k,w}),
\end{align}
are the (weighted) relative positions of $\measThree{i}$ and $\basis{k}{i}$ \wrt~their corresponding weighted centroids.
Using the fact that the $\ell_2$ norm is invariant to rotation,
problem~\eqref{eq:translation-free-problem-1} is equivalent to:
\bea\label{eq:translation-free-problem}
\hspace{-3mm} \min_{\substack{\MR \in \SOthree \\  \constraintc } } \!\!&\!\! \sum_{i=1}^{N} \left\Vert \MR\tran \bar{\vy}_{i}  - \sum_{k=1}^{\nrShapes} c_{k} \bar{\vb}^{k}_{i} \right\Vert^{2} + \lambda \norm{\vc}^2
\eea

We can further simplify the problem by adopting the following matrix notations:

\vspace{-5mm}
\begin{align}
\bar{\vy} &= \left( \bar{\vy}_{1}\tran, \ldots, \bar{\vy}_{N}\tran \right)\tran
\in \Real{3N} 
\\
\bar{\MB} &= \begin{bmatrix}
\bar{\vb}^{1}_{1} & \cdots & \bar{\vb}^{\nrShapes}_{1} \\
\vdots & \ddots & \vdots \\
\bar{\vb}^{1}_{N} & \cdots & \bar{\vb}^{\nrShapes}_{N}
\end{bmatrix} 
\in \Real{3N \times \nrShapes} 
\end{align}
which allows rewriting~\eqref{eq:translation-free-problem} in the following compact form:
\bea \label{eq:clsofc}
  \min_{\substack{\MR \in \SOthree \\ \constraintc } } & \left\Vert \barMB \vc -
(\eye_N \kron \MR\tran) \barvy
  \right\Vert^{2} + \lambda \norm{\vc}^2
\eea
Now the reader can again recognize that ---for any choice of $\MR$---
problem~\eqref{eq:clsofc} is a linearly-constrained linear least squares problem in $\vc$, 
which admits a closed-form solution.

\begin{proposition}[Optimal Shape]\label{prop:shapeEstimation}
For any choice of rotation $\MR$, the optimal shape parameters that solve~\eqref{eq:clsofc} 
can be 
computed in closed form as:
\bea \label{eq:optimalvcofR}
\vc^{\star} (\MR) = 2\MG \barMB\tran (\eye_N \kron \MR\tran) \barvy + \vg
\eea
where we defined the following constant matrices and vectors:
\bea
\MHtl \triangleq 2(\barMB\tran\barMB + \lambda \eye_K)  \label{eq:inverseofdensematrix}\\
\MG \triangleq \MHtl\inv - \frac{\MHtl\inv \ones \ones\tran \MHtl\inv }{\ones\tran \MHtl\inv \ones},\quad 
\vg \triangleq \frac{\MHtl\inv \ones}{\ones\tran \MHtl\inv \ones}
\eea
\end{proposition}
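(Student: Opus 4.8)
The plan is to fix the rotation $\MR$ and solve the resulting equality-constrained quadratic program in $\vc$ in closed form via Lagrange multipliers. First I would introduce the shorthand $\vw \triangleq (\eye_N \kron \MR\tran)\barvy$, so that for fixed $\MR$ problem~\eqref{eq:clsofc} becomes $\min_{\ones\tran \vc = 1}\, \vc\tran(\barMB\tran\barMB + \lambda \eye_K)\vc - 2\vc\tran \barMB\tran \vw + \|\vw\|^2$. Recalling the definition $\MHtl = 2(\barMB\tran\barMB + \lambda\eye_K)$, the objective equals $\tfrac12 \vc\tran \MHtl \vc - 2\vc\tran \barMB\tran \vw + \text{const}$, which is a convex quadratic; it is strictly convex whenever $\MHtl$ is positive definite (e.g. when $\lambda>0$, which is exactly the regime in which the formula~\eqref{eq:optimalvcofR} makes sense, since it involves $\MHtl\inv$). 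The feasible set $\{\vc : \ones\tran\vc=1\}$ is a nonempty affine subspace (it contains $\tfrac1K\ones$), so the problem has a unique minimizer, characterized exactly by its first-order KKT (Lagrange) conditions.

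Next I would form the Lagrangian $L(\vc,\mu) = \tfrac12 \vc\tran \MHtl \vc - 2\vc\tran \barMB\tran \vw + \mu(\ones\tran\vc - 1)$ and impose stationarity $\nabla_\vc L = \MHtl \vc - 2\barMB\tran \vw + \mu \ones = 0$, which (using invertibility of $\MHtl$) gives $\vc = 2\MHtl\inv \barMB\tran \vw - \mu\, \MHtl\inv\ones$. Substituting this into the primal feasibility constraint $\ones\tran\vc = 1$ yields the scalar equation $2\ones\tran\MHtl\inv\barMB\tran\vw - \mu\, \ones\tran\MHtl\inv\ones = 1$, whence $\mu = (2\ones\tran\MHtl\inv\barMB\tran\vw - 1)/(\ones\tran\MHtl\inv\ones)$; the denominator is strictly positive because $\MHtl\inv \succ 0$.

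Finally I would plug $\mu$ back into the expression for $\vc$ and collect terms. Using that $\ones\tran\MHtl\inv\barMB\tran\vw$ is a scalar, the part depending on $\barMB\tran\vw$ becomes $2\big(\MHtl\inv - \MHtl\inv\ones\ones\tran\MHtl\inv / (\ones\tran\MHtl\inv\ones)\big)\barMB\tran\vw = 2\MG\, \barMB\tran\vw$, and the remaining constant term is $\MHtl\inv\ones/(\ones\tran\MHtl\inv\ones) = \vg$. Re-expanding $\vw = (\eye_N \kron \MR\tran)\barvy$ then gives exactly $\vc^\star(\MR) = 2\MG\, \barMB\tran (\eye_N \kron \MR\tran)\barvy + \vg$, as claimed. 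I do not expect a genuine obstacle here: the computation is a textbook equality-constrained least-squares derivation, and the only points requiring care are bookkeeping the factor of $2$ hidden inside $\MHtl$, the purely algebraic rearrangement into $\MG$ and $\vg$, and the remark that strict convexity (from $\MHtl \succ 0$) upgrades the KKT point from a mere stationary point to the unique global minimizer.
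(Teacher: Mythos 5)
Your derivation is correct and is precisely the standard equality-constrained least-squares argument the paper implicitly relies on when it observes that, for fixed $\MR$, problem~\eqref{eq:clsofc} is a linearly-constrained linear least squares problem in $\vc$; the Lagrange-multiplier computation, the elimination of $\mu$ via $\ones\tran\vc=1$, and the regrouping into $\MG$ and $\vg$ all check out and reproduce~\eqref{eq:optimalvcofR} exactly. Your added remark that strict convexity of the objective (from $\MHtl\succ 0$, e.g.\ when $\lambda>0$) makes the KKT point the unique global minimizer is a correct and worthwhile clarification that the paper leaves implicit.
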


\subsubsection{Certifiably Optimal Rotation Estimation}
\label{sec:rotation-cat-level}

Substituting the optimal shape parameters~\eqref{eq:optimalvcofR} (as a function of $\MR$) 
back into~\eqref{eq:clsofc}, we obtain an optimization  that only 
depends on $\MR$:
\bea \label{eq:nonconvexR}
\min_{\MR \in \SOthree}  \norm{\MM (\eye_N \kron \MR\tran)\barvy + \vh}^2
\eea
where the matrix $\MM \in \Real{(3N+\nrShapes)\times 3N}$ and 
vector $\vh \in \Real{3N+\nrShapes}$ are defined as:
\bea
\MM \triangleq \bmat{c}
2\barMB \MG \barMB\tran - \eye_{3N} \\
2 \sqrt{\lambda} \MG \barMB\tran
\emat \qquad
\vh \triangleq \bmat{c}
\barMB \vg \\ \sqrt{\lambda} \vg
\emat
\eea
Problem~\eqref{eq:nonconvexR} is a quadratic optimization over the non-convex set $\SOthree$. 
It is known that $\SOthree$ can be described as a set of quadratic equality constraints, see \eg~\cite{Tron15rssws3D-dualityPGO3D,Rosen18ijrr-sesync} or~\cite[Lemma 5]{Yang20cvpr-shapeStar}. 
Therefore, we can succinctly rewrite~\eqref{eq:nonconvexR} as a 
\emph{quadratically constrained quadratic program} (QCQP).

\begin{proposition}[Optimal Rotation]\label{prop:optRotation}
The category-level rotation estimation problem~\eqref{eq:nonconvexR} can be equivalently written as a
\emph{quadratically constrained quadratic program} (QCQP):
\bea
\label{eq:categoryQCQP}
\min_{\vrhomo \in \Real{10}} & \vrhomo\tran \MQ \vrhomo \\
\subject & \vrhomo\tran \MA_i \vrhomo = 0, \forall i = 1,\dots, 15 \nonumber
\eea
where $\vrhomo \triangleq [1,\vectorize{\MR}\tran]\tran \in \Real{10}$ is a vector stacking all the entries of the unknown 
rotation $\MR$ in~\eqref{eq:nonconvexR} (with an additional unit element), 
$\MQ \in \sym{10}$ is a symmetric constant matrix (expression given in \supplementary{sec:app-rotEst-details}), and
$\MA_i \in \sym{10}, i=1,\dots,15$ are the constant matrices that define the quadratic constraints 
describing the set $\SOthree$~\cite[Lemma 5]{Yang20cvpr-shapeStar}.
\end{proposition}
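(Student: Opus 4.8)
The plan is to substitute the two closed-form expressions already derived for $\vt$ and $\vc$, observe that the resulting residual is \emph{affine} in the entries of $\MR$, and then homogenize: squaring an affine function of $\vectorize{\MR}$ produces a homogeneous quadratic form in $\vrhomo = [1,\vectorize{\MR}\tran]\tran$, while $\SOthree$ admits a well-known description by quadratic equalities. Together these turn~\eqref{eq:nonconvexR} into exactly the QCQP~\eqref{eq:categoryQCQP}.

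Concretely, I would first rewrite $(\eye_N \kron \MR\tran)\barvy$ as a linear map of $\vectorize{\MR}$. Writing $\barvy = (\barvy_1\tran,\dots,\barvy_N\tran)\tran$ with $\barvy_i \in \Real{3}$, the $i$-th block of $(\eye_N \kron \MR\tran)\barvy$ is $\MR\tran\barvy_i$, and the vectorization identity gives $\MR\tran\barvy_i = (\barvy_i\tran \kron \eye_3)\,\vectorize{\MR\tran} = (\barvy_i\tran \kron \eye_3)\,\MPi\,\vectorize{\MR}$, where $\MPi \in \Real{9\times 9}$ is the commutation matrix characterized by $\vectorize{\MR\tran} = \MPi\,\vectorize{\MR}$. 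Stacking over $i$ yields $(\eye_N\kron\MR\tran)\barvy = \ML\,\vectorize{\MR}$ with $\ML \triangleq \bmat{c} (\barvy_1\tran\kron\eye_3)\MPi \\ \vdots \\ (\barvy_N\tran\kron\eye_3)\MPi \emat \in \Real{3N\times 9}$. Hence the cost in~\eqref{eq:nonconvexR} equals $\norm{\MM\ML\,\vectorize{\MR} + \vh}^2$; using that the first entry of $\vrhomo$ is $1$, this is $\norm{\bmat{cc}\vh & \MM\ML\emat\vrhomo}^2 = \vrhomo\tran\MQ\vrhomo$ with $\MQ \triangleq \bmat{cc}\vh & \MM\ML\emat\tran\bmat{cc}\vh & \MM\ML\emat \in \sym{10}$ (positive semidefinite); its explicit form is reported in~\supplementary{sec:app-rotEst-details}.

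For the constraint side, I would invoke~\cite[Lemma 5]{Yang20cvpr-shapeStar}: the lifted set $\{\,[1,\vectorize{\MR}\tran]\tran : \MR\in\SOthree\,\}$ is cut out exactly by $15$ quadratic equalities $\vrhomo\tran\MA_i\vrhomo = 0$, which homogenize the orthonormality of the columns and rows of $\MR$ together with the right-handedness relations $\hatmap{\vr_a}\vr_b = \vr_c$ (so that, in particular, the homogenizing entry is pinned, $\vrhomo_1^2=1$, and no reflection or rescaled point is feasible). Substituting these constraints and the quadratic objective into~\eqref{eq:nonconvexR} produces~\eqref{eq:categoryQCQP}, and since both the objective value and the recovered rotation are preserved under the reformulation, the two problems are equivalent. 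The main obstacle is the careful bookkeeping of the Kronecker/vectorization identities---keeping the commutation matrix $\MPi$ and the block orderings in $\barvy$ and $\barMB$ consistent---and checking that the cited quadratic description of $\SOthree$, once lifted to $\Real{10}$, introduces no spurious feasible points; beyond that the derivation is routine.
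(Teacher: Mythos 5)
Your proposal is correct and follows essentially the same route as the paper's proof: both rewrite $(\eye_N \kron \MR\tran)\barvy$ as a linear map of $\vectorize{\MR}$ via the Kronecker/vectorization identity and a commutation (permutation) matrix, homogenize the squared affine residual into $\vrhomo\tran\MQ\vrhomo$ with the same block structure for $\MQ$, and import the $15$ quadratic equality constraints on $\SOthree$ from~\cite[Lemma 5]{Yang20cvpr-shapeStar}. Your stacked matrix $\ML$ is exactly the paper's $(\MY\tran\kron\eye_3)\MP$, so the two derivations coincide up to notation.
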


While a QCQP is still a non-convex problem, it admits a standard semidefinite relaxation, described below.

\begin{corollary}[Shor's Semidefinite Relaxation]\label{cor:optRotation-relax}
The following semidefinite program (SDP) is a convex relaxation of~\eqref{eq:categoryQCQP}:
\bea
\label{eq:categoryQCQPrelax}
\min_{\MX \in \sym{10}} & \trace{\MQ \MX} \\
\subject & \trace{\MA_0 \MX} = 1,  \nonumber\\
& \trace{\MA_i \MX} = 0, \forall i=1,\dots,15  \nonumber\\
 & \MX \succeq 0  \nonumber
\eea
Moreover, when the optimal solution $\MX^\star$ of~\eqref{eq:categoryQCQPrelax} has rank 1, it can 
be factorized as $\MX^\star = \vect{1 \\ \vectorize{\MR^\star}} [1 \; \vectorize{\MR^\star}]$ where $\MR^\star$ 
is the optimal rotation minimizing~\eqref{eq:nonconvexR}. 
\end{corollary}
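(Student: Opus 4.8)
The plan is to follow the standard Shor/moment-relaxation lifting argument. First I would introduce the rank-one lifting: for any $\vrhomo \in \Real{10}$ that is feasible for the QCQP~\eqref{eq:categoryQCQP}, set $\MX \triangleq \vrhomo\vrhomo\tran$. Then $\MX \succeq 0$, $\rank{\MX}=1$, and by the identity $\trace{\M{A}\vrhomo\vrhomo\tran} = \vrhomo\tran\M{A}\vrhomo$ the QCQP objective and constraints translate verbatim: $\trace{\MQ\MX} = \vrhomo\tran\MQ\vrhomo$, $\trace{\MA_i\MX} = \vrhomo\tran\MA_i\vrhomo = 0$ for $i=1,\dots,15$, and $\trace{\MA_0\MX} = 1$ because the first entry of $\vrhomo$ is fixed to $1$ (here $\MA_0 = \ve_1\ve_1\tran$ extracts the top-left entry). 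Dropping the non-convex rank constraint yields exactly the feasible set of~\eqref{eq:categoryQCQPrelax}, which therefore contains the image of the QCQP feasible set under this lifting. Since the objective is linear in $\MX$ and agrees with the QCQP objective on lifted points, the optimal value of~\eqref{eq:categoryQCQPrelax} lower-bounds that of~\eqref{eq:categoryQCQP}; this establishes the relaxation claim.

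For the ``moreover'' part I would argue by rounding the SDP solution. Suppose $\MXstar$ solving~\eqref{eq:categoryQCQPrelax} has rank $1$. Since $\MXstar \succeq 0$, it factors as $\MXstar = \vv\vv\tran$ for some $\vv \in \Real{10}$. The constraint $\trace{\MA_0\MXstar} = 1$ gives $v_1^2 = 1$; replacing $\vv$ by $-\vv$ if necessary (which leaves $\MXstar$ unchanged) we may assume $v_1 = 1$, so $\vv = [1,\; \vectorize{\MRstar}\tran]\tran$ for some $\MRstar \in \Real{3\times 3}$. The remaining constraints $\vv\tran\MA_i\vv = \trace{\MA_i\MXstar} = 0$ are precisely the quadratic equations characterizing $\SOthree$ per~\cite[Lemma 5]{Yang20cvpr-shapeStar}, hence $\MRstar \in \SOthree$ and $\vv$ is feasible for~\eqref{eq:categoryQCQP}. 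Its QCQP cost equals $\trace{\MQ\MXstar}$, the optimal value of the relaxation, which by the previous paragraph lower-bounds the optimal value of~\eqref{eq:categoryQCQP}; therefore $\vv$ — equivalently $\MRstar$ — is a global minimizer of~\eqref{eq:categoryQCQP}, and, unwinding the equivalence established in Proposition~\ref{prop:optRotation}, also of~\eqref{eq:nonconvexR}.

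The delicate points here are bookkeeping rather than substance: (i) verifying that $\MA_0$ is indeed the selector for the leading coordinate, so $\trace{\MA_0\MX}=1$ really encodes the unit entry of $\vrhomo$; and (ii) invoking~\cite[Lemma 5]{Yang20cvpr-shapeStar} to ensure the $15$ quadratics $\vrhomo\tran\MA_i\vrhomo=0$ cut out $\SOthree$ \emph{exactly} (no spurious solutions, none missing), which is what makes the rounded $\MRstar$ a genuine rotation. I expect the main caveat worth flagging — though not part of the proof of this corollary — is that nothing here forces $\MXstar$ to have rank $1$ in general: tightness of the relaxation is an empirical phenomenon (confirmed in Section~\ref{sec:experiments}), and the statement is deliberately conditional on it.
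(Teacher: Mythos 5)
Your proposal is correct and follows essentially the same route as the paper: lift $\vrhomo$ to $\MX=\vrhomo\vrhomo\tran$, use $\vrhomo\tran\MA\vrhomo=\trace{\MA\MX}$ to show the QCQP is equivalent to a rank-constrained SDP, and obtain the relaxation by dropping the rank constraint. Your treatment of the ``moreover'' part (the sign flip via $v_1^2=1$, feasibility of the rounded point through the $\SOthree$-defining quadratics, and the sandwich argument showing the rounded point attains the QCQP optimum) is in fact slightly more explicit than the paper's, which only sketches this step in the discussion following its proof.
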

{
The relaxation entails solving a small SDP ($10 \times 10$ matrix size, and $16$ linear equality constraints), hence it can be solved in milliseconds 
using standard interior-point methods (\eg~MOSEK~\cite{mosek}). 
Similar to related quadratic problems over \SOthree~\cite{Rosen18ijrr-sesync,Yang20tro-teaser,Yang19iccv-QUASAR,Briales16iros,Eriksson18cvpr-strongDuality}, the relaxation~\eqref{eq:categoryQCQPrelax} empirically produces rank-1 ---and hence \emph{optimal}--- solutions in common problems.  
Even when the relaxation is not tight, the relaxation allows computing an estimate and a bound on its suboptimality 
(see \supplementary{sec:app-roundingAndSuboptimality3D3D}). 
The proposed solution falls in the class of \emph{certifiable algorithms}~(see \cite{Bandeira15arxiv}  
and Appendix A in~\cite{Yang20tro-teaser}), since it allows solving a hard (non-convex) problem efficiently and with 
provable a posteriori guarantees.

\subsubsection{Summary}
\label{sec:summary-cat-level}

The results in this section suggest a simple algorithm to compute a certifiably optimal solution to the 
 pose and shape estimation problem~\eqref{eq:probOutFree-3D3Dcatlevel}: 
(i) we first estimate the rotation $\MR^\star$ using the semidefinite relaxation~\eqref{eq:categoryQCQPrelax}
(and compute the corresponding suboptimality gap $\eta$ as described in~\supplementary{sec:app-roundingAndSuboptimality3D3D}); 
(ii) we retrieve the optimal shape $\vc^{\star} (\MR^\star)$ given the 
optimal rotation using~\eqref{eq:optimalvcofR}; 
(iii) we retrieve the optimal translation $\vt^{\star} (\MR^\star,\vc^\star)$ %
using~\eqref{eq:optTran}.
If $\eta=0$, we certify that $(\hatMR,\hatvt,\hatvc)$ is a globally optimal solution to \eqref{eq:probOutFree-3D3Dcatlevel}.
We call the resulting algorithm \PACEThree (\emph{\PACEThreeLong}).

\subsection{Certifiably Optimal Solver for Outlier-free \\ 2D-3D Category-Level Perception}
\label{sec:optimalSolver-2d3d}
We now show how to solve Problem~\ref{prob:2d3d-statement} in the outlier-free case, 
where the noise $\epsTwo{i}$ in the generative model \eqref{eq:generativeModel-2d} follows a zero-mean isotropic Gaussian distribution. %
In such a case, the maximum a posteriori estimator for Problem~\ref{prob:2d3d-statement} becomes:
\begin{equation}\label{eq:probOutFree-2D3Dcatlevel}
\min_{\substack{\MR \in \SOthree \\ \vt \in \Real{3},  \vc \in \Delta_K}} \sum_{i=1}^{N} 
w_i \norm{ \measTwo{i}\! -\! \pi\! \parentheses{\MR \sum_{k=1}^{\nrShapes} c_k \basis{k}{i} + \vt} }^2 + \lambda \norm{\vc}^2 %
\end{equation}
where $\lambda \norm{\vc}^2$ with $\lambda \geq 0$ is an $\ell_2$ regularization on the shape parameters, and $w_i$ are given non-negative weights.
Eq. \eqref{eq:probOutFree-2D3Dcatlevel} minimizes the \emph{geometric reprojection} error and
belongs to a class of optimization problems known as \emph{fractional programming} because the objective in \eqref{eq:probOutFree-2D3Dcatlevel} is a sum of \emph{rational} functions.
Unfortunately, it is generally intractable to obtain a globally optimal solution for fractional programming \cite{Schaible03OMS-fractional}.
In fact, even if $\vc$ is known in \eqref{eq:probOutFree-2D3Dcatlevel}, searching for the optimal $(\MR,\vt)$ typically
 resorts to branch-and-bound \cite{Hartley09ijcv-globalRotationRegistration,Olsson06ICPR-optimalPnP},
which runs in worst-case exponential time.

To circumvent the difficulty of fractional programming caused by the geometric reprojection error, we adopt an 
\emph{algebraic}  reprojection error that minimizes the \emph{point-to-line} %
distance between each 3D keypoint (\ie $\MR \sum_{k=1}^{\nrShapes} c_k \basis{k}{i} + \vt$) and the \emph{bearing vector} emanating from the camera center to the 2D keypoint (\ie %
the bearing vector $\vv_i = \tmeasTwo{i}/\norm{\tmeasTwo{i}}$, where $\tmeasTwo{i} \doteq [\measTwo{i}\tran,1]\tran$ is the homogenization of $\measTwo{i}$):
\begin{equation}\label{eq:2d3dp2l}
\tag{2D-3D}
\min_{\substack{\MR \in \SOthree \\ \vt \in \Real{3}, \vc \in \Delta_K}} \sum_{i=1}^N 
w_i  \norm{\MR \sum_{k=1}^{\nrShapes} c_k \basis{k}{i} + \vt  }_{\MW_i}^2 + \lambda \norm{\vc}^2 \!\!\!  %
\end{equation}
where $\MW_i := \eye_3 - \vv_i \vv_i\tran$ and $\norm{\vp}^2_{\MW_i} := {\vp\tran \MW_i \vp}$ computes the distance from a given 3D point $\vp$ to a bearing vector $\vv_i$.
The point-to-line objective in \eqref{eq:2d3dp2l} has been adopted in other works,
including \cite{Schweighofer2008bmvc-SOSforPnP,Yang21iccv-damp}.
Below, we omit the weights $w_i$, since they can be directly included in the matrix $\MW_i$.

{
\myParagraph{2D-3D solver overview}
We develop the first certifiably optimal algorithm to solve problem \eqref{eq:2d3dp2l}. We show that
(i) the translation $\vt$ in \eqref{eq:2d3dp2l} can be solved in closed form given the
rotation and shape parameters (Section~\ref{sec:optimal2d3dtranslation}),
and 
(ii) the shape parameters $\vc$ and the rotation $\MR$ can be estimated using a
tight semidefinite relaxation (Section~\ref{sec:optimal2d3dshaperotation}).
}

\subsubsection{Closed-form Translation Estimation}
\label{sec:optimal2d3dtranslation}
{Similar to Section \ref{sec:translation-cat-level}, our first step is to algebraically eliminate the translation $\vt$ in \eqref{eq:2d3dp2l}. By deriving the gradient of the objective of \eqref{eq:2d3dp2l} and setting it to zero, we obtain that}
\bea\label{eq:closedformtranslation}
\vtstar = - \sum_{i=1}^N \tldMW_i \MR \parentheses{ \sum_{k=1}^{\nrShapes} c_k \basis{k}{i} }
\eea
where $\MW \!\doteq\! \sum_{i=1}^N \MW_i$ and $\tldMW_i \!\doteq\!  \MW\inv \MW_i$. Inserting~\eqref{eq:closedformtranslation} back into \eqref{eq:2d3dp2l}, we get the following translation-free problem
\bea \label{eq:2d3dp2ltf}
\min_{\substack{\MR \in \SOthree \\ \vc \in \Delta_K} } \sum_{i=1}^N \norm{ \MR \vs_i(\vc) - \sum_{j=1}^N \tldMW_j \MR \vs_j(\vc) }_{\MW_i}^2 + \lambda \norm{\vc}^2
\eea
where we compactly wrote $\vs_i(\vc) = \sum_{k=1}^{\nrShapes} c_k \basis{k}{i}$.

\subsubsection{Certifiably Optimal Shape and Rotation Estimation}
\label{sec:optimal2d3dshaperotation}
{
In this case, it is not easy to decouple rotation and shape parameters as we did in Section \ref{sec:shape-cat-level}.
Therefore, we adopt a
more advanced machinery to globally optimize $\MR$ and $\vc$ together. Towards this goal,} we observe that problem \eqref{eq:2d3dp2ltf} is in the form of a \emph{polynomial optimization problem} (POP), \ie an optimization problem
where both objective and constraints can be written using polynomials: 
\bea
 \label{eq:popgeneral}
\min_{\vxx \in \Real{d}} & p(\vxx) \\
\subject & h_i(\vxx) = 0, i=1,\dots,l_h \nonumber \\
& g_j(\vxx) \geq 0, j=1,\dots,l_g \nonumber 
\eea
where $\vxx = [\vectorize{\MR}\tran,\vc\tran]\tran \in \Real{d}, d=K+9$, denotes the vector of unknowns, $p(\vxx)$ is a degree-4 objective polynomial corresponding to the objective in \eqref{eq:2d3dp2ltf}, $h_i(\vxx),i=1,\dots,l_h=16$ are polynomial equality constraints (including $\MR \in \SOthree$ and $\one\tran \vc = 1$), and $g_j(\vxx),j=1,\dots,l_g=K+1$ are polynomial inequality constraints (\eg $c_k \geq 0$ for all $k$). %
{$p(\vxx)$ in \eqref{eq:popgeneral} has degree four, which prevents us from applying Shor's semidefinite relaxation as in Corollary \ref{cor:optRotation-relax}.} {However, writing \eqref{eq:2d3dp2ltf} in the form~\eqref{eq:popgeneral} allows us to leverage a more general semidefinite relaxation technique called \emph{Lasserre's hierarchy of moment and sum-of-squares relaxations}~\cite{Lasserre01siopt-LasserreHierarchy,Parrilo03mp-sos} to solve \eqref{eq:2d3dp2ltf} to certifiable global optimality.}

\begin{corollary}[Order-2 Lasserre's Moment Relaxation]\label{cor:lasserre-relax}
The following multi-block SDP is a convex relaxation of~\eqref{eq:categoryQCQP}:
\bea\label{eq:denserelax}
\min_{\substack{ \MX = (\MX_0,\MX_1,\dots,\MX_{K+1}) \\ \in \sym{n_0} \times \sym{n_1} \times \dots \times \sym{n_{K+1}} } } & \inprod{\MC}{\MX} \\
\subject & \calA(\MX) = \vb, \quad \MX \succeq 0 \nonumber
\eea
where $\MX_0$ is the \emph{moment matrix} of size $n_0 = \nchoosek{K+11}{2}$, 
$\MX_1,\dots,\MX_{K+1}$ are the so called \emph{localizing matrices} (of size $n_i < n_0$ for $i\geq 1$) 
arising from the inequality constraints $g_j$ in \eqref{eq:popgeneral} and the additional constraint $\vc\tran \vc \leq 1$, $\MX \succeq 0$ indicates each element of $\MX$, \ie $\MX_i,i=0,\dots,K+1$, is positive semidefinite, $\MC = (\MC_0,\MC_1,\dots,\MC_{K+1})$ are known matrices with $\MC_i = \zero$ for $i\geq 1$, and $\calA(\MX) = \vb$ 
collects all %
linear equality constraints on $\MX$ (each scalar constraint is written as $\sum_{i=0}^{K+1} \inprod{\MA_{ij}}{\MX_i} = b_j $ for $j=1,\dots,m$). 

Moreover, when the optimal solution $\MX^\star$ of~\eqref{eq:denserelax} is such that 
$\MXstar_0$ has rank 1, then $\MXstar_0$ can be factorized as $\MXstar_0 = [\vxxstar]_2[\vxxstar]_2\tran$, where $\vxxstar = [\vectorize{\MRstar}\tran,(\vcstar)\tran]\tran$ is a globally optimal for \eqref{eq:2d3dp2ltf}, and 
$[\vxxstar]_2$ denotes the vector of monomials in the entries of $\vxxstar$ of degree up to 2. 
\end{corollary}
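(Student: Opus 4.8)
The plan is to recognize~\eqref{eq:denserelax} as the order-$\kappa$ relaxation in Lasserre's moment--sum-of-squares hierarchy~\cite{Lasserre01siopt-LasserreHierarchy,Parrilo03mp-sos} applied to the POP~\eqref{eq:popgeneral} (equivalently~\eqref{eq:2d3dp2ltf}), with the minimal order $\kappa = 2$ forced by the degree-$4$ objective $p(\vxx)$ (indeed $\MR\vs_i(\vc)$ is quadratic in $\vxx = [\vectorize{\MR}\tran,\vc\tran]\tran$, so the point-to-line cost is quartic). First I would set up the standard lifting: introduce a moment vector $\vy$ indexed by monomials in $\vxx$ of degree up to $2\kappa = 4$; form the moment matrix $\MX_0 = M_\kappa(\vy)$, whose rows and columns are indexed by the $\nchoosek{K+11}{2}$ monomials of degree up to $2$ in the $d = K+9$ variables; and, for each inequality $g_j$ of degree $d_j \leq 2$ in~\eqref{eq:popgeneral} together with the redundant constraint $1 - \vc\tran\vc \geq 0$ (added to bound the feasible set and tighten the relaxation), form the localizing matrix $M_{\kappa - \lceil d_j/2\rceil}(g_j\,\vy)$, which here has order $1$ and hence size $n_i < n_0$. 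The equality constraints $h_i(\vxx) = 0$ lifted to degree $2\kappa$, the moment-consistency (Hankel) relations among repeated monomials, and the normalization $y_{\mathbf 0} = 1$ are all linear in $\vy$; collecting them gives exactly $\calA(\MX) = \vb$, and $\inprod{\MC_0}{\MX_0}$ reproduces $p(\vxx)$ with $\MC_i = \zero$ for $i \geq 1$. This identifies~\eqref{eq:denserelax} with the claimed SDP.

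Next I would verify it is a relaxation. Given any $\vxx$ feasible for~\eqref{eq:popgeneral}, set $\MX_0 = [\vxx]_2[\vxx]_2\tran$ and let $\MX_i$ ($i \geq 1$) be the corresponding rank-one localizing matrix $g_{i}(\vxx)\,[\vxx]_1[\vxx]_1\tran$ (of the point mass $\delta_{\vxx}$). Then each $\MX_i \succeq 0$ (rank-one PSD, using $g_i(\vxx) \geq 0$ and $1 - \vc\tran\vc \geq 0$, the latter being implied by $\vc \geq \zero$, $\one\tran\vc = 1 \Rightarrow \vc\tran\vc \leq (\one\tran\vc)^2 = 1$, so adding it cuts nothing off~\eqref{eq:2d3dp2ltf}), $\calA(\MX) = \vb$ holds because $h_i(\vxx) = 0$ and the Hankel relations are automatic for a point mass, and $\inprod{\MC}{\MX} = p(\vxx)$. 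Hence every feasible point of the POP maps to a feasible point of~\eqref{eq:denserelax} with equal objective, so the optimal value of~\eqref{eq:denserelax} is a lower bound on that of~\eqref{eq:2d3dp2ltf}; convexity is immediate since the objective and $\calA$ are linear and the PSD cone is convex.

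For the rank-one recovery, suppose $\MXstar_0$ is optimal with rank $1$, say $\MXstar_0 = \vv\vv\tran$. The linear constraint fixing the leading diagonal entry of $\MXstar_0$ (the one indexed by the monomial $1$) to $1$ gives $v_{\mathbf 0}^2 = 1$, so after a global sign flip $v_{\mathbf 0} = 1$; the Hankel relations inside $\calA(\MX) = \vb$ (equating, e.g., the entry indexed by $(1,\,x_a x_b)$ with the one indexed by $(x_a,\,x_b)$) then force $\vv = [\vxxstar]_2$, where $\vxxstar = [\vectorize{\MRstar}\tran,(\vcstar)\tran]\tran$ is read off from the degree-one block of $\vv$. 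Feasibility of $\vxxstar$ for~\eqref{eq:popgeneral} follows because the lifted equality constraints evaluate to $h_i(\vxxstar) = 0$, and each localizing matrix $\MXstar_i \succeq 0$ ($i \geq 1$) collapses on the rank-one solution to the scalar $g_j(\vxxstar) \geq 0$ (and to $1 - (\vcstar)\tran\vcstar \geq 0$). Finally $\inprod{\MC}{\MXstar} = p(\vxxstar)$ equals the optimal value of~\eqref{eq:denserelax}, which lower-bounds the optimum of~\eqref{eq:2d3dp2ltf}; since $\vxxstar$ is feasible for~\eqref{eq:2d3dp2ltf} and attains that bound, it is globally optimal, proving the factorization claim.

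I expect the main obstacle to be the bookkeeping in the first two steps: one must check that $\calA(\MX) = \vb$ encodes (i) the normalization, (ii) the \emph{complete} set of moment-consistency relations needed so that a rank-one $\MXstar_0$ is forced to be a genuine degree-$2$ moment matrix $[\vxxstar]_2[\vxxstar]_2\tran$ rather than an arbitrary rank-one PSD matrix, and (iii) the equalities $h_i(\vxx) = 0$ correctly lifted to degree $2\kappa$; and that the localizing matrices are formed with the right orders so that on rank-one solutions they reduce exactly to the scalars $g_j(\vxxstar) \geq 0$. This is routine given the general theory, but it is where the specific monomial basis of size $\nchoosek{K+11}{2}$ (from $d = K+9$ variables at degree $2$) and the specific constraint set of~\eqref{eq:popgeneral} must be matched to the Lasserre template; I would relegate the explicit constraint count and the construction of $\MC$ and $\{\MA_{ij}\}$ to the appendix.
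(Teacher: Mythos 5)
Your proposal is correct and follows essentially the same route as the paper, which states the corollary as an instance of the standard order-2 Lasserre moment relaxation and defers the construction of $(\calA,\vb,\MC)$ and the optimality/rounding facts to the cited literature (\cite{Lasserre01siopt-LasserreHierarchy}, \cite[Section 2]{Yang22pami-certifiablePerception}). You have simply made explicit the standard steps the paper invokes by citation — the monomial count $\nchoosek{K+11}{2}$ from $d=K+9$ variables at degree 2, the redundancy of $\vc\tran\vc\leq 1$ under $\vc\geq\zero$, $\one\tran\vc=1$, the point-mass feasibility argument for the lower bound, and the rank-one extraction via the moment-consistency constraints — all of which are accurate.
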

We refer the interested reader to \cite[Section 2]{Yang22pami-certifiablePerception} for details about how to generate $(\calA,\vb,\MC)$ from the POP formulation \eqref{eq:popgeneral}. In practice, there exists an efficient Matlab implementation\footnote{\url{https://github.com/MIT-SPARK/CertifiablyRobustPerception}} that automatically generates the SDP relaxation \eqref{eq:denserelax} given a POP \eqref{eq:popgeneral}.
{The reader can observe that the SDP \eqref{eq:denserelax} is conceptually the same as the SDP \eqref{eq:categoryQCQPrelax} except that \eqref{eq:denserelax} has more than one positive semidefinite matrix decision variable}. 
\supplementary{sec:app-roundingAndSuboptimality2D3D} contains more details about Lasserre's hierarchy optimality certificates and 
the rounding procedure to extract an estimate from the solution of the SDP~\eqref{eq:denserelax}.

\subsubsection{Summary} 
We solve problem \eqref{eq:2d3dp2l} %
by (i) solving the SDP \eqref{eq:denserelax} using MOSEK \cite{mosek} and obtaining 
an estimate $(\hatMR,\hatvc)$ and the corresponding suboptimality gap $\eta$, using the rounding procedure in \supplementary{sec:app-roundingAndSuboptimality2D3D};
then (ii) we compute $\hatvt$ from \eqref{eq:closedformtranslation}.
If $\eta=0$, we certify that $(\hatMR,\hatvt,\hatvc)$ is a globally optimal solution to \eqref{eq:2d3dp2l}.
We call this approach \PACETwo (\PACETwoLong).

\JS{
\begin{remark}[Distribution of $\vc$] \label{remark:one-hot-c}
The active shape model allows the entries of $\vc$ to be scalars in $[0,1]$, which enables the model to interpolate between shapes; see Fig.~\ref{fig:active-shape-demo}. 
We note that if the vector $\vc$ is assumed to be a one-hot vector (\ie it has a single entry equal to 1 and all other entries equal to zero), then there exist trivial solvers for~\eqref{eq:2d3dp2l} and~\eqref{eq:probOutFree-3D3Dcatlevel}. Namely, one can run $K$ times an instance-based solver (\eg PnP or 3D registration), once for each shape in the CAD library, and then pick the solution attaining the lowest cost. 
Such an approach sacrifices the interpolation power of the active shape model and in the experiments
 we show that it leads to large errors when $\vc$ is not a one-hot vector.
\end{remark}
}

\section{Stage II (continued): Further Robustness Through Graduated Non-Convexity}
\label{sec:gnc-category-level}

\JS{
While in principle we can use \PACEThree and \PACETwo
directly after Stage 1,
the measurements selected by \robin potentially still contain a few outliers.
These outliers could still hinder the quality of the pose and shape estimates.
}

\JS{To compute accurate estimates in the face of those remaining outliers,}
we add a robust loss function ---in particular, a truncated least square loss--- to problems~\eqref{eq:probOutFree-3D3Dcatlevel} and~\eqref{eq:2d3dp2l},
and solve the resulting optimization using a standard graduated non-convexity (GNC)~\cite{Blake1987book-visualReconstruction,Yang20ral-GNC} approach. 
At each iteration, GNC alternates between solving a weighted least squares problem in the form~\eqref{eq:probOutFree-3D3Dcatlevel} and~\eqref{eq:2d3dp2l} (these can be solved to certifiable optimality using \PACEThree and \PACETwo) and updating the weights for each measurement (which can be computed in closed form~\cite{Yang20ral-GNC}).
 The interested reader can find more details in \supplementary{sec:app-gnc-category}.
\JS{
 We release the proposed solvers at \urlPACE.
}

\section{Experiments}
\label{sec:experiments}

This section presents a comprehensive evaluations of the proposed approaches.
First, \LC{we showcase the optimality of \PACEThree and robustness of \PACErobustThree through experiments on synthetic data, \pascal~\cite{Xiang2014WACV-PASCAL+} and \keypointnet~\cite{You20cvpr-KeypointNetLargescale} (Section~\ref{sec:exp-optimality-robustness-3d}).
 Then, we demonstrate the optimality of \PACETwo and robustness of \PACErobustTwo through experiments on synthetic datasets (Section~\ref{sec:exp-optimality-robustness-2d}).}
Finally, we test both \PACErobustThree and \PACErobustTwo on \apollo, a self-driving dataset~\cite{Song19-apollocar3d}.
\JS{
Experiments in Section~\ref{sec:exp-optimality-robustness-3d} and \ref{sec:exp-apollo} are run with an Intel i9-9920X CPU at 3.5 GHz with 128 GB RAM.
Section~\ref{sec:exp-optimality-robustness-2d} experiments are run on MIT SuperCloud~\cite{Reuther18hpec-supercloud} Xeon Platinum 8260 cluster, using 6 threads and 24 GB RAM per run.
}

\newcommand{\mpwfour}{4.4cm}
\newcommand{\myhspace}{\hspace{-3.5mm}}
\begin{figure*}[ht]
	\begin{center}
	\begin{minipage}{\textwidth}
	\begin{tabular}{cccc}%
		\myhspace \hspace{-3mm}
			\begin{minipage}{\mpwfour}%
			\centering%
			\includegraphics[width=\columnwidth]{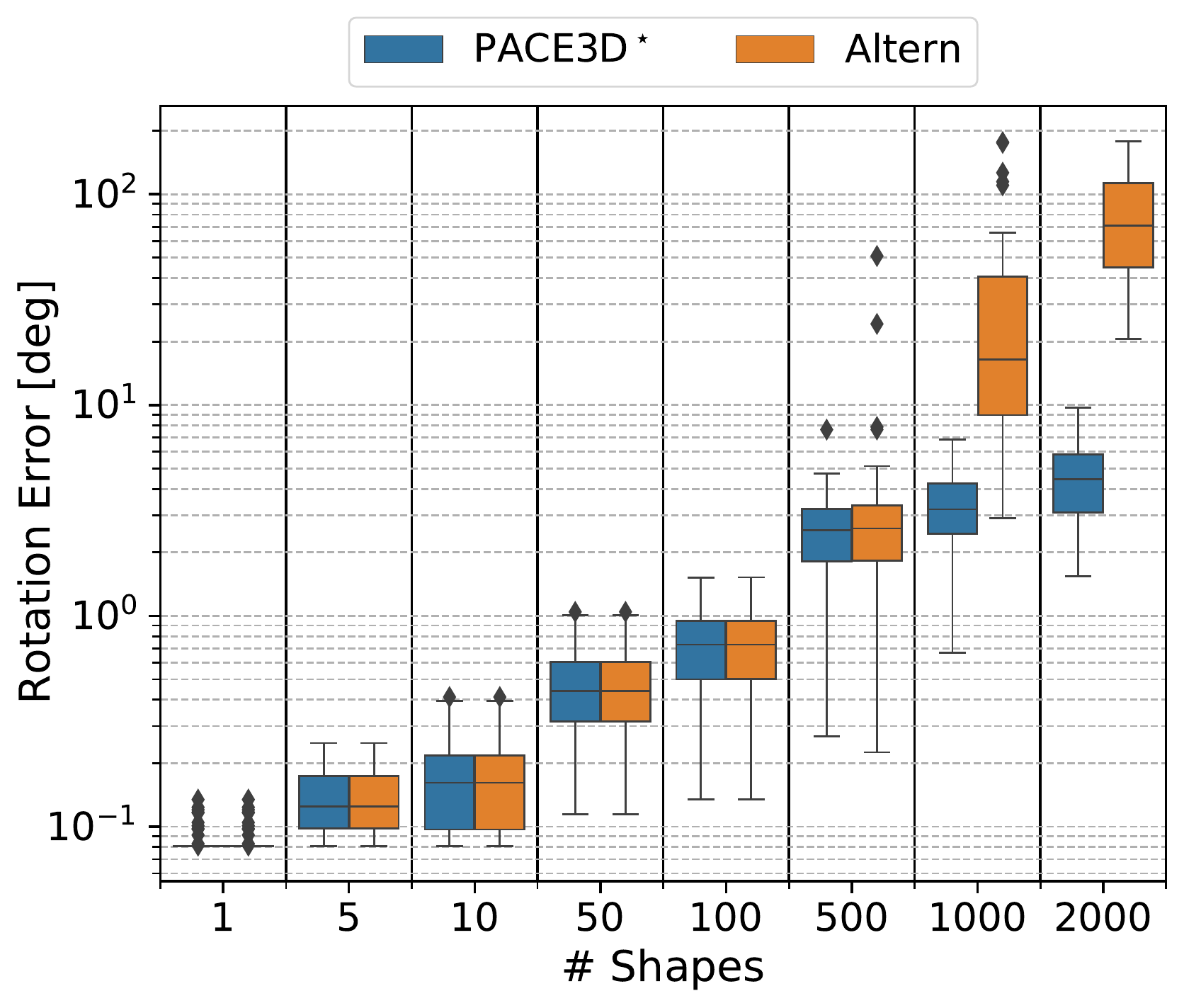}
			\end{minipage}
		&   \myhspace
			\begin{minipage}{\mpwfour}%
			\centering%
			\includegraphics[width=\columnwidth]{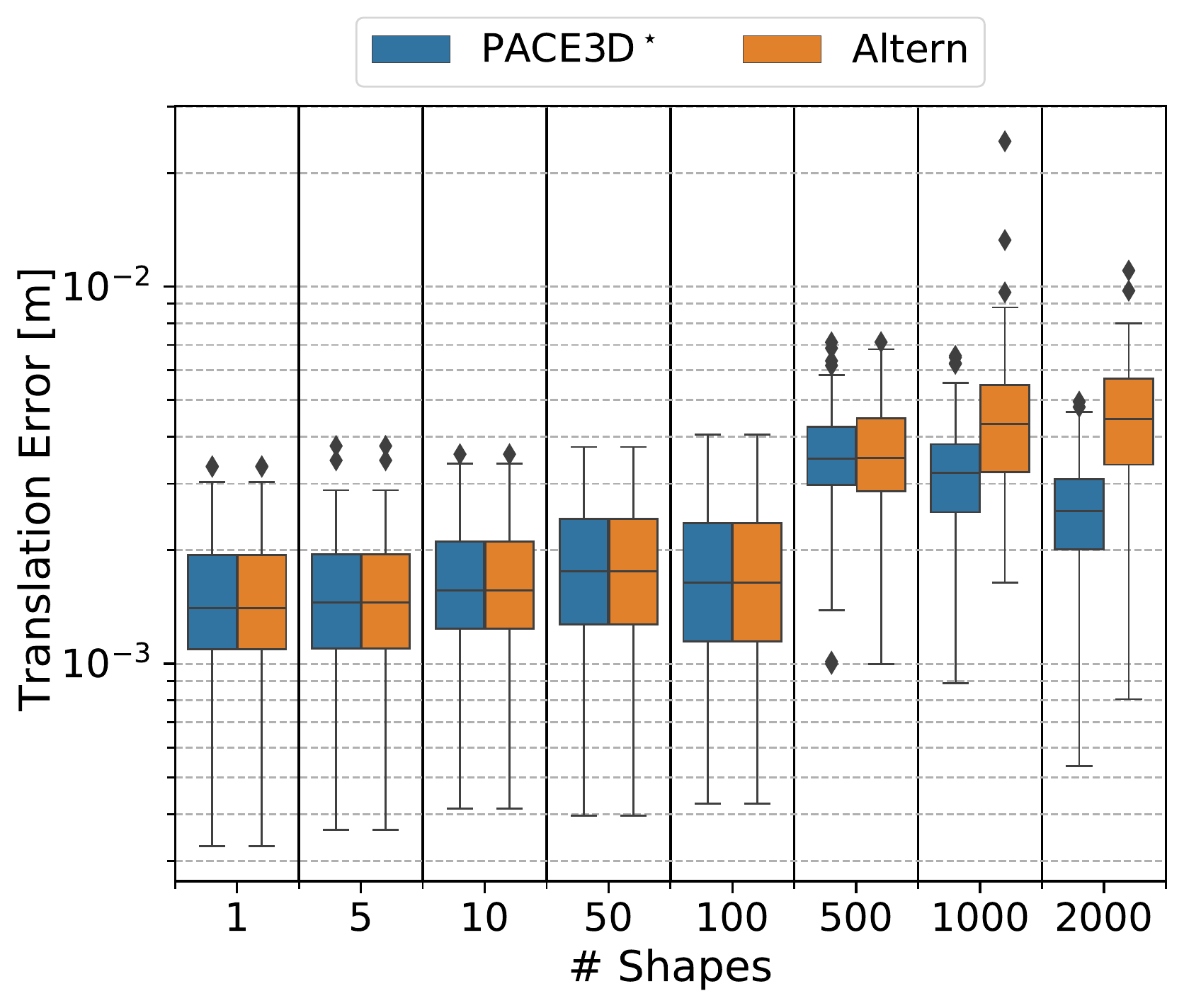}
			\end{minipage}
		&   \myhspace
			\begin{minipage}{\mpwfour}%
			\centering%
			\includegraphics[width=\columnwidth]{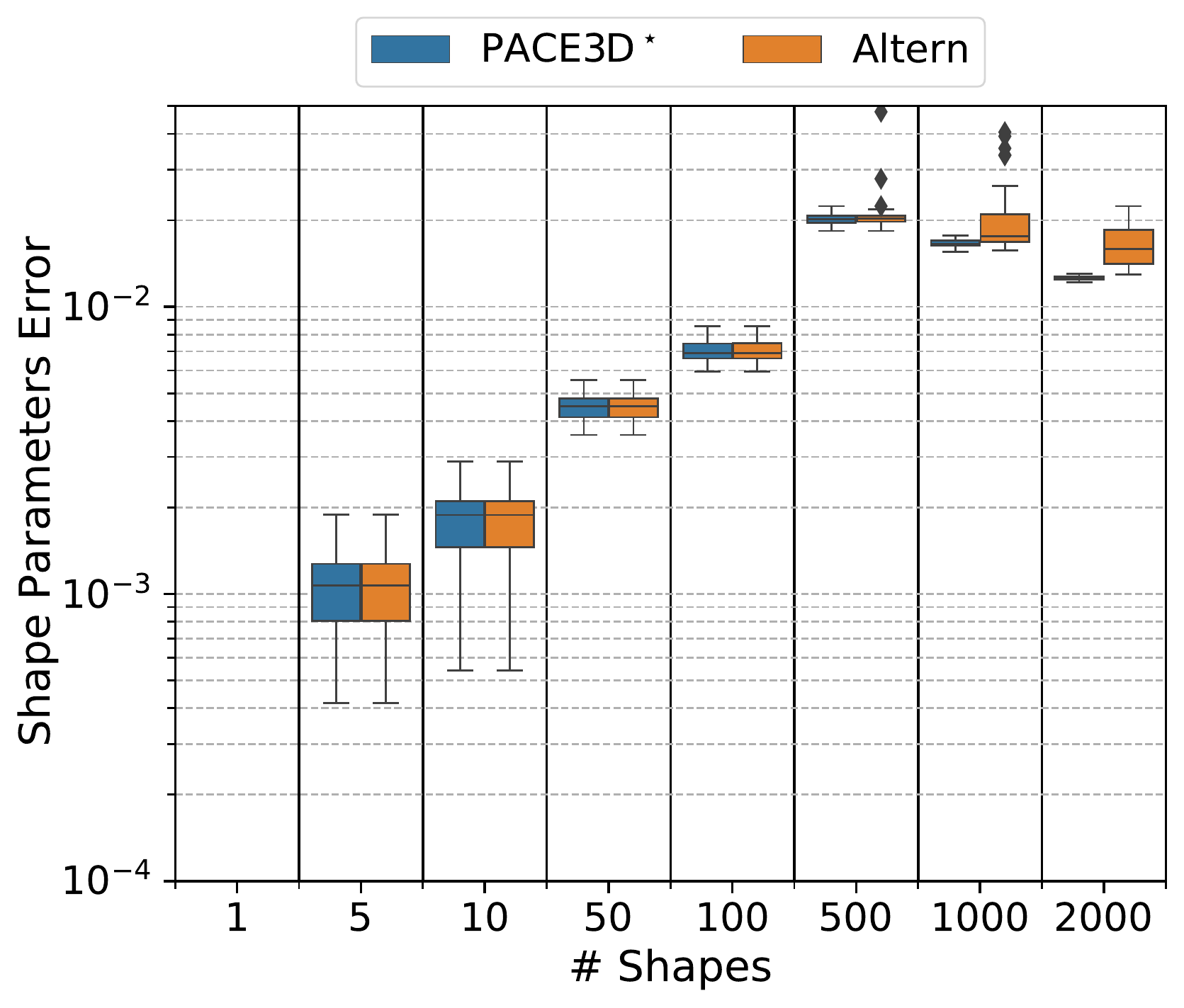}
			\end{minipage}
		&   \myhspace
			\begin{minipage}{\mpwfour}%
			\centering%
			\includegraphics[width=\columnwidth]{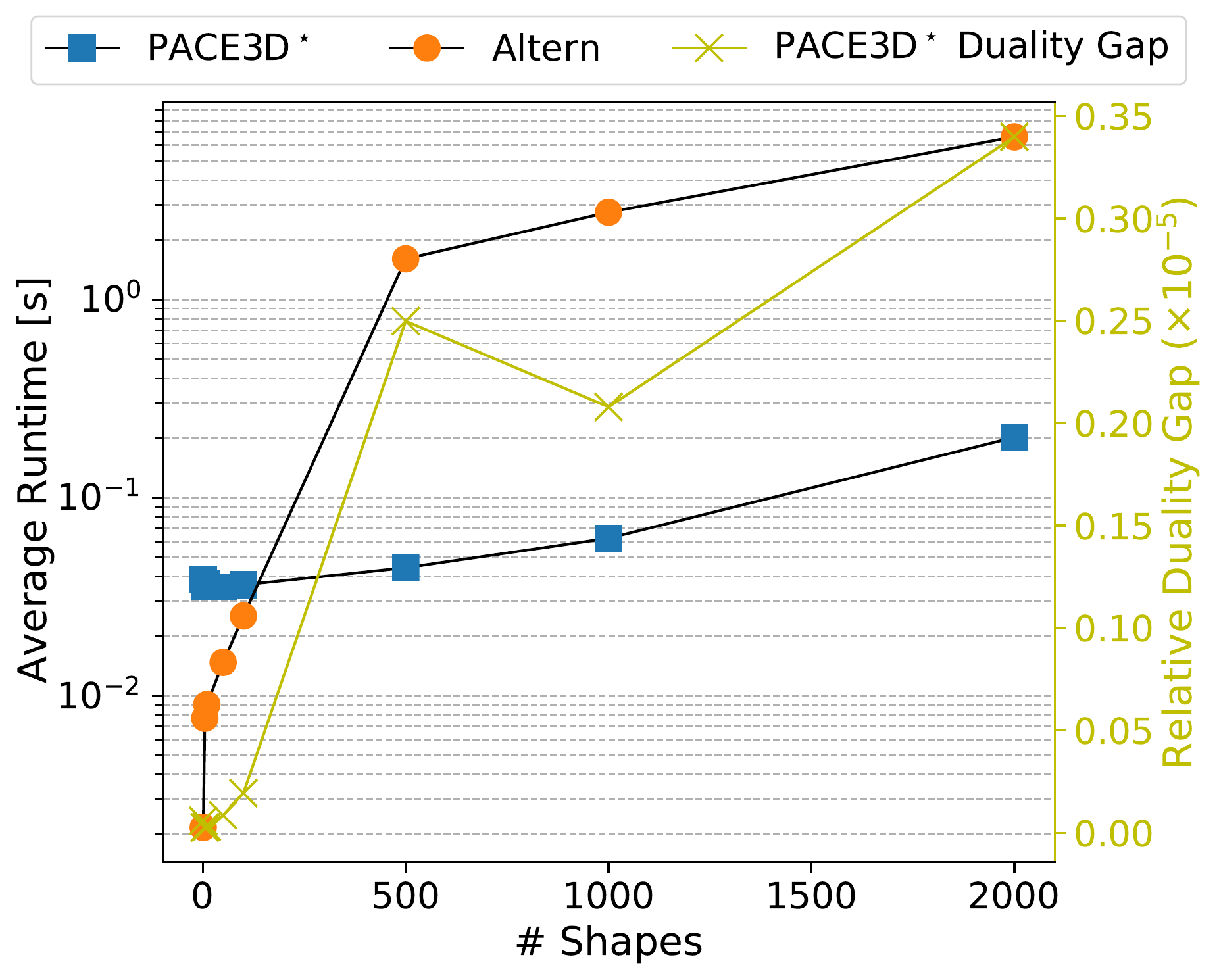}
			\end{minipage}
		\\
		\multicolumn{4}{c}{\smaller (a) Performance of \PACEThree~on outlier-free synthetic data: $N=100$. }
		\\
		\myhspace \hspace{-3mm}
			\begin{minipage}{\mpwfour}%
			\centering%
			\includegraphics[width=\columnwidth]{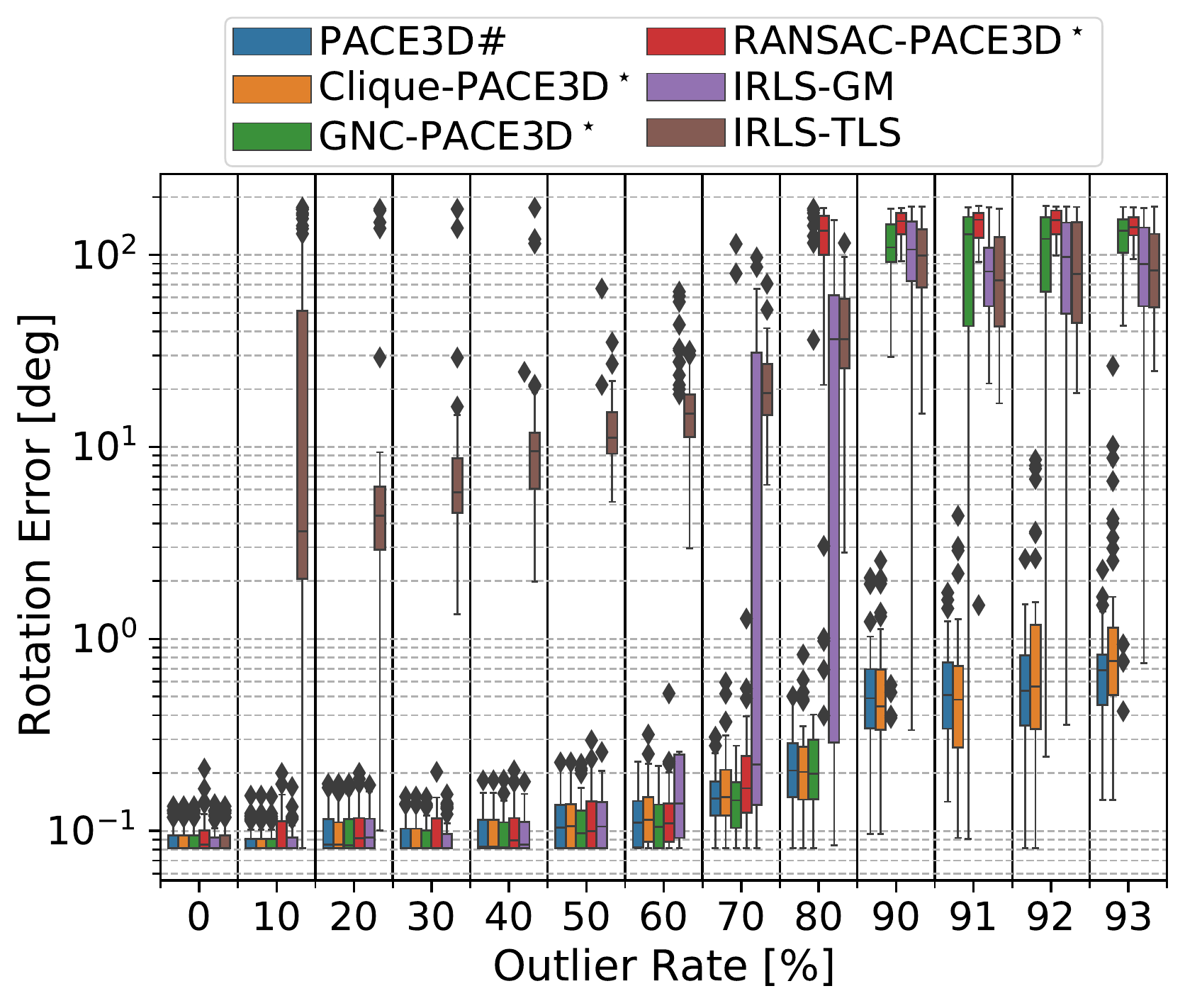}
			\end{minipage}
		&   \myhspace
			\begin{minipage}{\mpwfour}%
			\centering%
			\includegraphics[width=\columnwidth]{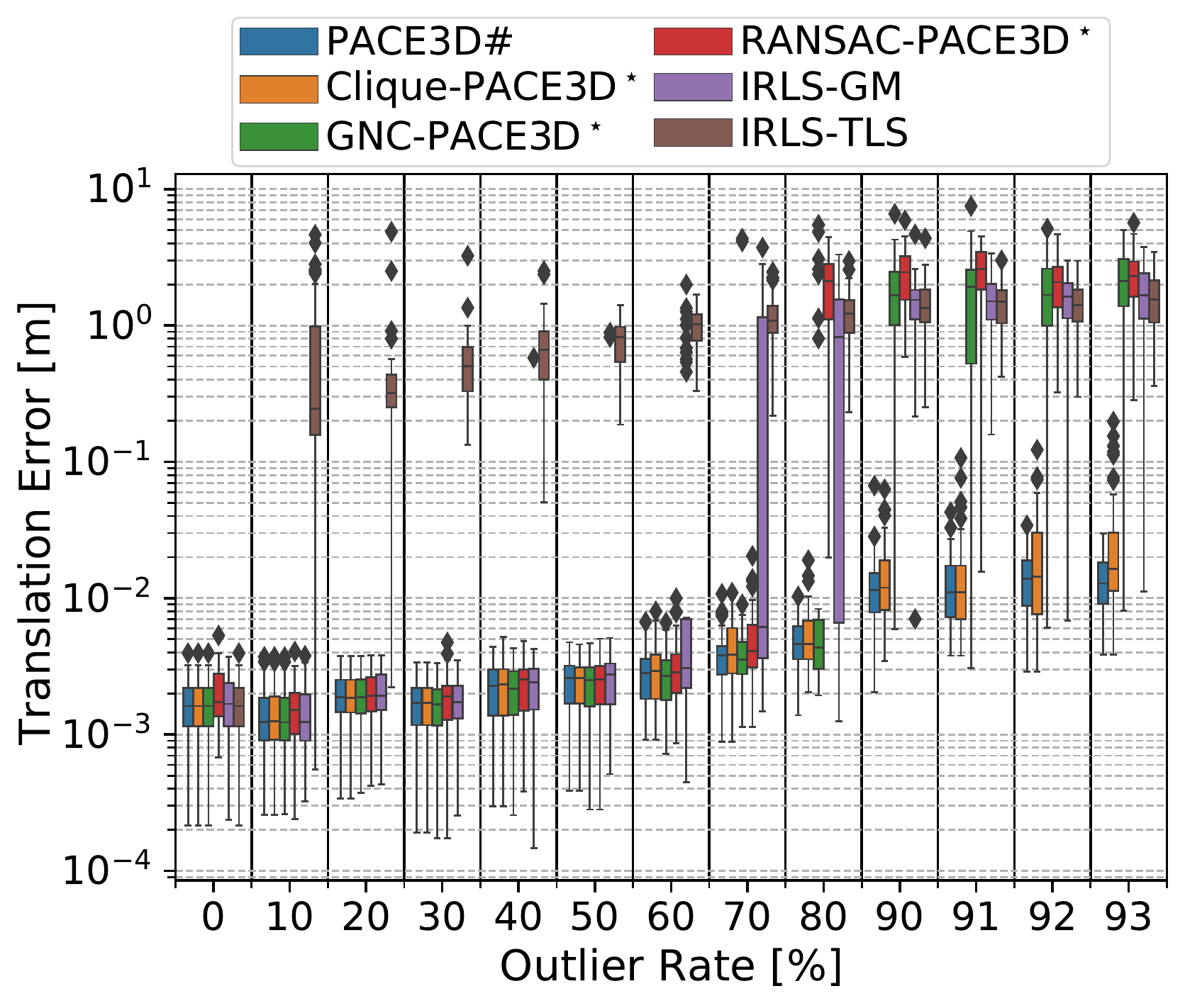}
			\end{minipage}
		&   \myhspace
			\begin{minipage}{\mpwfour}%
			\centering%
			\includegraphics[width=\columnwidth]{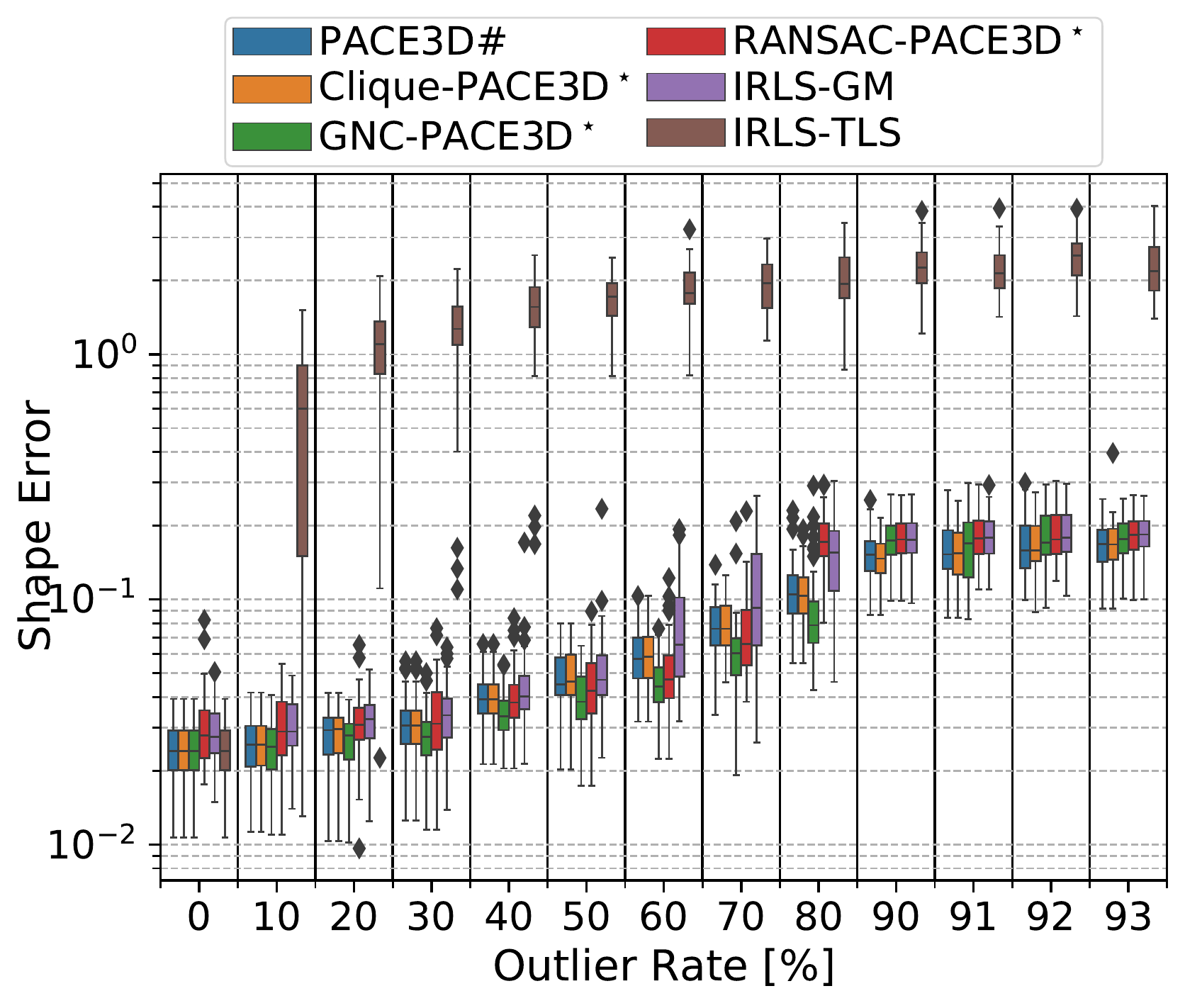}
			\end{minipage}
		&   \myhspace
			\begin{minipage}{\mpwfour}%
			\centering%
			\includegraphics[width=\columnwidth]{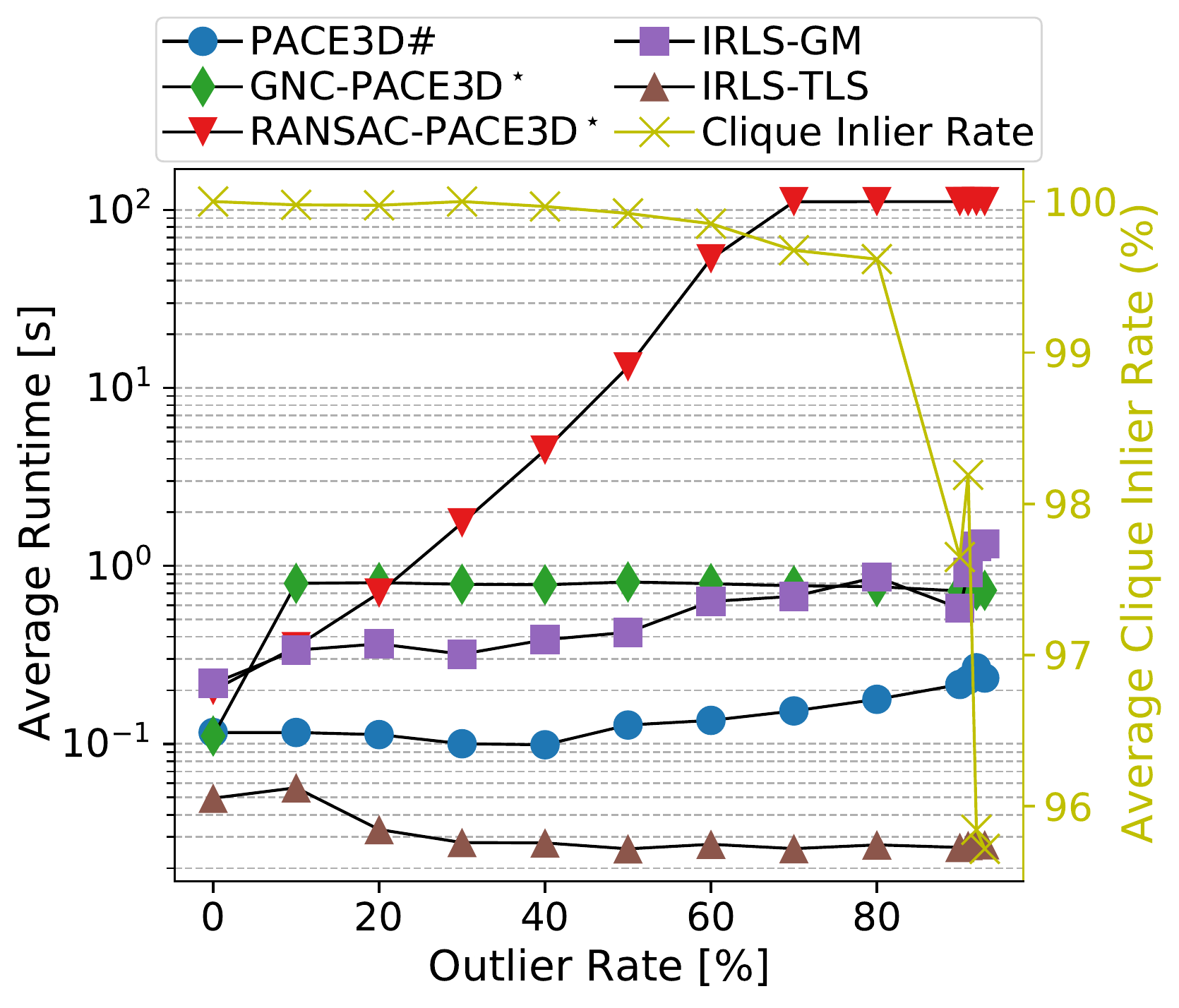}
			\end{minipage}
		\\
		\multicolumn{4}{c}{\smaller (b) Robustness of \PACErobustThree against increasing outliers on synthetic data: $N=100$, $K=10$, $r=0.1$.}
		\\
		\myhspace \hspace{-3mm}
			\begin{minipage}{\mpwfour}%
			\centering%
			\includegraphics[width=\columnwidth]{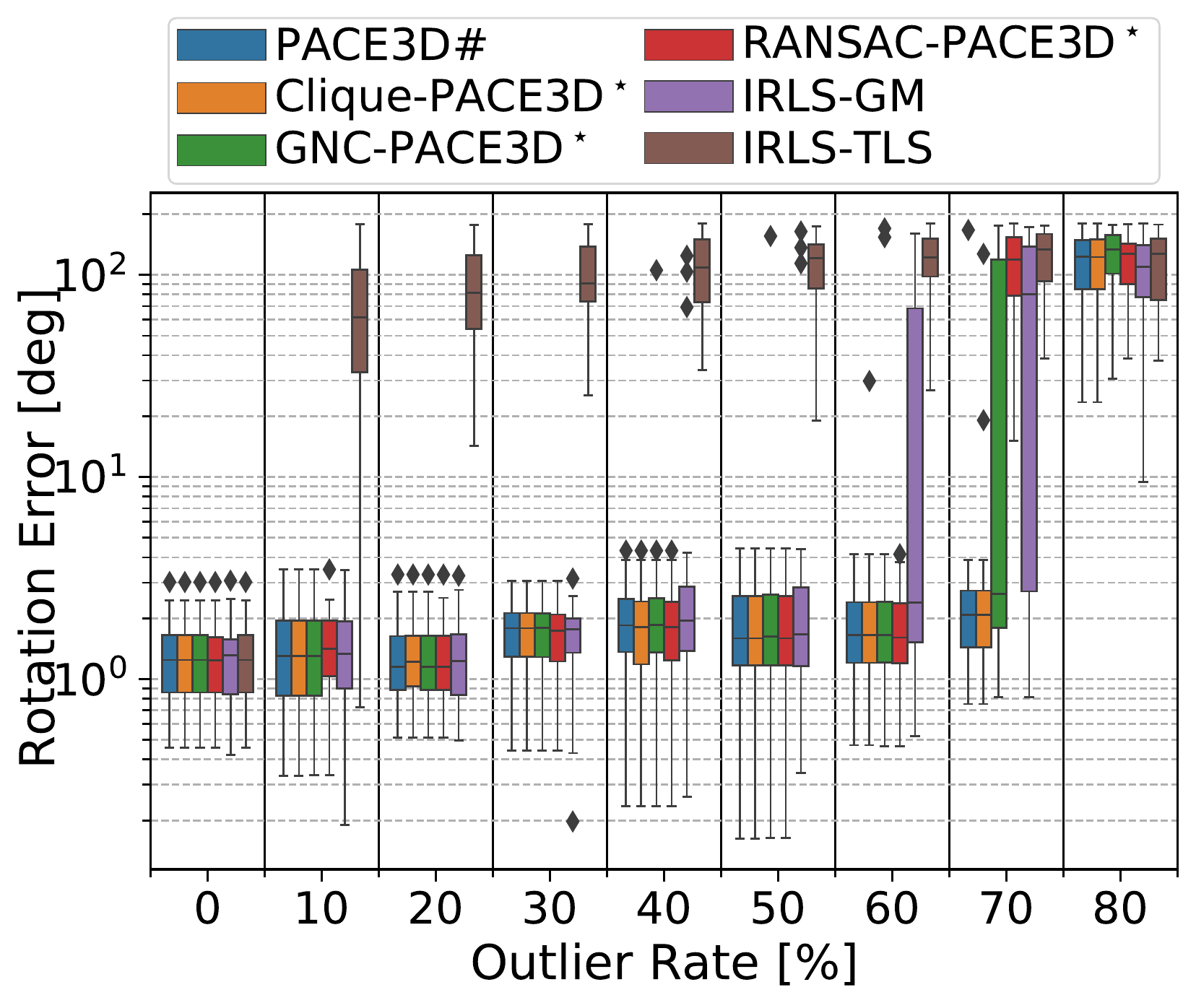}
			\end{minipage}
		&   \myhspace
			\begin{minipage}{\mpwfour}%
			\centering%
			\includegraphics[width=\columnwidth]{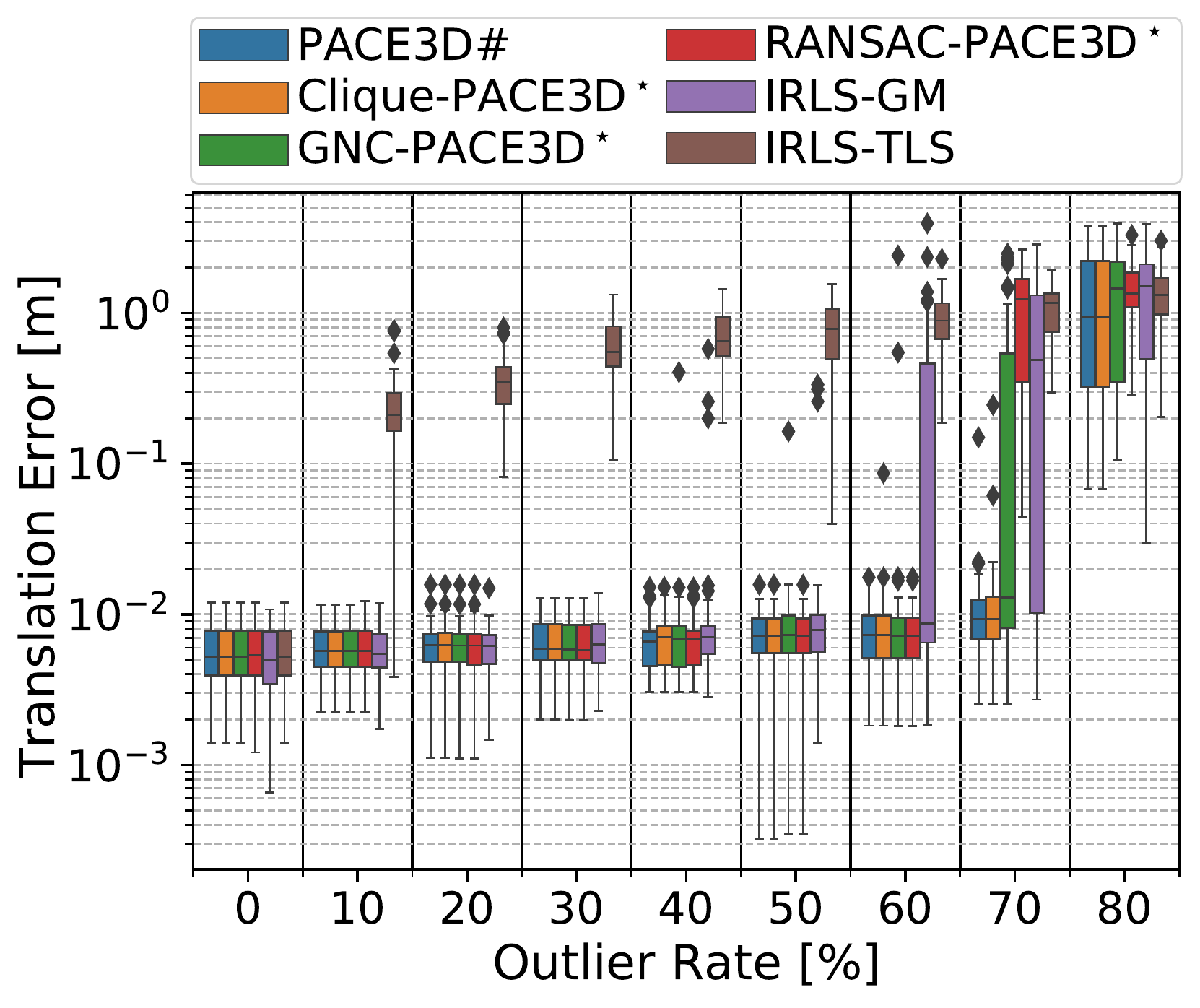}
			\end{minipage}
		&   \myhspace
			\begin{minipage}{\mpwfour}%
			\centering%
			\includegraphics[width=\columnwidth]{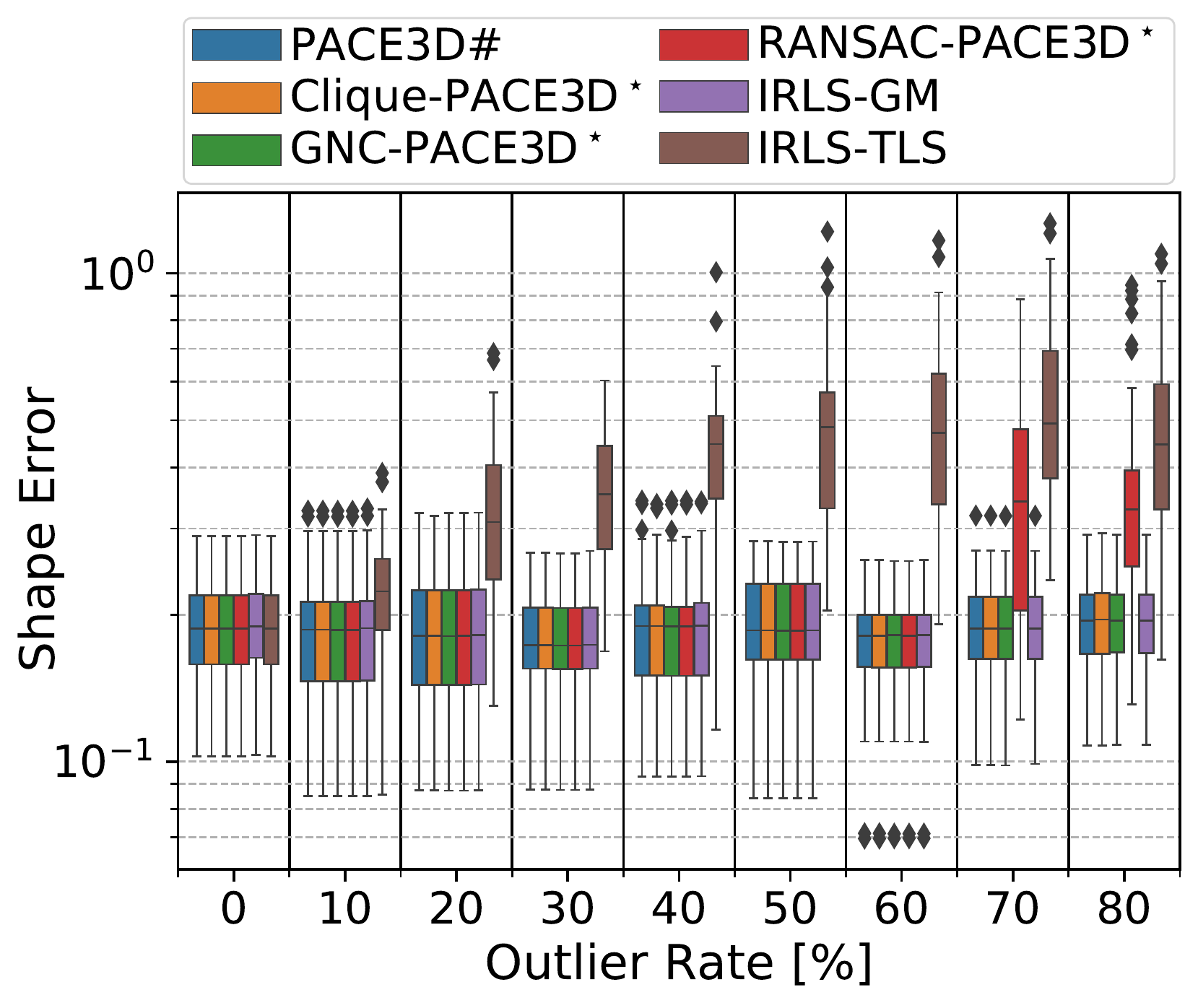}
			\end{minipage}
		&   \myhspace
			\begin{minipage}{\mpwfour}%
			\centering%
			\includegraphics[width=\columnwidth]{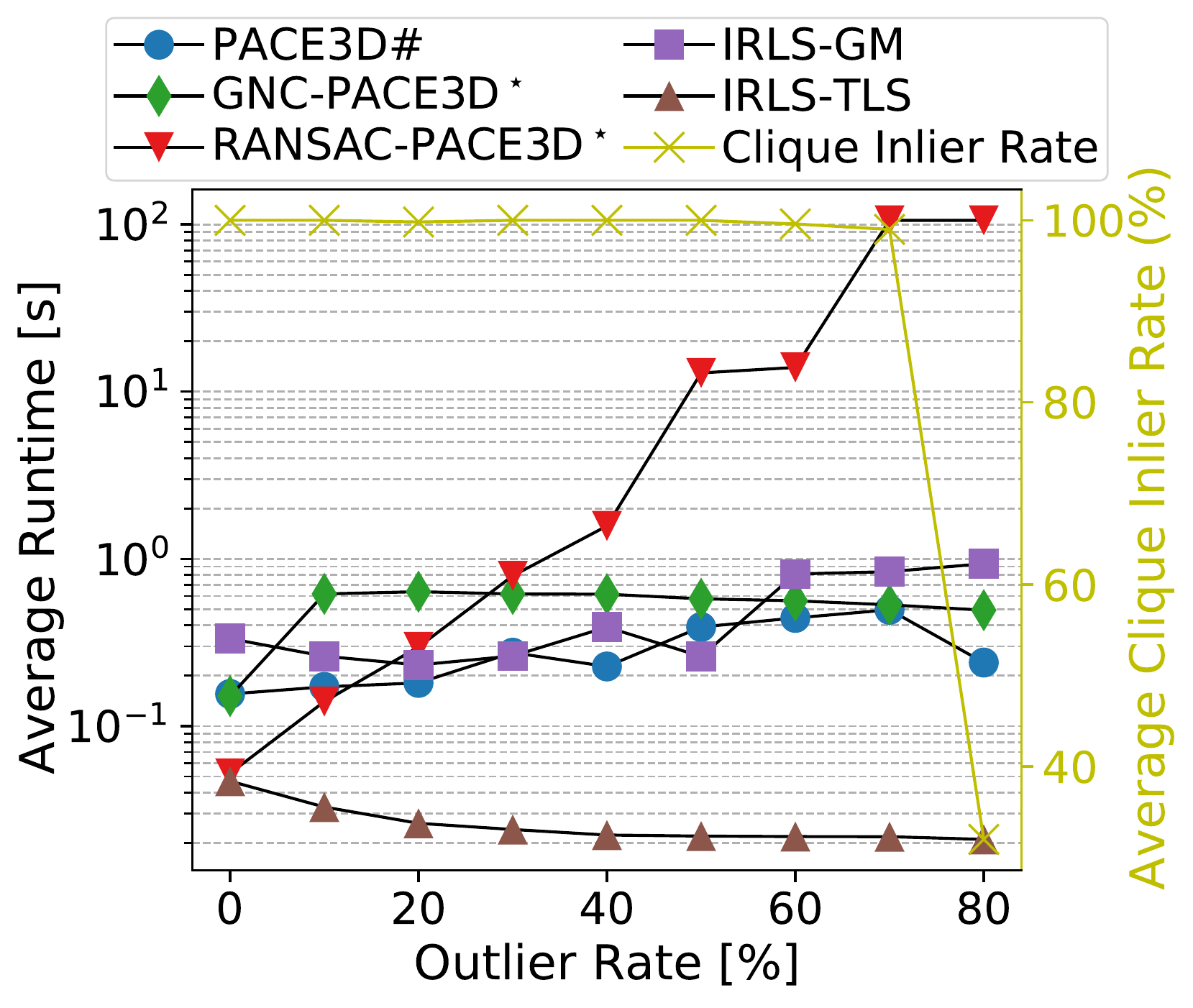}
			\end{minipage}
		\\
		\multicolumn{4}{c}{\smaller (c) Robustness of \PACErobustThree against increasing outliers on the \emph{car} category in the \pascal~dataset~\cite{Xiang2014WACV-PASCAL+}: $N=12$, $K=9$.}
		\\
		\myhspace
			\begin{minipage}{\mpwfour}%
			\centering%
			\includegraphics[width=\columnwidth]{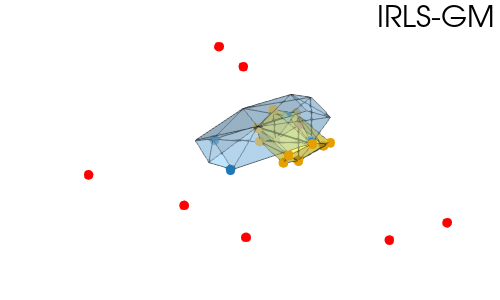}
			\end{minipage}
		&   \myhspace
			\begin{minipage}{\mpwfour}%
			\centering%
			\includegraphics[width=\columnwidth]{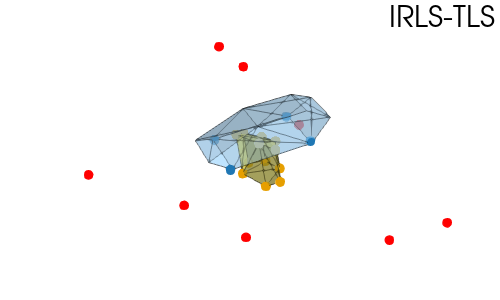}
			\end{minipage}
		&   \myhspace
			\begin{minipage}{\mpwfour}%
			\centering%
			\includegraphics[width=\columnwidth]{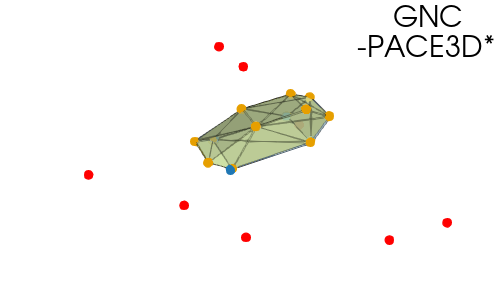}
			\end{minipage}
		&   \myhspace
			\begin{minipage}{\mpwfour}%
			\centering%
			\includegraphics[width=\columnwidth]{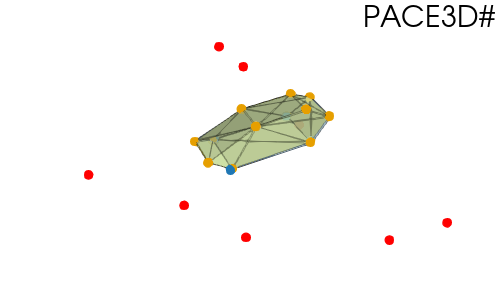}
			\end{minipage}
		\\
	  \multicolumn{4}{c}{\smaller (d) Qualitative results of \irlsgm, \irlstls, \gnc, and \PACErobustThree on a \pascal  instance with 70\% outlier rate. }
	\end{tabular}
	\end{minipage}
	\caption{Performance of \PACEThree~and \PACErobustThree~compared with baselines in simulated experiments. (a) \PACEThree~compared with alternating minimization (\altern) on synthetic outlier-free data with $N=100$ and $K$ increasing from $10$ to $2000$; (b) \PACErobustThree and variants (\cliquePACEThree~and \gnc), compared with two variants of iterative reweighted least squares (\irlsgm~and \irlstls)~\cite{MacTavish15crv-robustEstimation} on synthetic outlier-contaminated data with $N=100$, $K=10$, and outlier rates up to $93\%$; (c) same as (b) but using the \emph{car} category CAD models from the \pascal~dataset~\cite{Xiang2014WACV-PASCAL+}, with $N=12$, $K=9$, and outlier rates up to $80\%$. Each boxplot and lineplot summarizes $50$ Monte Carlo random runs.
	  (d) Qualitative results of \irlsgm, \irlstls, \gnc, and \PACErobustThree on a \pascal instance with 70\% outlier rate. Blue meshes represent the ground-truth shape, and yellow meshes represent the pose and shape estimated by each model. %
	  Red points represent outliers.
	  In this case, both \gnc and \PACErobustThree succeeded, while \irlsgm and \irlstls failed.
	\label{fig:ablations-3d}}
    \vspace{-5mm}
	\end{center}
\end{figure*}

\subsection{Optimality and Robustness of \PACEThree and \PACErobustThree}
\label{sec:exp-optimality-robustness-3d}

\myParagraph{Optimality of \PACEThree} To evaluate the performance of \PACEThree in solving the outlier-free problem~\eqref{eq:probOutFree-3D3Dcatlevel}, we randomly simulate $K$ shape models $\calB_k$ whose points $\basis{k}{i}$'s
are drawn from an i.i.d. Gaussian distribution $\calN(\zero,\eye_3)$. We sample
shape parameters $\vc$ uniformly at random in $[0,1]^\nrShapes$,
and normalize $\vc$ such that $\ones\tran\vc = 1$. Then we
\JS{uniformly sample $\MR$ from \SOthree and $\vt$ from $\calN(\zero,\eye_3)$}
and generate the measurements $\measThree{i}$ according to the model~\eqref{eq:generativeModel-3d}, where the noise $\epsThree{i}$ follows $\calN(\zero,\sigma^2\eye_3)$ with standard deviation $\sigma = 0.01$. \JS{We fix $N=100$, and increase $K$ from $1$ up to $\num{2000}$, to stress-test the algorithms.}
We set the regularization factor $\lambda = \sqrt{K/N}$ so that larger regularization \LC{is imposed when the problem becomes more ill-posed.} We compare \PACEThree with a baseline approach based on alternating minimization~\cite{Lin14eccv-modelFitting,Gu06cvpr-faceAlignment,Ramakrishna12eccv-humanPose} (details in \supplementary{sec:app-alternationApproach}) that offers no optimality guarantees (label: \altern).

Fig.~\ref{fig:ablations-3d}(a) plots the statistics of rotation error (angular distance between estimated and ground-truth rotations), translation error, shape parameters error ($\ell_2$ distance between estimated and ground-truth translation/shape parameters), as well as average runtime and relative duality gap (see also {\supplementary{sec:app-sdp-relaxation-gap}} for a formal definition).
We make the following observations: (i) \PACEThree returns accurate pose and shape estimates up to $K=2000$, while \altern starts failing at $K=500$. (ii) Although \altern is faster than \PACEThree for small $K$ (\eg~$K<200$), \PACEThree is orders of magnitude faster than \altern for large $K$.  
\LC{\PACEThree solves a fixed-size SDP regardless of $K$ and the slight runtime increase  is due to inversion of the dense matrix in~\eqref{eq:inverseofdensematrix}.} (iii) The relaxation~\eqref{eq:categoryQCQPrelax} is empirically tight (\JS{relative duality gap $< 10^{-5}$}), certifying global optimality of the solution returned by \PACEThree.
\JS{
  In \supplementary{sec:app-experiments-pace-3d}, we show extra results with different noise levels and $N$.}

\myParagraph{Robustness of \PACErobustThree}
To test the robustness of \PACErobustThree on outlier-contaminated data, we follow the same data generation protocol as before, except that (i) when generating the CAD models, we follow a more realistic active shape model~\cite{Cootes95cviu} where we first generate a mean shape $\calB$ whose points $\vb_{i}$'s are i.i.d. Gaussian $\calN(\zero,\eye_3)$, and then each CAD model is generated from the mean shape by: $\basis{k}{i} = \vb_{i} + \vv_{i}$, where $\vv_{i}$ follows $\calN(\zero,r^2 \eye_3)$ and represents the \emph{intra-class variation} of semantic keypoints with variation  radius $r$; (ii) we replace a fraction of the measurements $\measThree{i}$ with arbitrary 3D points sampled according to $\calN(\zero,\eye_3)$ and violating the generative model~\eqref{eq:generativeModel-3d}. We compare \PACErobustThree with two variants: \cliquePACEThree
(where, after pruning outliers using maximum clique, \PACEThree is applied \emph{without} \gnc) and \PACEThreeGNC (where \gnc is applied \emph{without} any outlier pruning), as well as two variants of the popular \emph{iterative reweighted least squares} method: \irlstls and \irlsgm, where \tls and \gm denote the truncated least squares cost function and the Geman-McClure cost function~\cite{MacTavish15crv-robustEstimation}. 
\JS{
Moreover, we compare against \ransacPACEThree, a 5-point \ransac scheme.
We use \PACEThree in the inner iterations of \PACErobustThree, \PACEThreeGNC, \irlstls, \irlsgm, and \ransacPACEThree. We set $\inthrThree = 0.05$ for outlier pruning in \gnc, \irlstls, and \irlsgm.
}
\JSTwo{
  \ransacPACEThree uses a maximum of 5000 iterations, whereas both
  \irlstls and \irlsgm use a maximum of $10^{3}$ iterations.
}
Fig.~\ref{fig:ablations-3d}(b) plots the results under increasing outlier rates up to $93\%$ when $N=100$, $K=10$, and $r=0.1$. We make the following observations:
(i) \irlstls quickly fails (at $10\%$ outlier rate) due to the highly non-convex nature of the \tls cost, \LC{while \irlsgm starts breaking at {$60\%$ outliers}.}
(ii) \PACEThreeGNC alone already \toCheckTwo{outperforms} \irlstls and \irlsgm and is robust to $60\%$ outliers.
\LC{(iii) \ransacPACEThree is also robust to $60\%$ outliers.}
(iv) With our maximum-clique outlier pruning, the robustness of \PACErobustThree is boosted to $93\%$, a level that is comparable to cases when the shapes are known (\eg~\cite{Yang20tro-teaser}).
In addition, outlier pruning speeds up the convergence of \PACEThreeGNC (\cf~{the runtime plot for \gnc and \PACErobustThree in Fig.~\ref{fig:ablations-3d}(b))}.
(v) Even without \gnc, the outlier pruning is so effective that \PACEThree alone is able to succeed with up to $92\%$ outliers, albeit the estimates are typically less accurate than \PACErobustThree.
In fact, looking at the clique inlier rate plot ({yellow} lineplot in Fig.~\ref{fig:ablations-3d}(b)), the reader sees that the set of measurements after maximum clique pruning is almost free of outliers, explaining the performance of \cliquePACEThree. In \supplementary{sec:app-experiments-pace-3d}, we show extra results
for $r=0.2$
which further confirm \PACErobustThree's robustness with up to $90\%$ outliers.

\myParagraph{Robustness on \pascal} For a simulation setup that is closer to realistic scenarios, we use the CAD models from the \emph{car} category in the \pascal dataset~\cite{Xiang2014WACV-PASCAL+}, which contains $K=9$ CAD models with $N=12$ semantic keypoints. We randomly sample $(\MR,\vt,\vc)$ and add noise and outliers as before, and compare the performance of \PACErobustThree with other baselines, as shown in Fig.~\ref{fig:ablations-3d}(c).
The dominance of \PACErobustThree over other baselines, and the effectiveness of outlier pruning is clearly seen across the plots. \PACErobustThree is robust to $70\%$ outliers (see Fig.~\ref{fig:ablations-3d}(d) for a qualitative example), while other baselines break at a much lower outlier rate. Note that at $80\%$ outlier rate, there are only two inlier semantic keypoints, making it pathological to estimate shape and pose.

\begin{figure}[t!]
  \centering
  \includegraphics[width=0.75\columnwidth]{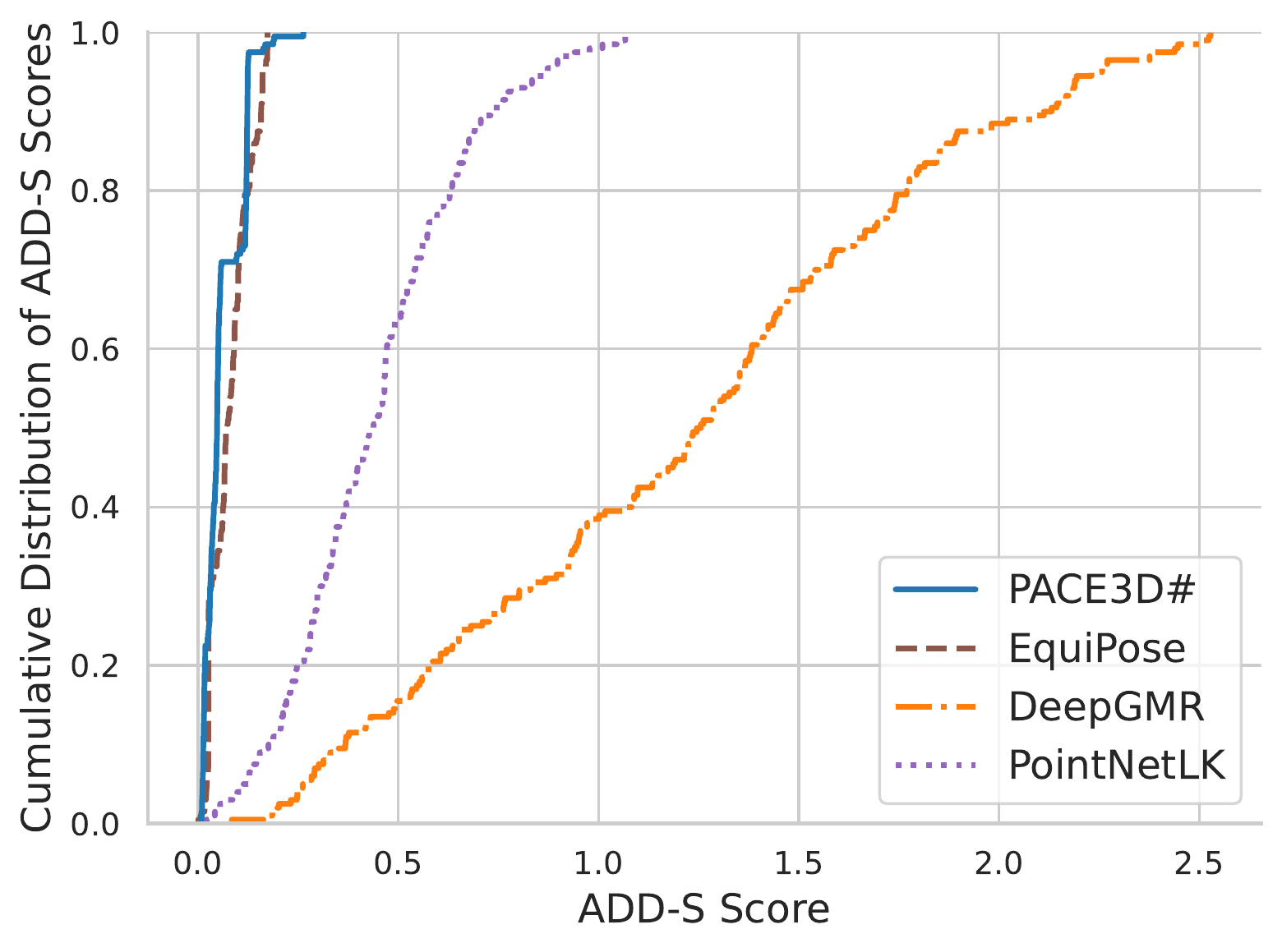}
  \vspace{-3mm}
  \caption{Cumulative distribution of ADD-S score for \PACErobustThree, EquiPose~\cite{Li21nips-LeveragingSE}, PointNetLK~\cite{Li21cvpr-PointNetLKRevisited}, and DeepGMR~\cite{Yuan20arxiv-DeepGMRLearning} in the \keypointnet experiment,
    averaged across 4 categories: airplane, bottle, car, and chair.\label{fig:shapenet-equipose-objs}}
\end{figure}

\myParagraph{Performance on \keypointnet} \JS{We conduct experiments on \keypointnet, a large-scale 3D keypoint dataset built from ShapeNetCore~\cite{Chang15arxiv-shapenet}, containing 8329 3D models from 16 object categories~\cite{You20cvpr-KeypointNetLargescale}.
  For each object category, we select one object and render depth point clouds using Open3D~\cite{Zhou18arxiv-open3D}.
  We apply a random translation bounded within $[0, 1]^{3}$, normalized by the objects' diameters, and apply a uniformly sampled rotation to the object.
  We generate 5000 samples as the training set, and 50 samples for test and validation sets each.
  We compare the following methods:
  (i) \PACErobustThree, where we first use a network based on PointTransformer~\cite{Zhao21iccv-PointTransformer} to detect keypoints, and then run \PACErobustThree,
  (ii) EquiPose~\cite{Li21nips-LeveragingSE}, (iii) PointNetLK~\cite{Li21cvpr-PointNetLKRevisited}, and (iv) DeepGMR~\cite{Yuan20arxiv-DeepGMRLearning}.
  Pretrained models provided by the authors (and trained on the same \keypointnet dataset) are used for EquiPose, PointNetLK, and DeepGMR.
  Fig.~\ref{fig:shapenet-equipose-objs} shows the cumulative distribution of ADD-S score~\cite{Xiang17rss-posecnn} over four categories (airplane, bottle, car, and chair), for which pretrained models are available for EquiPose.
  \PACErobustThree outperforms DeepGMR and PointNetLK by a large margin. Only EquiPose achieves comparable performance with \PACErobustThree.
  Additional experimental details and results on all \keypointnet categories are given in~\supplementary{sec:app-experiments-keypointnet}.}

\subsection{Optimality and Robustness of \PACETwo and \PACErobustTwo}
\label{sec:exp-optimality-robustness-2d}

\renewcommand{\mpwfour}{4.6cm}
\renewcommand{\myhspace}{\hspace{-3.5mm}}
\begin{figure*}[h!]
	\begin{center}
	\begin{minipage}{\textwidth}
	\begin{tabular}{cccc}%
		\myhspace \hspace{-3mm}
			\begin{minipage}{\mpwfour}%
			\centering%
			\includegraphics[width=\columnwidth]{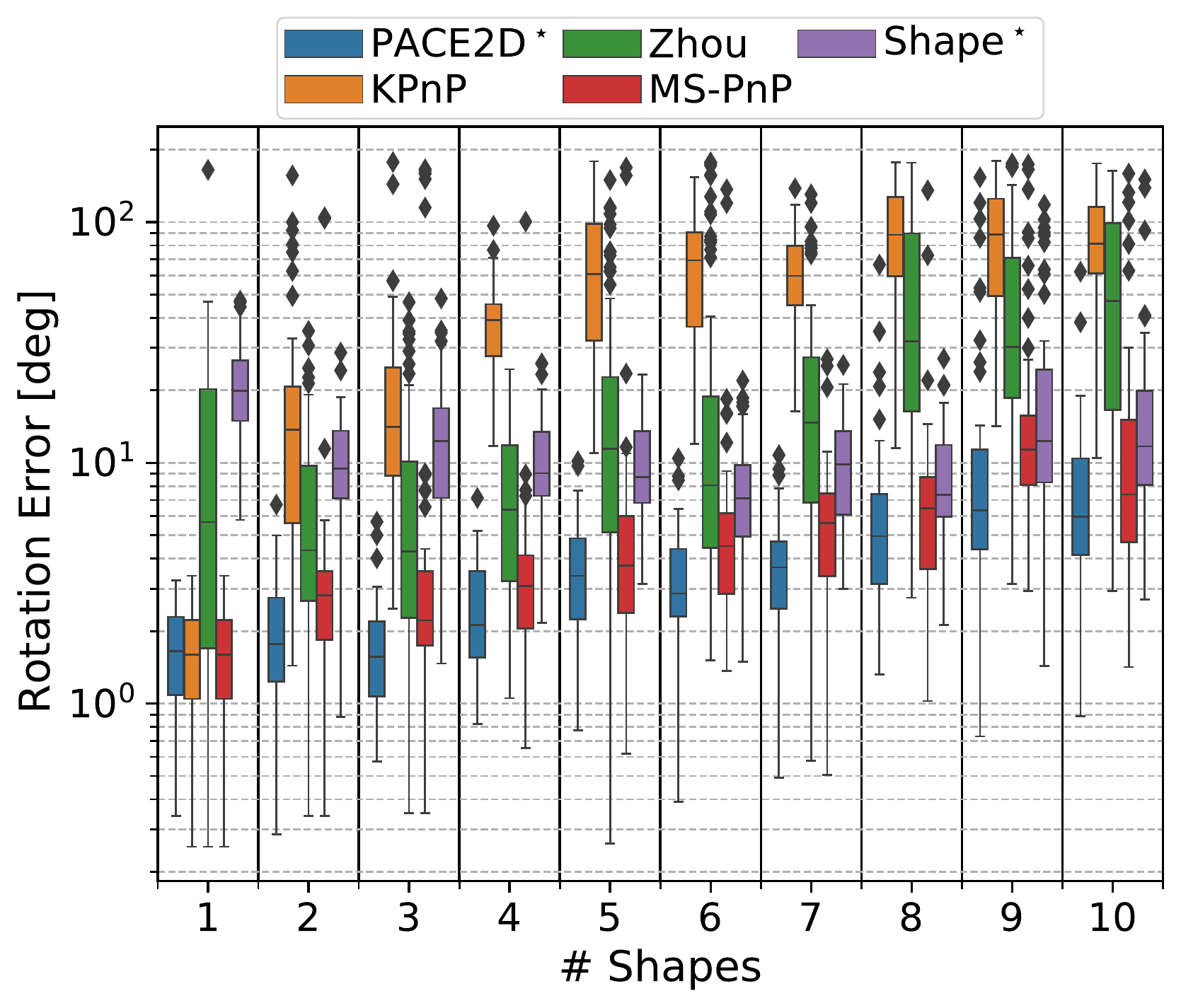}
			\end{minipage}
		&   \myhspace
			\begin{minipage}{\mpwfour}%
			\centering%
			\includegraphics[width=\columnwidth]{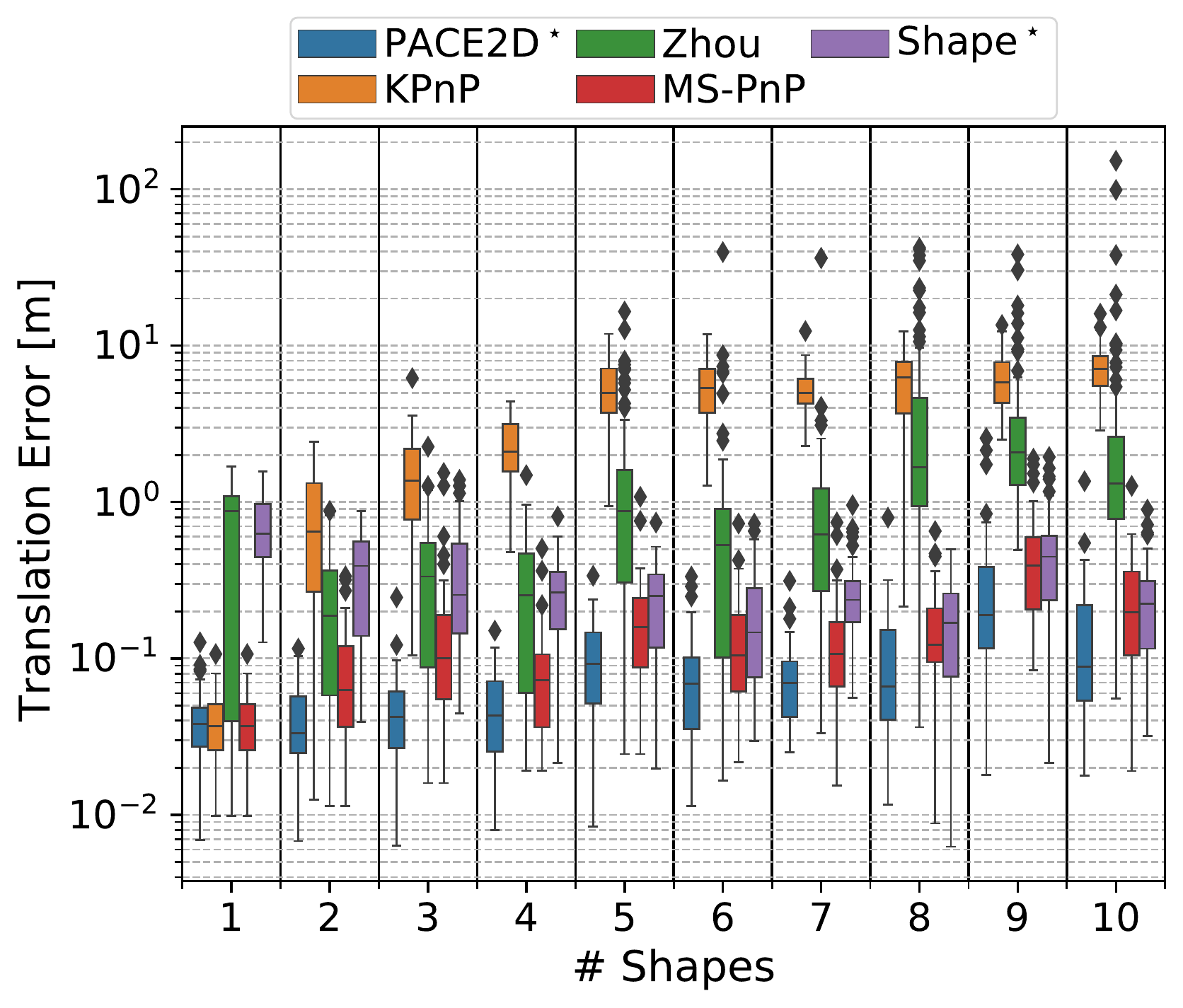}
			\end{minipage}
		&   \myhspace
			\begin{minipage}{\mpwfour}%
			\centering%
			\includegraphics[width=\columnwidth]{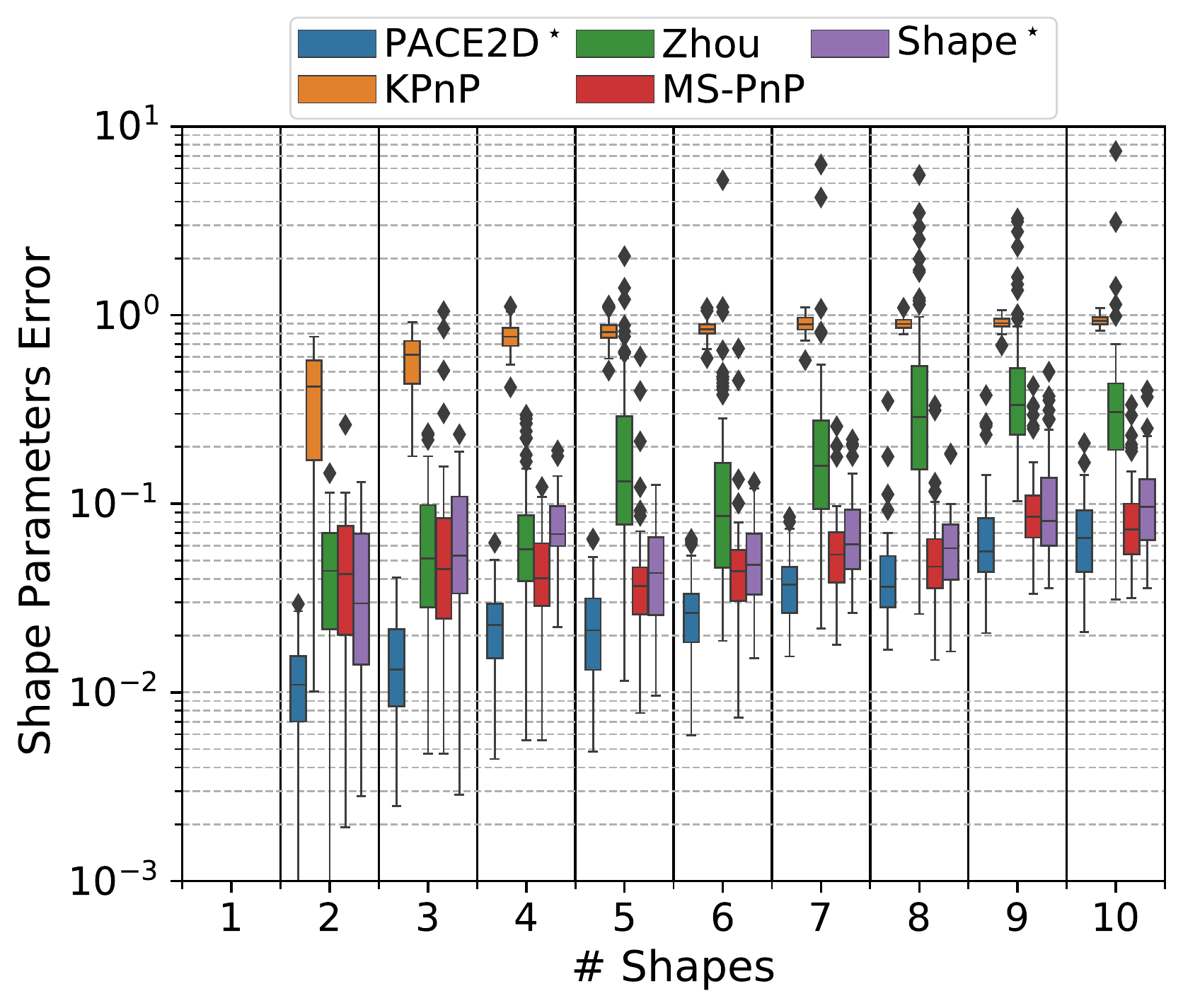}
			\end{minipage}
		&   \myhspace
			\begin{minipage}{\mpwfour}%
			\centering%
			\includegraphics[width=\columnwidth]{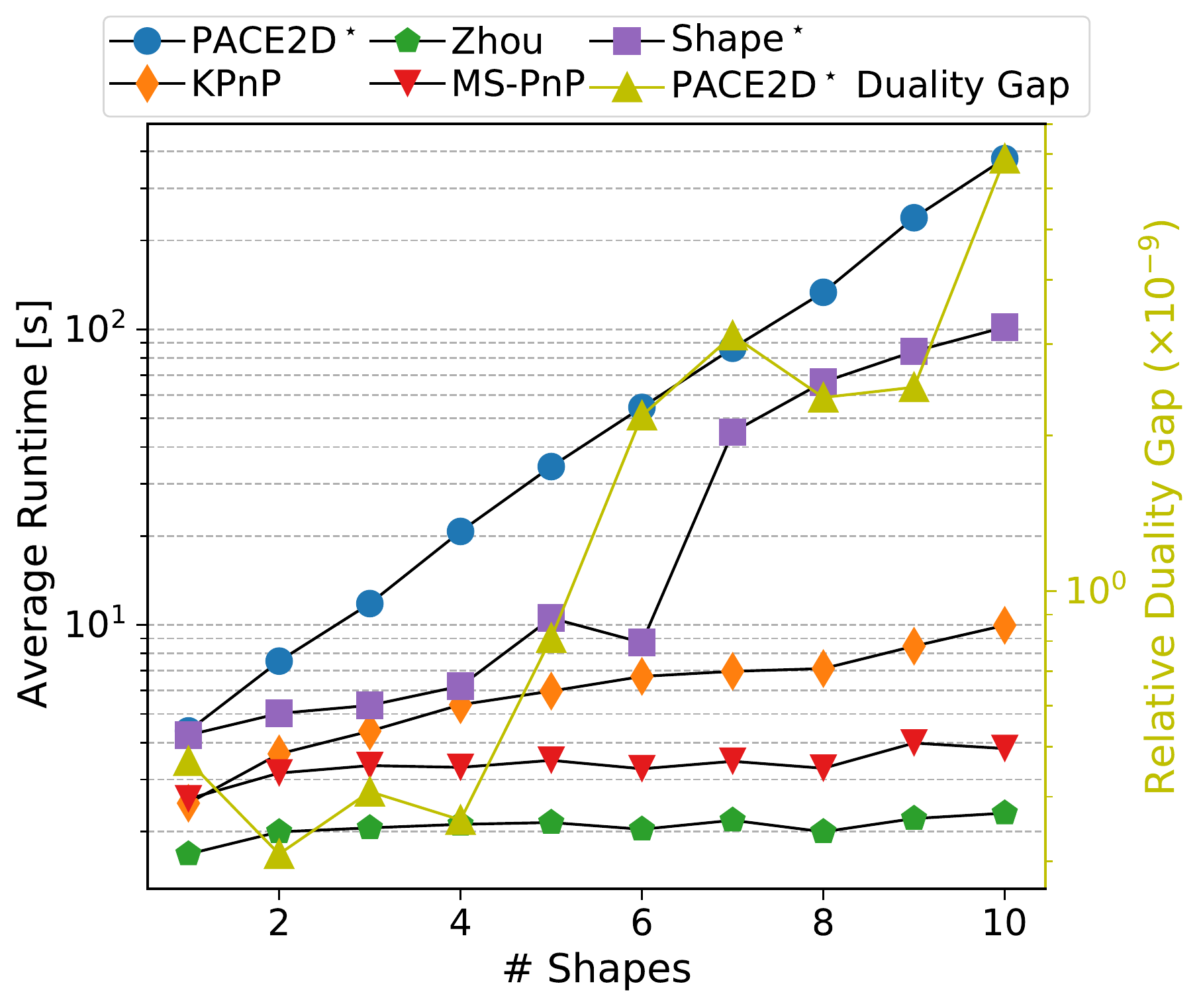}
			\end{minipage}
		\\
		\multicolumn{4}{c}{\smaller (a) Performance of \PACETwoLZero and \PACETwoLTwo on outlier-free synthetic data: $N=8$; $\vc$ sampled from $\Delta_{K}$ uniformly at random. \vspace{1mm}}
		\\
		\myhspace \hspace{-3mm}
			\begin{minipage}{\mpwfour}%
			\centering%
			\includegraphics[width=\columnwidth]{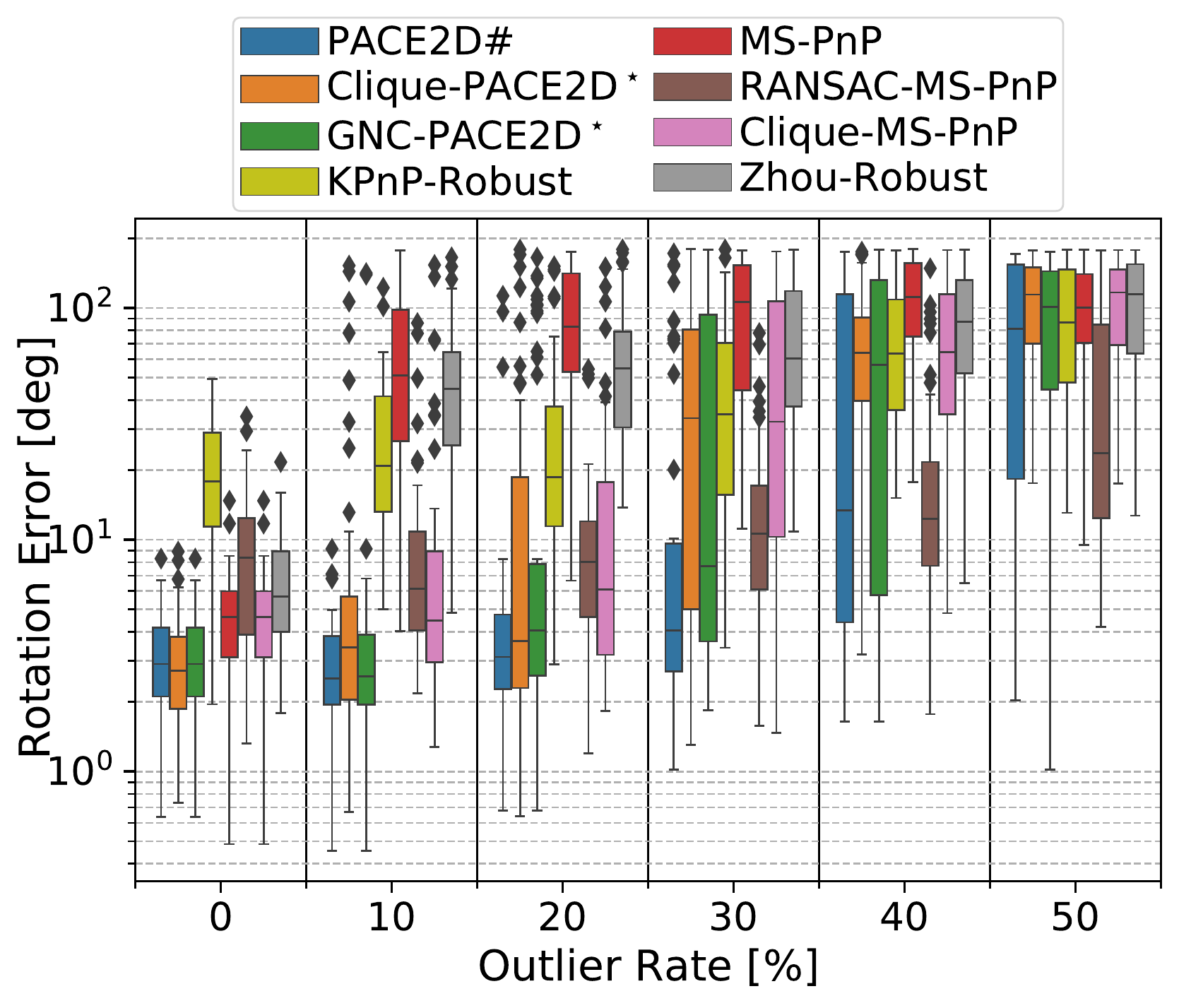}
			\end{minipage}
		&   \myhspace
			\begin{minipage}{\mpwfour}%
			\centering%
			\includegraphics[width=\columnwidth]{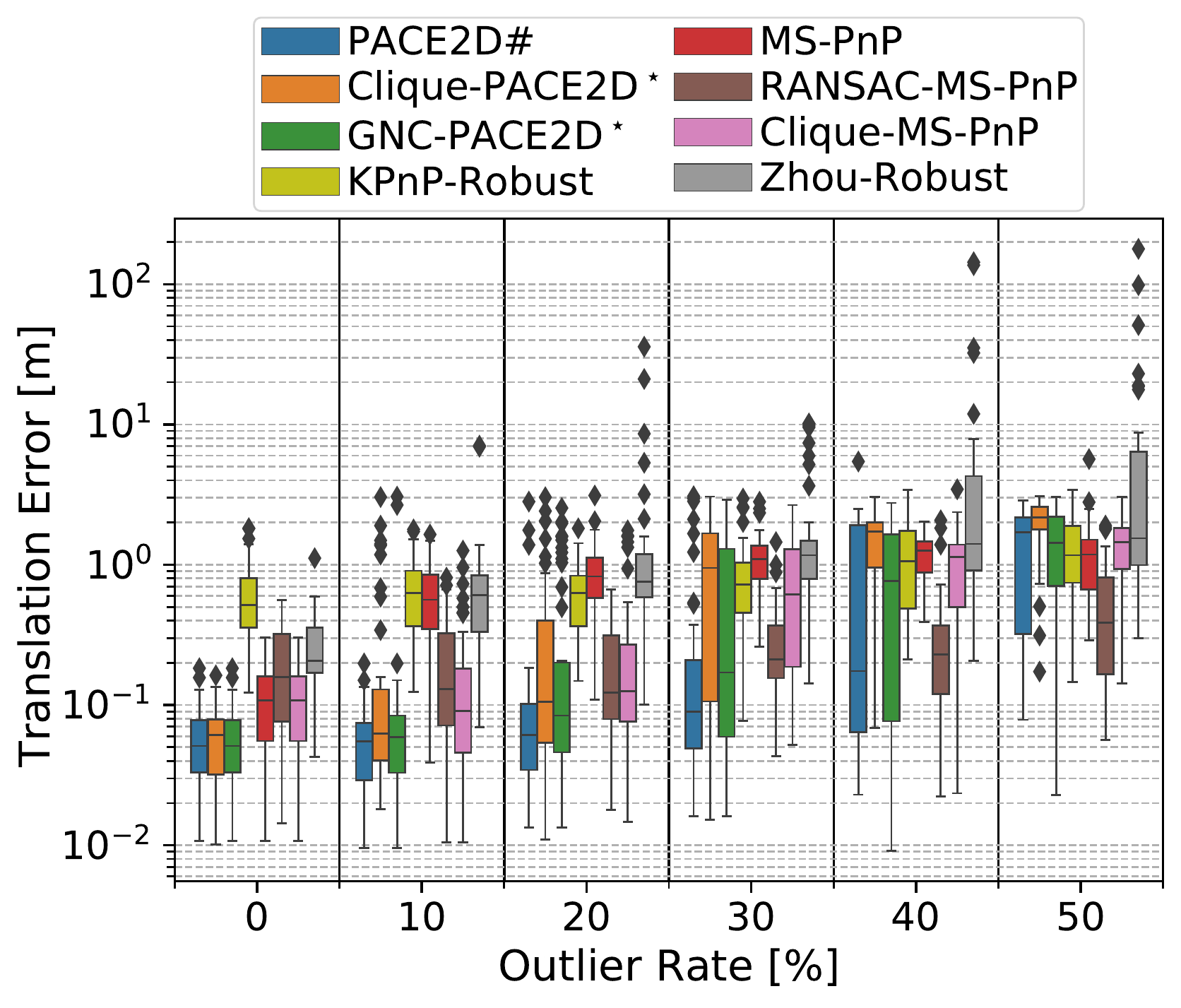}
			\end{minipage}
		&   \myhspace
			\begin{minipage}{\mpwfour}%
			\centering%
			\includegraphics[width=\columnwidth]{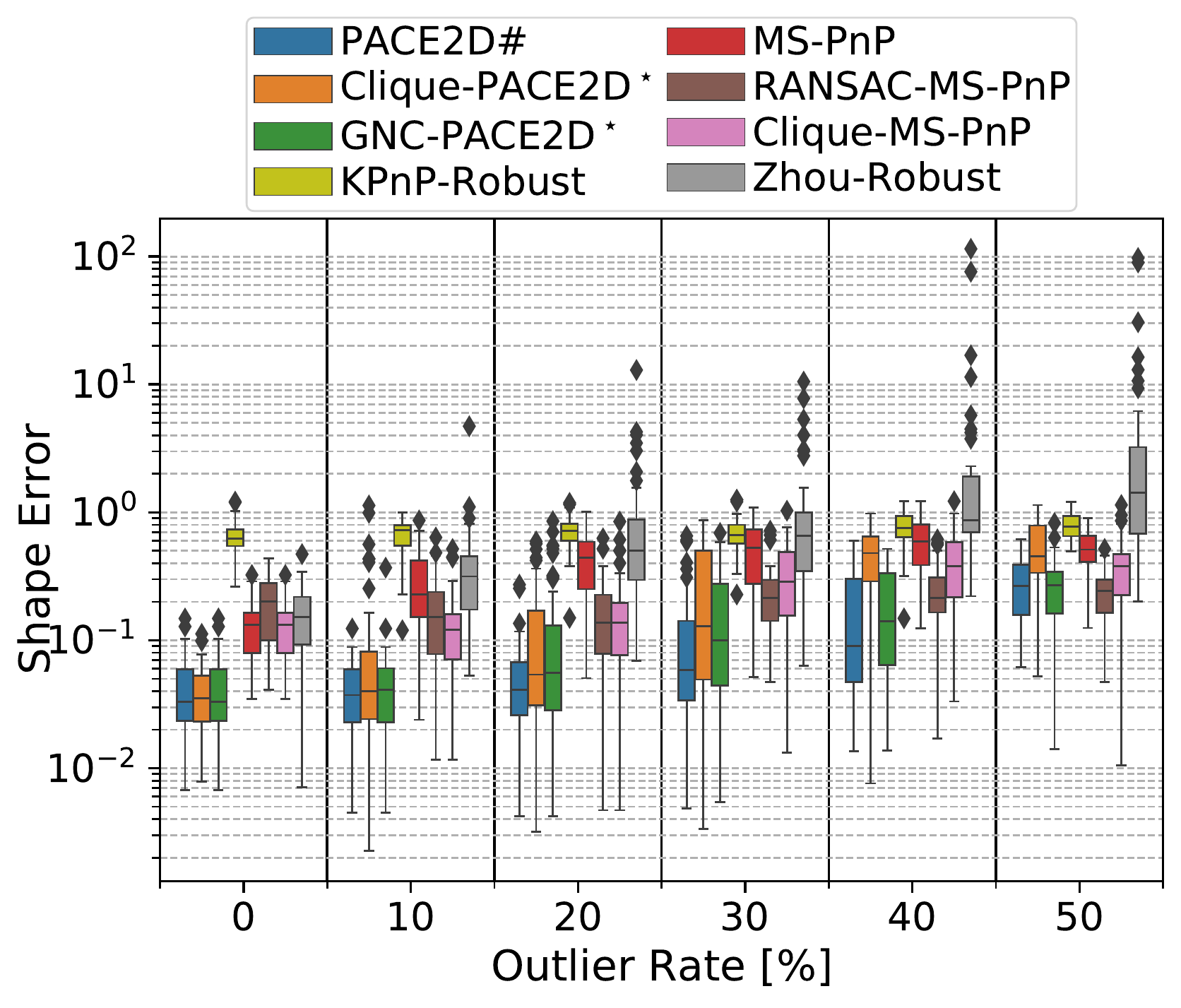}
			\end{minipage}
		&   \myhspace
			\begin{minipage}{\mpwfour}%
			\centering%
			\includegraphics[width=\columnwidth]{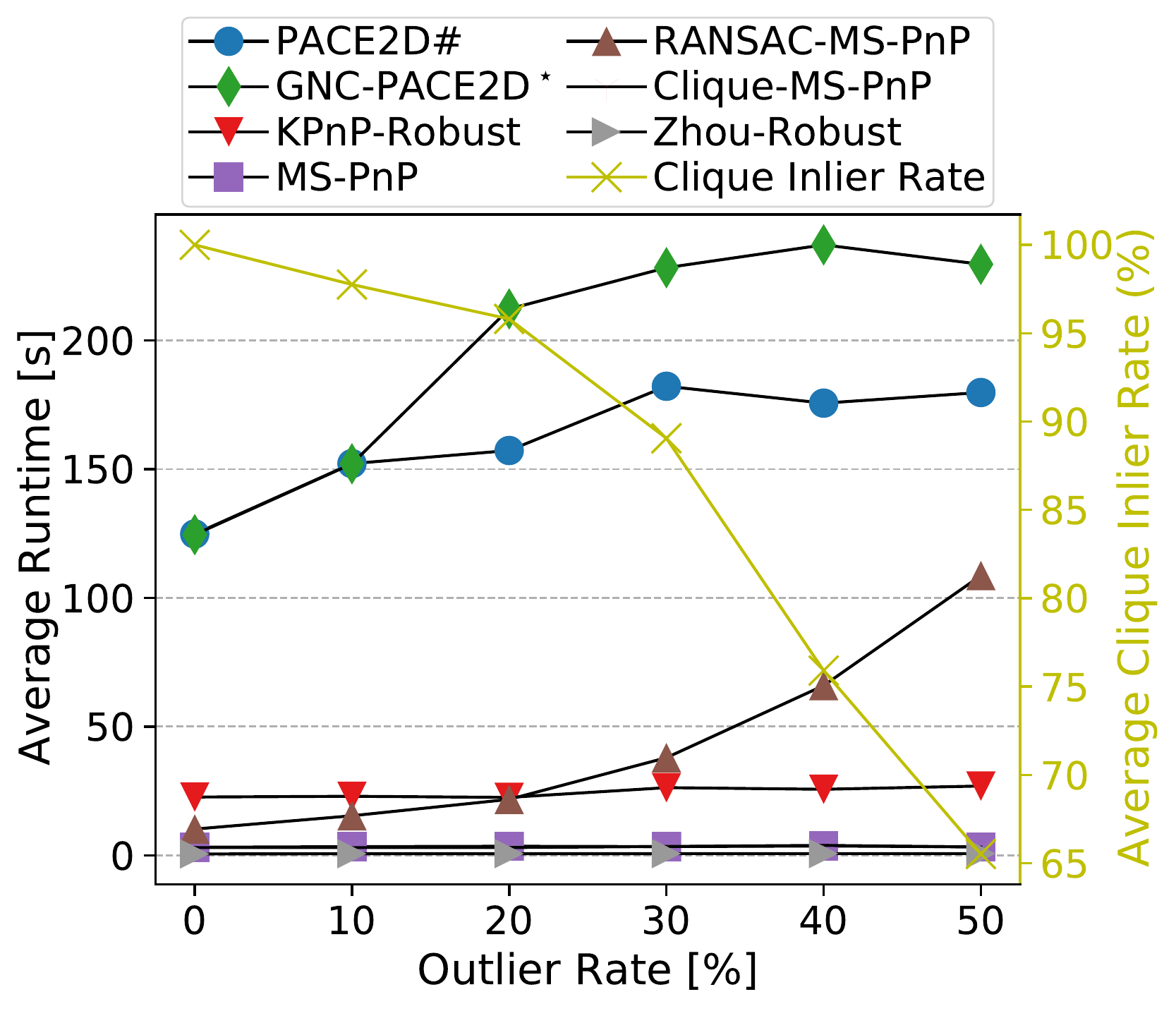}
			\end{minipage}
		\\
		\multicolumn{4}{c}{\smaller (b) Robustness of \PACErobustTwoLTwo against increasing outliers on synthetic data: $N=10$, $K=3$; $\vc$ sampled from $\Delta_{K}$ uniformly at random. \vspace{1mm}}
	  \\
		\myhspace \hspace{-3mm}
			\begin{minipage}{\mpwfour}%
			\centering%
			\includegraphics[width=\columnwidth]{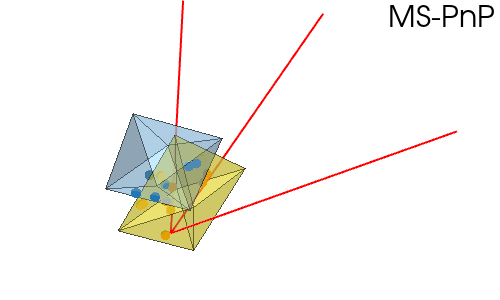}
			\end{minipage}
		&   \myhspace
			\begin{minipage}{\mpwfour}%
			\centering%
			\includegraphics[width=\columnwidth]{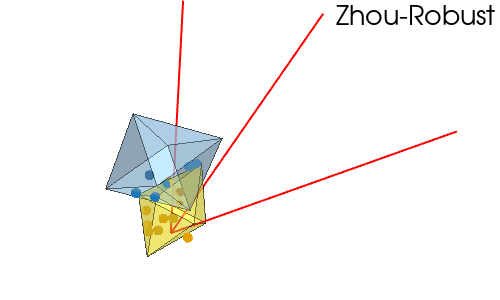}
			\end{minipage}
		&   \myhspace
			\begin{minipage}{\mpwfour}%
			\centering%
			\includegraphics[width=\columnwidth]{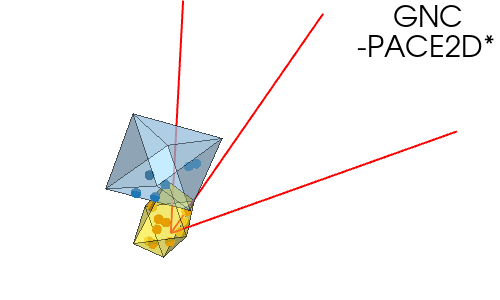}
			\end{minipage}
		&   \myhspace
			\begin{minipage}{\mpwfour}%
			\centering%
			\includegraphics[width=\columnwidth]{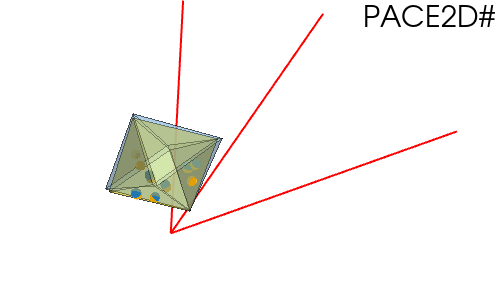}
			\end{minipage}
		\\
		\multicolumn{4}{c}{\smaller (c) Qualitative results of \meanPnP, \kostasRobust, \PACETwoGNC, and \PACErobustTwo on a test instance with 30\% outlier rate.}
	\end{tabular}
	\end{minipage}
	\caption{Performance of \PACETwoLTwo~and \PACErobustTwo~compared with baselines in simulated experiments.
	  (a) \PACETwoLTwo compared with \kpnp, \meanPnP, \kostas, and \shapestar on synthetic outlier-free data with varying number of shapes, where $\vc$ is sampled uniformly at random from $\Delta_{K}$.
	  (b) \PACErobustTwoLTwo and variants compared with \kpnpRobust, \meanPnP, \ransacMeanPnP, \cliqueMeanPnP, and \kostasRobust on synthetic outlier-contaminated data with varying outlier rates.
	  (c) Qualitative results of \meanPnP, \kostasRobust, \PACETwoGNC, and \PACErobustTwoLTwo on an instance with 30\% outlier rate. Blue meshes represent the ground-truth shape, and yellow meshes represent the pose and shape estimated by each model. Red rays indicate outliers (bearing vectors originated from the camera center).
	  In this case, \PACErobustTwoLTwo succeeds while the other methods fail.
	  \label{fig:ablations-2d}}
	\vspace{-6mm}
	\end{center}
\end{figure*}

\myParagraph{Optimality of \PACETwo}
To evaluate the performance of \PACETwo in solving the outlier-free problem~\eqref{eq:probOutFree-2D3Dcatlevel},
we randomly simulate $K$ shapes $\calB_k$ whose points $\basis{k}{i}$'s
are drawn from an i.i.d. Gaussian distribution $\calN(\zero,\eye_3)$.
\JS{
We sample $\vc$ uniformly at random in the probability simplex $\Delta_{\nrShapes}$.
We draw random poses $(\MR,\vt)$  %
such that the camera lies on a sphere with radius equals to $3$ centered at origin,
}
and generate the measurements $\measTwo{i}$ according to the model~\eqref{eq:generativeModel-2d},
where the noise $\epsTwo{i}$ follows $\calN(\zero,\sigma^2\eye_2)$ with standard deviation $\sigma = 0.01$.
We compare \PACETwoLTwo against  (i)~a baseline approach based on
a local solver optimizing~\eqref{eq:probOutFree-2D3Dcatlevel},
starting from an initial guess obtained by running EPnP~\cite{Lepetit09-epnp} on the mean shape (label: \meanPnP),
(ii)~a solver based on a convex relaxation using the weak perspective camera model~\cite{Zhou17pami-shapeEstimationConvex} (label: \kostas),
(iii) a solver based on a tighter relaxation with the weak perspective model~\cite{Yang20cvpr-shapeStar} (label: \shapestar),
\JS{
and (iv) a solver that solves a PnP problem for each of the $K$ shapes using SQPnP~\cite{Terzakis20eccv-sqpnp} and picks the one with the lowest cost (label: \kpnp), see Remark~\ref{remark:one-hot-c}.
}

\JS{
In Fig.~\ref{fig:ablations-2d}(a),
we report statistics for the rotation,
translation, and shape errors.
We also show the average runtime and relative duality gap. %
We observe that \PACETwoLTwo consistently outperforms all baselines.
Predictably, \kpnp fails when $K \geq 2$, as it assumes a one-hot $\vc$, while the ground truth shape is a generic point in the probability simplex.
In \supplementary{sec:app-experiments-pace-2d}, we show that in the case of one-hot $\vc$, \kpnp can indeed obtain good performance.
}
While \kostas and \shapestar perform similarly at $K=1$, \shapestar has lower errors at higher shape counts.
Both \kostas and \shapestar perform significantly worse than \PACETwoLTwo, as they use the weak perspective projection model
 to approximate the actual (fully perspective) camera.
The relative duality gap for \PACETwoLTwo stays below $10^{-8}$, indicating an empirically tight relaxation.

\myParagraph{Robustness of \PACErobustTwo}
To test the robustness of \PACErobustTwo, we use a different data generation procedure to enable the use of \robin.
We first generate $K$ octahedra, center-aligned at the origin, with
 vertices sampled component-wise in $[0.5, 2]$ m.
The use of octahedra (convex shapes with known face planar equations)
allows us to solve for sets of feasible winding orders ---following Corollary~\ref{cor:winding_orders_feasibility}--- using linear programs (see \supplementary{sec:app-winding-order-dictionary}).
\JS{
We sample
shape parameters $\vc$
uniformly at random in $[0,1]^\nrShapes$,
Then we draw random poses $(\MR,\vt)$ %
such that the resulting camera locations lie on a sphere centered at the origin with radius of $3$ m.
}
For each camera location,
we randomly sample $\basis{k}{i}, \; i = 1 \ldots N$ from each octahedron
such that $\lfloor r \cdot N \rfloor$ of them lie on the
weighted octahedron's visible faces where $r$ is the outlier ratio.
For the remaining $\basis{k}{i}$,
we sample them from the nonvisible faces of the octahedron.
We generate the measurements $\measTwo{i}$ according to eq.~\eqref{eq:generativeModel-2d},
where the noise $\epsTwo{i}$ follows $\calN(\zero,\sigma^2\eye_2)$ with $\sigma = 0.01$.
For the $\measTwo{i}$ generated by nonvisible $\basis{k}{i}$'s,
we replace their noise term $\epsTwo{i}$
with arbitrary 2D points violating the generative model~\eqref{eq:generativeModel-2d}.
The regularization factor $\lambda$ is set to $0.01$.

\JS{
We consider \PACErobustTwoLTwo, as well as two variants:
\PACETwoGNCLTwo (where \gnc is applied to \PACETwoLTwo without \robin), and
\cliquePACETwoLTwo (where \PACETwoLTwo is applied after \robin without \gnc).
For \PACErobustTwoLTwo and \PACETwoGNCLTwo, inlier threshold $\inthrTwo$ is set to be $0.05$.
We also compare against
(i) \kostasRobust, which is a robust version of \kostas's solver from~\cite{Zhou17pami-shapeEstimationConvex}, (ii) \meanPnP,
(iii) \ransacMeanPnP, where we wrap \meanPnP in a 4-point \ransac loop with inlier threshold of $0.05$, (iv) \cliqueMeanPnP, where we apply \robin before \meanPnP, and (v) \kpnpRobust, where we apply \robin and \gnc before \kpnp.
}
Fig.~\ref{fig:ablations-2d}(b) plots the results under increasing outlier rates. %
\JS{
\PACErobustTwoLTwo is robust to $10$\% of outliers, and achieves low median errors for outlier rates below $30$\%, while \PACETwoGNCLTwo, \meanPnP, \ransacMeanPnP, and \kostasRobust already exhibit large median errors at $10$\% outlier rates.
}
Interestingly, at around $10$\%, \cliquePACETwoLTwo remains robust while \PACETwoGNCLTwo starts to show failures.
 This remarks the effectiveness of \robin in filtering out outliers,
also shown in the clique inlier rate plot in Fig.~\ref{fig:ablations-2d}(c).
Similar to \PACErobustThree, \robin improves the convergence rate of \gnc; see runtime curves of \PACErobustTwoLTwo and \PACETwoGNCLTwo in Fig.~\ref{fig:ablations-2d} (c).
\JS{
  \kpnpRobust fails to obtain low median errors due to its assumption of one-hot $\vc$.
\ransacMeanPnP, while degrading more gracefully at high outlier rates, is unable to achieve low median errors comparable to \PACErobustTwoLTwo.
Its inner 4-pt \meanPnP solver produces estimates that have large residuals for inliers (note the discrepancy between \meanPnP and \ransacMeanPnP at $0$\% outlier rate), hence affecting its inlier set estimation.
}
Fig.~\ref{fig:ablations-2d}(d) reports qualitative results on an simulated instance with 30\% outlier rate. %
These results underline that 2D-3D category-level perception is a much harder problem compared to its 3D-3D counterpart.
Our algorithms, while being competitive against baselines, are still slow and only robust to a small fraction of outliers.
\JS{
One may consider using \ransac with \PACETwo; unfortunately the slow runtime of \PACETwo prohibits such implementation: a mere 100 iterations of \PACETwo at $K=5$, $N=8$ takes around 50 minutes.
This highlights another benefit of \robin: it does not require solving the underlying problem, hence it can improve robustness at a significant runtime advantage, more so if the underlying solver is slow.
}
\JS{
  We also report extra results in \supplementary{sec:app-experiments-pace-2d}.}

\vspace{-3mm}
\subsection{Vehicle Pose and Shape Estimation on \apollo}
\label{sec:exp-apollo}

\myParagraph{Setup and Baselines}
We evaluate \PACErobustThree and \PACErobustTwo on the \apollo dataset~\cite{Wang19pami-apolloscape,Song19-apollocar3d}.
The \apollo self-driving dataset is a large collection of multi-modal data collected in four different cities in China under varying lighting and road conditions~\cite{Wang19pami-apolloscape}.
For our experiments, we specifically use the subset of \apollo named \apolloCar.
\apolloCar consists of high-resolution (3384 $\times$ 2710) images taken from the main \apollo dataset,
with additional 2D annotations of semantic keypoints, ground truth poses, and 3D CAD models of car instances in each frame.
The dataset contains a total of 5277 images, with an average of 11.7 cars per image, and a total of 79 ground-truth CAD models~\cite{Song19-apollocar3d}.
For each car, a total of 66 semantic keypoints were labeled on 2D images.
In addition, the ground truth CAD model (out of the 79 models) is provided for each vehicle.
Note this corresponds to having a one-hot vector for the ground-truth $\vc$ in eqs. \eqref{eq:generativeModel-3d} and \eqref{eq:generativeModel-2d}.

\begin{figure}[ht!]
    \centering%
    \includegraphics[width=0.8\columnwidth]{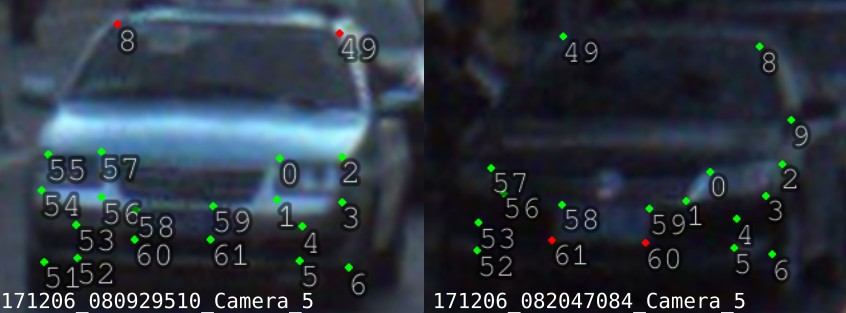}
	\caption{
      We pass ground-truth annotated keypoints to \robin, using a winding order dictionary generated from ray tracing.
      Green dots represent keypoints inliers accepted by \robin. Red dots represent outliers rejected by \robin.
      In these four examples, \robin correctly rejects mislabelled keypoints.
      Left: \#8 and \#49 are switched (with respect to the CAD models). Right: \#61 and \#60 are switched.
      See more examples in \supplementary{sec:apollo-gt-mislabel}.
      \label{fig:apolloscape-wrong-gt}\vspace{0mm}}

\end{figure}

\JS{
We compare \PACErobustThree and \PACErobustTwoLTwo against DeepMANTA~\cite{Chabot17-deepMANTA}, 3D-RCNN~\cite{Kundu18-3dRCNN}, and GSNet~\cite{Ke20-gsnet}, three recent state of the art methods for 3D vehicle pose estimation.
}
For our experiments, we use the official splits of the \apolloCar dataset.
Namely, we use the validation split (200 images) for all the quantitative experiments shown below, consistent with the evaluation setups reported in other baseline methods.
We use the 2D semantic keypoints extracted by GSNet~\cite{Ke20-gsnet} as measurements for \PACErobustTwo.
We use the pretrained weights from~\cite{Ke20-gsnet} \toCheckTwo{and reject keypoints with confidence less than $0.05$}.
\JS{For \PACErobustThree and \irlsgm, we additionally retrieve the corresponding depth from the depth images provided by \apollo for each 2D semantic keypoint.}
For \PACErobustTwo, we
\JS{
  construct the dictionary of feasible winding orders by recording the observed winding orders for all keypoint triplets in the training set,
  as well as by performing ray tracing to keypoints in a volume surrounding each CAD model
}
(see \supplementary{sec:app-winding-order-dictionary}).
Notably, when we applied \robin on the ground-truth annotations, we discovered multiple mislabeled keypoints (see Fig.~\ref{fig:apolloscape-wrong-gt} and \supplementary{sec:apollo-gt-mislabel}).
\JS{This suggests \robin may also be helpful in terms of verifying datasets.}

While \apolloCar provides 2D semantic keypoint annotations, it does not provide the corresponding 3D keypoint annotations on the CAD models.
\JS{Hence we manually labeled the 66 3D semantic keypoints on the 79 models, and provide them as the shape library to \PACErobustThree.}
For \PACErobustTwo, we instead select 3 random models, including the ground truth model, as a shape library,
since using the entire set leads to prohibitive runtime.
We use $\lambda = 0.5$ and \toCheckTwo{$\inthrThree = 0.15$} in \PACErobustThree,
and $\lambda = 0.001$ and $\inthrTwo = 0.01$ in \PACErobustTwo. \irlsgm uses an inlier threshold of $0.15$, same as \PACErobustThree.

\begin{table}[t!]
  \centering
    \begin{minipage}[b]{0.99\linewidth}
    \centering
      \begin{tabular}{llllrrr}
      \toprule
                            & \multicolumn{3}{c}{A3DP-Rel $\uparrow$}    & \multicolumn{3}{c}{A3DP-Abs $\uparrow$}          \\
      \midrule
                            & mean & c-l  & c-s               & mean & c-l & c-s \\
      \midrule
      DeepMANTA~\cite{Chabot17-deepMANTA}  & 16.0 & 23.8 & 19.8              & \secondBest{20.1} & 30.7          & \secondBest{23.8}            \\
      3D-RCNN~\cite{Kundu18-3dRCNN}        & 10.8 & 17.8 & 11.9              & 16.4 & 29.7          & 19.8            \\
      GSNet~\cite{Ke20-gsnet}              & \secondBest{20.2} & \best{40.5} & \secondBest{19.9}              & 18.9 & \best{37.4} & 18.4            \\
      \irlsgm    & \JS{13.6} & \JS{20.7} &  \JS{17.4}                         &
      \JS{11.9}   & \JS{20.0}  & \JS{16.7}    \\
      \PACErobustTwo    & \JS{11.0} & \JS{26.0} &  \JS{17.9}                         &
      \JS{7.2}   & \JS{17.5}  & \JS{11.3}    \\
      \PACErobustThree    & \best{28.5} & \secondBest{37.4} &  \best{35.8}                         &
      \best{24.0}   & \secondBest{36.4}  & \best{33.3}    \\
      \bottomrule
      \end{tabular}%
      \end{minipage} %
      \vspace{-1mm}
      \caption{Evaluation of \nameRobust on \apollo. Results for DeepMANTA,3D-RCNN, and GSNet are taken from~\cite{Ke20-gsnet}. 
      The best result for each column is highlighted in boldface. \label{tab:apollo-stats}\vspace{-2mm}}   
\end{table}

\myParagraph{Results} %
Table~\ref{tab:apollo-stats} shows the performance of \PACErobustThree and \PACErobustTwo against various baselines (qualitative results can be found in \supplementary{sec:app-apollo-vis}).
We use two metrics called A3DP-Rel and A3DP-Abs (for both, the higher the better) following the same definitions in~\cite{Song19-apollocar3d}.
They are measures of precision with thresholds jointly considering translation, rotation, and 3D shape similarity between estimated cars and ground truth.
A3DP-Abs uses absolute translation thresholds, whereas A3DP-Rel uses relative ones.
\JS{
  A total of 10 thresholds are used, of which
  \textit{c-l} represents a loose criterion ($2.8$ m, $\pi/6$ rad, and $0.5$ for translation, rotation and shape) and
  \textit{c-s} represents a strict criterion ($1.4$ m, $\pi/12$ rad, and $0.75$ for translation, rotation and shape).
  The \textit{mean} column represents the average A3DP-Abs/Rel over all thresholds.
}

Overall, \PACErobustTwo achieves performance comparable but slightly inferior to learning-based approaches in A3DP-Rel, while \PACErobustThree excels in both A3DP-Rel and A3DP-Abs.
The main failure mode of \PACErobustTwo is in its translation estimation: over 98\% of the failures do not meet the translation threshold only, and %
\JSTwo{the translation estimation accuracy degrades with the increase in distance between the vehicle and the camera.}
\PACErobustThree outperforms the baselines in terms of the \textit{mean} and  \textit{c-s} criteria;
this is partially expected since we use depth information, which is not available to the other methods at test time.
In terms of the strict criterion \textit{c-s}, \PACErobustThree outperforms competitors by a large amount, confirming that it can retrieve highly accurate estimates. When using the loose criterion \textit{c-l}, GSNet is slightly better than \PACErobustThree, suggesting learning-based techniques may have more graceful failure modes.

\myParagraph{Runtime}
Table~\ref{tab:timing} shows the timing breakdown for \PACErobustThree and \PACErobustTwo.
We also report the timing for the GSNet keypoint detection from~\cite{Ke20-gsnet} for completeness.
For \PACErobustThree, the max-clique pruning is written in C++ and has negligible runtime,
while \gnc is implemented in Python.
For \PACErobustTwo, both the maximum hyperclique estimation and \gnc are written in Python.
\PACErobustThree is significantly faster than \PACErobustTwo thanks to \PACEThree's compact semidefinite relaxation.
While \PACErobustTwo is currently slow for real-world applications,
an optimized implementation of \PACErobustThree is amenable to practical applications.

\begin{table}[t!]
  \centering
    \begin{minipage}[b]{0.99\linewidth}\centering
        \begin{tabular}{c|c|c|c|c}
            \toprule
             \multirow{2}{2cm}{GSNet Keypoint Detection}  & \multicolumn{2}{c}{\PACErobustThree} &  \multicolumn{2}{c}{\PACErobustTwo}\\
              											&  Max-clique & \gnc  & Max-clique & \gnc \\
            \midrule
            0.45 \si{s}   &  2 \si{ms} & 0.45 \si{s} & \JS{2.13} \si{s} & \JS{201.21} \si{s}  \\
            \bottomrule
        \end{tabular}
      \caption{Average Runtime Breakdown for \PACErobustTwo and \PACErobustThree.\vspace{-4mm}}
      \label{tab:timing}
    \end{minipage}%
\end{table}

\vspace{-2mm}
\section{Related Work}
\label{sec:relatedWork}

This section reviews related work on
\emph{category-level perception}
and \emph{outlier-robust estimation}. %
\vspace{-2mm}

\subsection{Category-level Perception}

{\bf Early approaches}  for category-level perception focus on 2D problems, where one has to locate objects
---from human faces~\cite{Pantic00-facialExp} to resistors~\cite{Cootes95cviu}--- in images.
Classical approaches include \emph{active contour models}~\cite{Kass87ijcv,Chan99-activeContour} and
\emph{active shape models}~\cite{Cootes95cviu,Belongie02-shapeContext}.
 These works use techniques like PCA to build a library of 2D landmarks from training data, and then use iterative optimization
 algorithms to estimate the 2D object locations in the images, rather than estimating 3D poses.

The landscape of category-level perception has been recently reshaped by the rapid adoption of
{\bf convolutional networks}~\cite{Lecun98IEEE-CNN,Krizhevsky12nips-alexNet,Simonyan15-vgg}.
Pipelines using deep learning have seen great successes in areas such as human pose estimation~\cite{Toshev14-deepPose,Newell16-stackedHourglass,Tompson14-cnnHumanPose,He17iccv-maskRCNN, Martinez17-3DHumanPose},
and pose estimation of household objects~\cite{Manuelli19-kpam,Gao21-kpam2,Pavlakos17icra-semanticKeypoints}.
With the growing interest in self-driving vehicles, %
 research has also focused on jointly estimating vehicle pose and shape~\cite{Chabot17-deepMANTA,Ke20-gsnet,Lopez19-vehicle,Kundu18-3dRCNN,Suwajanakorn18-latent3Dkeypoints}.

For methods that aim to recover both the pose and shape of objects, a common paradigm is
to use {\bf end-to-end methods}.
Usually, an encoder-decoder network is used to first convert input images to some latent representations,
and then map the latent representation back to 3D estimates (\eg 3D bounding boxes, or pose and shape estimates).
For example, Richter\setal~\cite{Richter18CVPR-MatryoshkaNetwork} predict 3D shapes through an efficient 2D encoding.
Groueix\setal~\cite{Groueix18arxiv-atlasnet} represent shapes as collections of parametric surface elements.
Tatarchenko\setal~\cite{Tatarchenko17iccv-OGN} generate 3D shapes through an octree representation.
 Burchfiel\setal~\cite{Burchfiel19-probabilisticCategory} train CNNs with generative representations of 3D objects to predict probabilistic distributions of object poses.
An additional alignment loss can also be incorporated into the network to directly regress a pose~\cite{Avetisyan19ICCV-e2eCADAlign, Manhardt19-2dlifting, Manhardt20-cps}.
Wen\setal~\cite{Wen20iros-seTrack} design a network with a loss function over $\SEthree$ to regress relative poses.
One drawback of such approaches is that it is difficult for neural networks to learn the necessary 3D structure of the object on a per-pixel basis; moreover, end-to-end approaches typically require 3D pose labels that might be difficult (or expensive) to obtain for real data.
As shown by Tatarchenko\setal~\cite{Tatarchenko19CVPR-singleViewReconLimitation}, such networks can be outperformed by methods trained on model recognition and retrieval only.
Alternative methods circumvent pose and shape estimation and directly
regress 3D semantic keypoints for manipulation~\cite{Manuelli19-kpam} or dense visual descriptors~\cite{Florence18corl-denseObjectNet}. 

{\bf Multi-stage methods} constitute another major paradigm for category-level perception.
Such approaches commonly include a neural-network-based front-end that extracts features from
 input data (such as RGB or RGB-D images)~\cite{Pavlakos17icra-semanticKeypoints,Deng22ral-icaps},
and an optimization-based back-end that recovers the 3D pose of the object
given the features~\cite{Pavlakos17icra-semanticKeypoints,Oberweger18-heatmapPose,Peng19CVPR-PVNet,Mo19iros-orcVIO,Hou20-mobilepose}.
The front-ends may predict positions of semantic keypoints~\cite{Pavlakos17icra-semanticKeypoints},
or feature embeddings~\cite{Deng22ral-icaps},
and generate correspondences from those features.
In early works,
Lim\setal~\cite{Lim13iccv} establish 2D-3D correspondences between images and textureless CAD models by using HOG descriptors on images and rendered edgemaps of the CAD models.
Chabot\setal~\cite{Chabot17-deepMANTA} regress a set of 2D part coordinates, and then choose the best corresponding 3D template.
Pavlakos\setal~\cite{Pavlakos17icra-semanticKeypoints} use a stacked hourglass neural network~\cite{Newell16-stackedHourglass} for 2D semantic keypoint detection.
In other works, a canonical category-level coordinate space is predicted for each detection,
from which correspondences are  generated~\cite{Wang19-normalizedCoordinate,Feng20-convCategory,Li20-categoryArticulated,Chen20-learnCanonicalShape}.
Our work belongs to the class of multi-stage methods.
In particular, we use~\cite{Ke20-gsnet} as our front-end,
and develop optimal and robust back-end solvers. %

Research effort has also been devoted to developing more robust and efficient {\bf back-end solvers} given
2D or 3D features extracted by the front-end.
The back-end solvers 
recover the 3D pose (and possibly the shape) of the object
by solving an optimization problem~\cite{Oberweger18-heatmapPose,Pavlakos17icra-semanticKeypoints,Peng19CVPR-PVNet,Mo19iros-orcVIO,Hou20-mobilepose}.
Depending on the input modalities, back-end solvers can be roughly divided into {2D-3D} or {3D-3D} solvers,
where the former use 2D inputs only, and the latter incorporate additional depth information.
A number of related works investigate {\bf 2D-3D back-end solvers}~\cite{Pavlakos17icra-semanticKeypoints, Tome17cvpr-liftfromdeep, Kolotouros19iccv, Newell16-stackedHourglass,Zhou15cvpr,Zhou17pami-shapeEstimationConvex, Schmeckpeper22arxiv-singleRGBpose}.
Hou\setal~\cite{Hou20-mobilepose} defer the task of shape estimation to a neural network, and use EPnP~\cite{Lepetit09-epnp} to solve for the object bounding box's pose only.
Zhou\setal~\cite{Zhou15cvpr,Zhou17pami-shapeEstimationConvex} propose a convex relaxation to jointly optimize 3D shape parameters and object pose from 2D keypoints.
However, the relaxation assumes a weak perspective camera model, which might lead to poor results if the object is close to the camera.
Yang and Carlone~\cite{Yang20cvpr-shapeStar} apply the moment/sums-of-squares hierarchy~\cite{Blekherman12Book-sdpandConvexAlgebraicGeometry,Lasserre01siopt-LasserreHierarchy} to develop tighter relaxations than~\cite{Zhou15cvpr}, still under the assumption of a weak perspective model. 
Schmeckpeper\setal~\cite{Schmeckpeper22arxiv-singleRGBpose} use a local solver with a full perspective camera model.
Our work differs from~\cite{Zhou15cvpr,Zhou17pami-shapeEstimationConvex,Yang20cvpr-shapeStar,Schmeckpeper22arxiv-singleRGBpose} since we propose a \emph{certifiably optimal solver for the full perspective case}, using an algebraic point-to-line cost. 

{\bf 3D-3D back-end solvers} have been investigated in the 
robotics literature~\cite{Wen21iros-bundletrack, Slavcheva16eccv-sdf, Wang19icra-objScan}.
In robotics applications such as manipulation and self-driving,
depth information %
is readily available either via direct sensing (\eg RGB-D or stereo)
or algorithms (\eg mono depth techniques~\cite{Eigen14nips-monodepth,Lasinger19arXiv-robustMonocularDepthEstimation}),
so the requirements of depth is not too constraining.
Wang\setal~\cite{Wang20icra-6pack} decouple shape from pose estimation by predicting category specific keypoints and use Arun's method~\cite{Arun87pami} for estimating frame-by-frame relative pose.
Wen\setal~\cite{Wen21iros-bundletrack} view category-level object detection and tracking as a pose graph optimization problem,
solving 3D registration of keypoints across frames and then jointly optimizing the pose graph online.
Deng\setal~\cite{Deng22ral-icaps} use nonlinear optimization, and alternate between optimizing shape size and pose. 
In this and our previous work~\cite{Shi21rss-pace}, we propose the first 3D-3D certifiably optimal solver  that runs in a fraction of a second even in the presence of thousands of CAD models.
\vspace{-2mm}

\subsection{Robust Estimation}
 We review three robust estimation paradigms:
 \emph{M-estimation}, \emph{consensus maximization}, and \emph{graph-based outlier pruning}.

{\bf M-Estimation} 
 performs estimation by optimizing a robust cost function that reduces the influence of outliers. 
The resulting problems are typically optimized using iterative local solvers.
MacTavish and Barfoot~\cite{MacTavish15crv-robustEstimation} compare several robust costs using  iterative re-weighted least squares solvers. The downside of local solvers is that they need a good initial guess, 
which is often unavailable in practice.
A popular approach to circumvent the need for an initial guess is \emph{Graduated Non-Convexity} (\GNC)~\cite{Blake1987book-visualReconstruction,Black96ijcv-unification}. 
Zhou~\etal~\cite{Zhou16eccv-fastGlobalRegistration} use \GNC for point cloud registration.
Yang~\etal~\cite{Yang20ral-GNC} and Antonante~\etal~\cite{Antonante21tro-outlierRobustEstimation} combine \GNC with global non-minimal solvers and show their general applicability to problems with up to $80\%$ outliers.

For certain low-dimensional geometric problems, fast global solvers exist.
Enqvist~\etal~\cite{Enqvist12eccv-robustFitting} use a \emph{truncated least squares} (\TLS) cost to solve triangulation in polynomial time in the number of measurements, but exponential time in the {dimension of the to-be-estimated state $\vxx$}.
Ask~\etal~\cite{Ask13cvpr-optimalTruncatedL2} use a \TLS cost for image registration. %
\JS{
Recently,
{\bf certifiable algorithms} have been developed to solve outlier-robust estimation problems with a posteriori optimality guarantees~\cite{Yang20neurips-certifiablePerception,Lajoie19ral-DCGM,Yang20tro-teaser,Yang19iccv-QUASAR,Yang22pami-certifiablePerception}. 
They rely on Lasserre's hierarchy of moment relaxations to obtain convex relaxations of robust estimation problems~\cite{Yang20neurips-certifiablePerception,Yang22pami-certifiablePerception}.
They compute an estimate together with a certificate of optimality (or a bound on the suboptimality gap), based on the rank of the SDP solution or the duality gap. Brynte~\etal~\cite{Brynte22jmiv-rotationTightness} categorize cases where such relaxations are always tight and study the failure cases.
Unfortunately, current SDP solvers have poor scalability, and such methods are mostly viable
to \emph{check} optimality~\cite{Yang20tro-teaser,Yang20neurips-certifiablePerception}.
}

{\bf Consensus Maximization} is a framework for robust estimation that aims to find {the largest} set of measurements with errors {below} a user-defined threshold.
Consensus maximization is NP-hard~\cite{Chin18eccv-robustFitting,Antonante21tro-outlierRobustEstimation},
hence the community has resorted to randomized approaches, such as \ransac~\cite{Fischler81}.
\ransac repeatedly draws a minimal subset of measurements from which a rough estimate is computed,
and the algorithm stops after finding an estimate that agrees with a large set of measurements.
While \ransac works well for problems where the minimal set is small and there are not many outliers,
the average number of iterations it requires %
increases exponentially with the percentage of outliers~\cite{Raguram08-RANSACcomparative}, making it impractical for problems with many outliers.
On the other hand, global solvers, such as branch-and-bound (\BnB)~\cite{Li09cvpr-robustFitting} and tree search~\cite{Cai19ICCV-CMtreeSearch}, exist but scale poorly with the problem size, with \BnB being exponential in the size of $\vxx$, and tree search being exponential in the number of outliers~\cite{Cai19ICCV-CMtreeSearch}.

{\bf Graph-based Outlier pruning methods} aim at discarding gross outliers from the set of measurements. 
These methods do not necessarily reject all the outliers, hence they are often used as a preprocessing for
 M-estimation or maximum consensus~\cite{Yang20tro-teaser,Shi21icra-robin}.
Outlier pruning methods detect outliers by analyzing a
\emph{compatibility graph},
where vertices represent data points and
edges represent pre-defined compatibility measures between data points~\cite{Shi21icra-robin}.
Bailey\setal~\cite{Bailey00icra-dataAssociation}~propose a Maximum Common Subgraph algorithm for feature matching in lidar scans.
Segundo and Artieda~\cite{San15-cliqueFeatureMatching} build an association graph and find the maximum clique for 2D image feature matching.
Perera and Barnes~\cite{Perera12-maxCliqueSegmentation} segment objects under rigid body motion with a clique formulation.
Leordeanu and Hebert~\cite{Leordeanu05-spectral} {establish image matches by finding strongly-connected clusters in the correspondence graph with an approximate spectral method.}
Enqvist~\etal~\cite{Enqvist09iccv} develop an outlier rejection algorithm for 3D-3D and 2D-3D registration based on approximate vertex cover. %
Recent progress in graph algorithms (\eg~\cite{Rossi15parallel, Parra19arXiv-practicalMaxClique})
has led to fast graph-theoretic outlier pruning algorithms that are robust to 
 extreme outlier rates, see, \eg \teaserpp~\cite{Yang20tro-teaser}.

In this work we generalize graph-based methods to use {\bf hypergraphs}: 
while a standard graph only contains edges connecting pairs of nodes (which represent compatibility tests in our context), each edge in a hypergraph may connect an arbitrary subset of vertices.
Hypergraphs have been studied in the context of network learning, robotics, and computer vision.
Torres-Jimenez~\cite{Torres17dmaa-hclique} develops an exact algorithm for finding maximum cliques in uniform hypergraphs.
Shun~\cite{Shun20-parallelHypergraph} develops a collection of fast parallel hypergraph algorithms for large-scale networks.
Srinivasan\setal~\cite{Srinivasan21sdm-hypergraphLearning} develop a framework for hypergraph representation learning.
Rueb\setal~\cite{Rueb87pami-hypergraphPathPlan} formulate free space as a hypergraph for robot path planning.
Du\setal~\cite{Du16tc-hypergraphTracking} represent humans as a hypergraph for visual tracking.
Yu\setal~\cite{Yu12ip-hypergraphImageClassification} treat image classification as a hypergraph edge weight optimization problem.
In our work, we build upon~\cite{Shi21icra-robin}
by extending the definition of \emph{compatibility graphs} from simple graphs to hypergraphs.

\vspace{-2mm}
\section{Conclusion}
\label{sec:conclusion}

We proposed \PACETwo and \PACEThree, the first certifiably optimal solvers for the estimation of the pose and shape of
3D objects from 2D and 3D keypoint detections, respectively.
While existing iterative methods get stuck in local minima corresponding to poor estimates,
\PACETwo and \PACEThree leverage tight SDP relaxations to compute certifiably optimal estimates.
We then design a general framework for graph-theoretic outlier pruning, named \robin, that extends our original proposal 
in~\cite{Shi21icra-robin} to operate on compatibility hypergraphs. 
We show that \robin can be effectively applied
to 2D and 3D category-level perception and is able to prune a large fraction of outliers. %
The combination of \robin and our optimal solvers (\PACETwo and \PACEThree), 
 leads to \PACErobustTwo and \PACErobustThree, which are outlier-robust algorithms for 3D-3D and 2D-3D pose and shape estimation.
While \PACErobustTwo is currently slow and is sensitive to the quality of the  keypoint detections, 
\PACErobustThree largely outperforms the state of the art and a non-optimized implementation runs in a fraction of a second.

{%
\tiny
\bibliographystyle{IEEETran}
\bibliography{myRefs,refs}
}

\begin{IEEEbiography}[{\includegraphics[width=1in,height=1.25in,clip,keepaspectratio]{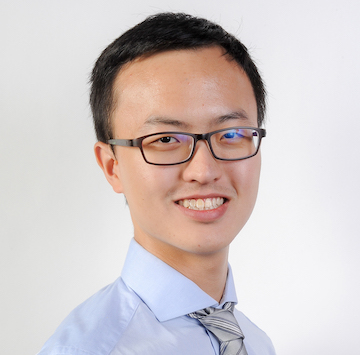}}]{Jingnan Shi} is a PhD candidate
in the Department of Aeronautics and Astronautics at the Massachusetts Institute of Technology.
He obtained his M.S. (2021) from MIT and B.S. (2019) from Harvey Mudd College.
His research interests include robust perception and self-supervised learning with applications to robotics.
He is a Best Paper Finalist at RSS 2021, and a recipient of the MathWorks Fellowship.
\end{IEEEbiography}
\begin{IEEEbiography}[{\includegraphics[width=1in,height=1.25in,clip,keepaspectratio]{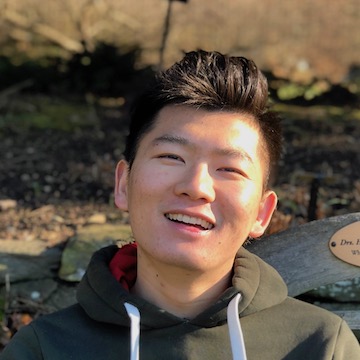}}]{Heng Yang} is an (incoming) Assistant Professor of Electrical Engineering in the School of Engineering and Applied Sciences at Harvard University. He obtained his Ph.D. from MIT in 2022, M.S. from MIT in 2017, and B.Eng. from Tsinghua University in 2015. He is interested in the algorithmic foundations of robot perception, decision-making, and learning, with focus on bringing large-scale convex optimization, semidefinite relaxation, statistics, and machine learning to safe and trustworthy autonomy. He is a recipient of the Best Paper Award in Robot Vision at ICRA 2020, a Best Paper Award Honorable Mention from RA-L in 2020, and a Best Paper Finalist at RSS 2021. He is a Class of 2021 RSS Pioneer.
\end{IEEEbiography}
\begin{IEEEbiography}[{\includegraphics[width=1in,height=1.25in,clip,keepaspectratio]{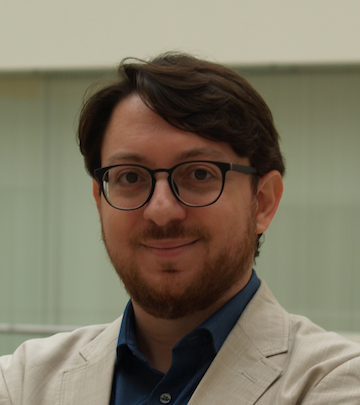}}]{Luca Carlone}
  is the Leonardo Career Development Associate Professor in the Department of Aeronautics and Astronautics at MIT, and a Principal Investigator in the Laboratory for Information \& Decision Systems (LIDS).
  He obtained a B.S. (2006) and an S.M. (2008) in mechatronics, and a Ph.D. (2012) in robotics from the Polytechnic University of Turin, Italy; and an S.M. (2008) in automation engineering from the Polytechnic University of Milan, Italy.
  He joined LIDS as a postdoctoral associate (2015) and later as a Research Scientist (2016), after two years as a postdoctoral fellow at the Georgia Institute of Technology (2013-2015). His research interests include nonlinear estimation, numerical and distributed optimization, and probabilistic inference, applied to sensing, perception, and decision-making in single and multi-robot systems. His work includes seminal results on certifiably correct algorithms for localization and mapping, as well as approaches for visual-inertial navigation and distributed mapping.
    He is a recipient of the 2022 and 2017 Transactions on Robotics King-Sun Fu Memorial Best Paper Award,
    the Best Student Paper Award at IROS 2021,
    the Best Paper Award in Robot Vision at ICRA 2020,
    a 2020 Honorable Mention from the IEEE Robotics and Automation Letters,
    a Track Best Paper award at the 2021 IEEE Aerospace Conference,
    the Best Paper Award at WAFR 2016,
    the Best Student Paper Award at the 2018 Symposium on VLSI Circuits,
    and he was best paper finalist at RSS 2015, RSS 2021, and WACV 2023.
    He is also a recipient of the AIAA Aeronautics and Astronautics Advising Award (2022), the NSF CAREER Award (2021), the RSS Early Career Award (2020), the Google Daydream Award (2019), the Amazon Research Award (2020, 2022), and the MIT AeroAstro Vickie Kerrebrock Faculty Award (2020). He is an IEEE senior member and an AIAA associate fellow.
\end{IEEEbiography}

\isExtended{

\renewcommand{\thesection}{A\arabic{section}}
\renewcommand{\theequation}{A\arabic{equation}}
\renewcommand{\thetheorem}{A\arabic{theorem}}
\renewcommand{\thefigure}{A\arabic{figure}}
\renewcommand{\thetable}{A\arabic{table}}
\renewcommand{\thealgocf}{A\arabic{algocf}}

\setcounter{equation}{0}
\setcounter{section}{0}
\setcounter{theorem}{0}
\setcounter{figure}{0}
\setcounter{algocf}{0}

\appendices

\section{Proof of Theorem~\ref{thm:inliers-form-clique}: Inliers Belong to a Clique}
\label{sec:app-proofGraph}

\begin{proof} 
By definition, \compatibility tests are designed to pass as long as the subset of $n$ nodes under test includes all inliers.
Therefore, the set of nodes corresponding to inliers, say $\calI$, will be such that any subset of $n$ nodes in $\calI$ will be connected 
by a hyperedge, and therefore will form a hyperclique in the compatibility hypergraph.
\end{proof}
\section{Example Showing the Difference Between Clique-expanded Graphs and Hypergraphs For \robin} \label{sec:app-hypergraph-v-graph}

Consider a $3$-invariant and its compatibility test, with 5 measurements, denoted as Nodes 1 to 5.
Node 5 is an outlier, whereas Node 1 through 4 are inliers.
Assume Table~\ref{tbl:app-hypergraph-comp-test-results} contains the results of running the compatibility test with the $3$-invariant.
Note that $(1,2,5)$, $(2,3,5)$, and $(2,4,5)$ pass the compatibility test despite Node $5$ being an outlier.
Fig.~\ref{fig:app-hypergraph-v-graph} shows the compatibility hypergraph constructed according to the results, with the maximum hyperclique being $(1,2,3,4)$.
Fig.~\ref{fig:app-hypergraph-clique-expanded} shows the clique-expanded hypergraph.
In this case, the maximum clique is $(1,2,3,4,5)$, which is the entire measurement set including the outlier.
Thus, for this example, \robin described in this paper will successfully reject Node $5$ as an outlier, while \robin described in~\cite{Shi21icra-robin} will not.

\begin{table}[htbp!]
  \centering
\begin{tabular}{cc}
  \toprule
Triplet & Pass the Compatibility Test? \\ \midrule
1,2,3   & True                         \\
1,2,4   & True                         \\
1,2,5   & True                         \\
1,3,4   & True                         \\
1,3,5   & False                        \\
1,4,5   & False                        \\
2,3,4   & True                         \\
2,3,5   & True                         \\
2,4,5   & True                         \\
3,4,5   & False                        \\ \bottomrule
\end{tabular}
\caption{Results of running the compatibility test on all possible triplets.}
\label{tbl:app-hypergraph-comp-test-results}
\end{table}

 While this is only a toy example, %
we empirically observe that using hypergraphs improves the effectiveness of outlier pruning in real problems, such as the ones in Section~\ref{sec:2d3d-comp-test}.

\begin{figure}[htbp!]
  \centering%
  \includegraphics[width=0.7\columnwidth]{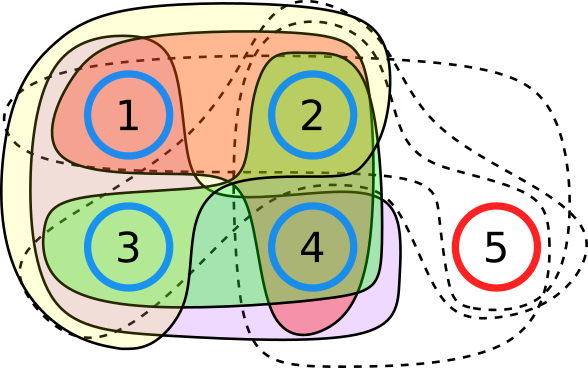}
  \caption{Compatibility hypergraph with hyperedges including the outlier (outlier is Node 5 circled in red;
    hyperedges containing the outlier are represented with dashed edges).
    The maximum hyperclique is $(1, 2, 3, 4)$.
  }
  \label{fig:app-hypergraph-v-graph}
\end{figure}

\begin{figure}[htbp!]
  \centering%
  \includegraphics[width=0.6\columnwidth]{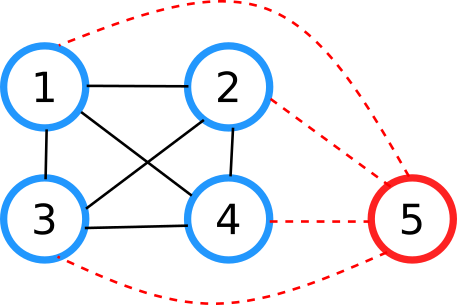}
  \caption{Clique-expanded compatibility graph. Outlier is Node 5; all edges including the outlier are marked in red, with dashed lines. Comparing to the hypergraph where the maximum hyperclique is $(1,2,3,4)$, in this graph the maximum clique is $(1,2,3,4,5)$.
  }
  \label{fig:app-hypergraph-clique-expanded}
\end{figure}

\section{Finding Maximum \Hypercliques \\ in \Compatibility Hypergraphs}
\label{sec:app-milp}

While there exist fast parallel implementations for finding the maximum clique in ordinary graphs (see, \eg~\cite{Rossi15parallel}), 
to the best of our knowledge there is no such implementation for hypergraphs.
Thus, in this paper we use a simple mixed-integer linear programming (MILP) formulation for finding the maximum \hyperclique within an $n$-uniform hypergraph.

Assume an arbitrary $n$-uniform hypergraph $\calG(\calV,\calE)$, where $|\calV| = N$.
Consider the following MILP:
\begin{equation}
\hspace{-3mm}
  \label{eq:max-hyperclique}\tag{MILP}
\begin{array}{rl@{}ll}
\displaystyle\max_{\vb \in \{0,1\}^N}  & \sum_{i=1}^{N} b_{i} &\\
\subject& \sum_{i \in \subMeas{n}} b_{i} \leq n-1,   & \quad & \forall 
\subMeas{n} \subset \calV, |\subMeas{n}| = n, {\subMeas{n} \notin \calE}
\end{array}
\end{equation}
where
$\calE$ is the set of hyperedges,
$\subMeas{n} \subset \calV$ are subsets of $n$ nodes, and
$b_{i}$, $i=1, \ldots, N$, are binary variables that indicate whether node $i$ belongs to the maximum \hyperclique.

Intuitively,~\eqref{eq:max-hyperclique} looks for a set of nodes such that ---within that set--- there is no subset of $n$ nodes 
that are not connected by a hyperedge. This matches the definition of a \hyperclique, as formalized below.

\begin{theorem}[MILP Finds Maximum Hypercliques]\label{thm:hyperclique-mip}
  For any feasible solution $\vb$ to~\eqref{eq:max-hyperclique}, 
  the set of nodes $\setdef{i \in \calV}{b_i = 1}$ forms a \hyperclique in $\calG$;
  moreover, the optimal solution $b^\star$ is such that
  the set $\setdef{i \in \calV}{b^\star_i = 1}$ is a \emph{maximum} \hyperclique.
\end{theorem}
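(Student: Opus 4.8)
The plan is to prove the two halves of Theorem~\ref{thm:hyperclique-mip} separately: first that every feasible $\vb$ yields a hyperclique, and then that the optimum picks out a \emph{maximum} hyperclique. For the first part, I would argue by contraposition. Suppose $S \doteq \setdef{i \in \calV}{b_i = 1}$ is \emph{not} a hyperclique. By the definition of a hyperclique in an $n$-uniform hypergraph (any $n$-subset of $S$ is a hyperedge), there must exist some subset $\subMeas{n} \subseteq S$ with $|\subMeas{n}| = n$ and $\subMeas{n} \notin \calE$. But then $\subMeas{n}$ indexes one of the constraints in~\eqref{eq:max-hyperclique}, and since every $i \in \subMeas{n}$ lies in $S$ we have $b_i = 1$ for all $i \in \subMeas{n}$, so $\sum_{i \in \subMeas{n}} b_i = n > n-1$, violating that constraint. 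Hence any feasible $\vb$ must give a hyperclique.

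For the second part, I would show the correspondence between feasible solutions and hypercliques is a bijection preserving the natural size ordering, so that maximizing $\sum_i b_i$ is exactly maximizing hyperclique cardinality. One direction was just established. Conversely, given any hyperclique $H \subseteq \calV$, define $b_i = 1$ iff $i \in H$; I claim this $\vb$ is feasible. Indeed, take any $\subMeas{n} \subseteq \calV$ with $|\subMeas{n}| = n$ and $\subMeas{n} \notin \calE$. If $\subMeas{n} \subseteq H$, then since $H$ is a hyperclique every $n$-subset of $H$ — in particular $\subMeas{n}$ — is a hyperedge, contradicting $\subMeas{n} \notin \calE$; so $\subMeas{n} \not\subseteq H$, meaning at least one index in $\subMeas{n}$ has $b_i = 0$, and thus $\sum_{i \in \subMeas{n}} b_i \leq n-1$. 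So every hyperclique gives a feasible point with objective value $|H|$, and every feasible point gives a hyperclique with the same objective value. The optimal value of~\eqref{eq:max-hyperclique} therefore equals $\max_H |H|$ over all hypercliques $H$, and the optimizer $b^\star$ has $\setdef{i}{b^\star_i = 1}$ equal to a hyperclique of that maximum cardinality, i.e.\ a maximum hyperclique.

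I do not expect any part of this to be a serious obstacle — it is essentially an unpacking of definitions, mirroring the classical MILP/ILP encoding of maximum clique (where the constraints forbid non-edges from both being selected). The one point requiring mild care is making sure the edge-case direction of the hyperclique definition is handled cleanly: a hyperclique with fewer than $n$ vertices vacuously satisfies the "every $n$-subset is a hyperedge" condition, and such a set trivially satisfies all constraints of~\eqref{eq:max-hyperclique} as well, so the bijection still holds at small cardinalities. I would state the argument for general $S$ without assuming $|S| \geq n$, which keeps both implications valid uniformly. Combining the two parts gives the theorem: feasibility $\Leftrightarrow$ hyperclique, and optimality $\Leftrightarrow$ maximum hyperclique.

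\begin{proof}[Proof sketch]
($\Rightarrow$) Let $\vb$ be feasible and set $S = \setdef{i \in \calV}{b_i = 1}$. If $S$ were not a hyperclique, there would be some $\subMeas{n} \subseteq S$, $|\subMeas{n}| = n$, $\subMeas{n} \notin \calE$; the corresponding constraint gives $n = \sum_{i \in \subMeas{n}} b_i \leq n-1$, a contradiction. So $S$ is a hyperclique, and its size is $\sum_i b_i$.

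($\Leftarrow$) Conversely, for any hyperclique $H$, the indicator vector $\vb$ with $b_i = \indicator[i \in H]$ satisfies every constraint: a non-hyperedge $\subMeas{n}$ cannot be contained in $H$, so $\sum_{i \in \subMeas{n}} b_i \leq n-1$. Thus feasible points and hypercliques are in objective-value-preserving bijection, and an optimal $b^\star$ selects a maximum hyperclique.
\end{proof}
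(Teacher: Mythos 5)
Your proof is correct and follows essentially the same route as the paper's: the first half (feasible $\vb$ $\Rightarrow$ hyperclique) is the same contradiction/contraposition argument on the constraint indexed by a non-hyperedge $n$-subset. The one substantive difference is that you also prove the converse direction --- that the indicator vector of \emph{any} hyperclique $H$ is feasible, because a non-hyperedge $\subMeas{n}$ cannot be contained in $H$ --- whereas the paper's proof jumps directly from ``the objective counts selected nodes'' to ``the optimum is a maximum hyperclique.'' That converse is actually needed for the maximality claim (without it, the feasible region could in principle exclude some large hyperclique), so your version is the more complete of the two; your remark about sets of fewer than $n$ vertices being vacuously fine is also a correct, if minor, edge case that neither argument really depends on.
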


\begin{proof}
  Assume by contradiction that there exists a feasible solution of the MILP that does \emph{not} form a \hyperclique.
  Without loss of generality, assume that ---for some $m$--- such a solution includes nodes 
  $\hat{\calV} \doteq \{v_{1}, v_{2}, \ldots, v_{m} \}$, \ie $b_i=1$ for $i=1,\ldots,m$, and zero otherwise.
  Since this set of nodes does not form a \hyperclique, there is a subset $\subMeas{n}$ of $n$ nodes
  in $\hat{\calV}$ that does not belong to the hyperedge set $\calE$, and for such subset $\sum_{i \in \subMeas{n}} b_{i}  = n$.
  which violates the constraint in the MILP. Since $\hat{\calV}$ violates a constraint, it cannot be feasible, leading to a contradiction.
  Since the objective maximizes the count of non-zero $b_{i}$ (\ie the number of nodes in the \hyperclique), the optimal solution is a maximum \hyperclique.
\end{proof} %
\section{Minimum and Maximum Distances \\ between Convex Hulls for Eq.~\eqref{eq:definebminbmax}}
\label{sec:app-bmin-bmax}
Recall from eq.~\eqref{eq:definebminbmax} the definitions of $b_{ij}^{\min}$ and $b_{ij}^{\max}$:
\bea
b_{ij}^{\min} = \min_{\vc \geq 0, \ones\tran\vc=1} \norm{\sum_{k=1}^K c_k (\basis{k}{j} - \basis{k}{i})}, \label{eq:definebmin}\\
b_{ij}^{\max} = \max_{\vc \geq 0, \ones\tran\vc=1} \norm{\sum_{k=1}^K c_k (\basis{k}{j} - \basis{k}{i})}, \label{eq:definebmax}
\eea
and let us use the following shorthand:
\bea
\vb^{k}_{ij} \triangleq \basis{k}{j} - \basis{k}{i}, \\
\MB_{ij} \triangleq \bmat{ccc} \vb^{1}_{ij} & \cdots & \vb^{K}_{ij} \emat \in \Real{3 \times K},
\eea
to write problems~\eqref{eq:definebmin} and~\eqref{eq:definebmax} compactly as:
\bea
\hspace{-4mm} b_{ij}^{\min} = \min_{\vc \geq 0, \ones\tran\vc=1} \norm{\MB_{ij} \vc}, \quad b_{ij}^{\max} = \max_{\vc \geq 0, \ones\tran\vc=1} \norm{\MB_{ij} \vc}.
\eea

{\bf Compute $b^{\max}_{ij}$}. Because $\norm{\MB_{ij} \vc}$ is a convex function of $\vc$, and the maximum of a convex function over a polyhedral set (in our case, the standard simplex $\Delta_K \triangleq \{\vc \in \Real{K}: \vc \geq 0, \ones\tran \vc=1 \}$) is always obtained at one of the vertices of the polyhedron~\cite[Corollary 32.3.4]{Rockafellar70book-convexanalysis}, we have:
\bea
b_{ij}^{\max} = \max_{k} \norm{\vb^{k}_{ij}},
\eea
since the vertices of $\Delta_K$ are the vectors $\ve_k,k=1,\dots,K$, where $\ve_k$ is one at its $k$-th entry and zero anywhere else.

{\bf Compute $b^{\min}_{ij}$}. Observe that computing the minimum of $\norm{\MB_{ij} \vc}$ is equivalent to computing 
the minimum of $\norm{\MB_{ij} \vc}^2 = \vc\tran (\MB_{ij}\tran \MB_{ij}) \vc$
because the quadratic function $f(x) = x^2$ is monotonically increasing in the interval $[0,\infty]$, and hence we first solve the following convex quadratic program (QP):
\bea \label{eq:QPofbmin}
\min_{\vc \in \Real{K}} & \vc\tran (\MB_{ij}\tran \MB_{ij}) \vc \\
\subject & \vc \geq 0, \quad \ones\tran \vc = 1
\eea
and then compute $b_{ij}^{\min} = \norm{\MB_{ij} \vc^\star}$ from the solution $\vc^\star$ of the QP. Note that the QP~\eqref{eq:QPofbmin} can be solved in milliseconds for large $K$, so pre-computing $b_{ij}^{\min}$ for all $1\leq i < j \leq N$ is still tractable even when $N$ is large.

\section{Proof of Proposition~\ref{prop:compute-winding-order}: 2D Winding Orders} \label{sec:app-compute-winding-order}

\begin{proof}
Consider the 2D image points $\measTwo{i}$, $\measTwo{j}$, and $\measTwo{m}$
and their representation in homogeneous coordinates:
\begin{align*}
\bmeasTwo{i} = \begin{bmatrix}
  \measTwo{i} \\ 1
  \end{bmatrix} \quad \bmeasTwo{j} = \begin{bmatrix}
  \measTwo{j} \\ 1
  \end{bmatrix} \quad \bmeasTwo{m} = \begin{bmatrix}
  \measTwo{m} \\ 1
  \end{bmatrix}.
\end{align*}
Recall these 3D vectors are expressed in a standard right-handed image coordinate frame with origin at the 
center of the image (irrelevant for the derivation below), and where the $\boldsymbol{x}$ axis points towards the right in the image 
plane, the  $\boldsymbol{y}$ axis points down in the image plane, 
and the $\boldsymbol{z}$ axis points into the image plane (to ensure right-handedness).

Now the triplet of points is arranged in clockwise order if their crossproduct is aligned with the 
$\boldsymbol{z}$ axis:
\begin{align}
  &(\bmeasTwo{j} - \bmeasTwo{i}) \times (\bmeasTwo{m} - \bmeasTwo{i}) \\
 =& \det \left( \begin{bmatrix}
   \boldsymbol{x} & \boldsymbol{y} & \boldsymbol{z} \\
   \multicolumn{2}{c}{(\measTwo{j} - \measTwo{i})\tran} & 0 \\
   \multicolumn{2}{c}{(\measTwo{m} - \measTwo{i})\tran} & 0
 \end{bmatrix} \right) \\
 =& \det \left( \begin{bmatrix}
        (\measTwo{j} - \measTwo{i})\tran \\
        (\measTwo{m} - \measTwo{i})\tran
   \end{bmatrix}   \right) \boldsymbol{z}
   \\
   =&  \det \left( \begin{bmatrix}
        \measTwo{j} - \measTwo{i} \;\;
        \measTwo{m} - \measTwo{i}
   \end{bmatrix}   \right)   \boldsymbol{z}
\end{align}
where we used the standard relation between the cross product and the determinant and 
then developed the expression of the determinant.
Therefore, we have
\begin{align}
\det \left( \begin{bmatrix}
        \measTwo{j} - \measTwo{i} \;\;
        \measTwo{m} - \measTwo{i}
   \end{bmatrix}   \right) > 0
\end{align}
if the points are arranged clockwise. And
\begin{align}
\det \left( \begin{bmatrix}
        \measTwo{j} - \measTwo{i} \;\;
        \measTwo{m} - \measTwo{i}
   \end{bmatrix}   \right) < 0
\end{align}
if the points are arranged counter-clockwise.
\end{proof}

\section{Proof for Theorem~\ref{thm:winding-order-half-space}: 3D Winding Orders}\label{sec:app-winding-order-half-space}

We first recall a simple fact about the scalar triple product.
\begin{lemma}[Scalar Triple Product As Determinant]\label{lem:triple-product-det}
  For arbitrary $\va$, $\vb$, $\vc \in \Real{3}$,
  \begin{align}
\va \cdot \left( \vb \times \vc \right)=\det\left[\begin{array}{lll}
\va & \vb & \vc
\end{array}\right] \label{eq:app-triple-product-as-det}
 \end{align}
\end{lemma}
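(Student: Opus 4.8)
The plan is to verify the identity by a direct component-wise expansion of both sides and checking that the resulting polynomials in the entries of $\va$, $\vb$, $\vc$ coincide. This is a classical fact of linear algebra, so no clever idea is needed; the whole content is a careful bookkeeping of signs together with the (immaterial) observation that stacking the vectors as columns or as rows of the $3\times 3$ matrix gives the same determinant, since $\det(\MM) = \det(\MM\tran)$.

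First I would introduce coordinates, writing $\va = (a_1,a_2,a_3)\tran$, $\vb = (b_1,b_2,b_3)\tran$, and $\vc = (c_1,c_2,c_3)\tran$. Using the standard definition of the cross product,
\[
\vb \times \vc = \vect{ b_2 c_3 - b_3 c_2 \\ b_3 c_1 - b_1 c_3 \\ b_1 c_2 - b_2 c_1 },
\]
so that the scalar triple product on the left-hand side of~\eqref{eq:app-triple-product-as-det} expands to
\[
\va \cdot (\vb \times \vc) = a_1(b_2 c_3 - b_3 c_2) + a_2(b_3 c_1 - b_1 c_3) + a_3(b_1 c_2 - b_2 c_1).
\]

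Second, I would expand the determinant on the right-hand side by cofactor expansion along the first column of the matrix whose columns are $\va$, $\vb$, $\vc$:
\[
\det \matThree{ a_1 & b_1 & c_1 \\ a_2 & b_2 & c_2 \\ a_3 & b_3 & c_3 } = a_1(b_2 c_3 - b_3 c_2) - a_2(b_1 c_3 - b_3 c_1) + a_3(b_1 c_2 - b_2 c_1).
\]
Rewriting $-a_2(b_1 c_3 - b_3 c_1) = a_2(b_3 c_1 - b_1 c_3)$ and comparing with the previous display shows the two expressions are term-by-term identical, which establishes~\eqref{eq:app-triple-product-as-det}. I do not expect any genuine obstacle here: the only place to be careful is the alternating sign pattern in the cofactor expansion (and ensuring the column/row convention is stated). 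As an even shorter alternative one could invoke the Leibniz formula $\det[\va\ \vb\ \vc] = \sum_{\sigma} \sgn(\sigma)\, a_{\sigma(1)} b_{\sigma(2)} c_{\sigma(3)}$ together with the Levi-Civita identity $(\vb\times\vc)_i = \sum_{j,k} \varepsilon_{ijk}\, b_j c_k$, but the elementary expansion above is self-contained and requires no further machinery.
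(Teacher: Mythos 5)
Your proof is correct, and it is essentially the computation the paper has in mind: the paper states the lemma without a written proof, remarking only that it "can be proven by inspection," and your component-wise expansion of $\va \cdot (\vb \times \vc)$ against the cofactor expansion of $\det\left[\begin{array}{lll}\va & \vb & \vc\end{array}\right]$ along its first column is precisely that inspection, carried out with the signs checked. Nothing further is needed.
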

This is a well-known property and can be proven by inspection.
We are now ready to prove Theorem~\ref{thm:winding-order-half-space}.

\begin{proof}
Recall that $\vo$ is the optical center of the camera in the CAD model's frame.
 Since $(\MR,\vt)$ is the pose of the object in the coordinate frame of the camera, it follows 
 that $\vo = -\MR\tran \vt$.

For an arbitrary camera center $\vo$, it follows that
\begin{align}
  &(\vo - \basis{k}{i}) \cdot \normal{k}{i,j,m} \nonumber \\
  & \grayout{\text{(using the definition of } \normal{k}{i,j,m} 
  \text{ in~\eqref{eq:single-shape-triplet-normal} and Lemma~\ref{lem:triple-product-det})}}
  \nonumber \\
  =&
  \det \left(
  \begin{bmatrix}
    \vo - \basis{k}{i} & \basis{k}{j} - \basis{k}{i} & \basis{k}{m} - \basis{k}{i}
  \end{bmatrix}
  \right)  \label{eq:app-triple-product} \\
  & \grayout{\text{(subtracting the first column from the second and third)}}
  \nonumber \\
  =&
  \det \left(
  \begin{bmatrix}
    \vo - \basis{k}{i} & \basis{k}{j} - \vo & \basis{k}{m} - \vo
  \end{bmatrix}
  \right)  \label{eq:app-subtract-col} \\
  & \grayout{\text{(flipping the sign of the first column)}}
  \nonumber \\
  =&
  - \det \left(
  \begin{bmatrix}
   \basis{k}{i} - \vo & \basis{k}{j} - \vo & \basis{k}{m} - \vo
  \end{bmatrix}
  \right)  \label{eq:app-flip-sign}
  \\
  & \grayout{\text{(multiplying the matrix by a rotation matrix $\MR$)}}
  \nonumber \\
  =&
  - \det \left(
  \begin{bmatrix}
   \MR (\basis{k}{i} - \vo) & \MR(\basis{k}{j} - \vo) & \MR(\basis{k}{m} - \vo)
  \end{bmatrix}
  \right)  \label{eq:app-mul-R}
  \\
  & \grayout{\text{(using $\vo = -\MR\tran \vt$ and $\MR \MR\tran = \MI$)}}
  \nonumber \\
  =&
  - \det \left(
  \begin{bmatrix}
   \MR \basis{k}{i} + \vt & \MR\basis{k}{j} + \vt & \MR\basis{k}{m} + \vt
  \end{bmatrix}
  \right) \label{eq:app-expand-R} \\
  & \grayout{\text{(dividing each column by its third coordinate)}}
  \nonumber \\
  =&
   - (\measThree{i})_{z} (\measThree{j})_{z} (\measThree{m})_{z} \nonumber \\
  &
\scalemath{0.8}{
    \det \left(
  \begin{bmatrix}
   (\measThree{i})_{x} / (\measThree{i})_{z} & (\measThree{j})_{x} / (\measThree{j})_{z} &  (\measThree{m})_{x} / (\measThree{m})_{z} \\
   (\measThree{i})_{y} / (\measThree{i})_{z} & (\measThree{j})_{y} / (\measThree{j})_{z} &  (\measThree{m})_{y} / (\measThree{m})_{z} \\
   1 & 1 & 1
  \end{bmatrix}
  \right)\label{eq:app-divide-z}}
\end{align}
In 
\eqref{eq:app-subtract-col} we used the fact that adding a scalar multiple of one column to another column does not change the value of the determinant, while in 
\eqref{eq:app-flip-sign} we observed that flipping the sign of the first column flips the sign of the determinant~\cite[pp. 249-252]{Strang16book-introLinAlg}.
In \eqref{eq:app-mul-R} we observed that left multiplying by $\MR$ does not change the determinant because $\det(\MR) = 1$.
Finally, in \eqref{eq:app-divide-z} we divided each column by its third coordinate and multiplied the determinant by the same coordinate to keep it constant~\cite[pp. 249-252]{Strang16book-introLinAlg}.

Now recall that the canonical perspective projection of 3D points $\measThree{i}$, $\measThree{j}$, $\measThree{m}$ is
\begin{gather*}
\measTwo{i} = \begin{bmatrix}
 (\measThree{i})_{x} / (\measThree{i})_{z} \\
 (\measThree{i})_{y} / (\measThree{i})_{z} \\
  \end{bmatrix}
   \quad
\measTwo{j} = \begin{bmatrix}
 (\measThree{j})_{x} / (\measThree{j})_{z} \\
 (\measThree{j})_{y} / (\measThree{j})_{z} \\
  \end{bmatrix} \\
\measTwo{m} = \begin{bmatrix}
 (\measThree{m})_{x} / (\measThree{m})_{z} \\
 (\measThree{m})_{y} / (\measThree{m})_{z} \\
  \end{bmatrix}
\end{gather*}
Substituting the projections back into~\eqref{eq:app-divide-z}:
\begin{align}
  \text{\eqref{eq:app-divide-z}} =&
 - (\measThree{i})_{z} (\measThree{j})_{z} (\measThree{m})_{z} \nonumber \\
  &\det \begin{bmatrix}
     \measTwo{i} & \measTwo{j} & \measTwo{m} \\
     1 & 1 & 1
   \end{bmatrix} \nonumber \\
 =&
- (\measThree{i})_{z} (\measThree{j})_{z} (\measThree{m})_{z} \nonumber \\
  &\det \begin{bmatrix}
     \measTwo{i} & \measTwo{j} - \measTwo{i}  & \measTwo{m} - \measTwo{i}  \\
     1 & 0 & 0
   \end{bmatrix} \label{eq:app-2d-det}
\end{align}
because subtracting a column from another does not change the determinant.
By cofactor expansion along the last row:
\begin{align}
  \text{\eqref{eq:app-2d-det}} =&
    - (\measThree{i})_{z} (\measThree{j})_{z} (\measThree{m})_{z} \nonumber \\
  &\det \begin{bmatrix}
      \measTwo{j} - \measTwo{i}  & \measTwo{m} - \measTwo{i}
   \end{bmatrix} \label{eq:app-2d-det-simplified}
\end{align}
Note that we assume the object always stays in front of the camera (\ie in the direction of the positive $z$-axis of the camera frame).
Applying the signum function, we finally have
\begin{align*}
  &\sgn{\left((\vo - \basis{k}{i}) \cdot \normal{k}{i,j,m} \right)} \\
  &= -\sgn{\left( \det \begin{bmatrix}
      \measTwo{j} - \measTwo{i}  & \measTwo{m} - \measTwo{i}
   \end{bmatrix}\right)} %
\end{align*}
which proves the claim.
\end{proof}

One way to interpret Theorem~\ref{thm:winding-order-half-space} is
that for all $\vo$ in the positive half-space (resp. negative half-space) of the triplet plane,
the observed winding order of the projected points is counter-clockwise (resp. clockwise) following Proposition~\ref{prop:compute-winding-order}.

\section{Extensions of Proposition~\ref{prop:2d3d-comp-test}} 
\label{sec:app-2d3d-invariant-valid-condition}

This appendix discusses conditions under which Proposition~\ref{prop:2d3d-comp-test} can be extended to the case where the 2D keypoints are generated from a convex combination of shapes ---as in our generative model~\eqref{eq:generativeModel-2d}--- rather than being generated by a single shape (as currently assumed in Proposition~\ref{prop:2d3d-comp-test}).

{
Consider $K$ shapes, with 2D measurements $\measTwo{i}$, $i=1,\ldots,N$, generated according to eq.~\eqref{eq:generativeModel-2d},
and such that $\|\epsTwo{i}\| < \beta$ and $\beta$ is small enough for Theorem~\ref{thm:winding-order-half-space} to hold.
Assume for each shape, we have obtained the feasible winding order dictionary $\calW_{k}$ a priori.
Let $\vc = \begin{bmatrix} c_{1} c_{2} \ldots c_{K}\end{bmatrix}$ be an arbitrary ground-truth shape coefficient vector.

Given an arbitrary camera center $\vo$, let the visible keypoints' index set be $\calM_\vo$.
In other words, $\vo$ is able to observe $\measTwo{i}$, $i \in \calM_\vo$, and therefore 
$\vo$  lies within the non-empty covisibility region of $\sum_{k=1}^{\nrShapes}\shapeParam{k} \basis{k}{i}$, $i \in \calM_\vo$.
In the following, we assume that whenever a camera center $\vo$ observes a 3D point $\sum_{k=1}^{\nrShapes}\shapeParam{k} \basis{k}{i}$, it is also in the visibility region of $\basis{k}{i}$ for $i \in \calM_\vo$ and for $k = 1, \ldots K$. 

Given an arbitrary triplet of keypoints $\measTwo{i}$, $\measTwo{j}$, and $\measTwo{m}$
(corresponding to $\sum_{k=1}^{\nrShapes}\shapeParam{k} \basis{k}{i}$, $\sum_{k=1}^{\nrShapes}\shapeParam{k} \basis{k}{j}$, and $\sum_{k=1}^{\nrShapes}\shapeParam{k} \basis{k}{m}$),
 we want to prove (\cf Proposition~\ref{prop:2d3d-comp-test})
 \begin{align}
 \invfunTwo(\measTwo{i}, \measTwo{j}, \measTwo{m}) \in \invFunTwo(\vtheta_{i}, \vtheta_{j}, \vtheta_{m})
  \end{align}
  with
  \begin{align}
    \invfunTwo(&\measTwo{i}, \measTwo{j}, \measTwo{m}) \nonumber \\
               &\doteq \det \left( \begin{bmatrix} \measTwo{j} - \measTwo{i} & \measTwo{m} - \measTwo{i} \end{bmatrix}\right) \\
    \invFunTwo(&\vtheta_{i}, \vtheta_{j}, \vtheta_{m}) = \bigcup\limits_{k=1}^{K} \calW_{k}(i, j, m)
  \end{align}

We have the following cases:
}

\emph{Case 1.} $\bigcup\limits_{k=1}^{K} \calW_{k}(i, j, m) = \{+1, -1\}$.
In this case, $\invfunTwo(\measTwo{i}, \measTwo{j}, \measTwo{m}) \in \invFunTwo(\vtheta_{i}, \vtheta_{j}, \vtheta_{m})$ trivially since the points are assumed in generic position and hence the determinant is non-zero (\ie either positive or negative).

\emph{Case 2.} $\bigcup\limits_{k=1}^{K} \calW_{k}(i, j, m) = \{+1\}$ or $\bigcup\limits_{k=1}^{K} \calW_{k}(i, j, m) = \{-1\}$.
In other words, the winding order dictionaries $\calW_{k}(i, j, m)$ for all $k$ have the same winding order.
To have $\invfunTwo(\measTwo{i}, \measTwo{j}, \measTwo{m}) \in \invFunTwo(\vtheta_{i}, \vtheta_{j}, \vtheta_{m})$,
we need to find a condition under which the convex combination of keypoints
has the same winding order as any of the corresponding keypoints in each individual shape.
In other words, we need:
\begin{align} \label{eq:app-weighted-winding-order}
  \sgn{\left(( \vo - \textstyle\sum_{k=1}^{K} \shapeParam{k}\basis{k}{i} ) \cdot \vn \right)} = \calW_{k} (i,j,m)
\end{align}
where $\vn = (\sum_{k=1}^{K} c_{k}(\basis{k}{j} - \basis{k}{i})) \times (\sum_{k=1}^{K} \shapeParam{k}(\basis{k}{m} - \basis{k}{i}))$.

We can rewrite eq.~\eqref{eq:app-weighted-winding-order} in a way that eliminates $\vn$.
Towards this goal, we start by using Lemma~\ref{lem:triple-product-det} to establish the following equality for a 
triplet of points $\basis{k}{i}, \basis{k}{j}, \basis{k}{m}$ and for each shape:
\begin{align}
  \sgn \left( (\vo - \basis{k}{i}) \cdot \normal{k}{i,j,m}\right)
  =
    \sgn \left(
\det
\begin{bmatrix}
\vv^{k}_{i} & \basis{k}{ji} & \basis{k}{mi}
\end{bmatrix} \right)  \nonumber \\
  \forall k=1,\ldots,K
\end{align}
where $\vv^{k}_{i} = \vo - \basis{k}{i}$, $\basis{k}{ji} = \basis{k}{j} - \basis{k}{i}$, and $\basis{k}{mi} = \basis{k}{m} - \basis{k}{i}$ (recall that $\normal{k}{i,j,m} = (\basis{k}{j} - \basis{k}{i}) \times (\basis{k}{m} - \basis{k}{i})$ from eq.~\eqref{eq:single-shape-triplet-normal}).

Let $\MA^{k} = \begin{bmatrix}
\vv^{k}_{i} & \basis{k}{ji} & \basis{k}{mi}
\end{bmatrix}$ for $k=1,\ldots,K$.
It follows
\begin{align}
  &\det \left (  \sum_{k=1}^{K} c_{k} \MA^{k} \right) & & \nonumber \\
  = & \det \begin{bmatrix}
\sum_{k=1}^{K}c_{k} \vo - \sum_{k=1}^{K}c_{k}\basis{k}{i} & \sum_{k=1}^{K} c_{k}\basis{k}{ji} & \sum_{k=1}^{K} c_{k}\basis{k}{mi}
    \end{bmatrix} \nonumber \\
=& \det  \begin{bmatrix}
 \vo - \sum_{k=1}^{K}c_{k}\basis{k}{i} & \sum_{k=1}^{K} c_{k} \basis{k}{ji} & \sum_{k=1}^{K} c_{k} \basis{k}{mi}
         \end{bmatrix} \nonumber \\
  =& ( \vo - \sum_{k=1}^{K} \vc_{k}\basis{k}{i} ) \cdot \vn
\end{align}
Therefore, as long as:
\begin{align} 
  \sgn \left( \det \left (  \sum_{k=1}^{K} c_{k} \MA^{k} \right) \right) = 
  \sgn \left( \det \left ( \MA^{k} \right) \right),
    \quad \forall \vc \in \Delta_K \label{eq:condA}
\end{align}
then, we have
\begin{align} 
\label{eq:app-winding-order-valid-condition}
  \sgn \left( \det \left (  \sum_{k=1}^{K} c_{k} \MA^{k} \right) \right) &= \sgn  \left(( \vo - \sum_{k=1}^{K} \vc_{k}\basis{k}{i} ) \cdot \vn \right) \nonumber \\
  &= \calW_{k} (i,j,m) 
\end{align}
which matches the claim in Proposition~\ref{prop:2d3d-comp-test} since it implies 
$\invfunTwo(\measTwo{i}, \measTwo{j}, \measTwo{m}) \in \invFunTwo(\vtheta_{i}, \vtheta_{j}, \vtheta_{m})$.

While condition~\eqref{eq:condA} might not be satisfied in general,  
with simulated data, we check empirically whether Proposition~\ref{prop:2d3d-comp-test} and eq.~\eqref{eq:app-winding-order-valid-condition} hold under scenarios where the ground-truth shape coefficients are randomly sampled in the probability simplex $\Delta_K$.
Fig.~\ref{fig:app-winding-order-validity} shows the percentage of test instances where eq.~\eqref{eq:app-winding-order-valid-condition} holds true under the same synthetic data generation procedure for the robustness experiments described in Section~\ref{sec:exp-optimality-robustness-2d}, with octahedra as shapes.
We set the outlier rate to $0\%$ (the invariant is only expected to hold for inliers), while changing the number of shapes from $2$ to $5$.
At all shape counts, eq.~\eqref{eq:app-winding-order-valid-condition} holds true for $100\%$ of the generated test instances.

\begin{figure}[htbp!]
  \centering%
  \includegraphics[width=0.8\columnwidth]{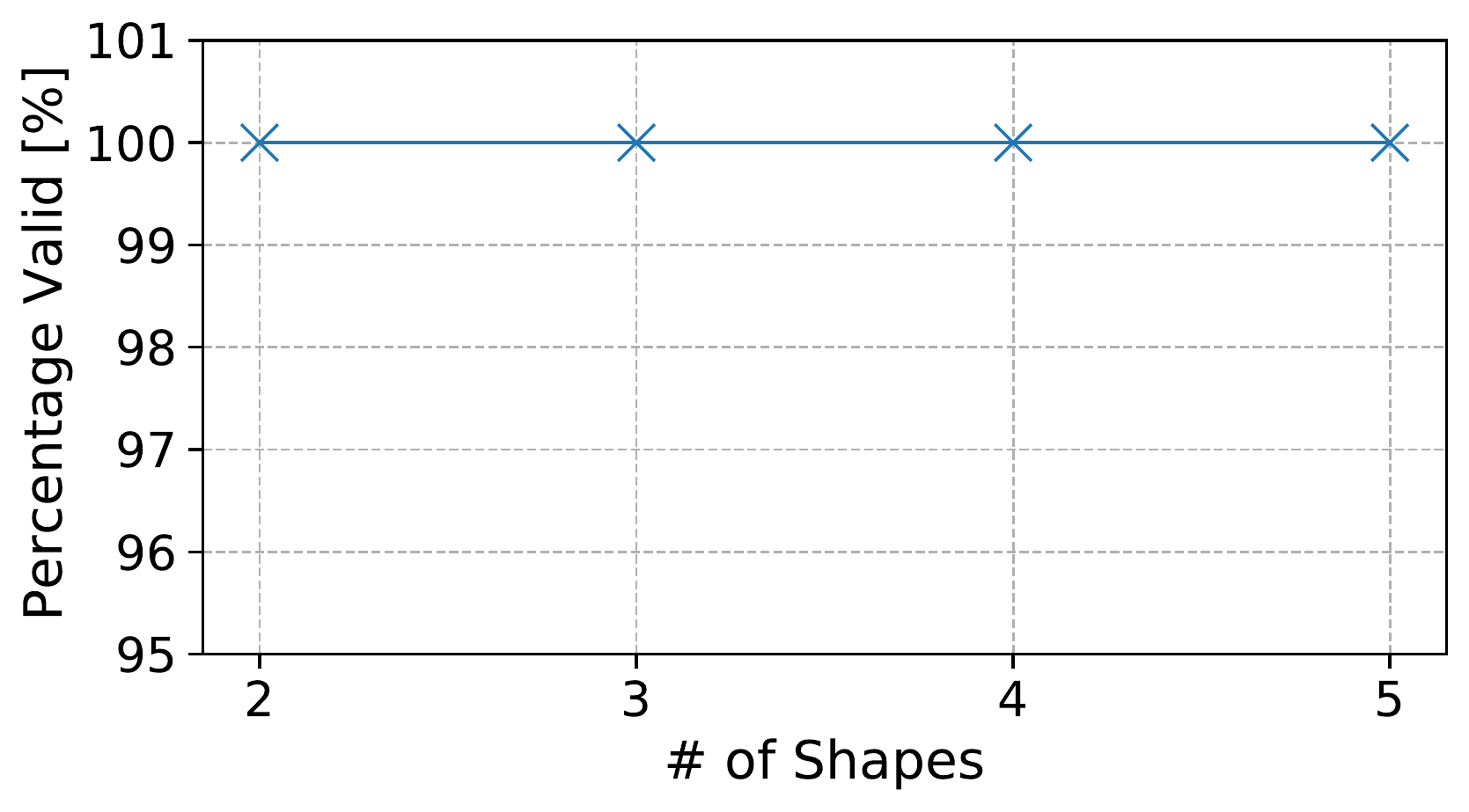}
  \caption{Percentage of simulated test instances where eq.~\eqref{eq:app-winding-order-valid-condition} and  Proposition~\ref{prop:2d3d-comp-test} hold even when the keypoints are generated by linear combinations of shapes (rather than by one of the shapes in the CAD library).}
  \label{fig:app-winding-order-validity}
\end{figure}

\section{Generating Winding Order Dictionaries}\label{sec:app-winding-order-dictionary}

In this section, we discuss three different methods for
constructing winding order dictionaries.
The first method uses linear programming to
solve eq.~\eqref{eq:ijm_order_feasible} and eq.~\eqref{eq:imj_order_feasible} directly.
It is suitable for cases where analytical equations of the faces of the CAD models are known,
and the shapes are convex.
The second method, which uses ray tracing, is applicable to complex non-convex shapes.
The last method constructs the winding order dictionaries by learning from ground-truth 2D annotations.
Our simulated experiments for \PACErobustTwo uses the first method.
Our experiments on \apollo uses a combination of the last two methods.

\subsection{Using Linear Programs}
Eq.~\eqref{eq:ijm_order_feasible} and eq.~\eqref{eq:imj_order_feasible} are two feasibility problems
that can be solved using linear program solvers, if $\vo \in \CovisSet$ can be expressed as linear constraints.
Luckily, this can be achieved if the underlying shape is closed and convex, with keypoints lying strictly in the interior of the faces (excluding the vertices and edges).
In essence, for keypoints on an arbitrary face, their visibility regions are equivalent to the half-space outside the shape.
Assume we have $L$ faces with known plane equations
\begin{equation}
  \vn_{l} \cdot \vo + b_{l} = 0, \quad \forall l = 1, \ldots, L,
\end{equation}
where $\vn_{l}$ is the normal vector of face $l$ pointing \emph{away} from the model, and $b_{l}$ is a constant.
Then, the visibility region of all keypoints on face $l$ is equivalent to checking whether there exists a $\vo$ such that
\begin{equation}
  \vn_{l} \cdot \vo + b_{l} > 0
\end{equation}
The \covis region of a keypoint triplet $i, j, m$ can therefore be found by combining the above constraints for all three keypoints
\begin{align}
\vn_{l_{1}} \cdot \vo + b_{l_{1}} > 0 \\
\vn_{l_{2}} \cdot \vo + b_{l_{2}} > 0 \\
\vn_{l_{3}} \cdot \vo + b_{l_{3}} > 0
\end{align}
where $l_{1}$, $l_{2}$, and $l_{3}$ are the indices of the faces keypoints $i$, $j$, and $m$ belong to.
Fig.~\ref{fig:app-cube-covis} shows an example of a \covis region of three keypoints on a cube, which is the intersection of two half-spaces.

\begin{figure}[htbp!]
  \centering
\includegraphics[width=0.7\columnwidth]{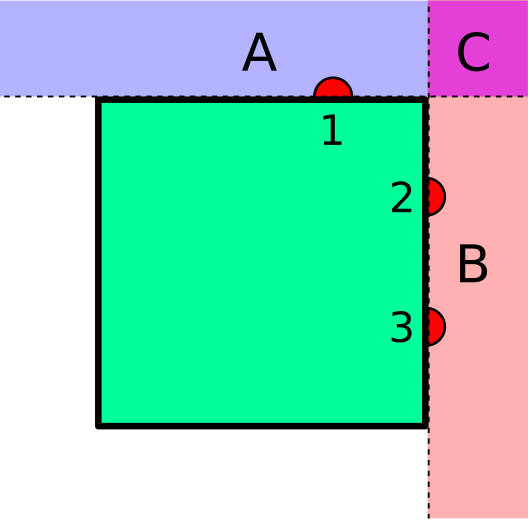}
\caption{A top down view of a cube, with 3 keypoints (1, 2, and 3) on two different faces. Shaded region A (\emph{blue}) is the visibility region of keypoint 1.
  Shaded region B (\emph{red}) is the \vis region of 2 and 3.
  Shaded region C (\emph{purple}) is the \covis region of 1, 2, and 3.}
\label{fig:app-cube-covis}
\end{figure}

The feasibility linear program for solving eq.~\eqref{eq:ijm_order_feasible}
is then
\begin{equation}
  \label{eq:app-winding-order-lp-1}
\begin{array}{ll@{}l}
\text{find}  & \vo&\\
\text{subject to} &\vn_{l_{1}} \cdot \vo + b_{l_{1}} &\geq C \\
                  &\vn_{l_{2}} \cdot \vo + b_{l_{2}} &\geq C \\
                  &\vn_{l_{3}} \cdot \vo + b_{l_{3}} &\geq C \\
                  &(\vo - \basis{k}{i}) \cdot \normal{k}{i,j,m}  &\geq C
\end{array}
\end{equation}
where $C$ is a small positive constant.
If \eqref{eq:app-winding-order-lp-1} has a solution, that means keypoints $i,j,m$ can be viewed in counterclockwise winding order.
And the feasibility linear program for solving eq.~\eqref{eq:ijm_order_feasible} is
\begin{equation}
  \label{eq:app-winding-order-lp-2}
\begin{array}{ll@{}l}
\text{find}  & \vo&\\
\text{subject to} &\vn_{l_{1}} \cdot \vo + b_{l_{1}} &\geq C \\
                  &\vn_{l_{2}} \cdot \vo + b_{l_{2}} &\geq C \\
                  &\vn_{l_{3}} \cdot \vo + b_{l_{3}} &\geq C \\
                  &(\vo - \basis{k}{i}) \cdot \normal{k}{i,j,m}  &\leq -C
\end{array}
\end{equation}
If \eqref{eq:app-winding-order-lp-2} has a solution, that means keypoints $i,j,m$ can be viewed in clockwise winding order.
To construct the winding order dictionary, we simply solve \eqref{eq:app-winding-order-lp-1} and \eqref{eq:app-winding-order-lp-1}
for all triplets, and record the results.

\subsection{Using Ray Tracing}
For shapes where we do not have access to analytical face equations, or non-convex shapes, ray tracing can be used instead.
The main insight is to use ray tracing to check visibilities of keypoints from a set of sampled camera locations around the model,
instead of using linear programs.

Given a shape model, we first discretize the volume surrounding it into voxels,
with each voxel center outside the model representing a potential optical center for ray tracing.
For each optical center, we perform ray tracing with all keypoints to determine the set of visible keypoints.
We store the results of ray tracing into a dictionary of 3D boolean arrays, with keypoints as keys.
{For any keypoint, its boolean array represents the visibility of it from each optical center.}

{
After we have the dictionary, for an arbitrary triplet of keypoints $i$, $j$ and $m$,
we can check whether the constraint $\vo \in \CovisSet$ can be met by performing an \texttt{AND}
operation on the boolean visibility arrays of the three keypoints.
If the resulting array after \texttt{AND} has at least one \texttt{True} value,
that means there exists one optical center such that all three keypoints are visible.
Hence $\vo \in \CovisSet$ can be satisfied.
}
The feasibility problems of eq.~\eqref{eq:ijm_order_feasible} and eq.~\eqref{eq:imj_order_feasible}
can then be checked by calculating $(\vo - \basis{k}{i}) \cdot \normal{k}{i,j,m}$ for all optical centers in $\CovisSet$.

\subsection{Learning From 2D Annotations}
In cases where ground-truth 2D keypoints annotations are available, we may also construct winding order dictionaries by learning.
We first construct a dictionary $D$ with keypoint triplets as keys and empty sets as values.
For each ground-truth annotated training image,
we calculate the winding order following Proposition~\ref{prop:compute-winding-order} for each triplet $(i,j,m)$ in the image,
and push the result into the $(i,j,m)$ entry of $D$.
Since the values are sets, only unique winding orders will be preserved.
After enumerating through all the training images, if there are triplets that were not found during training, we use ray tracing to estimate the feasible winding orders.
We use this approach to construct the winding order dictionary for our experiments on the \apollo dataset.

\section{Problem~\eqref{eq:probOutFree-3D3Dcatlevel} is a MAP Estimator when the Measurement Noise is Gaussian}
\label{sec:app-mapOutlierFree}

Here we prove that the optimization in eq.~\eqref{eq:probOutFree-3D3Dcatlevel} is a 
\emph{maximum a posteriori} (MAP) estimator when the measurement noise $\epsThree{i}$ in~\eqref{eq:generativeModel-3d}
follows a zero-mean Gaussian with covariance $\frac{1}{w_i} \eye_3$ (where $\eye_3$ is the 3-by-3 identity matrix) and we have a zero-mean Gaussian prior with covariance $\frac{1}{\lambda} \eye_K $ over the shape parameters $\vc$. 
Mathematically:
\bea
\label{eq:distMAP1a}
\!\!\prob{\epsThree{i}} = \kappa_\epsilon \exp\left(\!-\frac{w_i}{2}
\|\epsThree{i}\|^2\!
\right), \\
\label{eq:distMAP1b}
\prob{\vc} = 
\kappa_c \exp\left( \!-\frac{\lambda}{2} 
\|\vc\|^2\!
\right),
\eea
where $\kappa_\epsilon$ and $\kappa_c$ are suitable normalization constants that are irrelevant for the following derivation.

A MAP estimator for the unknown parameters $\vxx \triangleq \{\MR, \vt, \vc\}$ (belonging to a suitable domain $\domainX$) given measurements~$\measThree{i}$ ($i=1,\ldots,N$) is defined as the maximum of the posterior distribution
$\prob{ \vxx | \measThree{1}\;\ldots\; \measThree{N}}$:
\beq
\label{eq:MAP}
\argmax_{\vxx \in \domainX} \prob{ \vxx | \measThree{1}\;\ldots\; \measThree{N}}
= \\ 
\argmax_{\vxx \in \domainX} \prod_{i=1}^N \prob{ \measThree{i} | \vxx} \prob{ \vxx }
\eeq 
where on the right we applied Bayes rule and used the standard assumption of independent measurements.
Using~\eqref{eq:distMAP1a} and~\eqref{eq:generativeModel-3d} we obtain:
\bea
\label{eq:pyx}
\prob{ \measThree{i} | \vxx}  = \kappa_\epsilon \exp\left( -\frac{w_i}{2}
\left\| \measThree{i} \!-\! \MR \sum_{k=1}^{\nrShapes} c_{k} \basis{k}{i} \!-\! \vt \right\|^{2}
\right).
\eea
Moreover, assuming we only have a prior on $\vc$:
\bea
\label{eq:px}
\prob{ \vxx } = \prob{\vc} = 
\kappa_c \exp\left( -\frac{\lambda}{2} 
\|\vc\|^2
\right).
\eea
Substituting~\eqref{eq:pyx} and~\eqref{eq:px} back into~\eqref{eq:MAP} and 
observing that the maximum of the posterior is the same as the minimum of the negative logarithm of the 
posterior:
\bea
\argmax_{\vxx \in \domainX} \prod_{i=1}^N \prob{ \measThree{i} | \vxx} \prob{ \vxx } =  \\
\argmin_{\vxx \in \domainX} \sum_{i=1}^N -\log\prob{ \measThree{i} | \vxx} -\log \prob{ \vxx } = \\
\argmin_{\substack{\MR \in \SOthree, \\ \vt \in \Real{3}, \vc \in \Real{\nrShapes}, \\ \ones\tran \vc  = 1 } } \sum_{i=1}^N \frac{w_i}{2} 
\left\| \measThree{i} - \MR \sum_{k=1}^{\nrShapes} c_{k} \basis{k}{i} - \vt \right\|^{2} \\
 +\frac{\lambda}{2} 
\|\vc\|^2 + \text{constants}
\eea 
which, after dropping constant multiplicative and additive factors, can be seen to match eq.~\eqref{eq:probOutFree-3D3Dcatlevel}, proving the claim.

\section{Certifiably Optimal Rotation Estimation: \\ Proof of Proposition~\ref{prop:optRotation}}
\label{sec:app-rotEst-details}

Let us first develop the cost function of problem~\eqref{eq:nonconvexR} as a quadratic function of $\vr \triangleq \vectorize{\MR}$:
\bea
\norm{\MM (\eye_N \kron \MR\tran)\barvy + \vh}^2 \\
= \norm{\MM \vectorize{\MR\tran \MY} + \vh}^2 \\
= \norm{\MM (\MY\tran \kron \eye_3) \vectorize{\MR\tran} + \vh}^2 \\
= \norm{\MM (\MY\tran \kron \eye_3) \MP \vr + \vh}^2 \\
= \vrhomo \tran \MQ \vrhomo
\eea
where $\MP \in \Real{9 \times 9}$ is the following permutation matrix
\bea
(1,1,1), 
(2,4,1),
(3,7,1), \\
(4,2,1),
(5,5,1),
(6,8,1), \\
(7,3,1),
(8,6,1),
(9,9,1),
\eea
with the triplet $(i,j,v)$ defining the nonzero entries of $\MP$ (\ie~$\MP_{ij} = v$), such that:
\bea
\vectorize{\MR\tran} \equiv \MP \vectorize{\MR}
\eea 
always holds, $\MY$ and $\vrhomo$ are defined as: 
\bea
\MY \triangleq \bmat{ccc}
\barvy_{1} & \cdots & \barvy_{N}
\emat \in \Real{3 \times N}, \\
\vrhomo \triangleq \bmat{cc} 1 & \vr\tran \emat\tran \in \Real{10},
\eea
and $\MQ \in \sym{10}$ can be assembled as follows:
\bea
 \MQ \triangleq 
\bmat{cc}
\vh\tran \vh & \vh\tran\MM(\MY\tran\kron\eye_3)\MP \\
\star & \MP\tran(\MY \kron \eye_3)\MM\tran \MM (\MY\tran \kron \eye_3) \MP
\emat.
\eea

Now that the objective function of~\eqref{eq:nonconvexR} is quadratic in $\vr$ ($\MR$), we can write problem~\eqref{eq:nonconvexR} equivalently as the \emph{quadratically constrained quadratic program} (QCQP) in~\eqref{eq:categoryQCQP},
where $\MA_i \in \sym{10}, i=1,\dots,15$, are the constant matrices that define the quadratic constraints associated with $\MR \in \SOthree$~\cite[Lemma 5]{Yang20cvpr-shapeStar}. For completeness, we give the expressions for $\MA_i$'s:
\bea
\MA_0: (1,1,1) \nonumber \\
\MA_1-\MA_3: \text{ columns have unit norm} \nonumber \\
\MA_1: (1,1,1),(2,2,-1),(3,3,-1),(4,4,-1) \nonumber \\
\MA_2: (1,1,1),(5,5,-1),(6,6,-1),(7,7,-1) \nonumber \\
\MA_3: (1,1,1),(8,8,-1),(9,9,-1),(10,10,-1) \nonumber %
\eea
\bea
\MA_4-\MA_6: \text{ columns are mutually orthogonal} \nonumber \\
\MA_4: (2,5,1),(3,6,1),(4,7,1) \nonumber \\
\MA_5: (2,8,1),(3,9,1),(4,10,1) \nonumber \\
\MA_6: (5,8,1),(6,9,1),(7,10,1) \nonumber %
\eea
\bea
\MA_7-\MA_{15}: \text{ columns form right-handed frame} \nonumber \\
\MA_7: (3,7,1),(4,6,-1),(1,8,-1) \nonumber \\
\MA_8: (4,5,1),(2,7,-1),(1,9,-1) \nonumber \\
\MA_9: (2,6,1),(1,10,-1),(3,5,-1) \nonumber \\
\MA_{10}: (6,10,1),(1,2,-1),(7,9,-1) \nonumber \\
\MA_{11}: (7,8,1),(5,10,-1),(1,3,-1) \nonumber \\
\MA_{12}: (5,9,1),(1,4,-1),(6,8,-1) \nonumber \\
\MA_{13}: (4,9,1),(3,10,-1),(1,5,-1) \nonumber \\
\MA_{14}: (2,10,1),(1,6,-1),(4,8,-1) \nonumber \\
\MA_{15}: (3,8,1),(2,9,-1),(1,7,-1) \nonumber
\eea
where the triplets $(i,j,v)$ define the \emph{diagonal and upper triangular} nonzero entries of a symmetric matrix (\ie~$\MA_{ij} = \MA_{ji} = v$ with $i \leq j$).

\section{Shor's Semidefinite Relaxation: \\ Proof of Proposition~\ref{prop:optRotation} }
\label{sec:app-sdp-relaxation-gap}

\begin{proof}
To see why problem~\eqref{eq:categoryQCQPrelax} is a convex relaxation for problem~\eqref{eq:categoryQCQP}, let us first create a matrix variable
\bea \label{eq:MXfactor}
\MX = \vrhomo \vrhomo\tran \in \sym{10},
\eea
and notice that $\MX$ satisfies
\bea
\MX \succeq 0, \quad \rank{\MX} = 1.
\eea
Moreover, if $\MX \succeq 0, \rank{\MX} = 1$ then $\MX$ must have a factorization of the form~\eqref{eq:MXfactor}. Therefore, the non-convex QCQP~\eqref{eq:categoryQCQP} is equivalent to the following rank-constrained matrix optimization problem:
\bea \label{eq:rankconstrained}
\min_{\MX \in \sym{10}} & \trace{\MQ \MX} \\
\subject & \trace{\MA_0 \MX} = 1,\\
&  \trace{\MA_i \MX} = 0, \forall i=1,\dots,15,\\
& \MX \succeq 0, \\
& \rank{\MX} = 1, \label{eq:rankoneconstraint}
\eea
where $\MA_0 \in \sym{10}$ is an all-zero matrix except the top-left entry being 1 (to enforce that the first entry of $\vrhomo$ is 1), and we have used the fact that
\bea
\vrhomo\tran \MA \vrhomo = \trace{\vrhomo\tran \MA \vrhomo} = \trace{\MA \vrhomo \vrhomo\tran} = \trace{\MA \MX}.
\eea 
Now observe that the only nonconvex constraint in problem~\eqref{eq:rankconstrained} is the rank constraint~\eqref{eq:rankoneconstraint}, and the SDP relaxation~\eqref{eq:categoryQCQPrelax} is obtained by simply removing the rank constraint. 
\end{proof}

In practice, we solve the convex problem~\eqref{eq:categoryQCQPrelax} and obtain an optimal solution $\MX^\star$;
 if $\rank{\MX^\star} = 1$, then the optimal solution of problem~\eqref{eq:categoryQCQPrelax} is unique (the rationale behind this is that interior-point methods converge to a maximum rank solution~\cite{DeKlerk06book-IPMSDP}) and it actually satisfies the rank constraint that has been dropped. Therefore, in this situation, we say the convex relaxation is tight and the global optimal solution to the nonconvex problem~\eqref{eq:categoryQCQP} can be obtained from the rank-one factorization of $\MX^\star$.

\section{Rounding and Suboptimality Gap for~\eqref{eq:categoryQCQPrelax}}
\label{sec:app-roundingAndSuboptimality3D3D}

This appendix discusses how to compute a feasible rotation estimate and the corresponding suboptimality gap from the solution $\MXstar$  of the SDP relaxation~\eqref{eq:categoryQCQPrelax}.
Let $\fstar$ be the optimal objective value of the SDP \eqref{eq:categoryQCQPrelax}, and $\MXstar = \sum_{i=1}^{10} \gamma_i \vu_i \vu_i\tran$ be the spectral decomposition of $\MXstar$ with $\gamma_1 \geq \dots \geq \gamma_{10}$.
 We define a rounding procedure
\bea\label{eq:roundingshor}
\vu_i \leftarrow \frac{\vu_1}{\vu_1(1)}, \quad \hatMR = \proj_{\SOthree}(\vu_1(\vr))
\eea
that extracts a feasible point $\hatMR$ to \eqref{eq:categoryQCQP} from the leading eigenvector $\vu_1$ of $\MXstar$. In \eqref{eq:roundingshor}, $\vu_1(\cdot)$ extracts the entries of $\vu_1$ using the indices of ``$\cdot$'' in $\vrhomo$, and $\proj_{\SOthree}$ represents the projection onto $\SOthree$. Given $\hatMR$, denote the objective value of \eqref{eq:categoryQCQP} at $\hatMR$ as $\hatp$; we compute a \emph{relative suboptimality} as
\bea\label{eq:relativesubopt}
\eta = \abs{\hatp - \fstar} / (1 + \abs{\hatp} + \abs{\fstar})
\eea
to evaluate the quality of the feasible solution. Apparently, $\eta = 0$ certifies the global optimality of $\hatMR$.

\section{Alternation Approach}
\label{sec:app-alternationApproach}

In Sections~\ref{sec:shape-cat-level}-\ref{sec:rotation-cat-level} of the main paper, we presented a certifiably optimal solver to solve the shape and rotation $(\vc,\MR)$ problem~\eqref{eq:translation-free-problem} (after eliminating the translation $\vt$). Here we describe a baseline method that solves problem~\eqref{eq:translation-free-problem} using \emph{alternating minimization} (\altern), a heuristic that is popular in related works on 3D shape reconstruction from 2D landmarks~\cite{Lin14eccv-modelFitting,Gu06cvpr-faceAlignment,Ramakrishna12eccv-humanPose}, but offers no optimality guarantees. Towards this goal, let us denote the cost function of~\eqref{eq:translation-free-problem} as $f(\MR,\vc)$; the \altern method starts with an initial guess $(\MR^{(0)}, \vc^{(0)})$ (default $\MR^{(0)} = \eye_3, \vc^{(0)} = \zero$), and performs the following two steps at each iteration $\tau$:
\begin{enumerate}
	\item Optimize $\vc$:
	\bea
	\vc^{(\tau)} = \argmin_{\vc \in \Real{\nrShapes}, \ones\tran \vc = 1} f(\MR^{(\tau-1)},\vc),
	\eea
	which is a linearly constrained linear least squares problem and can be solved by the closed-form solution~\eqref{eq:optimalvcofR}.

	\item Optimize $\MR$:
	\bea
	\MR^{(\tau)} = \argmin_{\MR \in \SOthree} f(\MR, \vc^{(\tau)}),
	\eea
	which can be cast as an instance of Wahba's problem~\cite{Yang19iccv-QUASAR} and can be solved in closed form using singular value decomposition~\cite{markley1988jas-svdAttitudeDeter}.
\end{enumerate}
The \altern method stops when the cost function converges,~\ie~$|f(\MR^{(\tau)},\vc^{(\tau)}) - f(\MR^{(\tau-1)},\vc^{(\tau-1)})| < \epsilon$ for some small threshold $\epsilon > 0$, or when $\tau$ exceeds the maximum number of iterations (\eg~$1000$). %

\section{Rounding and Suboptimality Gap for~\eqref{eq:denserelax}}
\label{sec:app-roundingAndSuboptimality2D3D}

This section provides extra results related to Lasserre's Hierarchy of semidefinite relaxations and provides a 
rounding procedure to obtain a rotation and shape parameters estimate from the solution of the SDP~\eqref{eq:denserelax}.

\begin{corollary}[Optimality Certification from Lasserre's Hierarchy~\cite{Lasserre01siopt-LasserreHierarchy}]
\label{cor:certificateLasserre}
Let $\pstar$ and $\fstar$ be the optimal objectives of \eqref{eq:2d3dp2ltf} and \eqref{eq:denserelax}, respectively, and let $\MXstar = (\MXstar_0,\dots,\MXstar_K)$ be an optimal solution of \eqref{eq:denserelax}, we have
\begin{enumerate}[label=(\roman*)]
\item $\fstar \leq \pstar$,
\item if $\rank{\MXstar_0} = 1$, then $\fstar = \pstar$, and $\MXstar_0$ can be factorized as $\MXstar_0 = [\vxxstar]_2[\vxxstar]_2\tran$, where $\vxxstar = [\vectorize{\MRstar}\tran,(\vcstar)\tran]\tran$ is a globally optimal for \eqref{eq:2d3dp2ltf}.
\end{enumerate}
\end{corollary}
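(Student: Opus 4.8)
The plan is to treat Corollary~\ref{cor:certificateLasserre} as the standard pair of facts about Lasserre's order-2 moment relaxation applied to the POP~\eqref{eq:popgeneral}: weak duality via lifting, and rank-one extraction. I would rely on the correspondence between~\eqref{eq:popgeneral} and the multi-block SDP~\eqref{eq:denserelax} detailed in~\cite[Section 2]{Yang22pami-certifiablePerception}, taking as given its structural properties: the block $\MXstar_0$ is the order-2 moment matrix whose rows and columns are indexed by the monomials in $\vxx$ of degree at most $2$; the affine constraints $\calA(\MX)=\vb$ encode (a)~the normalization that the entry of $\MX_0$ at the constant-monomial position equals $1$, (b)~the Hankel/moment-consistency identities equating entries of $\MX_0$ associated with the same monomial, (c)~the localizing identities relating $\MX_0$ to the blocks $\MX_j$, $j\geq 1$ (the localizing matrices of the $g_j$ and of the redundant constraint $\vc\tran\vc\leq 1$), and (d)~the identities coming from the equality constraints $h_i(\vxx)=0$; and $\inprod{\MC}{\MX}=\inprod{\MC_0}{\MX_0}$ evaluates to $p(\vxx)$ whenever $\MX_0=[\vxx]_2[\vxx]_2\tran$, which is legitimate because, by inspection of~\eqref{eq:2d3dp2ltf}, $p$ has degree $4=2\cdot 2$.

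\textbf{Item (i).} I would exhibit the lifting map. Given any feasible point of~\eqref{eq:popgeneral}, i.e.\ $\vxx=[\vectorize{\MR}\tran,\vc\tran]\tran$ with $\MR\in\SOthree$, $\vc\in\Delta_K$, $h_i(\vxx)=0$, $g_j(\vxx)\geq 0$, set $\MX_0=[\vxx]_2[\vxx]_2\tran$ and let each $\MX_j$ be the corresponding truncated localizing matrix $g_j(\vxx)\,[\vxx]_{d_j}[\vxx]_{d_j}\tran$ (and analogously for $\vc\tran\vc\leq 1$). Then $\MX_0\succeq 0$ trivially, $\MX_j\succeq 0$ as a nonnegative scalar times a rank-one PSD matrix, and every constraint in $\calA(\MX)=\vb$ holds because it is a polynomial identity evaluated at a genuine point. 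Hence this tuple is feasible for~\eqref{eq:denserelax} with objective $\inprod{\MC_0}{\MX_0}=p(\vxx)$, so $\fstar$ is no larger than $p$ at any POP-feasible $\vxx$, giving $\fstar\leq\pstar$.

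\textbf{Item (ii).} Assume $\rank{\MXstar_0}=1$ and write $\MXstar_0=\vv\vv\tran$ with $\vv\in\Real{n_0}$. The normalization constraint forces the constant-monomial entry $\vv_1^2=1$, so after possibly flipping the sign of $\vv$ we take $\vv_1=1$. Indexing the coordinates of $\vv$ by monomials, the Hankel identities equate, for every pair $\alpha,\beta$, the $(x_\alpha,x_\beta)$ entry of $\MXstar_0$ with its $(1,x_\alpha x_\beta)$ entry; since $\MXstar_0=\vv\vv\tran$ this reads $\vv_{x_\alpha}\vv_{x_\beta}=\vv_1\vv_{x_\alpha x_\beta}=\vv_{x_\alpha x_\beta}$. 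Setting $\vxxstar_i\doteq\vv_{x_i}$, this shows $\vv=[\vxxstar]_2$, hence $\MXstar_0=[\vxxstar]_2[\vxxstar]_2\tran$. The equality-localizing identities paired with the constant monomial give $h_i(\vxxstar)=0$ for all $i$; the inequality-localizing identities identify the $(1,1)$ entry of $\MXstar_j$ with $g_j(\vxxstar)$ (resp.\ with $1-\vcstar\tran\vcstar$), which is $\geq 0$ as a diagonal entry of the PSD matrix $\MXstar_j$. Thus $\vxxstar$ is feasible for~\eqref{eq:popgeneral}; reshaping $\vectorize{\MRstar}$ gives $\MRstar\in\SOthree$ and $\vcstar\in\Delta_K$, so $\vxxstar$ is feasible for~\eqref{eq:2d3dp2ltf}, with cost $p(\vxxstar)=\inprod{\MC_0}{[\vxxstar]_2[\vxxstar]_2\tran}=\inprod{\MC}{\MXstar}=\fstar$. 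Combining with~(i), $\pstar\leq p(\vxxstar)=\fstar\leq\pstar$, so all are equalities: $\fstar=\pstar$ and $\vxxstar$ is globally optimal for~\eqref{eq:2d3dp2ltf}.

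\textbf{Main obstacle.} The only delicate step is the flattening in~(ii): one must be certain that the linear constraints generated by the implementation underlying~\cite{Yang22pami-certifiablePerception} contain enough Hankel identities to pin down \emph{all} entries of $\MXstar_0$ as products of the degree-$1$ block, so that the rank-one factor is a bona fide vector of monomials rather than an abstract rank-one direction, and that the localizing blocks for $c_k\geq 0$ and $\vc\tran\vc\leq 1$ are correctly linked to $\MXstar_0$. I would dispatch this by citing the construction and its stated properties in~\cite[Section 2]{Yang22pami-certifiablePerception} rather than re-deriving them, reducing the proof to the three explicit checks above ($\vv=[\vxxstar]_2$, $h_i(\vxxstar)=0$, $g_j(\vxxstar)\geq 0$). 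Note also that item~(i) and the factorization claim of item~(ii) already appear in Corollary~\ref{cor:lasserre-relax}, so in the write-up I would cross-reference that statement and only add the short weak-duality and global-optimality argument here.
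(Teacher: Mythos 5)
Your proposal is correct. Note that the paper itself gives no proof of this corollary: it is stated as a standard consequence of Lasserre's hierarchy and dispatched by citation to \cite{Lasserre01siopt-LasserreHierarchy} and \cite[Section 2]{Yang22pami-certifiablePerception}, exactly as you anticipate in your closing remark. Your write-up simply fills in the standard argument behind that citation --- the lifting map $\vxx \mapsto ([\vxx]_2[\vxx]_2\tran,\, g_j(\vxx)[\vxx]_{d_j}[\vxx]_{d_j}\tran)$ for weak duality, and the rank-one/Hankel flattening for extraction --- and both steps are sound, including your correct identification that the only load-bearing assumption is that the automatically generated constraints $\calA(\MX)=\vb$ contain the full set of moment-consistency and localizing identities, which is properly deferred to the cited construction. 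The argument is also consistent in spirit with the paper's explicit proof of the analogous Shor-relaxation result (Corollary~\ref{cor:optRotation-relax}), which likewise proceeds by lifting to a rank-constrained matrix program and dropping the rank constraint.
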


\myParagraph{Rounding}
{Empirically, we observe that solving the SDP \eqref{eq:denserelax} empirically yields a rank-one optimal solution, and hence it typically allows retrieving the global solution of the non-convex problem \eqref{eq:2d3dp2ltf}} per Corollary~\ref{cor:certificateLasserre}.
Even when the SDP solution is not rank-one, we can ``round'' a feasible solution to \eqref{eq:2d3dp2ltf} from $\MXstar_0$. To do so, let $\MXstar_0 = \sum_{i=1}^{n_0}\gamma_i \vu_i \vu_i\tran$ be the spectral decomposition of $\MXstar_0$ with $\gamma_1 \geq \gamma_2 \geq \dots \geq \gamma_{n_0}$. 
To extract a feasible point $(\hatMR,\hatvc)$ for \eqref{eq:2d3dp2ltf} from the leading eigenvector $\vu_1$,
we follow
\bea \label{eq:rounding}
\vu_1 \leftarrow \frac{\vu_1}{\vu_1(1)},
\hatMR = \proj_{\SOthree}\parentheses{\vu_1(\vr)},
\hatvc = \proj_{\Delta_K}\parentheses{\vu_1(\vc)} 
\nonumber
\eea
 where $\proj_{\Delta_K}$ represents the projection onto $\Delta_K$~\cite{Wang13arxiv-projectionSimplex}. We can then evaluate the relative suboptimality of the rotation and shape estimate $(\hatMR,\hatvc)$ using \eqref{eq:relativesubopt}.

\section{Graduated Non-Convexity for \\ Category-level Perception}
\label{sec:app-gnc-category}

\myParagraph{Robust 3D-3D Category-level Perception}
As prescribed by standard robust estimation,
we can re-gain robustness to outliers by replacing
the squared $\ell_2$ norm in~\eqref{eq:probOutFree-3D3Dcatlevel} with a robust loss function $\rho$, leading to
\bea
\label{eq:robust-3D3Dcatlevel}
  \hspace{-5mm} \min_{\substack{\MR \in \SOthree, \\ \vt \in \Real{3}, \vc \in \Delta_\nrShapes } } &
  \displaystyle 
   \hspace{-3mm} \sum_{i=1}^{N} \rho \left( \left\| \measThree{i} \!-\! \MR \sum_{k=1}^{\nrShapes} c_{k} \basis{k}{i} \!-\! \vt \right\| \right) \!+\! \lambda \norm{\vc}^2 \hspace{-3mm}
\eea
While GNC can be applied to a broad class of loss functions~\cite{Yang20ral-GNC}, here 
we consider a truncated least square loss $\rho(r) = \min(r^2, \barcsqThree)$ which minimizes the squared
residuals whenever they are below $\barcsqThree$  or becomes constant otherwise (note: the constant $\inthrThree$ is the same inlier threshold of Section~\ref{sec:3d3d-comp-test}). Such cost function can be written by
using auxiliary slack variables $\rho(r) = \min(r^2, \barcsqThree) = \min_{\weight\in\{0,1\}} \weight r^2 + (1-\weight)\barcsqThree$~\cite{Yang20ral-GNC},
hence allowing to rewrite~\eqref{eq:robust-3D3Dcatlevel} as
\bea \label{eq:robust-3D3Dcatlevel2}
  \hspace{-3mm} \min_{\substack{\MR \in \SOthree, \\ \vt \in \Real{3}, \vc \in \Delta_{\nrShapes} \\ \weight_i\in\{0,1\} \forall i } } &
  \displaystyle 
   \hspace{-3mm} \sum_{i=1}^{N} \weight_i \left\| \measThree{i} \!-\! \MR \sum_{k=1}^{\nrShapes} c_{k} \basis{k}{i} \!-\! \vt \right\|^2
   \!\! \nonumber \\[-10pt]
   &\qquad \qquad +
   \!(1\!-\!\weight_i)\barcsqThree  \!+\! \lambda \norm{\vc}^2 \hspace{-3mm} \hspace{-3mm}
\eea
In~\eqref{eq:robust-3D3Dcatlevel2}, when $\weight_i = 1$, the $i$-th measurement is considered an inlier and the 
cost minimizes the corresponding squared residual; when $\weight_i = 0$, the cost becomes independent of $\measThree{i}$ hence the
corresponding measurement is rejected as an outlier. Therefore, equation~\eqref{eq:robust-3D3Dcatlevel2} simultaneously
estimates pose and shape variables $(\MR,\vt,\vc)$ while classifying inliers/outliers via the binary weights $\weight_i$ ($i=1,\ldots, N$).
Now the advantage is that we can minimize~\eqref{eq:robust-3D3Dcatlevel2} with an alternation 
scheme where we iteratively optimize  (i) over $(\MR,\vt,\vc)$ with fixed weights $\weight_i$ 
and (ii) over the weights $\weight_i$ with fixed $(\MR,\vt,\vc)$. 
This approach is convenient since the optimization over   $(\MR,\vt,\vc)$ can be solved to optimality with \PACEThree (see Section~\ref{sec:optimalSolver-3d3d}),
while the optimization of the weights can be solved in closed form~\cite{Yang20ral-GNC}.
To improve convergence of this alternation scheme, we adopt graduated non-convexity~\cite{Blake1987book-visualReconstruction,Yang20ral-GNC}, which starts with a convex approximation of the loss function in~\eqref{eq:robust-3D3Dcatlevel2} and then gradually increases the non-convexity until the  loss $\rho$ in~\eqref{eq:robust-3D3Dcatlevel2} is recovered.

\myParagraph{Robust 2D-3D Category-level Perception}
Similarly, we adopt the graduate non-convexity scheme to robustify our solver for Problem~\ref{prob:2d3d-statement}.
We reformulate~\eqref{eq:probOutFree-2D3Dcatlevel} with a truncated least square loss cost function $\rho(r) = \min(r^2, \barcsqTwo)$ into
\begin{align}\label{eq:robust-2D3Dcatlevel2}
\min_{\substack{\MR \in \SOthree, \\ \vt \in \Real{3}, \vc \in \Delta_\nrShapes \\ \weight_i\in\{0,1\} \forall i } } \sum_{i=1}^{N} \weight_i  &\left\| \measTwo{i} - \pi\left( \MR \textstyle\sum_{k=1}^{\nrShapes}\shapeParam{k} \basis{k}{i} + \vt \right) \right\|^{2} \nonumber \\[-20pt]
   &\qquad + (1-\weight_i)\barcsqTwo  + \lambda \norm{\vc}^2 %
\end{align}
To minimize~\eqref{eq:robust-2D3Dcatlevel2}, we adopt the same alternation scheme as for~\eqref{eq:robust-3D3Dcatlevel2},
where we iteratively optimize (i) over $(\MR,\vt,\vc)$ with fixed weights $\weight_i$
and (ii) over the weights $\weight_i$ with fixed $(\MR,\vt,\vc)$.
To tackle the optimization over $(\MR,\vt,\vc)$, we generate an initial solution by solving~\eqref{eq:2d3dp2l} using \PACETwo,
  and then locally refine the solution using the geometric reprojection error in~\eqref{eq:probOutFree-2D3Dcatlevel}. 
  {In \supplementary{sec:app-experiments-pace-2d} we show the local refinement improves the quality of the resulting estimates.}

\renewcommand{\mpwfour}{4.6cm}
\newcommand{\mpwtwo}{9.2cm}
\renewcommand{\myhspace}{\hspace{-3.5mm}}
\begin{figure*}[!t]
	\begin{center}
	\begin{minipage}{\textwidth}
	\begin{tabular}{cccc}%
		\myhspace \hspace{-3mm}
			\begin{minipage}{\mpwtwo}%
			\centering%
			\includegraphics[width=\columnwidth]{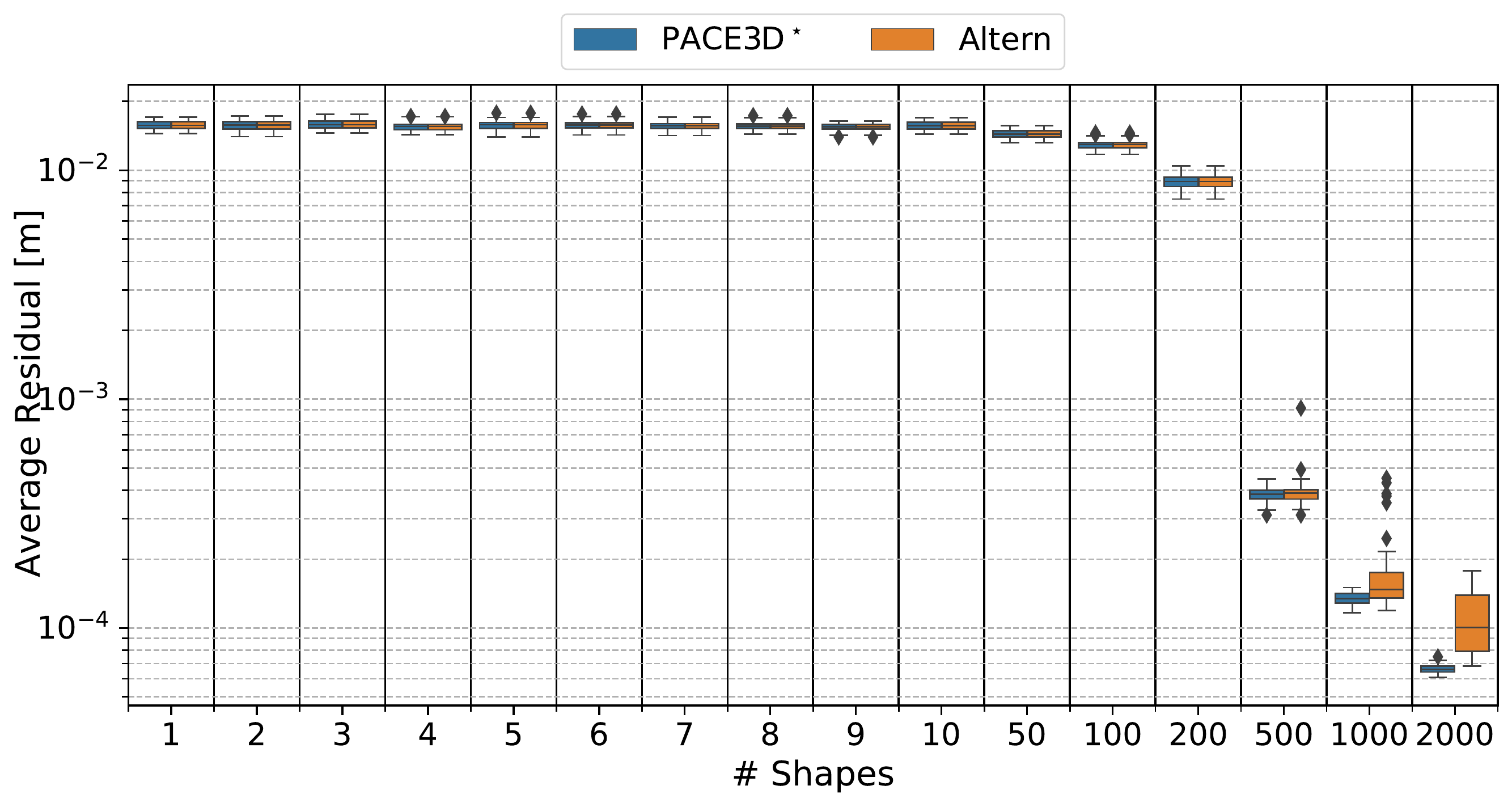}
			\end{minipage}
		&   \myhspace
			\begin{minipage}{\mpwtwo}%
			\centering%
			\includegraphics[width=\columnwidth]{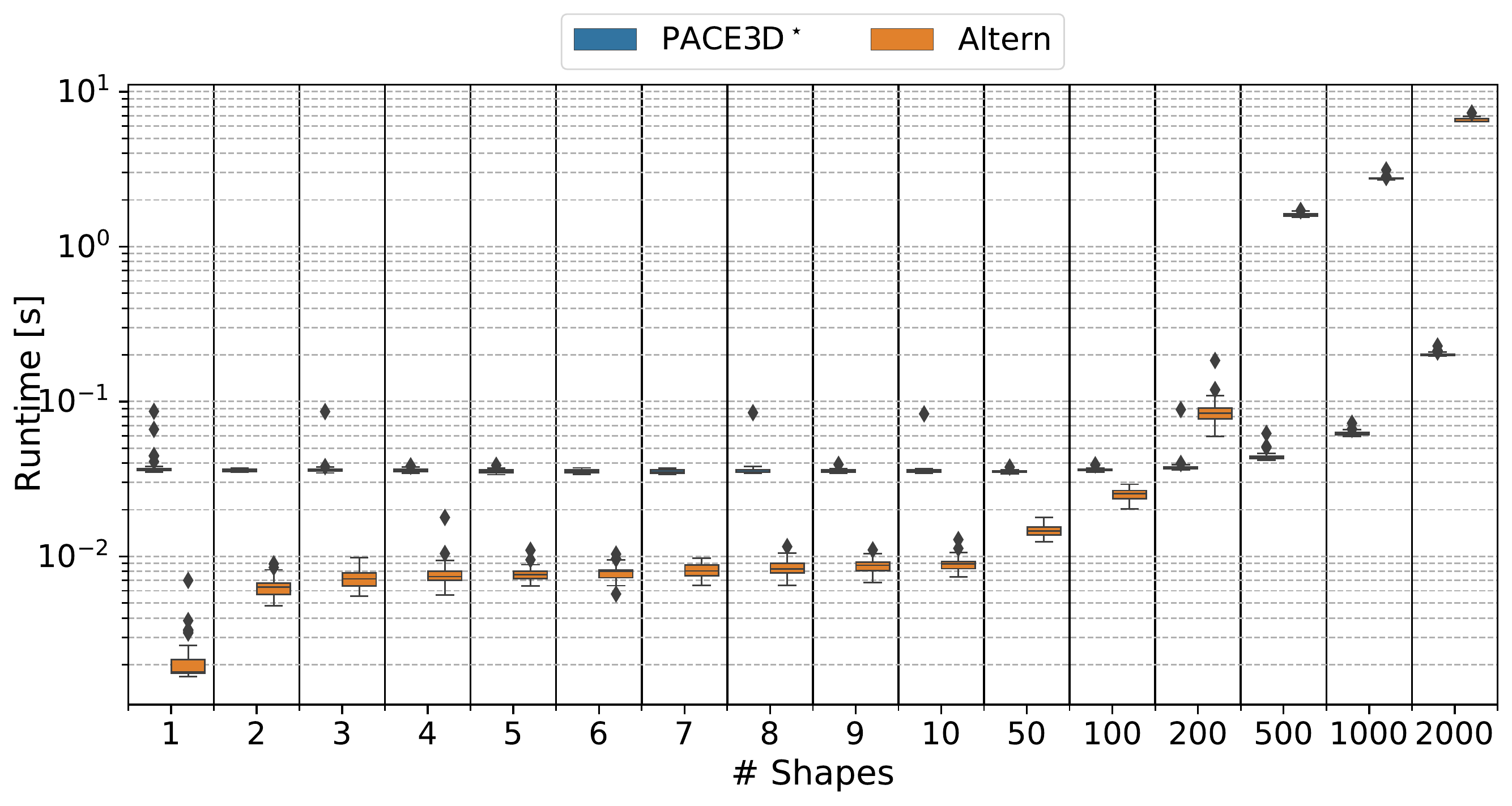}
			\end{minipage}
		\\
		\multicolumn{4}{c}{\smaller (a) Average residuals and runtimes of \PACEThree~on outlier-free synthetic data with varying number of shapes: $N=100$, $\sigma=0.01$. }
		\\
		\myhspace \hspace{-3mm}
			\begin{minipage}{\mpwtwo}%
			\centering%
			\includegraphics[width=\columnwidth]{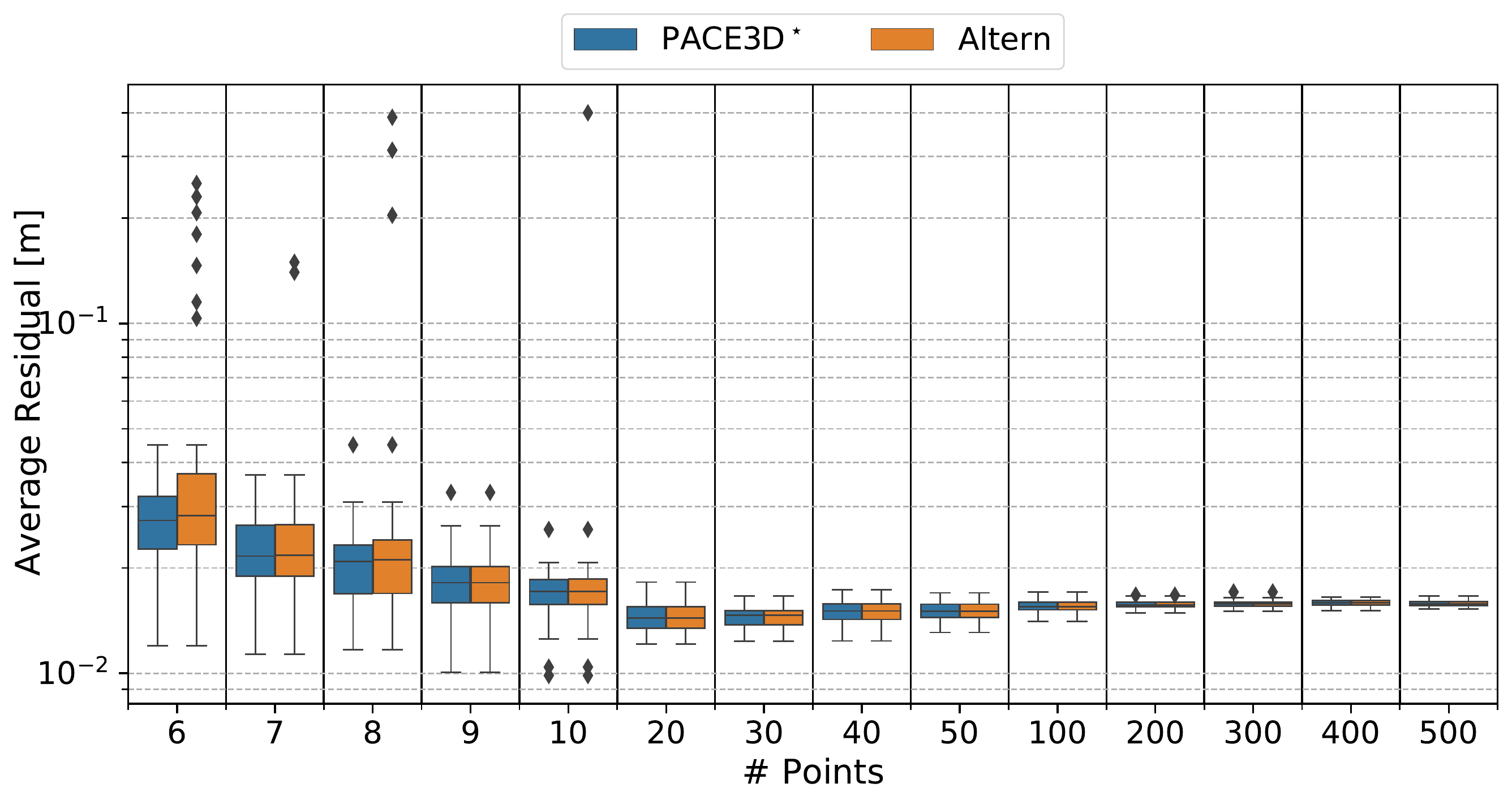}
			\end{minipage}
		&   \myhspace
			\begin{minipage}{\mpwtwo}%
			\centering%
			\includegraphics[width=\columnwidth]{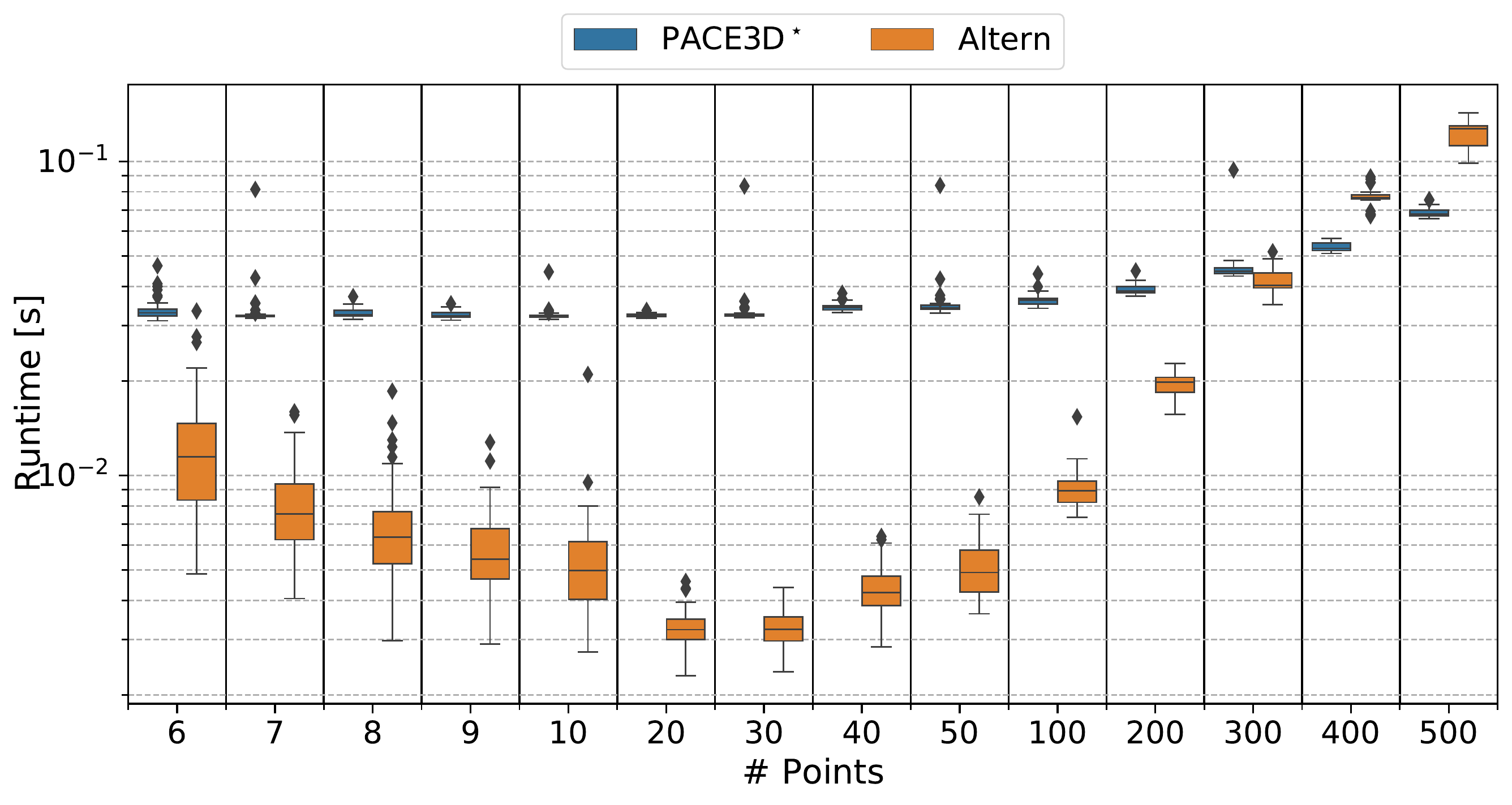}
			\end{minipage}
		\\
		\multicolumn{4}{c}{\smaller (b) Average residuals and runtimes of \PACEThree~on outlier-free synthetic data with varying number of points: $K=10$, $\sigma=0.01$. }
		\\
		\myhspace \hspace{-3mm}
			\begin{minipage}{\mpwtwo}%
			\centering%
			\includegraphics[width=\columnwidth]{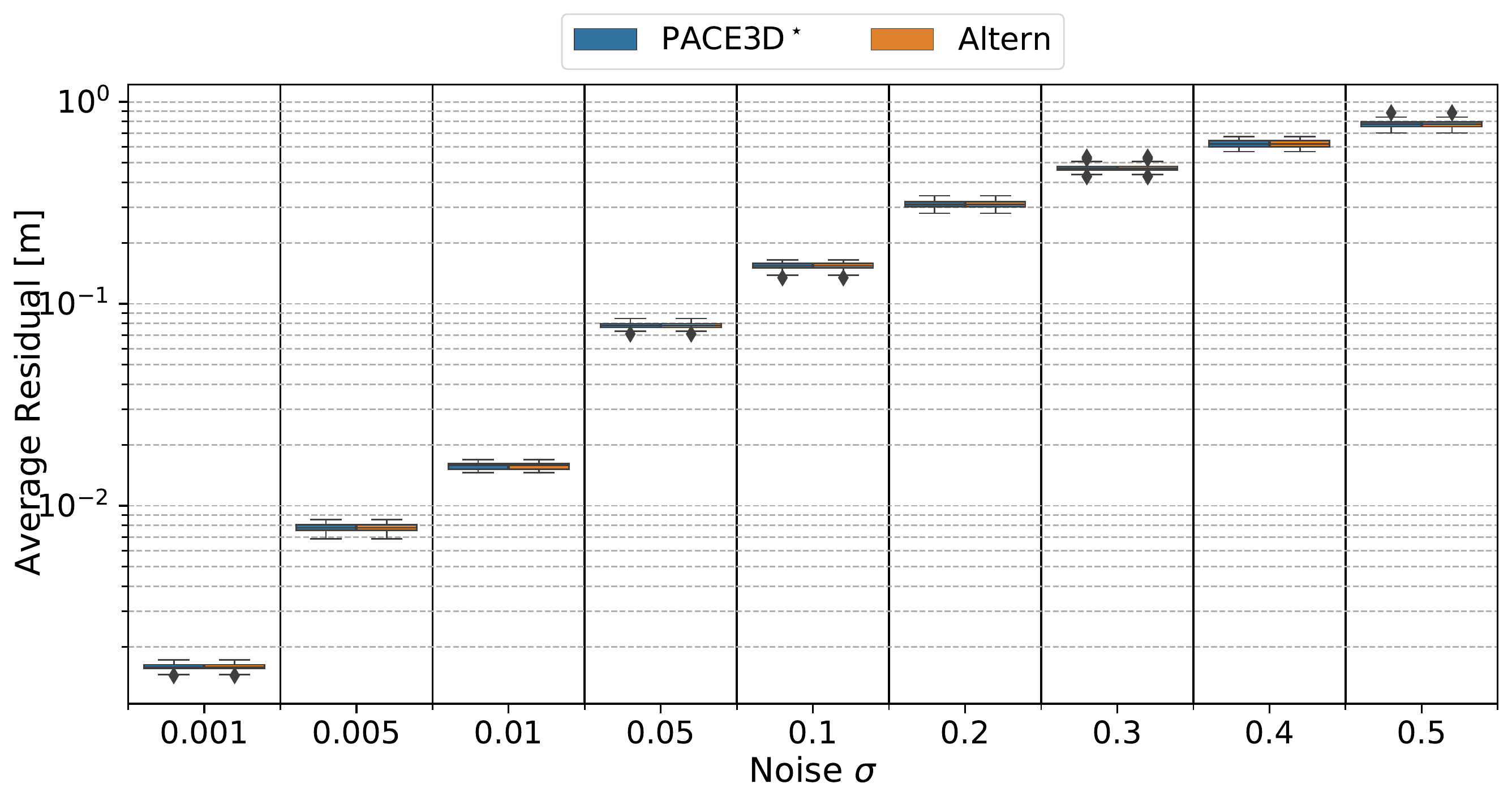}
			\end{minipage}
		&   \myhspace
			\begin{minipage}{\mpwtwo}%
			\centering%
			\includegraphics[width=\columnwidth]{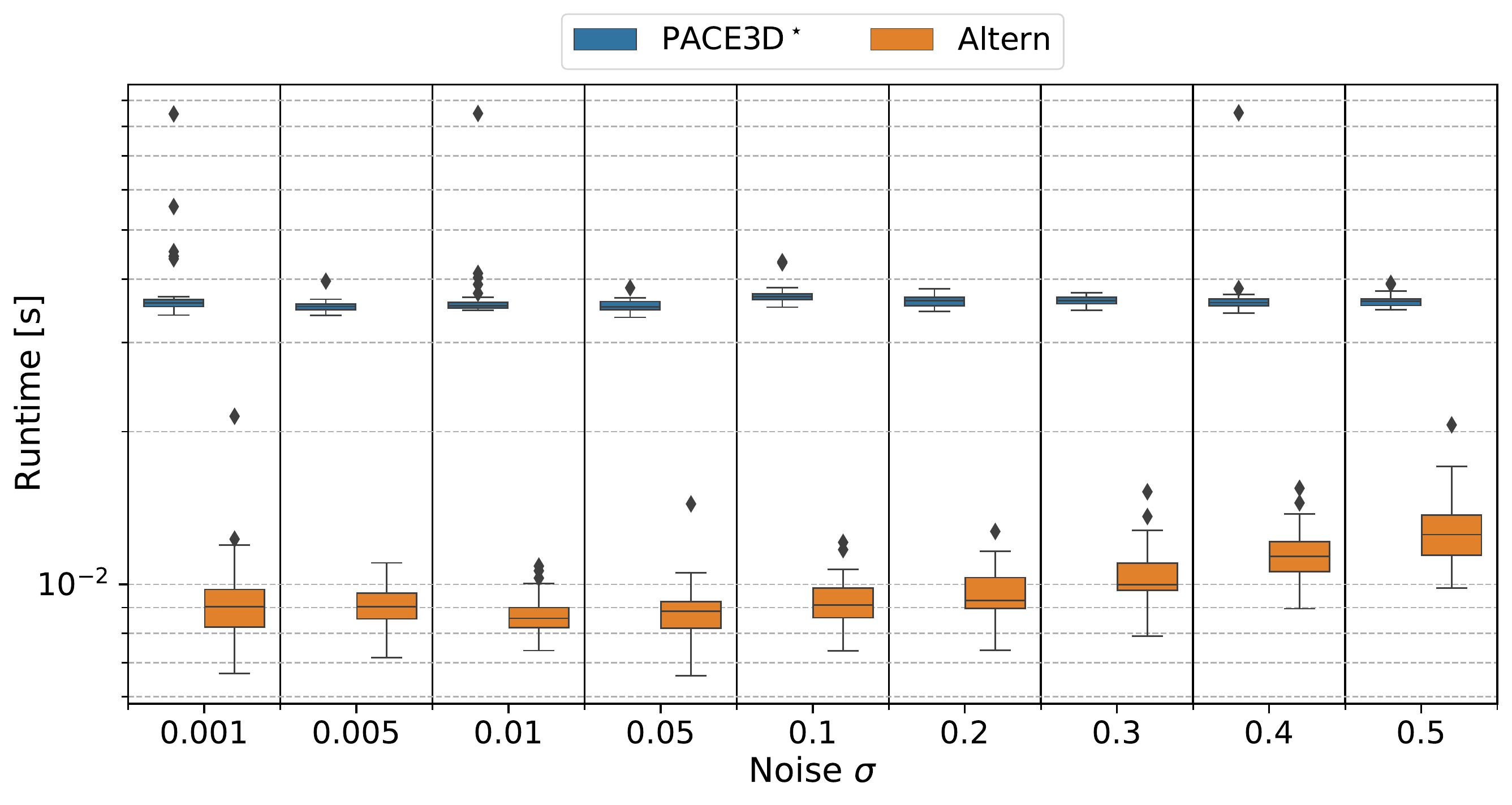}
			\end{minipage}
		\\
		\multicolumn{4}{c}{\smaller (c) Average residuals and runtimes of \PACEThree~on outlier-free synthetic data with varying noise levels: $N=100$, $K=10$. }
	\end{tabular}
	\end{minipage} 
	\caption{\JS{Residuals and runtimes of \PACEThree and \altern with (a) varying number of shapes, (b) varying number of points and (c) varying level of noise. Each boxplot reports statistics computed over 50 Monte Carlo runs.}
	\label{fig:app-simulation-optimality-residuals-runtime-3d}}
	\vspace{-4mm} 
	\end{center}
\end{figure*}

\begin{figure*}[!t]
	\begin{center}
	\begin{minipage}{\textwidth}
	\begin{tabular}{cccc}%
		\myhspace \hspace{-3mm}
			\begin{minipage}{\mpwfour}%
			\centering%
			\includegraphics[width=\columnwidth]{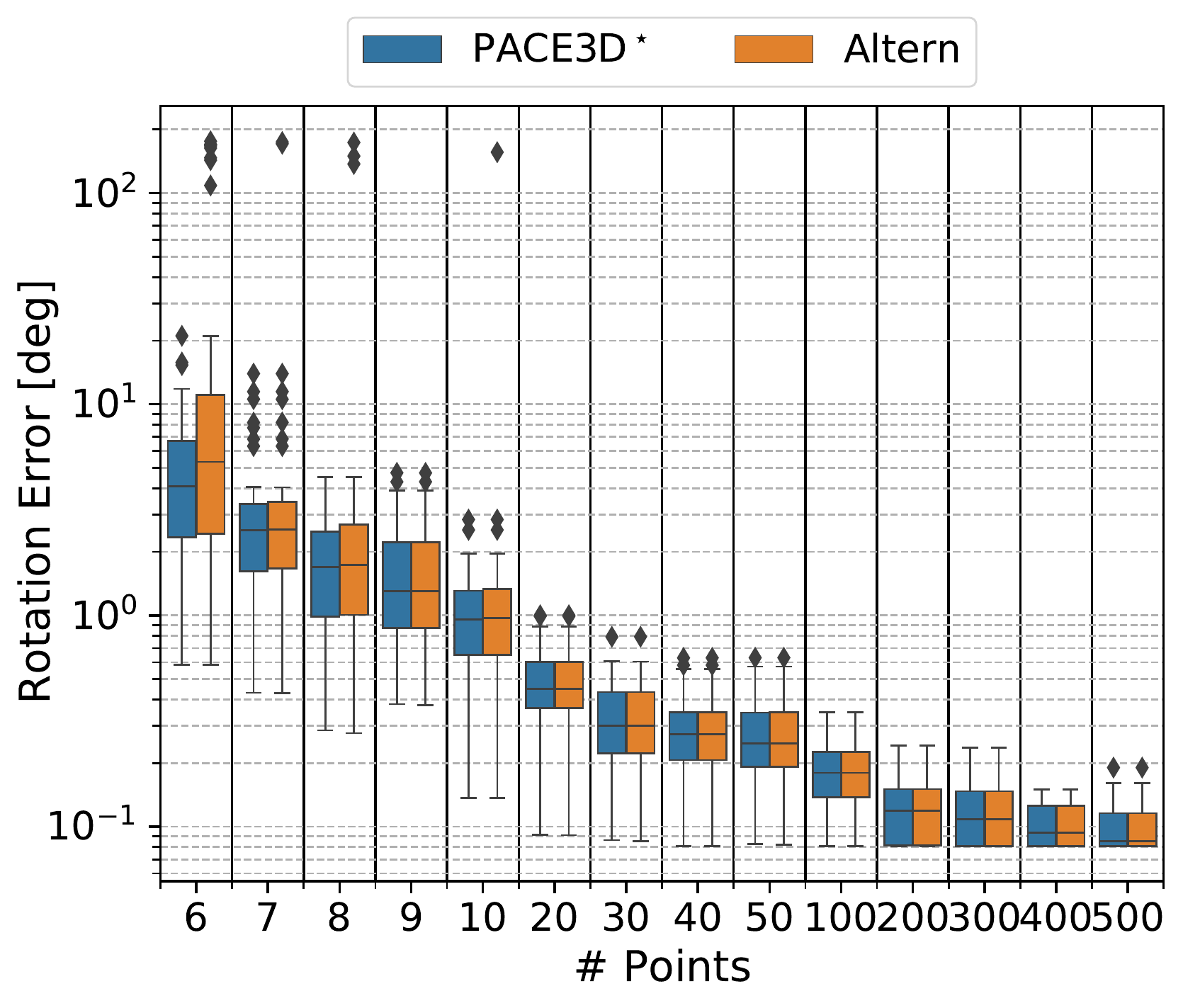}
			\end{minipage}
		&   \myhspace
			\begin{minipage}{\mpwfour}%
			\centering%
			\includegraphics[width=\columnwidth]{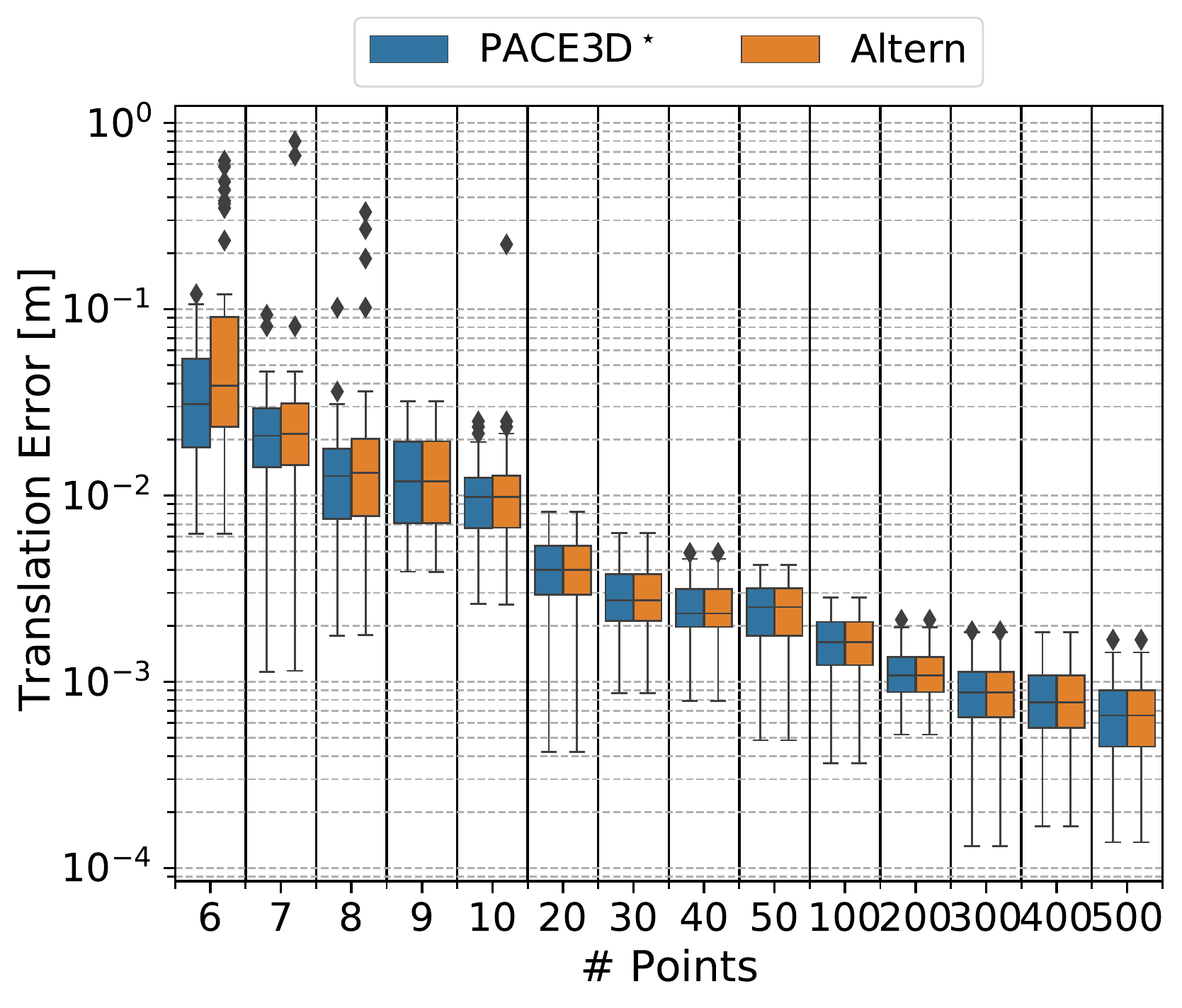}
			\end{minipage}
		&   \myhspace
			\begin{minipage}{\mpwfour}%
			\centering%
			\includegraphics[width=\columnwidth]{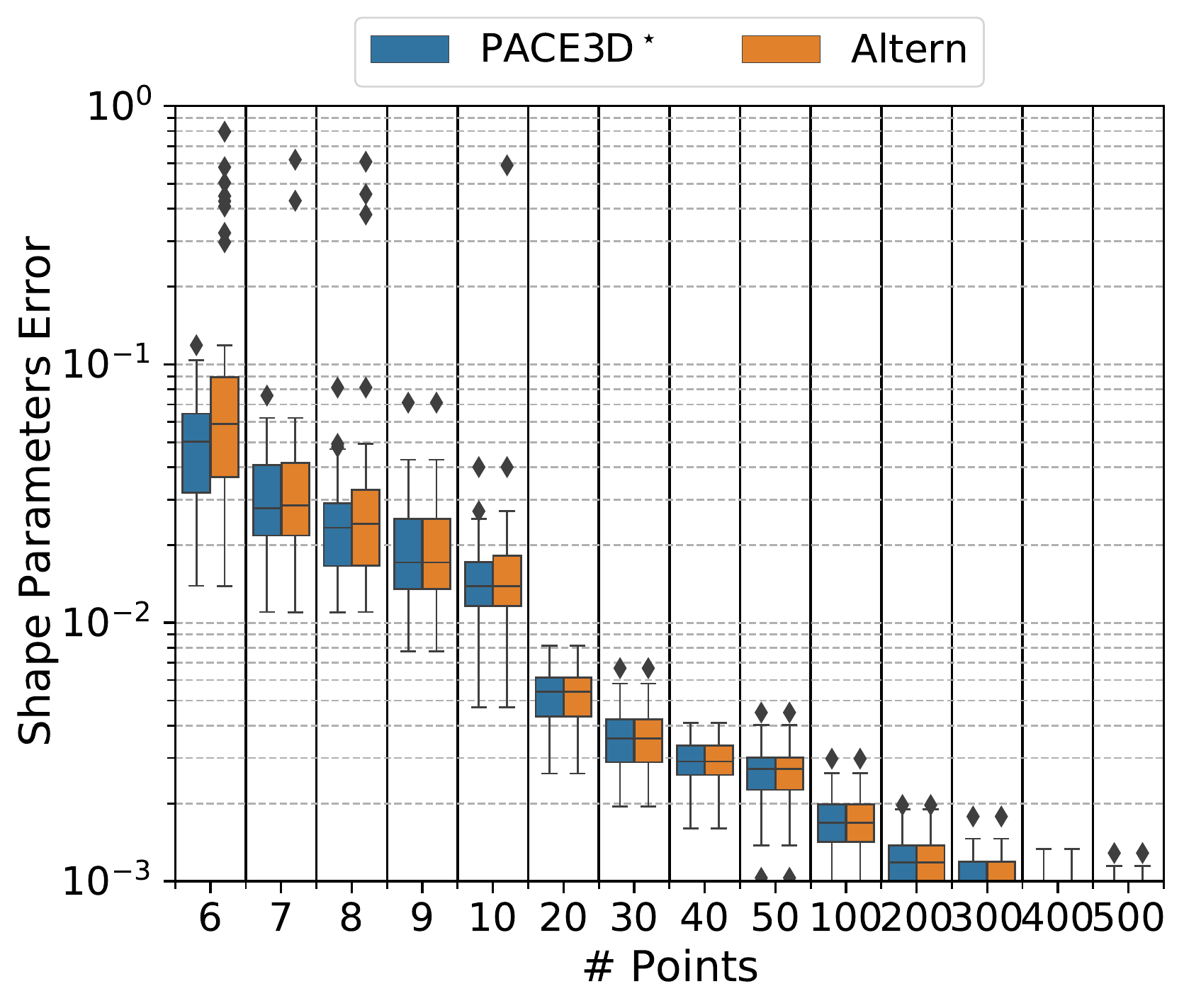}
			\end{minipage}
		&   \myhspace
			\begin{minipage}{\mpwfour}%
			\centering%
			\includegraphics[width=\columnwidth]{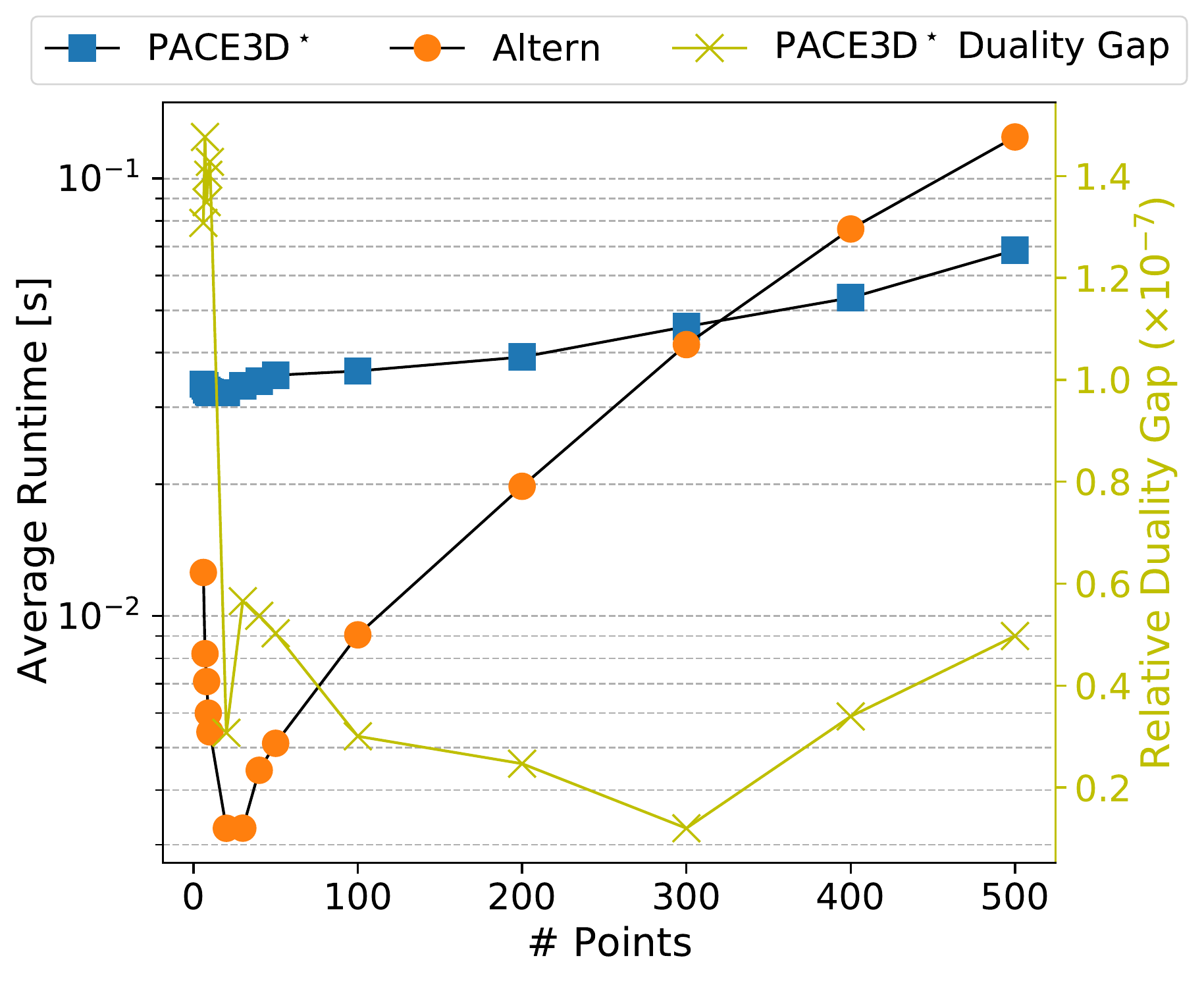}
			\end{minipage}
		\\
	  \multicolumn{4}{c}{\smaller (a) Performance of \PACEThree~on outlier-free synthetic data with varying number of shapes: $N=100$, $\sigma=0.01$. }
	  \\
		\myhspace \hspace{-3mm}
			\begin{minipage}{\mpwfour}%
			\centering%
			\includegraphics[width=\columnwidth]{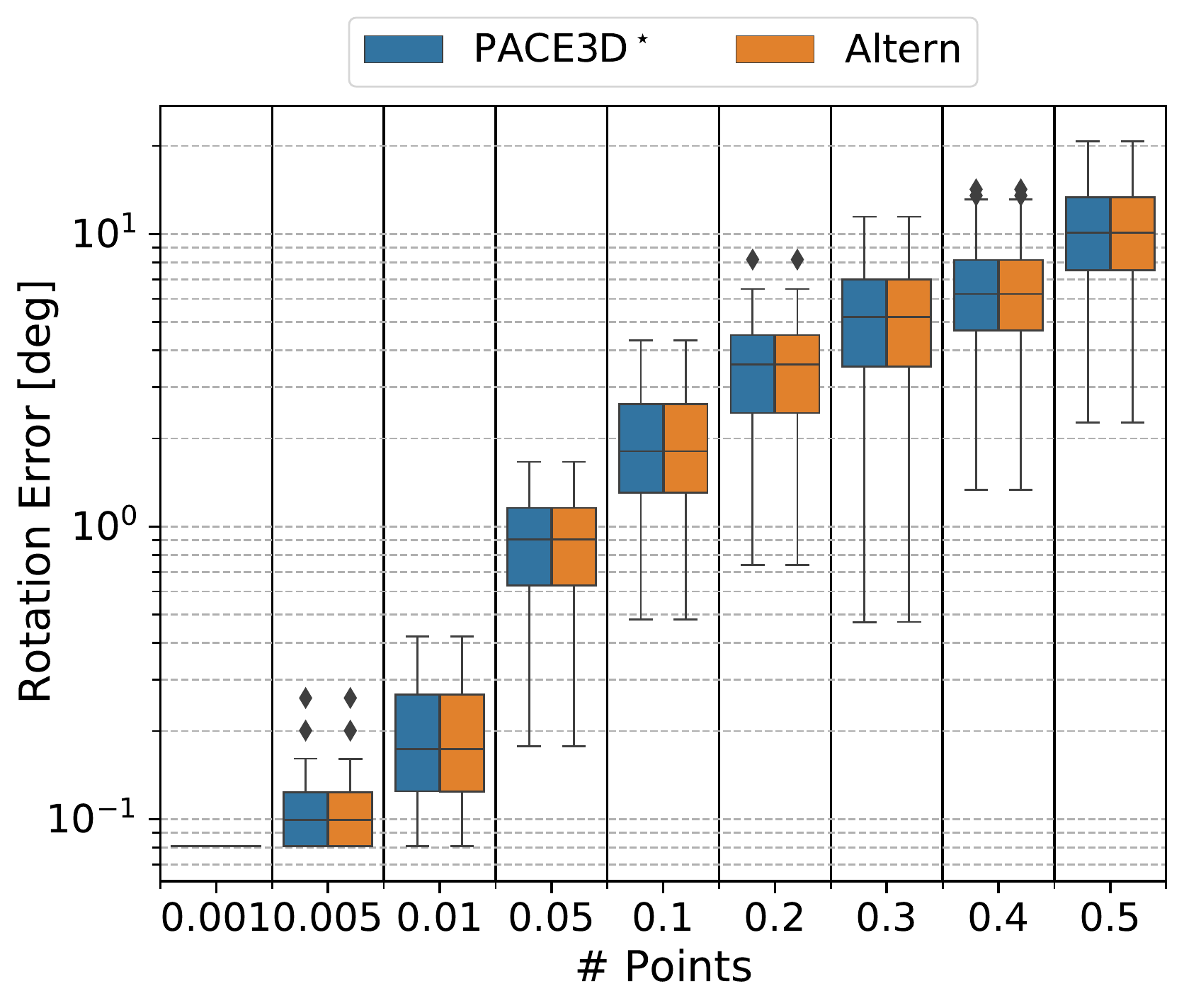}
			\end{minipage}
		&   \myhspace
			\begin{minipage}{\mpwfour}%
			\centering%
			\includegraphics[width=\columnwidth]{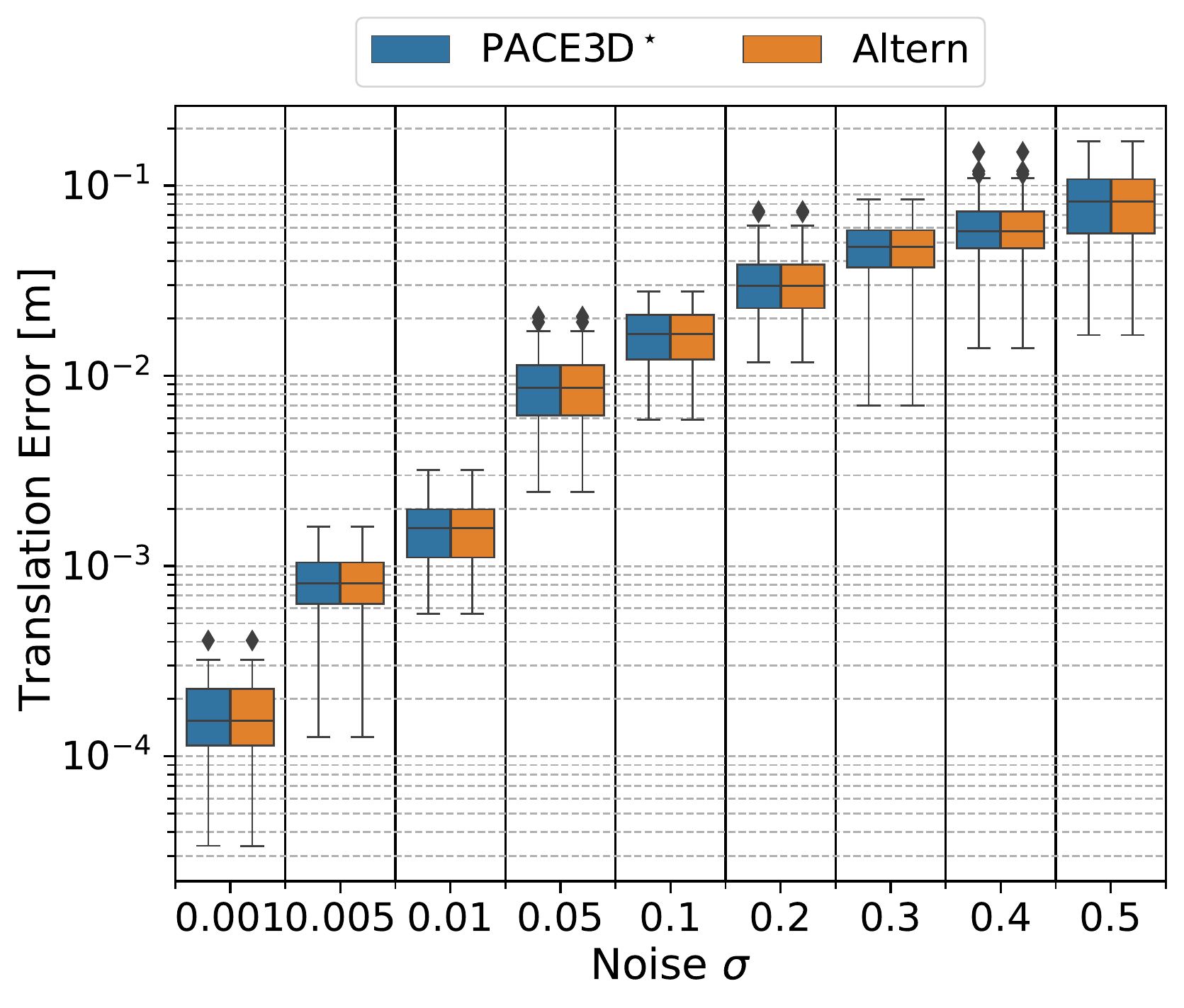}
			\end{minipage}
		&   \myhspace
			\begin{minipage}{\mpwfour}%
			\centering%
			\includegraphics[width=\columnwidth]{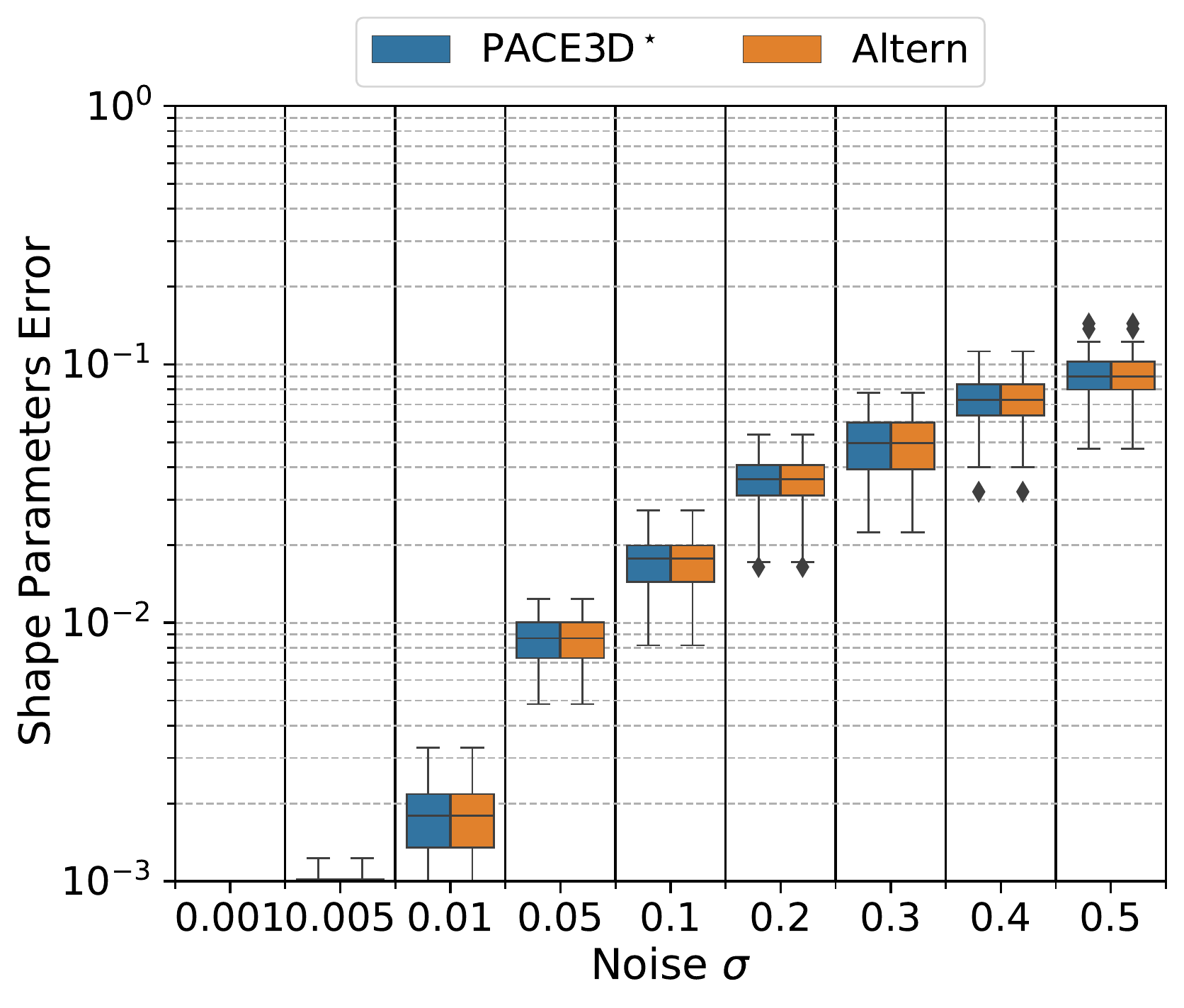}
			\end{minipage}
		&   \myhspace
			\begin{minipage}{\mpwfour}%
			\centering%
			\includegraphics[width=\columnwidth]{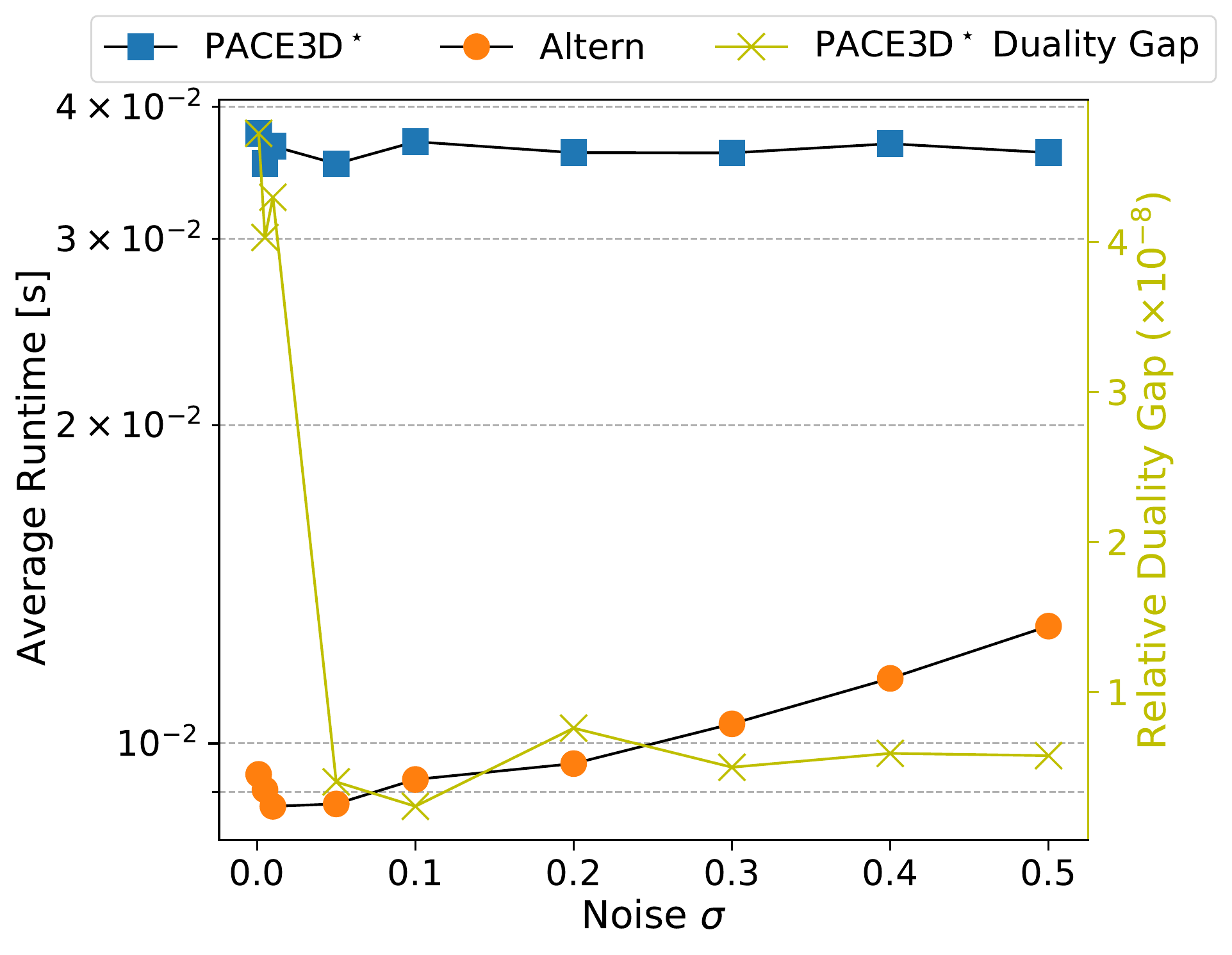}
			\end{minipage}
		\\
		\multicolumn{4}{c}{\smaller (b) Performance of \PACEThree~on outlier-free synthetic data with varying noise level: $N=100$, $K=10$.}
	\end{tabular}
	\end{minipage}
	\caption{Performance of \PACEThree on outlier-free synthetic data, with (a) varying number of points and (b) varying level of noise. Each boxplot reports statistics computed over 50 Monte Carlo runs.
	\label{fig:app-simulation-optimality-varying}}
	\vspace{-4mm}
	\end{center}
\end{figure*}

\renewcommand{\mpwfour}{4.6cm}
\renewcommand{\myhspace}{\hspace{-3.5mm}}
\begin{figure*}[!t]
	\begin{center}
	\begin{minipage}{\textwidth}
	\begin{tabular}{cccc}%
		\myhspace \hspace{-3mm}
			\begin{minipage}{\mpwfour}%
			\centering%
			\includegraphics[width=\columnwidth]{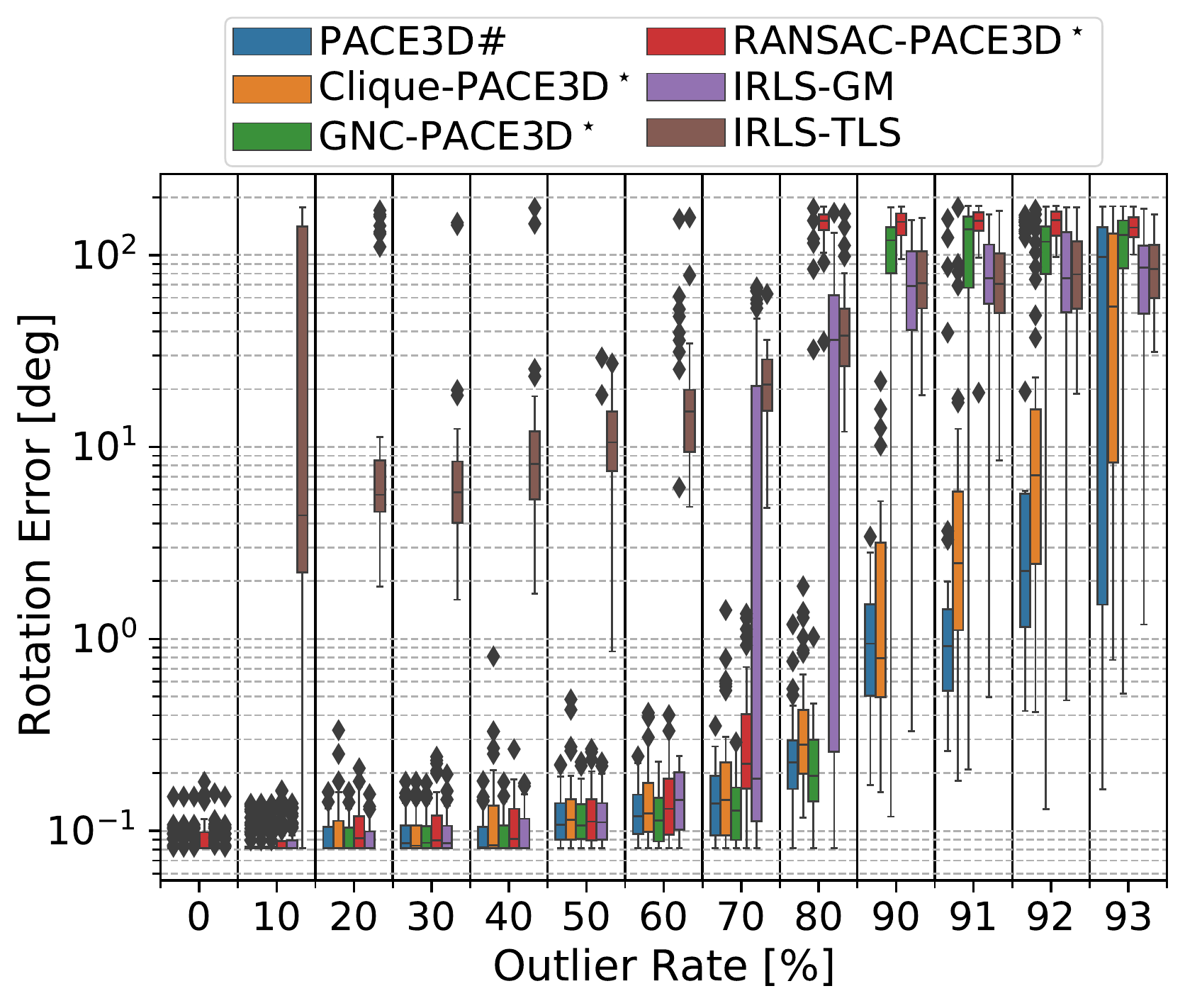}
			\end{minipage}
		&   \myhspace
			\begin{minipage}{\mpwfour}%
			\centering%
			\includegraphics[width=\columnwidth]{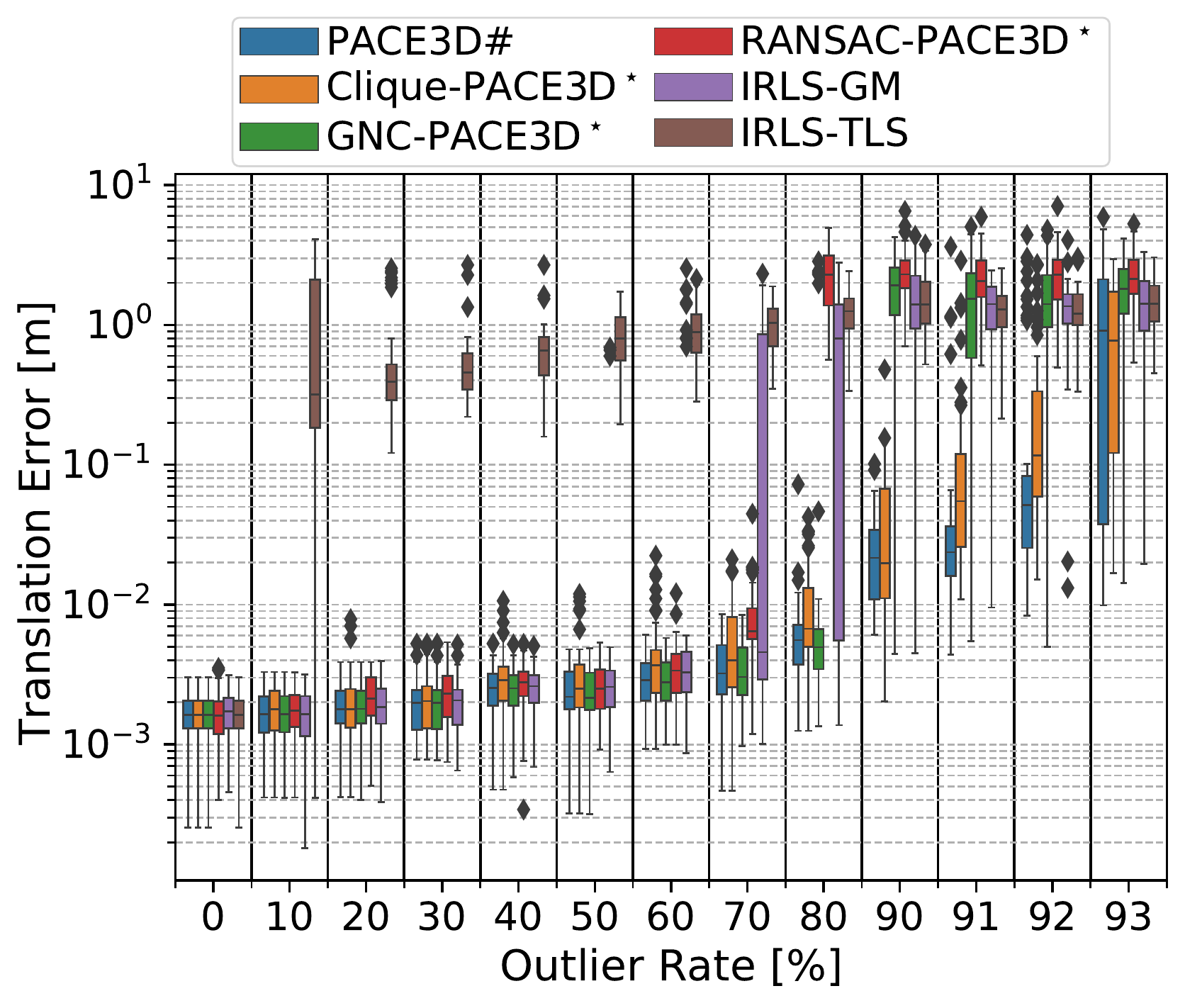}
			\end{minipage}
		&   \myhspace
			\begin{minipage}{\mpwfour}%
			\centering%
			\includegraphics[width=\columnwidth]{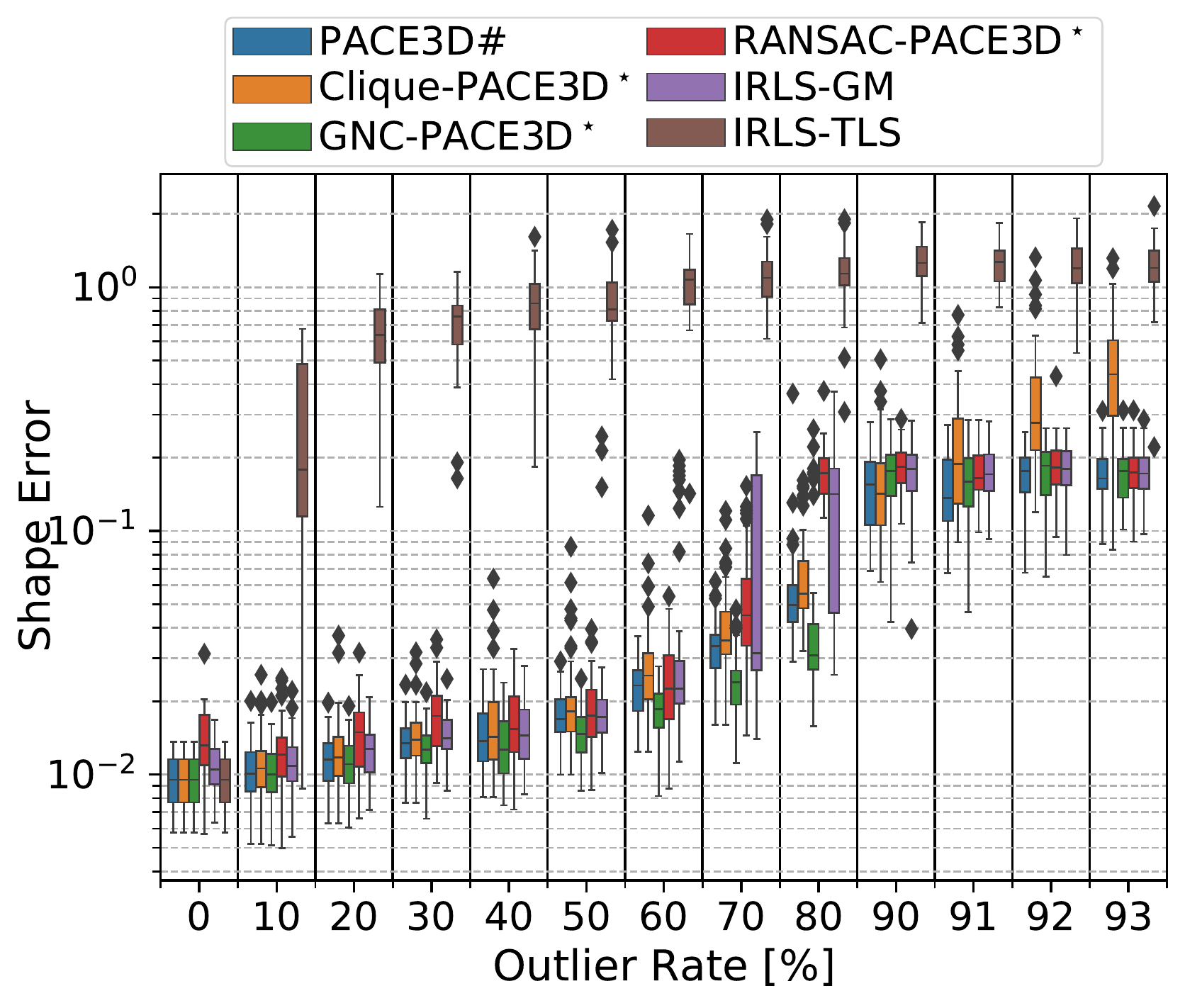}
			\end{minipage}
		&   \myhspace
			\begin{minipage}{\mpwfour}%
			\centering%
			\includegraphics[width=\columnwidth]{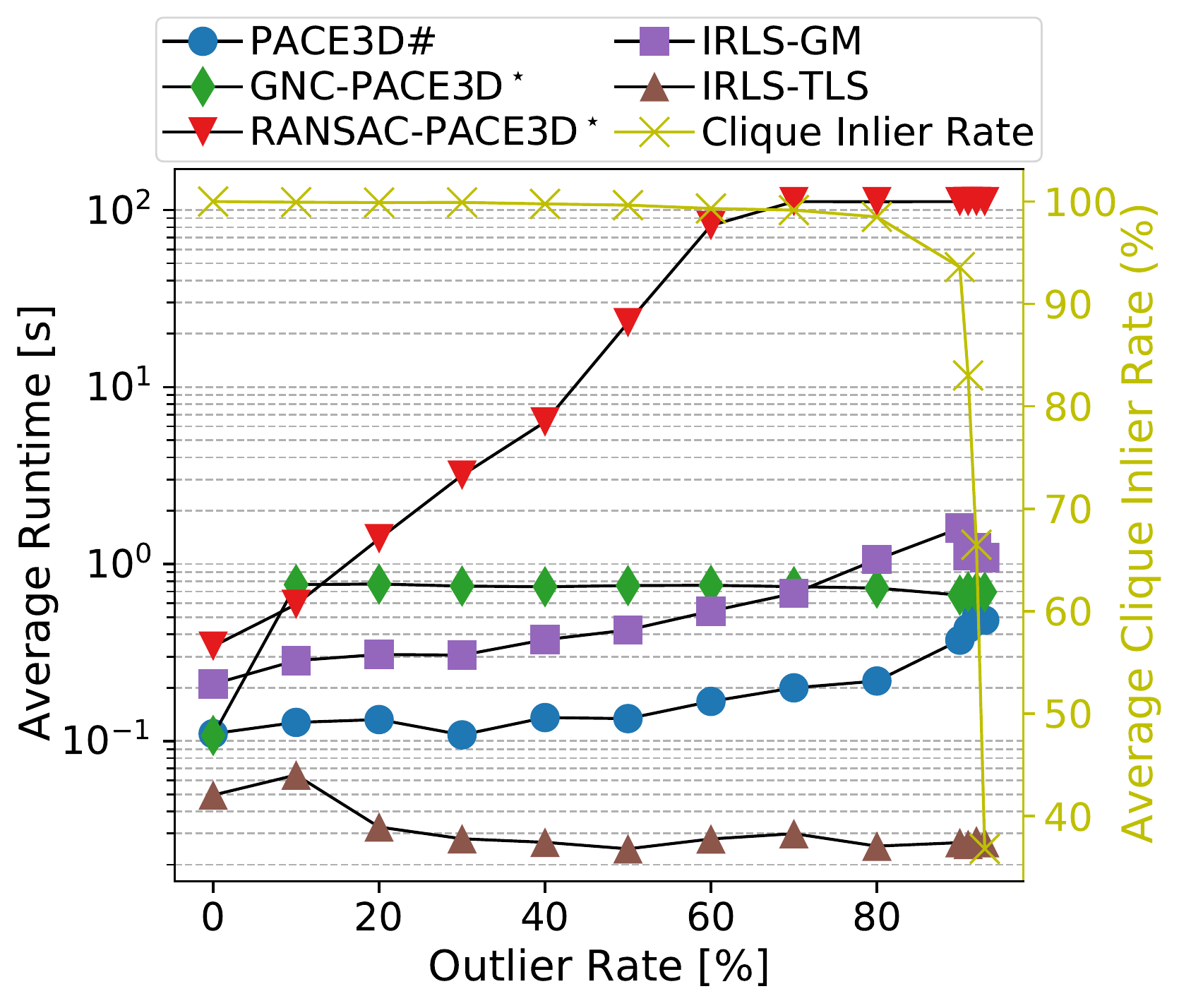}
			\end{minipage}
		\\
	\end{tabular}
	\end{minipage} 
	\caption{Performance of \PACErobustThree compared to baselines in simulated experiments
	 with
	the intra-class variation radius increased to $r=0.2$.
	Each boxplot (and lineplot) reports statistics computed over  50 Monte Carlo runs.
	\label{fig:app-simulation-robust-3d}}
	\vspace{-4mm} 
	\end{center}
\end{figure*}

\section{Additional Experimental Results}\label{sec:app-experiments-pace}

\subsection{Results on \PACEThree and \PACErobustThree}\label{sec:app-experiments-pace-3d}
\JS{
In Section~\ref{sec:exp-optimality-robustness-3d}, we demonstrate the optimality of \PACErobustThree when $N=100$, $\sigma=0.01$ with varying number of shapes ($K$).
Here we show extra results reporting the performance with different $N$ and $\sigma$.
In addition, we show more detailed runtimes and the final average residuals achieved.
Fig.~\ref{fig:app-simulation-optimality-varying}(a) shows the performance of \PACEThree with $N$ ranging from 6 to 500.
One can see that \altern shows a small number of failures at lower $N$, while the performance of  both \PACEThree and \altern improves  with
the increase of $N$.
Fig.~\ref{fig:app-simulation-optimality-varying}(b) shows the performance of \PACEThree with $\sigma$ ranging from 0.001 to 0.5.
Both \PACEThree and \altern suffer degradation of performance with increase in $\sigma$.
Fig.~\ref{fig:app-simulation-optimality-residuals-runtime-3d} shows the final residuals and runtimes of \PACEThree and \altern for three scenarios: (a) varying $K$, (b) varying $N$, (c) varying $\sigma$.

In Section~\ref{sec:exp-optimality-robustness-3d}, we also demonstrate the robustness of \PACErobustThree to $92\%$ outlier rates when $N=100$, $K=10$, and $r=0.1$. Here we show extra results when $r$ is increased.
Fig.~\ref{fig:app-simulation-robust-3d} shows the results for $N=100$, $K=10$, and $r=0.2$. One can see that as the intra-class variation radius $r$ is increased, the compatibility checks become less effective, leading to a slight decrease in the robustness of \PACErobustThree against outliers; \PACErobustThree is still robust to up to $90\%$ outliers while has two failures at $91\%$ outliers. However, \PACErobustThree still outperforms \irlstls and \irlsgm by a large margin.

}

\renewcommand{\mpwfour}{4.33cm}
\renewcommand{\myhspace}{\hspace{-3.5mm}}

\begin{figure}[hbtp!]
  \centering
  \includegraphics[width=0.7\columnwidth]{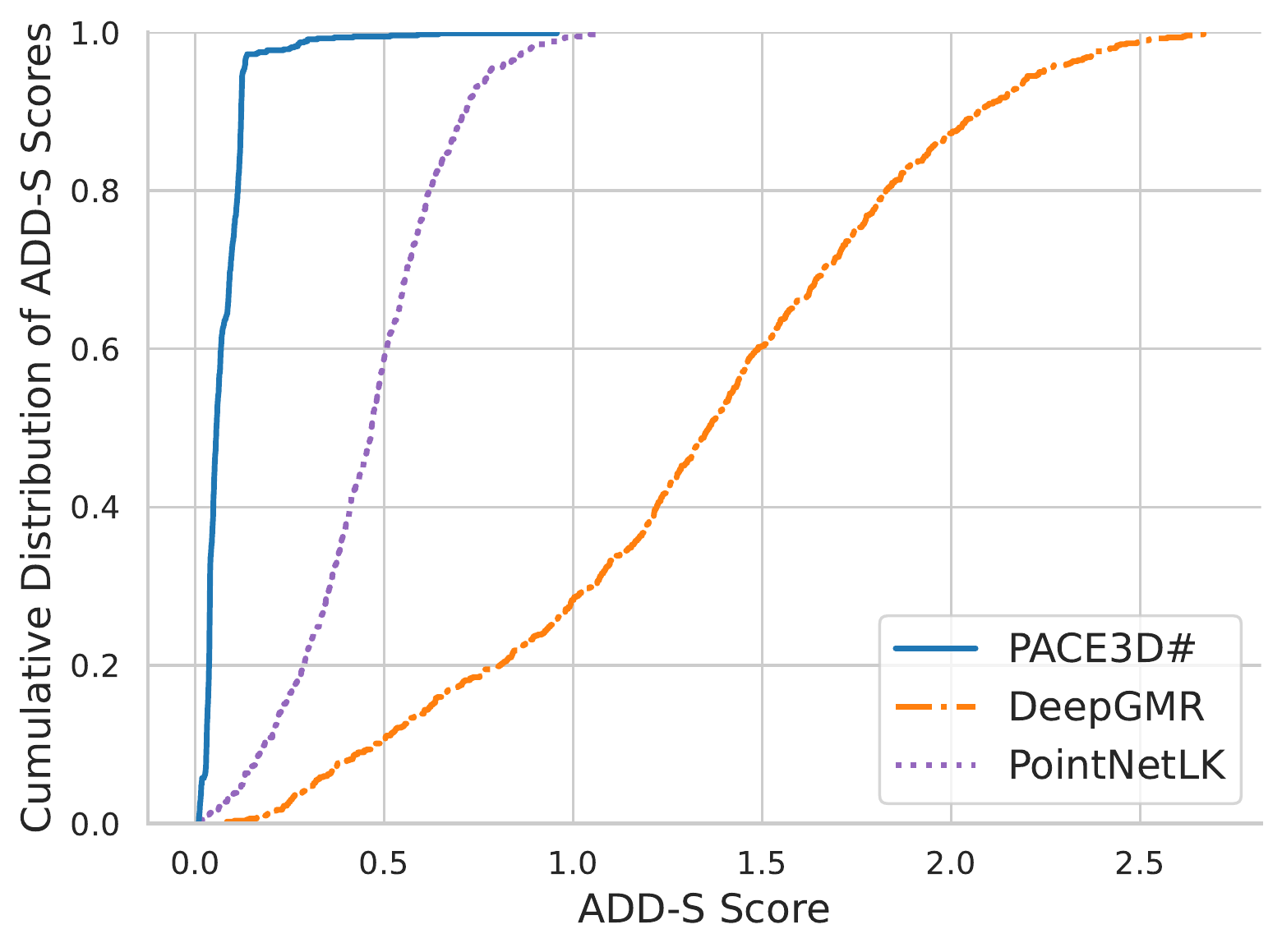}
  \caption{Cumulative distribution of ADD-S score for \PACErobustThree, PointNetLK~\cite{Li21cvpr-PointNetLKRevisited} and DeepGMR~\cite{Yuan20arxiv-DeepGMRLearning} in the \keypointnet experiment,
    averaged across all 16 categories.\label{fig:app-keypointnet-avg}}
\end{figure}

\begin{figure*}[hbtp!]
	\begin{center}{}
	\begin{minipage}{\textwidth}
	\begin{tabular}{cccc}%
\myhspace
	\begin{minipage}{\mpwfour}%
	\centering%
	\includegraphics[width=\columnwidth]{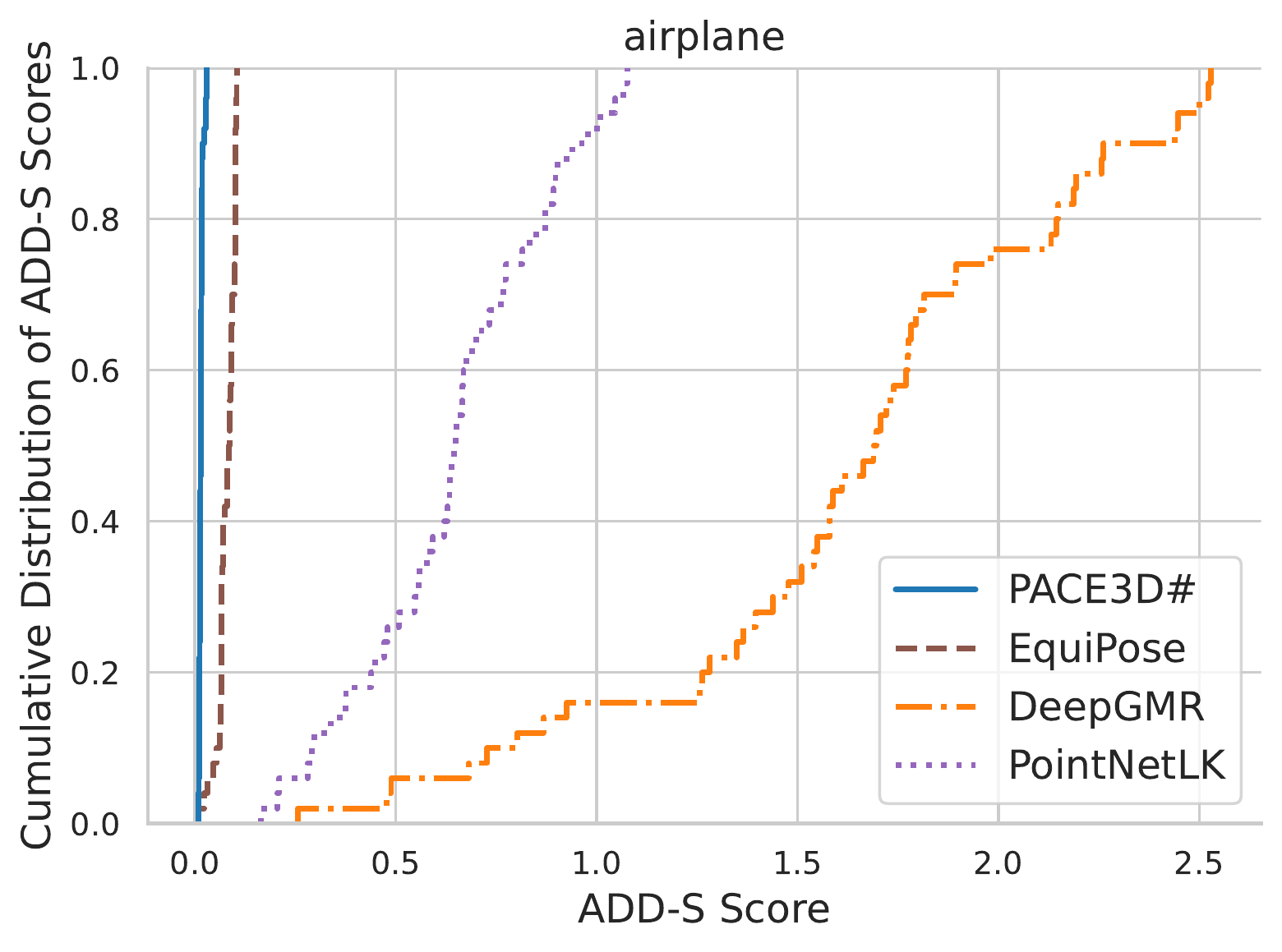} \\
	\vspace{1mm}
	\end{minipage}
& \myhspace
	\begin{minipage}{\mpwfour}%
	\centering%
	\includegraphics[width=\columnwidth]{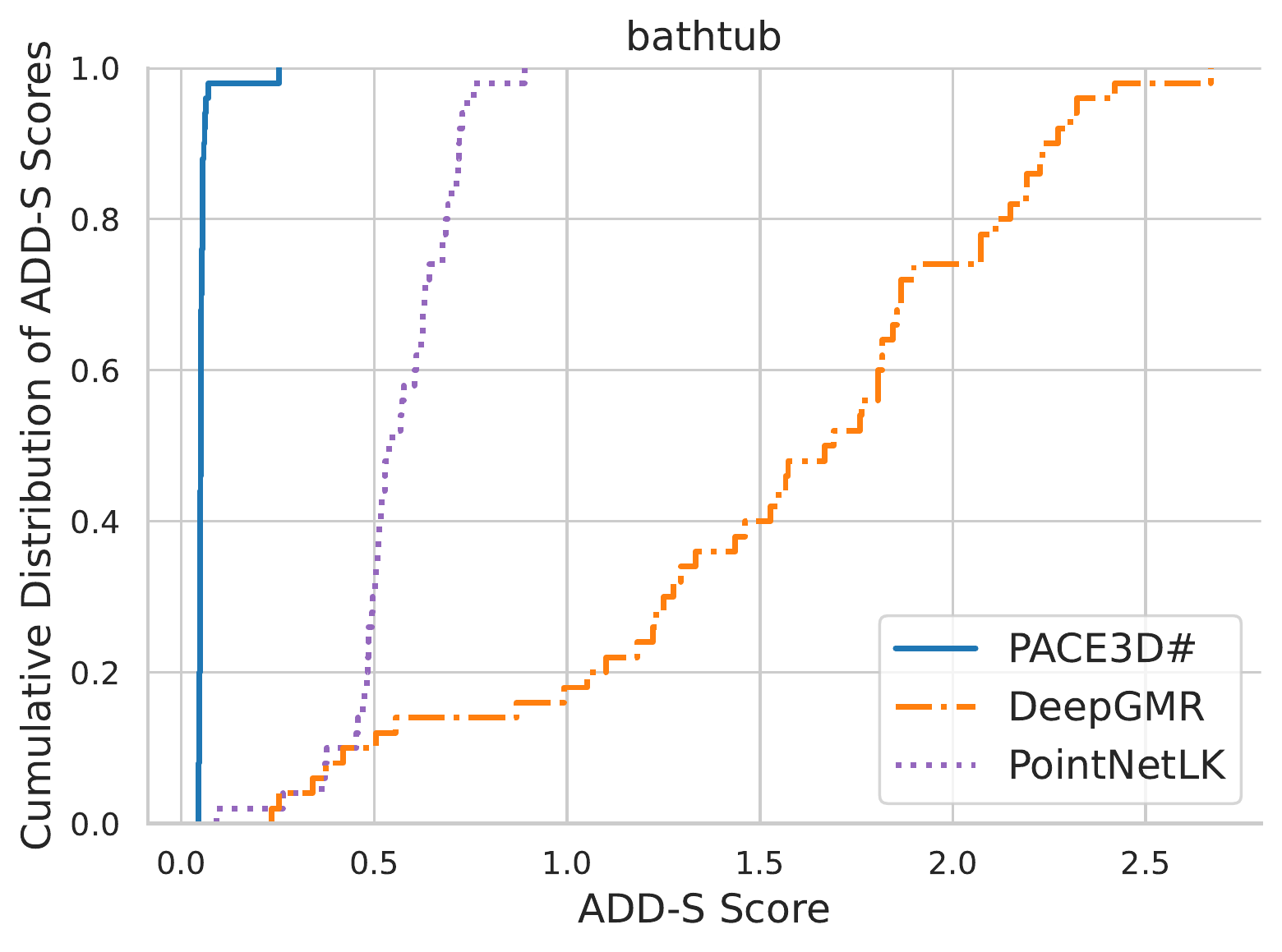} \\
	\vspace{1mm}
	\end{minipage}
& \myhspace
	\begin{minipage}{\mpwfour}%
	\centering%
	\includegraphics[width=\columnwidth]{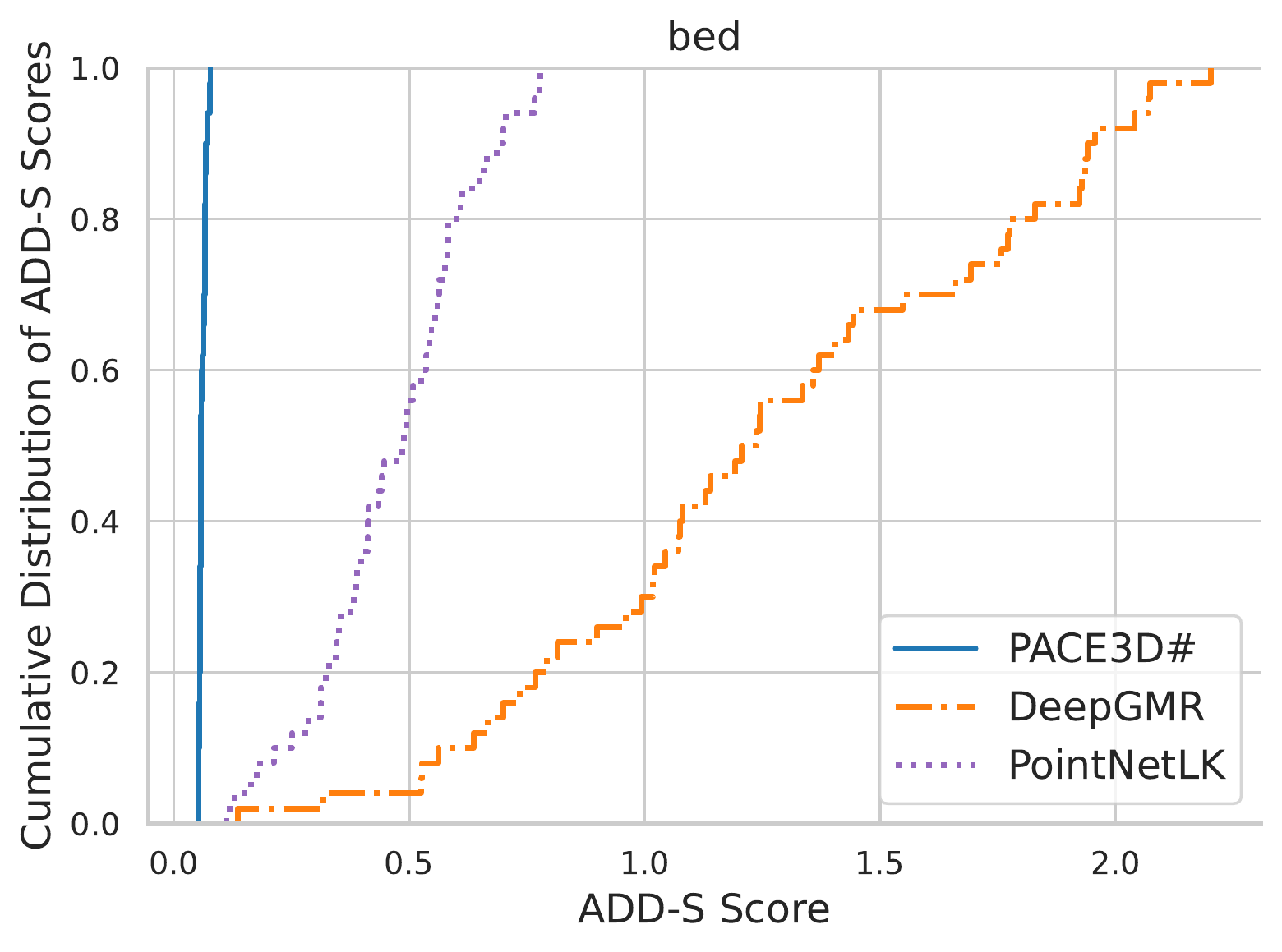} \\
	\vspace{1mm}
	\end{minipage}
& \myhspace
	\begin{minipage}{\mpwfour}%
	\centering%
	\includegraphics[width=\columnwidth]{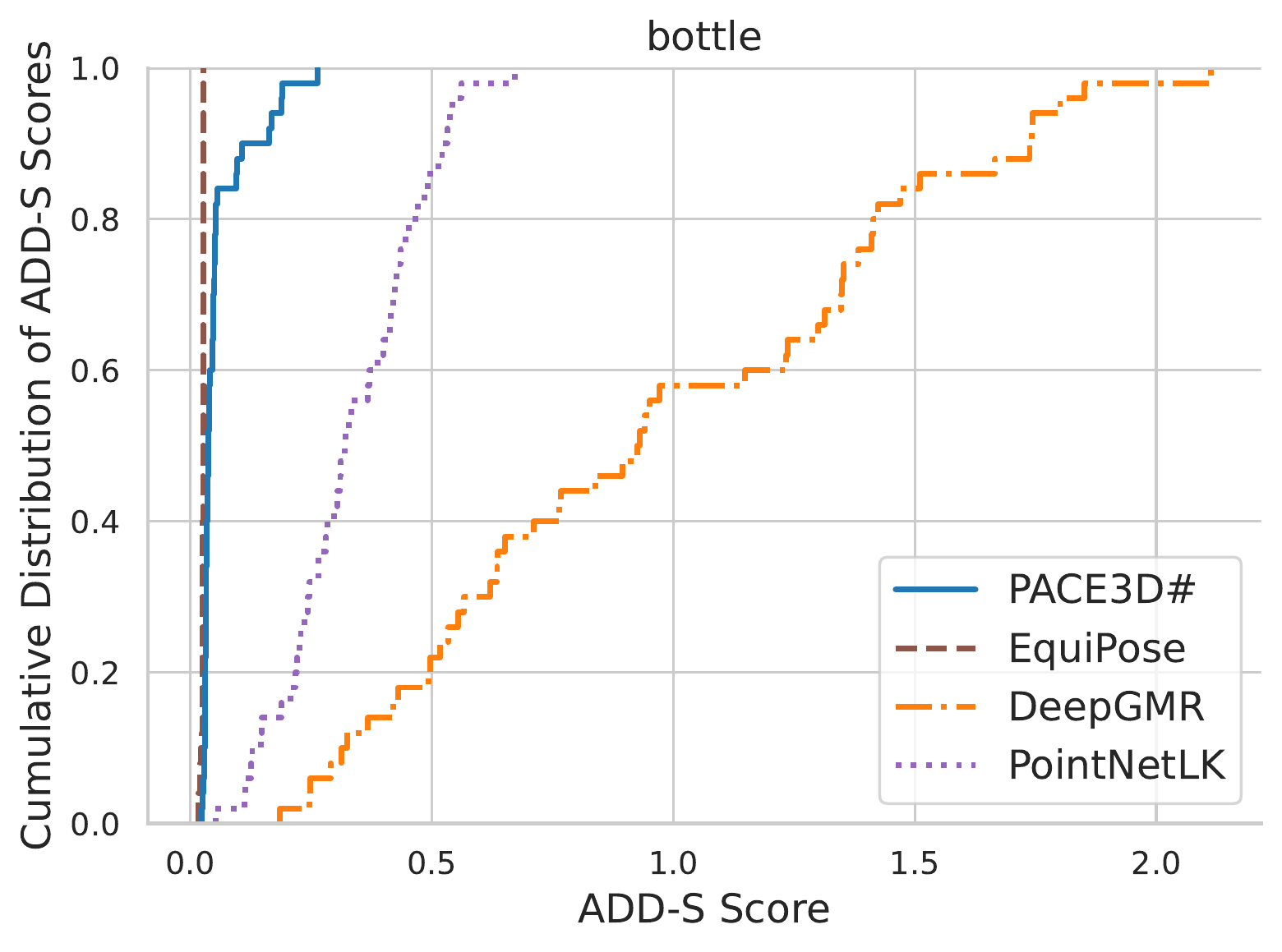} \\
	\vspace{1mm}
	\end{minipage} \\
\myhspace
	\begin{minipage}{\mpwfour}%
	\centering%
	\includegraphics[width=\columnwidth]{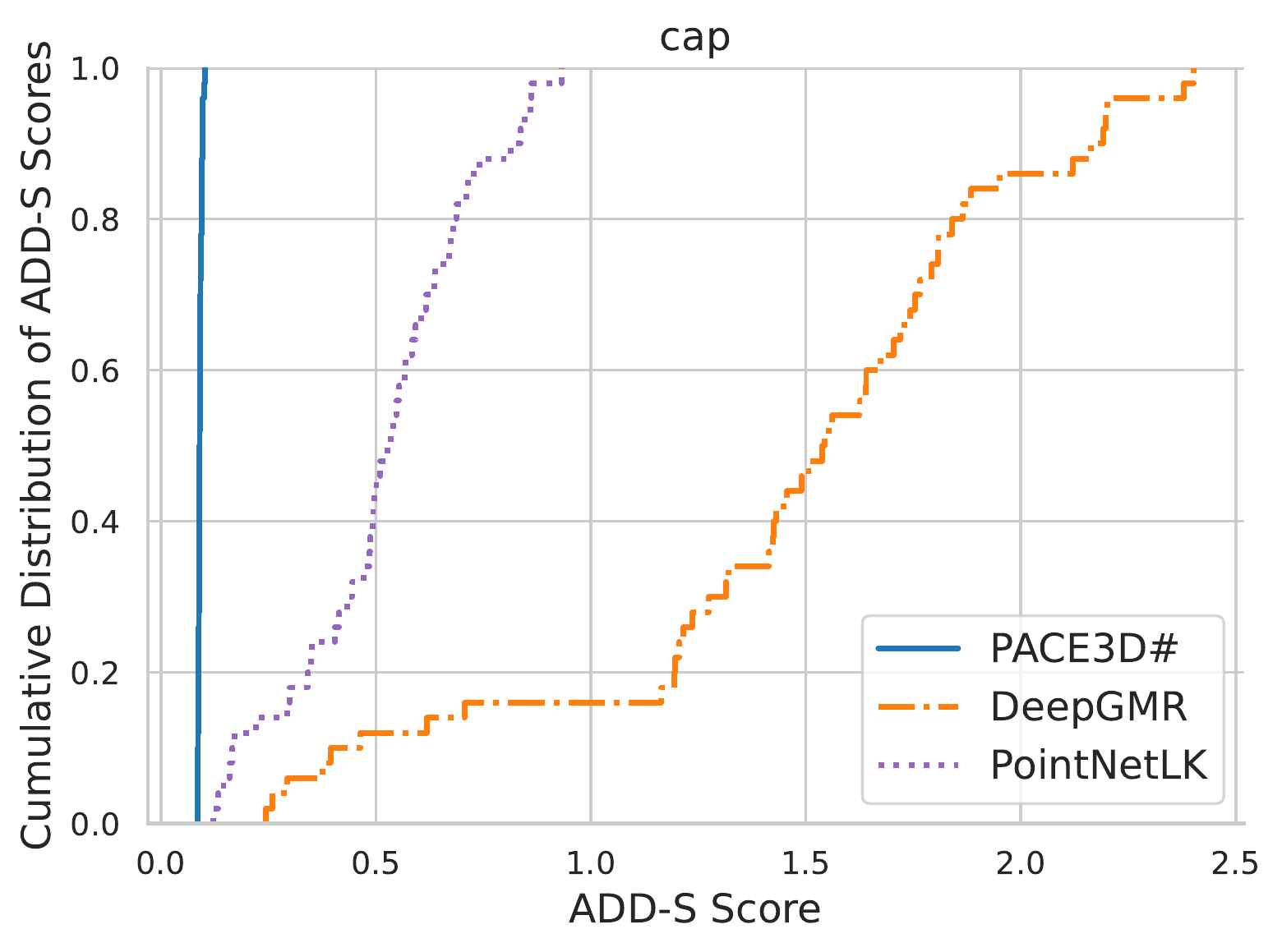} \\
	\vspace{1mm}
	\end{minipage}
& \myhspace
	\begin{minipage}{\mpwfour}%
	\centering%
	\includegraphics[width=\columnwidth]{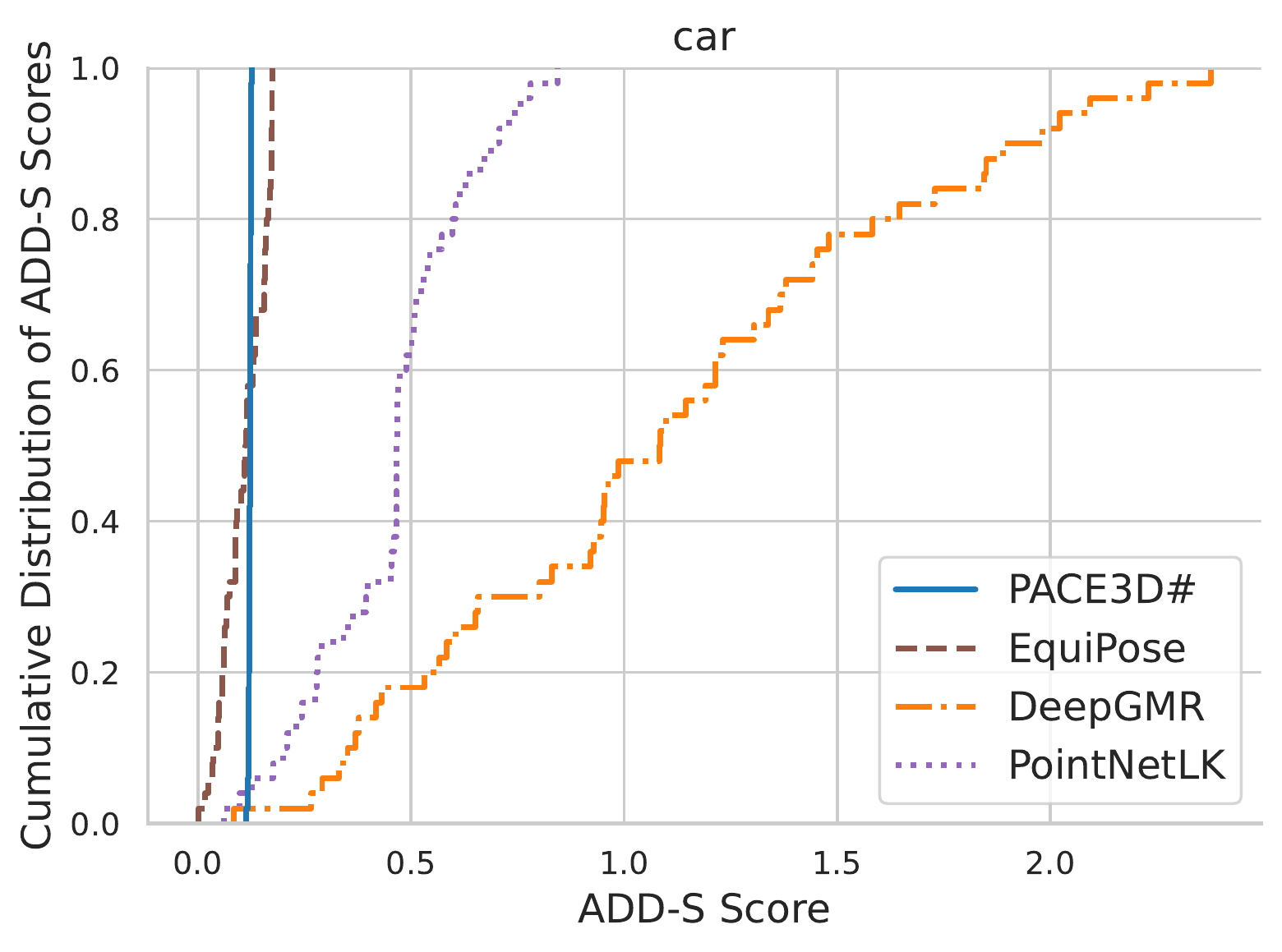} \\
	\vspace{1mm}
	\end{minipage}
& \myhspace
	\begin{minipage}{\mpwfour}%
	\centering%
	\includegraphics[width=\columnwidth]{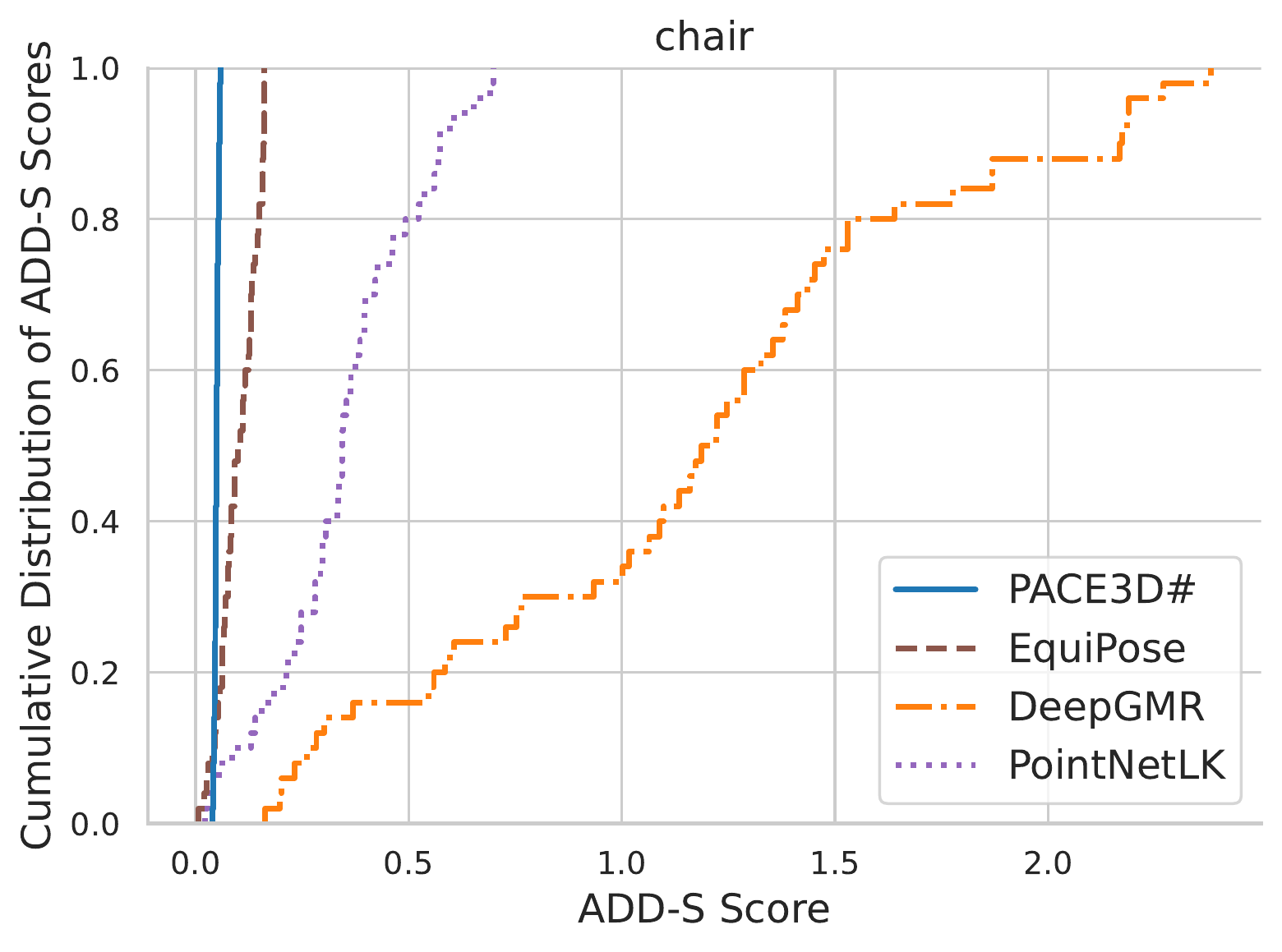} \\
	\vspace{1mm}
	\end{minipage}
& \myhspace
	\begin{minipage}{\mpwfour}%
	\centering%
	\includegraphics[width=\columnwidth]{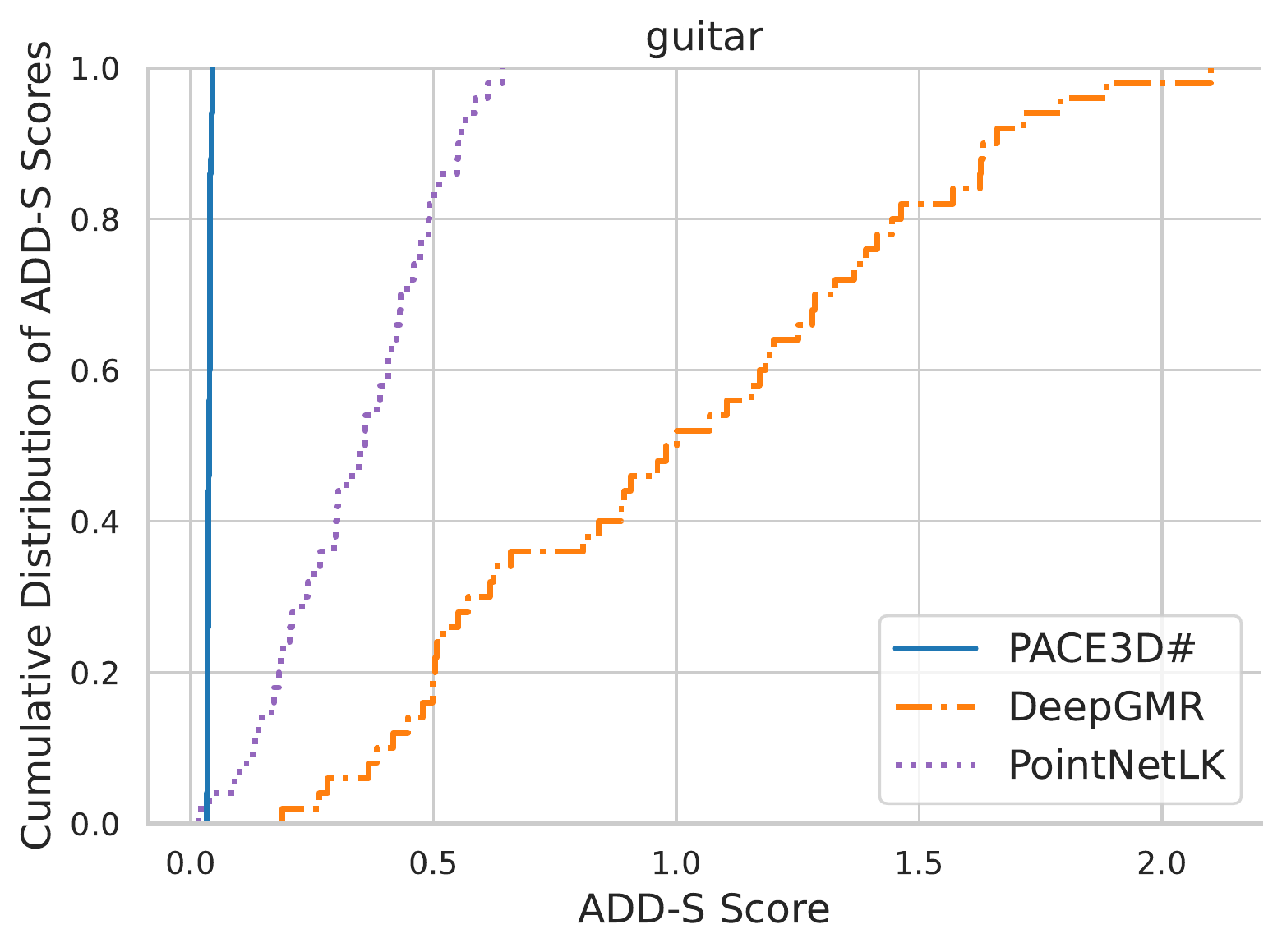} \\
	\vspace{1mm}
	\end{minipage} \\
\myhspace
	\begin{minipage}{\mpwfour}%
	\centering%
	\includegraphics[width=\columnwidth]{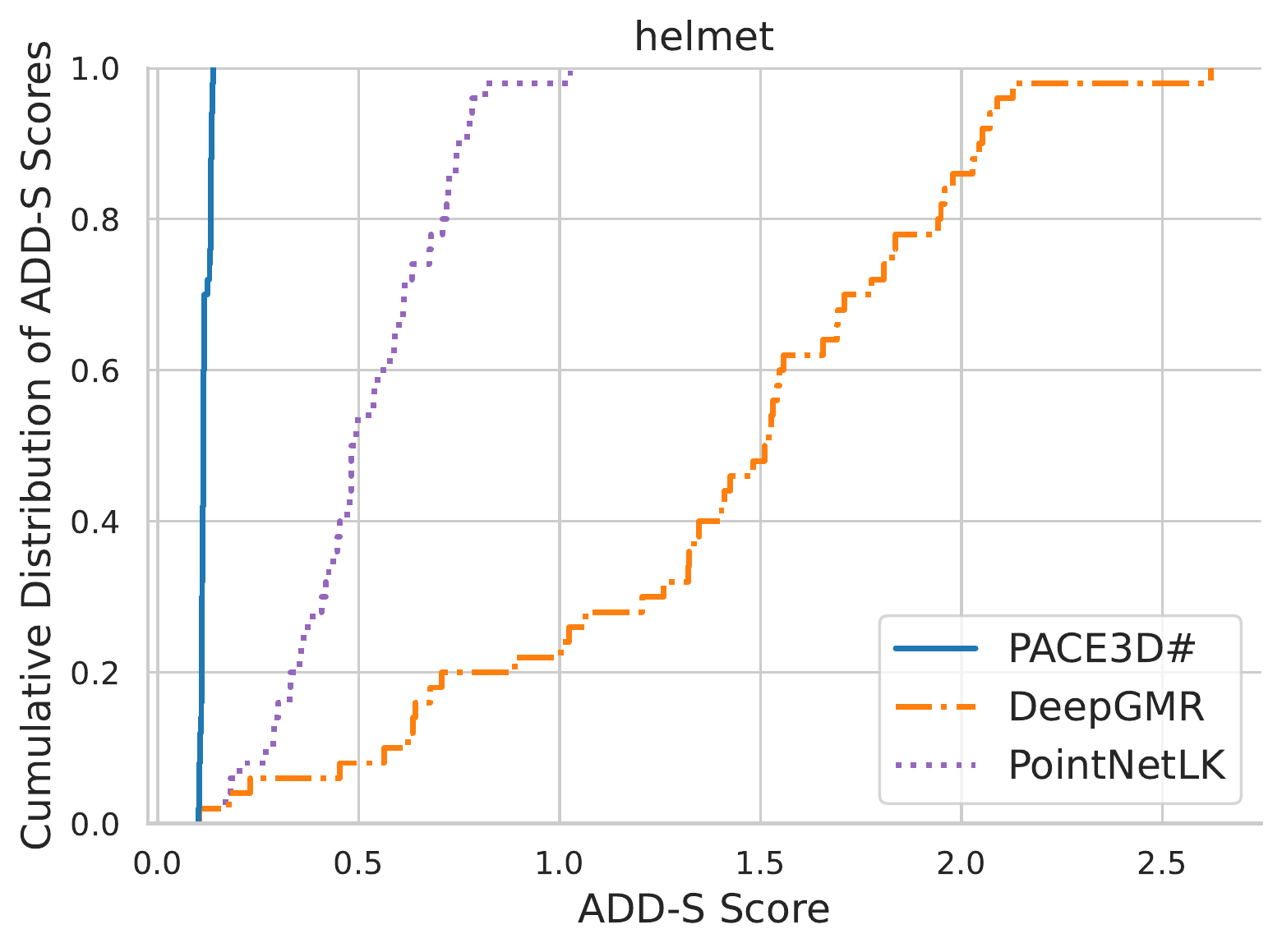} \\
	\vspace{1mm}
	\end{minipage}
& \myhspace
	\begin{minipage}{\mpwfour}%
	\centering%
	\includegraphics[width=\columnwidth]{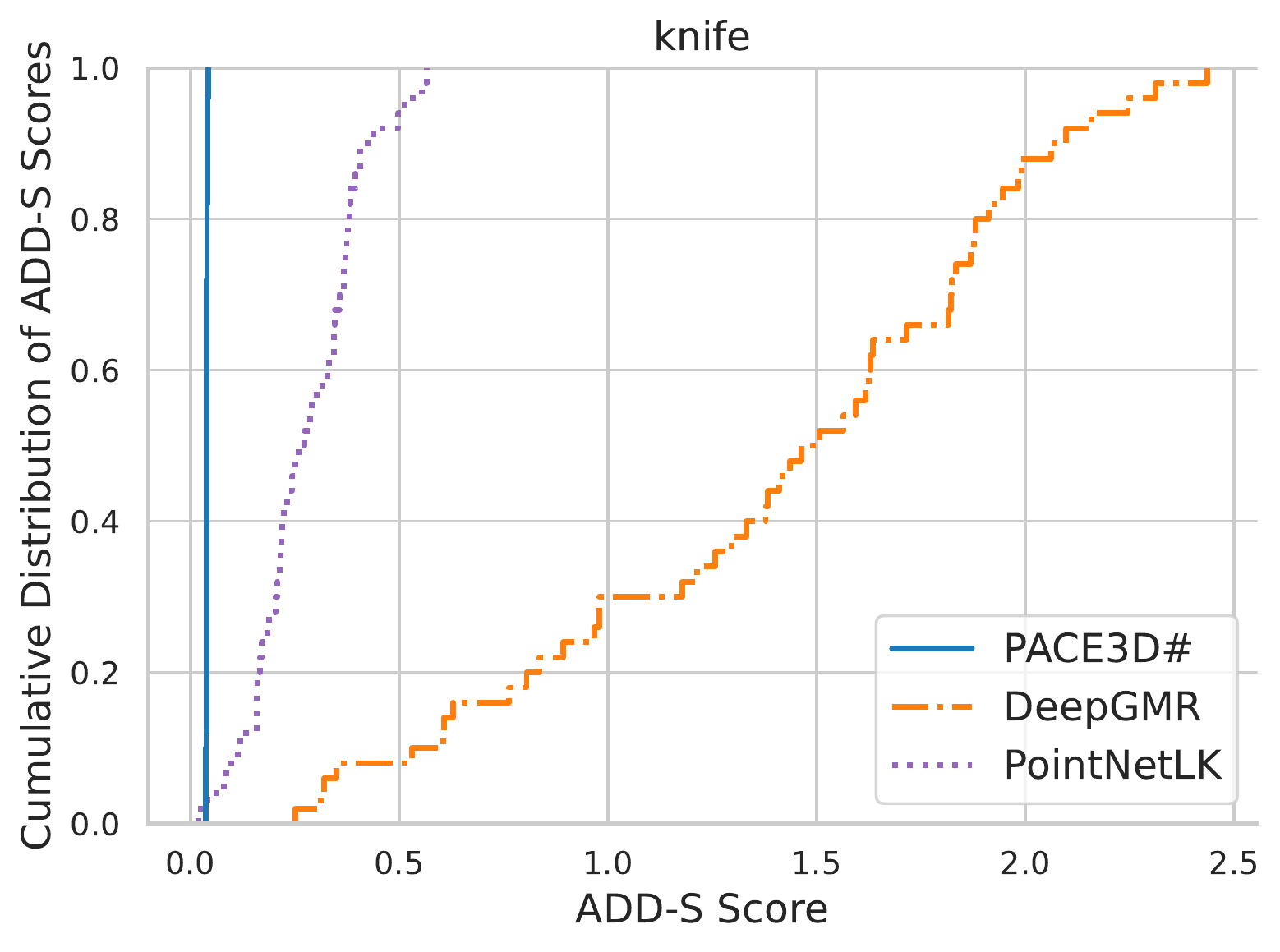} \\
	\vspace{1mm}
	\end{minipage}
& \myhspace
	\begin{minipage}{\mpwfour}%
	\centering%
	\includegraphics[width=\columnwidth]{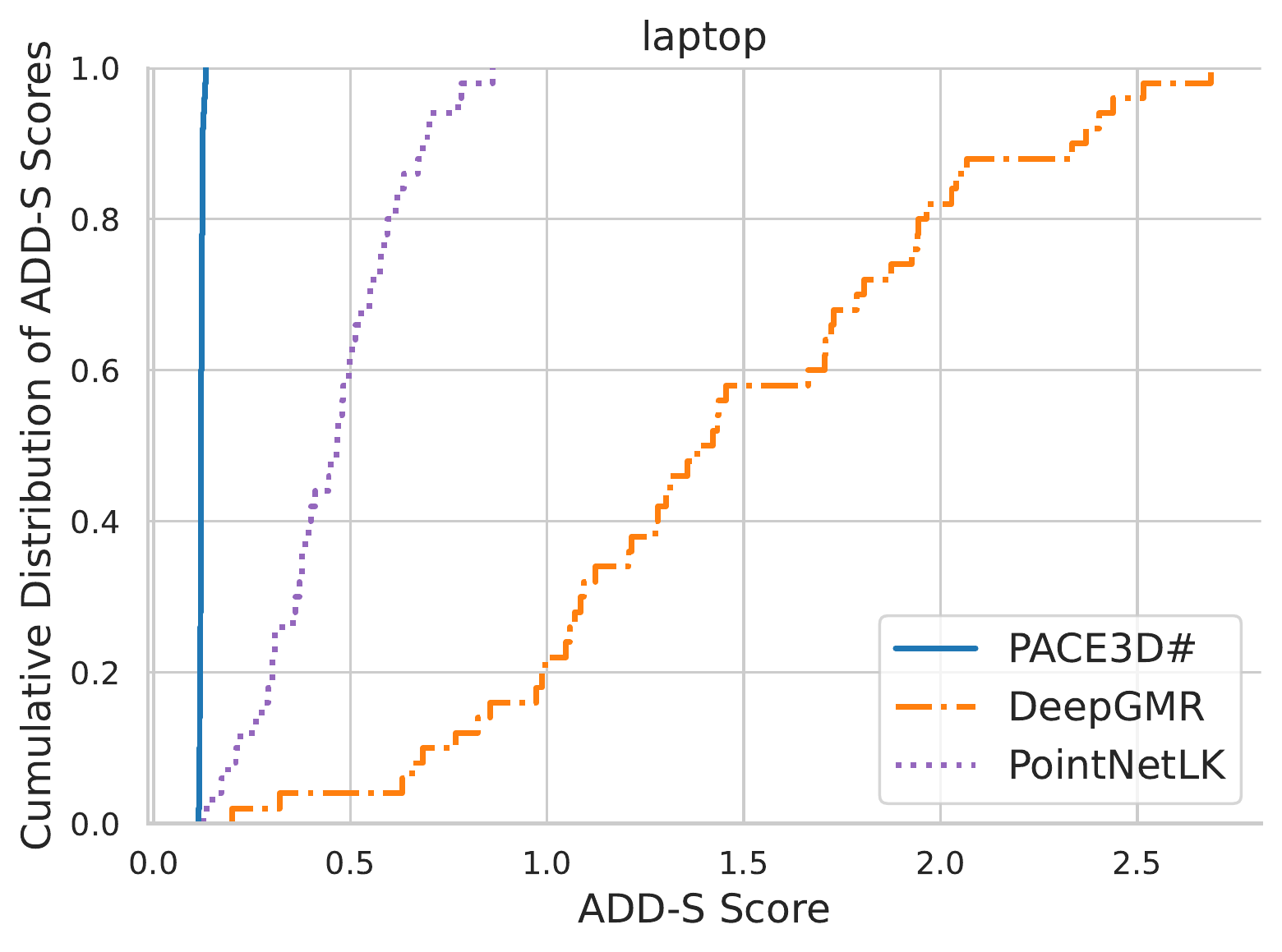} \\
	\vspace{1mm}
	\end{minipage}
& \myhspace
	\begin{minipage}{\mpwfour}%
	\centering%
	\includegraphics[width=\columnwidth]{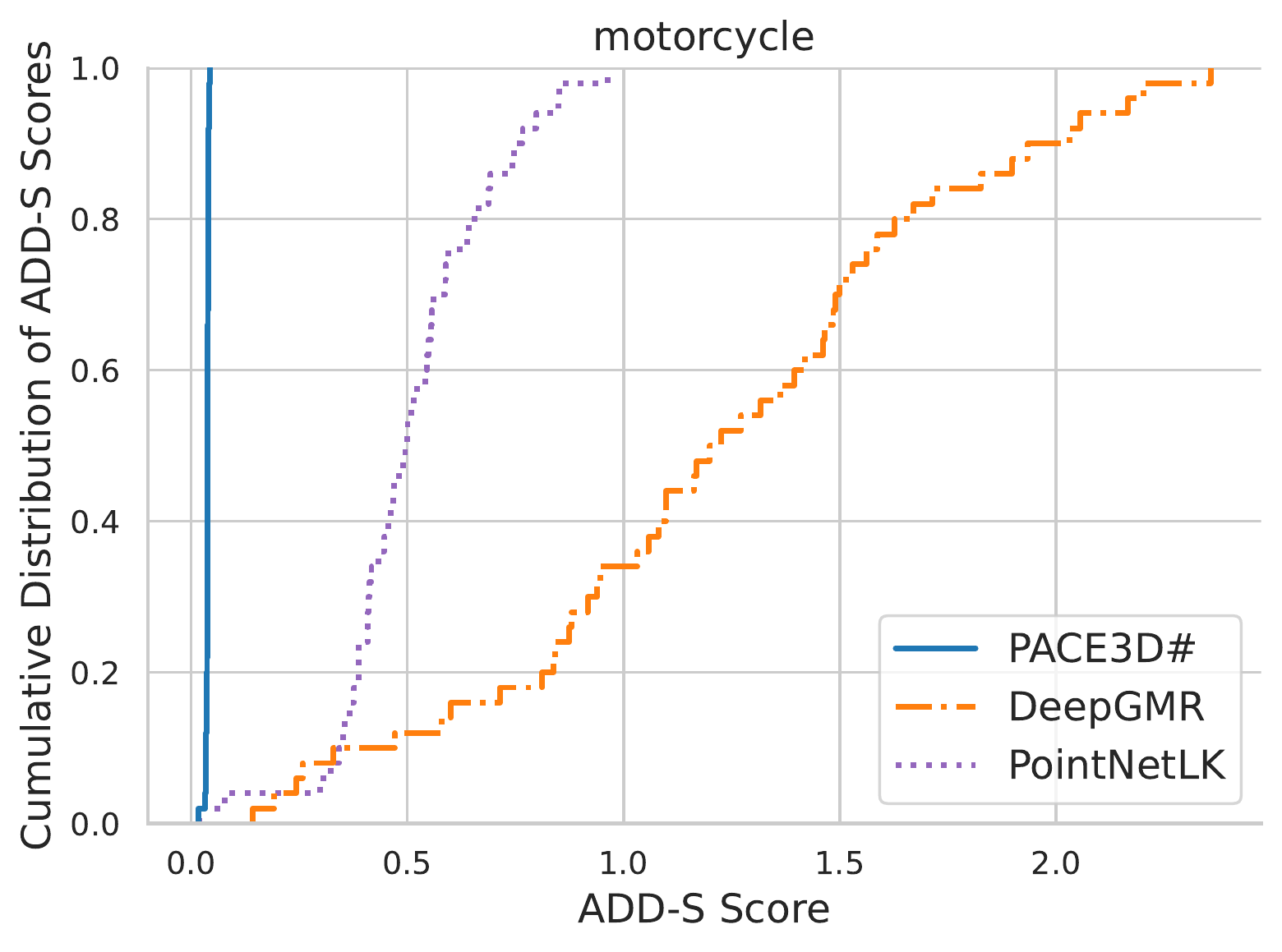} \\
	\vspace{1mm}
	\end{minipage} \\
\myhspace
	\begin{minipage}{\mpwfour}%
	\centering%
	\includegraphics[width=\columnwidth]{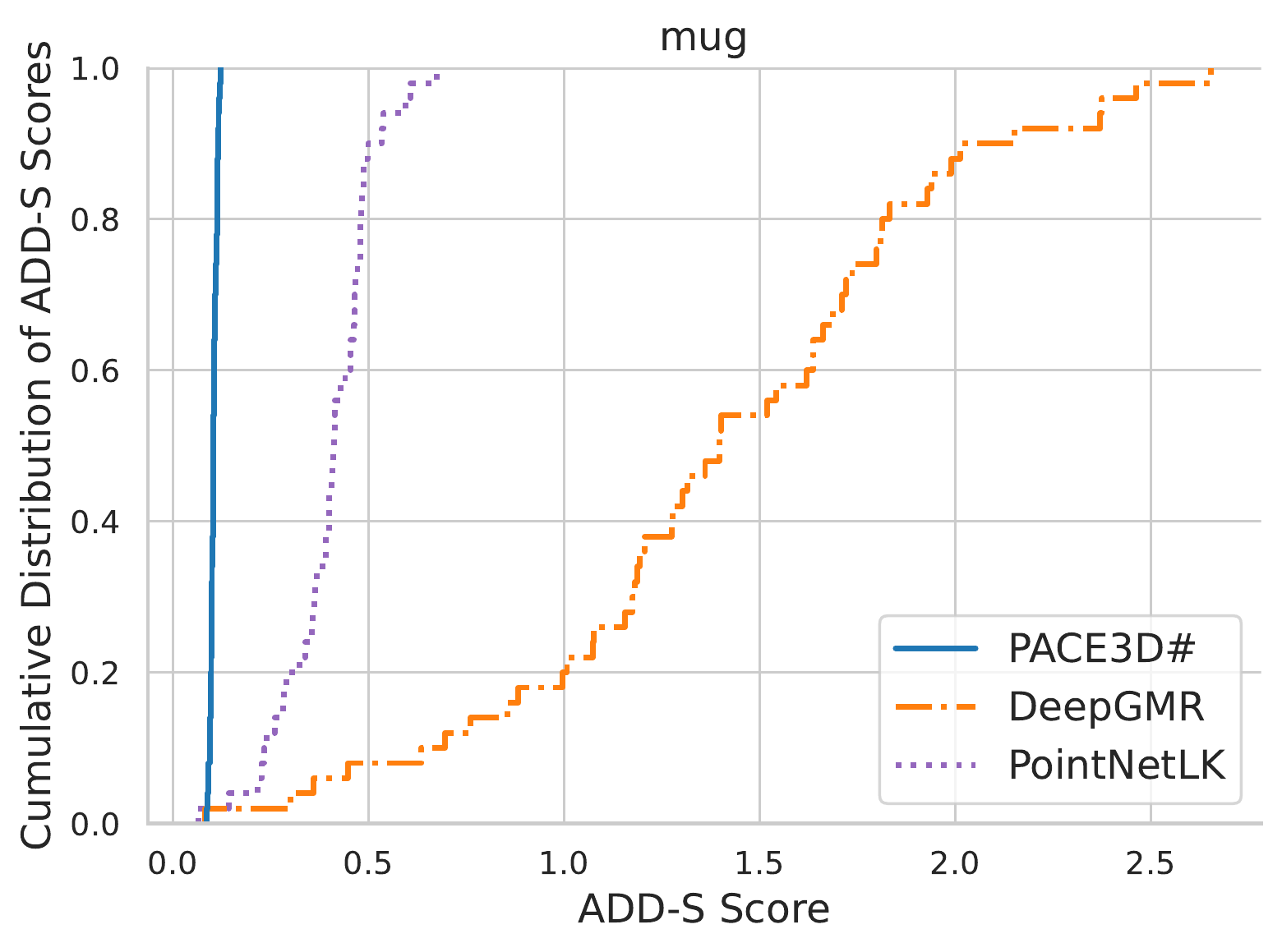} \\
	\vspace{1mm}
	\end{minipage}
& \myhspace
	\begin{minipage}{\mpwfour}%
	\centering%
	\includegraphics[width=\columnwidth]{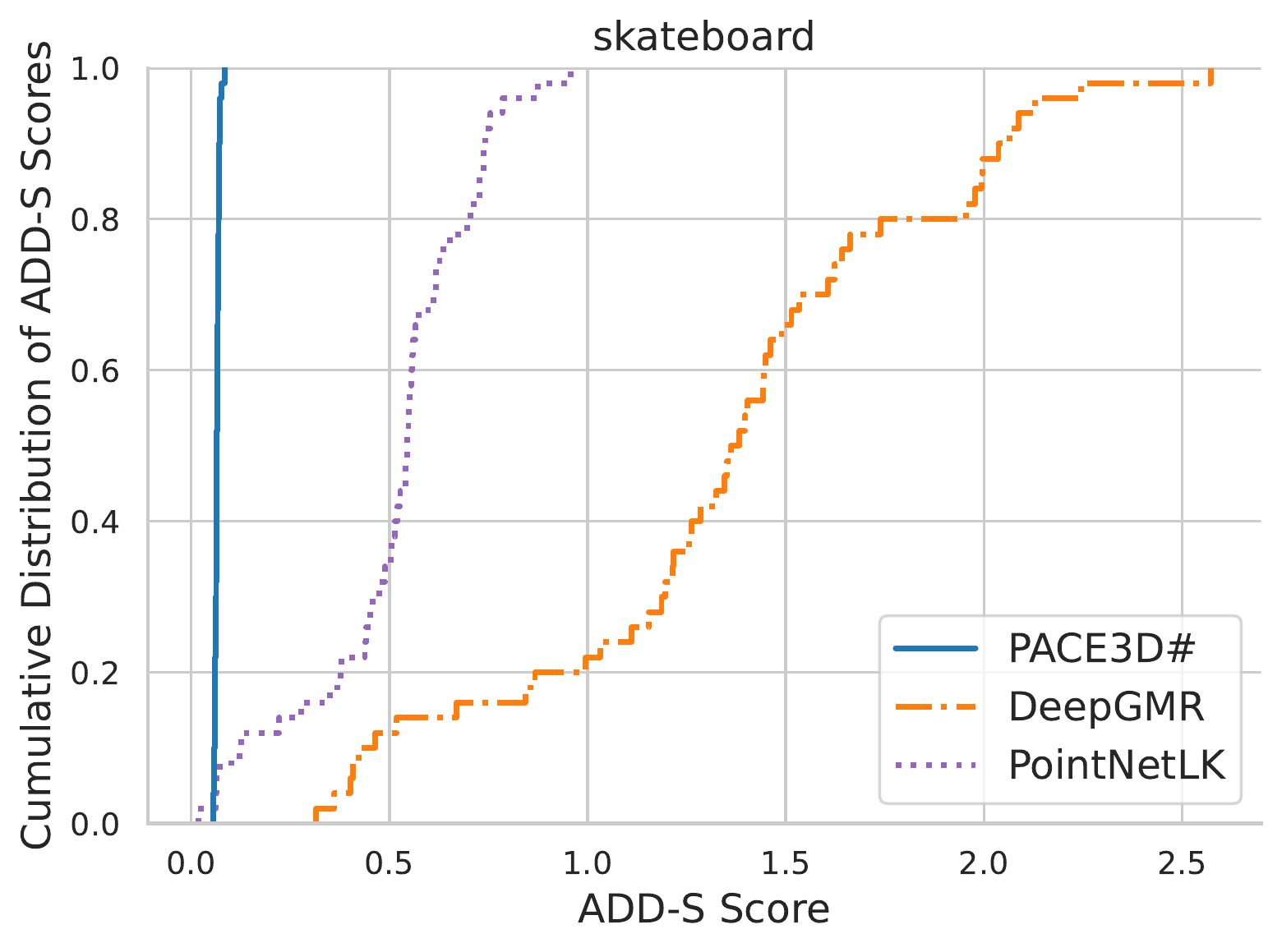} \\
	\vspace{1mm}
	\end{minipage}
& \myhspace
	\begin{minipage}{\mpwfour}%
	\centering%
	\includegraphics[width=\columnwidth]{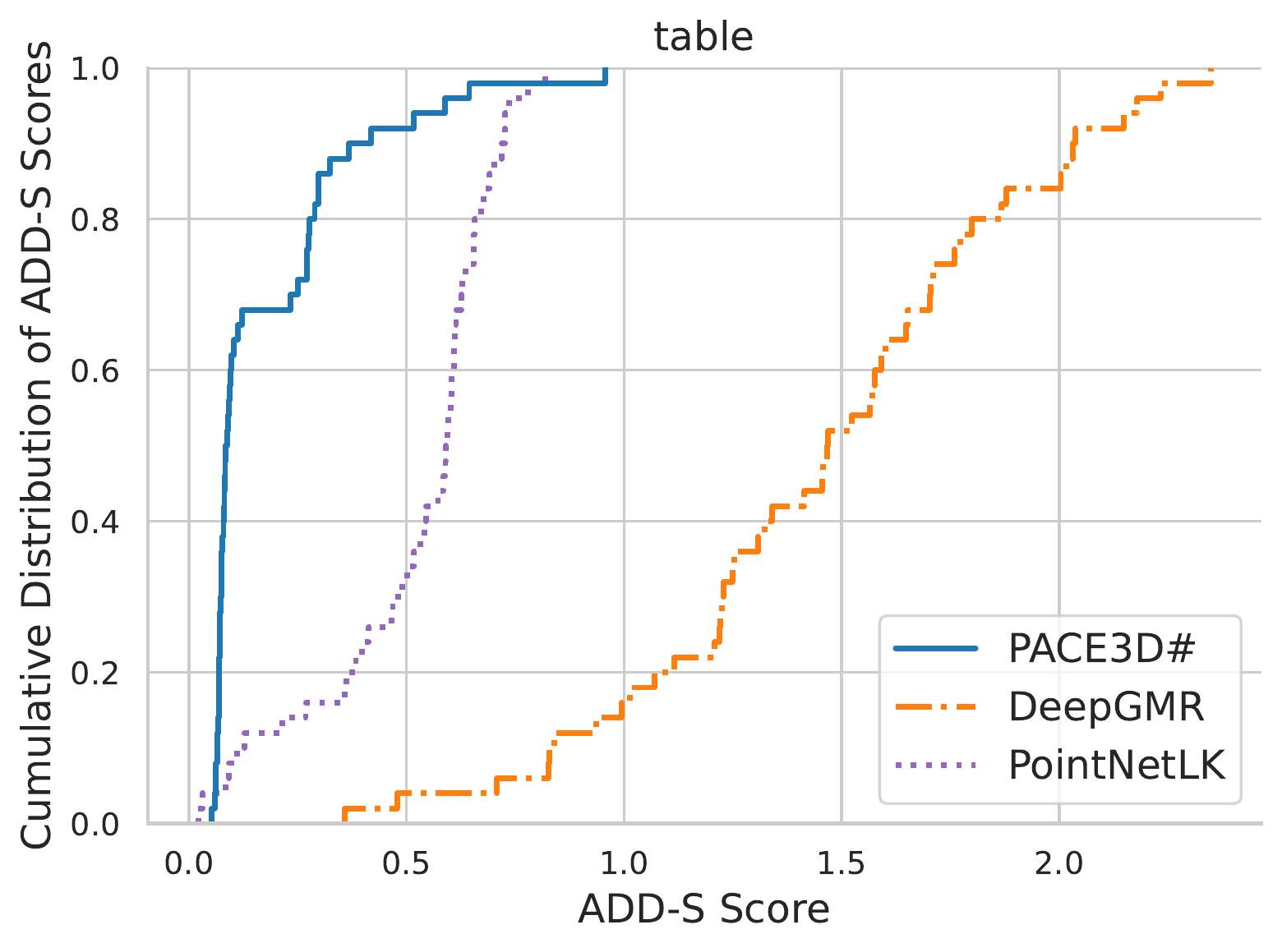} \\
	\vspace{1mm}
	\end{minipage}
& \myhspace
	\begin{minipage}{\mpwfour}%
	\centering%
	\includegraphics[width=\columnwidth]{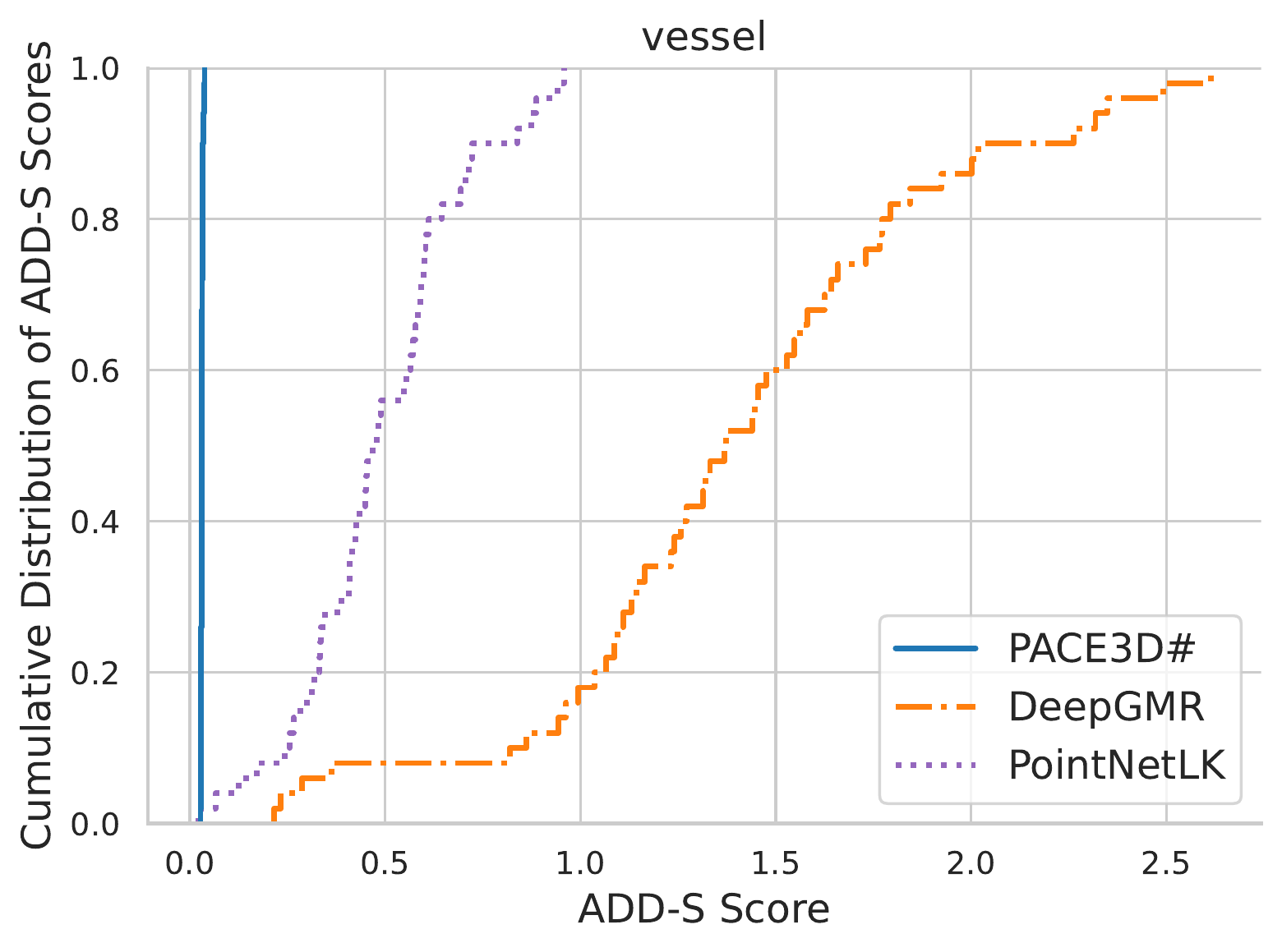} \\
	\vspace{1mm}
	\end{minipage}
	\end{tabular}
	\end{minipage}
	\vspace{-2mm}
	\caption{Performance of \PACErobustThree with a few baseline methods on \keypointnet for all 16 categories.}
  \label{fig:app-keypointnet-categories}
	\end{center}
\end{figure*}

\subsection{Details and Results on \keypointnet}~\label{sec:app-experiments-keypointnet}

\JS{

\myParagraph{Network architecture} The keypoint prediction network follows the architecture of the classification network described in~\cite{Zhao21iccv-PointTransformer}, with the last classification layer replaced with a linear layer with output dimension equal to $3 \times N$, where $N$ is the number of keypoints for one object.
We train for a maximum of 20 epochs for each object, with a batch size of 50.
We use the standard stochastic gradient descent optimizer with momentum, with learning rate of 0.02 and momentum of 0.9.

\myParagraph{Dataset Details} \keypointnet is a large-scale 3D keypoint dataset containing 8329 models from 16 object categories~\cite{You20cvpr-KeypointNetLargescale}.
The models are subset of the ShapeNetCore dataset~\cite{Chang15arxiv-shapenet}.
For each category, the authors of \keypointnet filtered out models that deviate significantly from the majority,
and establish a consistent canonical orientation across object models.
For each object category, we select one object and render depth point clouds using Open3D~\cite{Zhou18arxiv-open3D}.
We apply a random translation bounded within $[0, 1]^{3}$, normalized by the objects' diameters, and apply a uniformly sampled rotation to the object.
We generate 5000 samples as the training set, and 50 samples for test and validation sets each.

\keypointnet provides unique semantic IDs for all keypoints, and we use such IDs to construct shape libraries for each category (see Table~\ref{tbl:app-keypointnet-shapes}).
For each category, we first obtain the semantic IDs of the keypoint annotations for the object under test.
We then construct the shape library by selecting, among all available models, only the models that have keypoint annotations with the exact same set of semantic IDs.

\begin{table}[hbtp!]
  \centering
\begin{tabular}{@{}lll@{}}
\toprule
Category & N  & K    \\ \midrule
airplane   & 14 & 437  \\
bathtub    & 12 & 17   \\
bed        & 10 & 57   \\
bottle     & 17 & 291  \\
cap        & 6  & 36   \\
car        & 22 & 708  \\
chair      & 10 & 517  \\
guitar     & 9  & 587  \\
helmet     & 9  & 75   \\
knife      & 6  & 139  \\
laptop     & 6  & 439  \\
motorcycle & 14 & 139  \\
mug        & 11 & 183  \\
skateboard & 10 & 109  \\
table      & 8  & 1096 \\
vessel     & 16 & 7    \\ \bottomrule
\end{tabular}
\caption{$N$ and $K$ for each category's shape library used for \PACErobustThree.\label{tbl:app-keypointnet-shapes}}
\end{table}

\myParagraph{Additional Results}
Fig.~\ref{fig:app-keypointnet-avg} show cumulative ADD-S score distribution averaged over all 16 categories.
EquiPose is excluded as the authors do not provide pretrained models for all categories.
\PACErobustThree  outperforms both DeepGMR and PointNetLK significantly.
Fig.~\ref{fig:app-keypointnet-categories} shows the cumulative ADD-S score distribution for each of the 16 categories.
Among all categories, \PACErobustThree outperforms DeepGMR and PointNetLK, and achieves comparable or better performance comparing to EquiPose.
}

\renewcommand{\mpwfour}{4.6cm}
\renewcommand{\myhspace}{\hspace{-3.5mm}}
\begin{figure*}[hbtp!]
	\begin{center}
	\begin{minipage}{\textwidth}
	\begin{tabular}{cccc}%
		\myhspace \hspace{-3mm}
			\begin{minipage}{\mpwfour}%
			\centering%
			\includegraphics[width=\columnwidth]{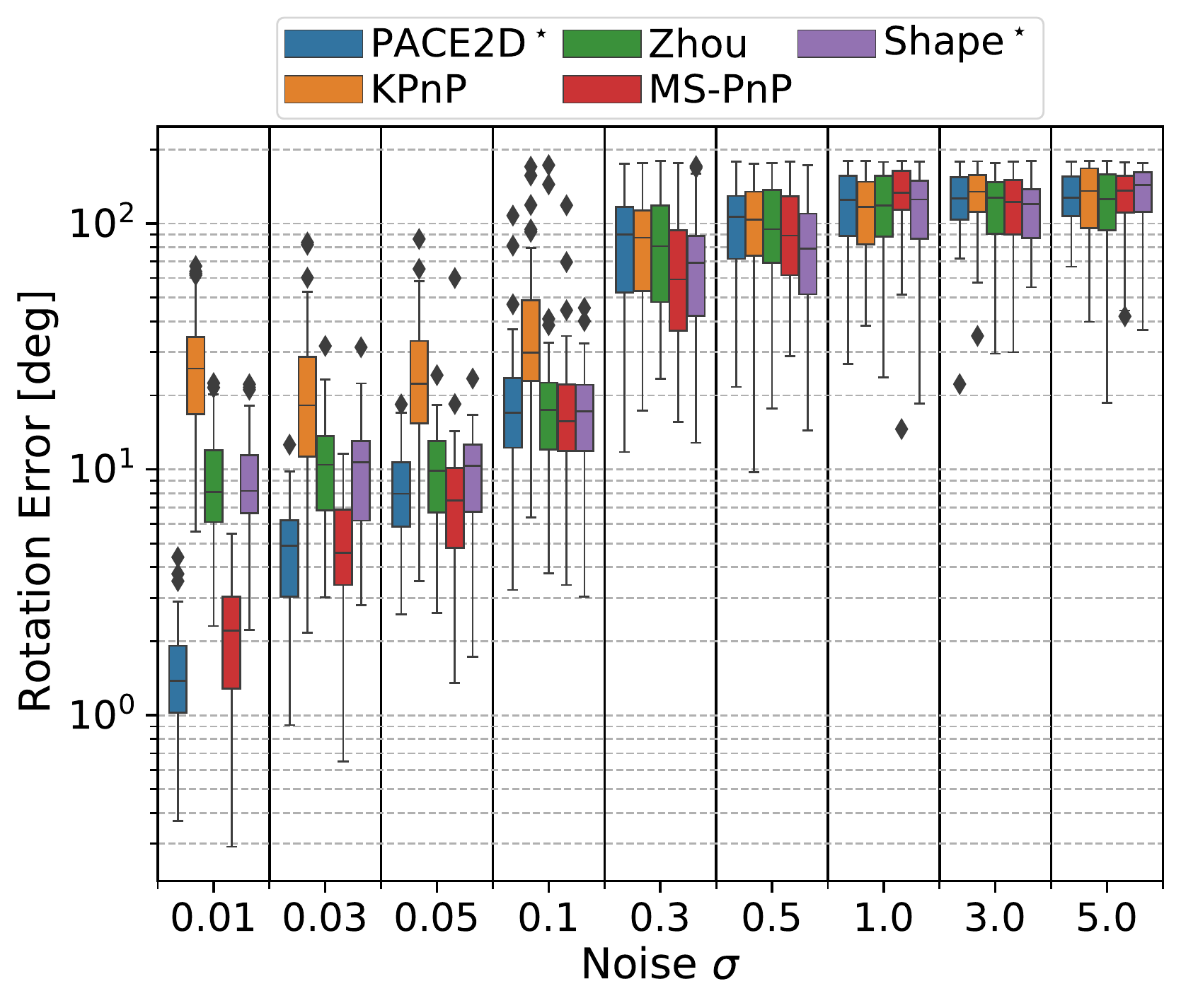}
			\end{minipage}
		&   \myhspace
			\begin{minipage}{\mpwfour}%
			\centering%
			\includegraphics[width=\columnwidth]{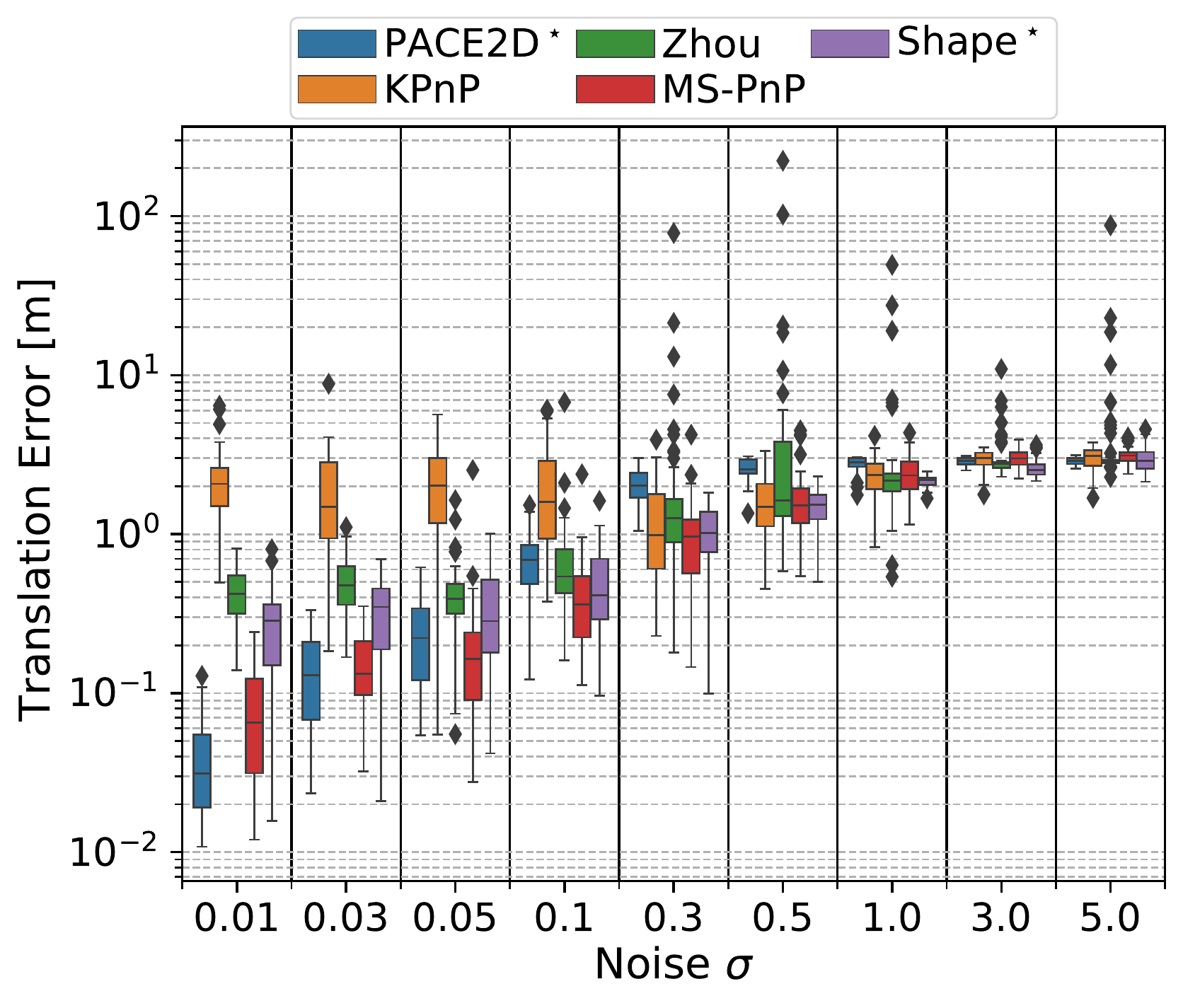}
			\end{minipage}
		&   \myhspace
			\begin{minipage}{\mpwfour}%
			\centering%
			\includegraphics[width=\columnwidth]{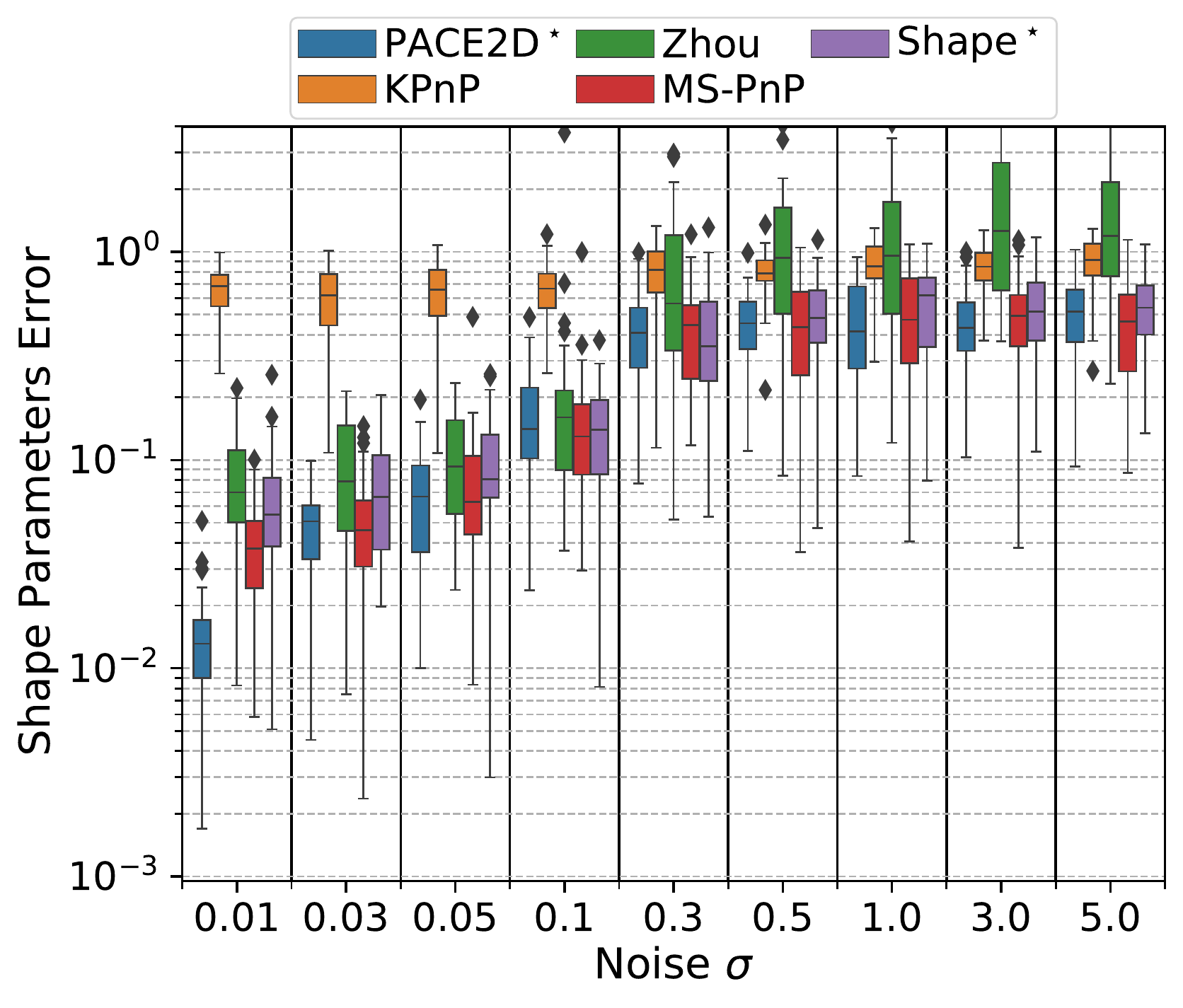}
			\end{minipage}
		&   \myhspace
			\begin{minipage}{\mpwfour}%
			\centering%
			\includegraphics[width=\columnwidth]{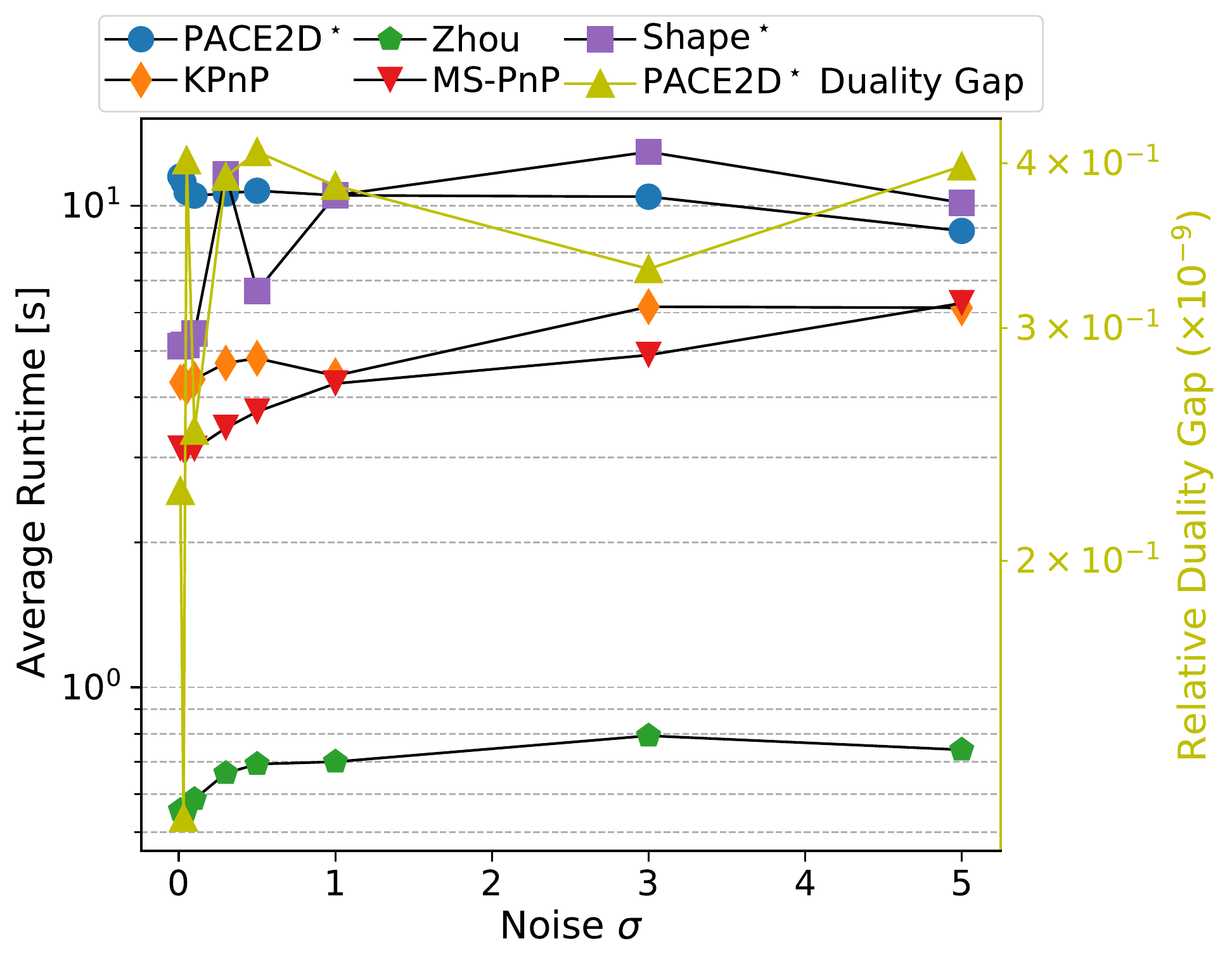}
			\end{minipage}
	\end{tabular}
	\end{minipage}
	\caption{\JS{Performance of \PACETwoLTwo on outlier-free synthetic data with varying noise level: $N=8$; $\vc$ sampled from $\Delta_{K}$ uniformly at random. \vspace{1mm}}
	  \label{fig:app-pace2d-noise}}
	\vspace{-5mm}
	\end{center}
\end{figure*}

\begin{figure*}[hbtp!]
	\begin{center}
	\begin{minipage}{\textwidth}
	\begin{tabular}{cccc}%
		\myhspace \hspace{-3mm}
			\begin{minipage}{\mpwfour}%
			\centering%
			\includegraphics[width=\columnwidth]{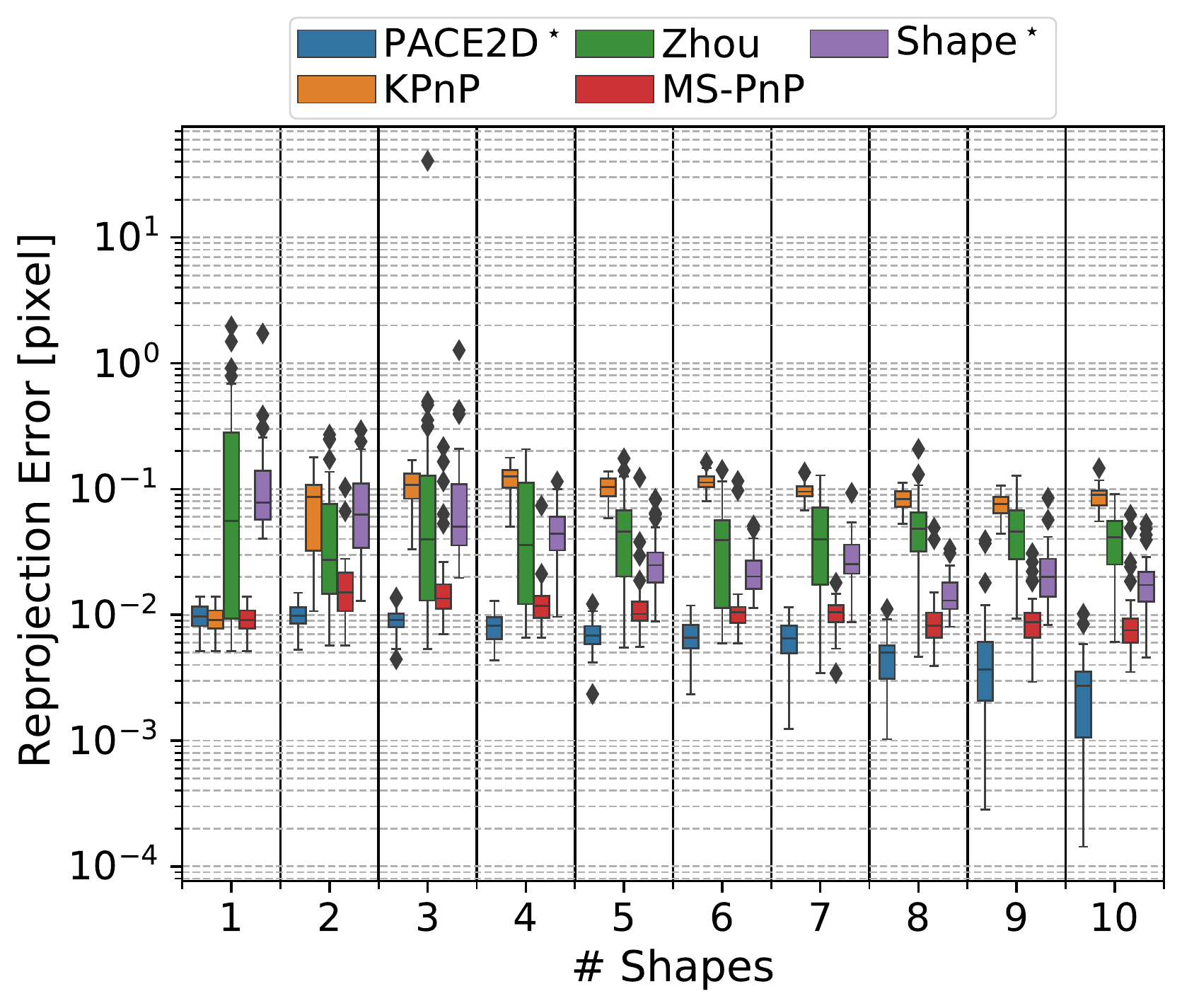}
			\end{minipage}
		&   \myhspace
			\begin{minipage}{\mpwfour}%
			\centering%
			\includegraphics[width=\columnwidth]{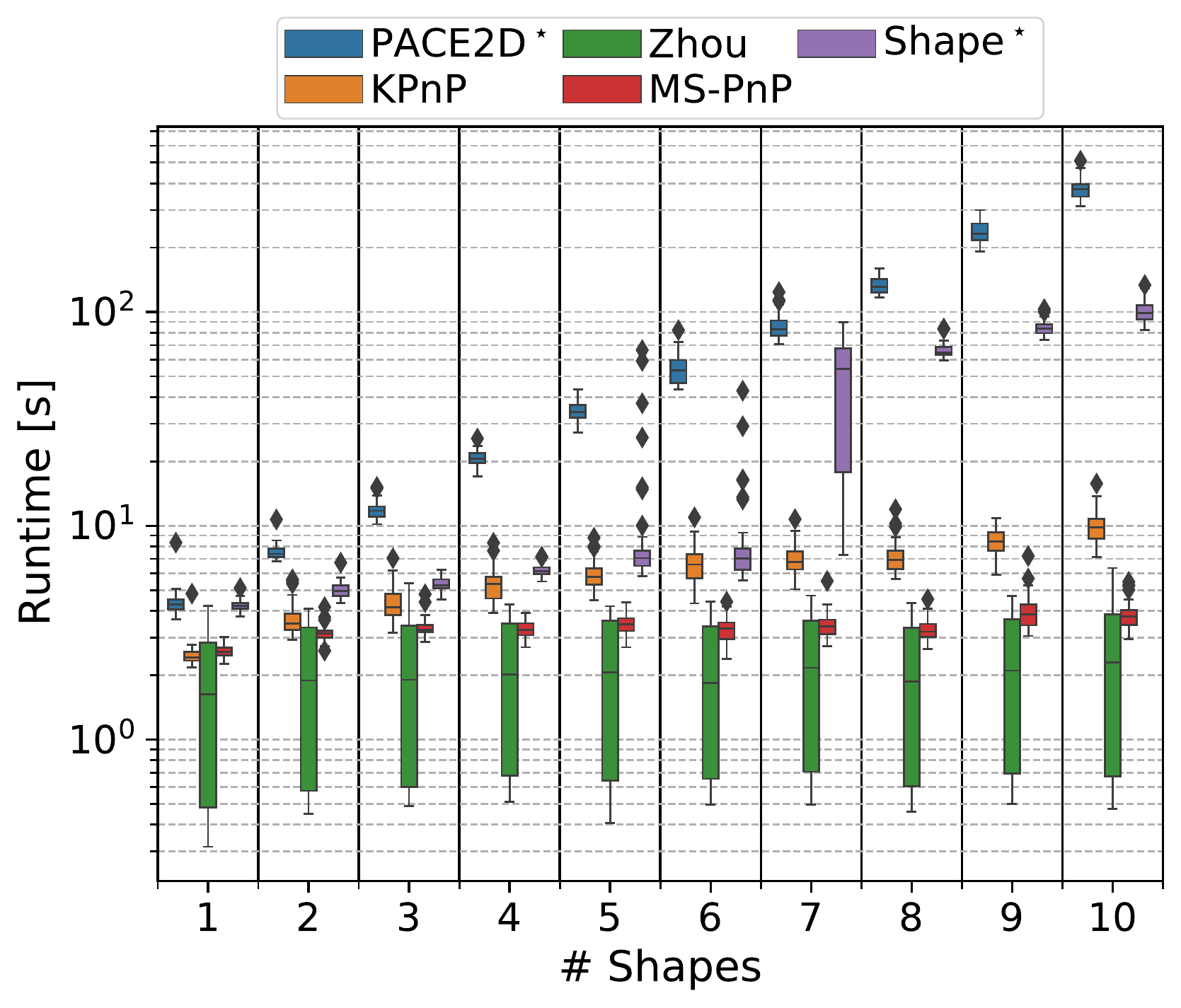}
			\end{minipage}
		&   \myhspace
			\begin{minipage}{\mpwfour}%
			\centering%
			\includegraphics[width=\columnwidth]{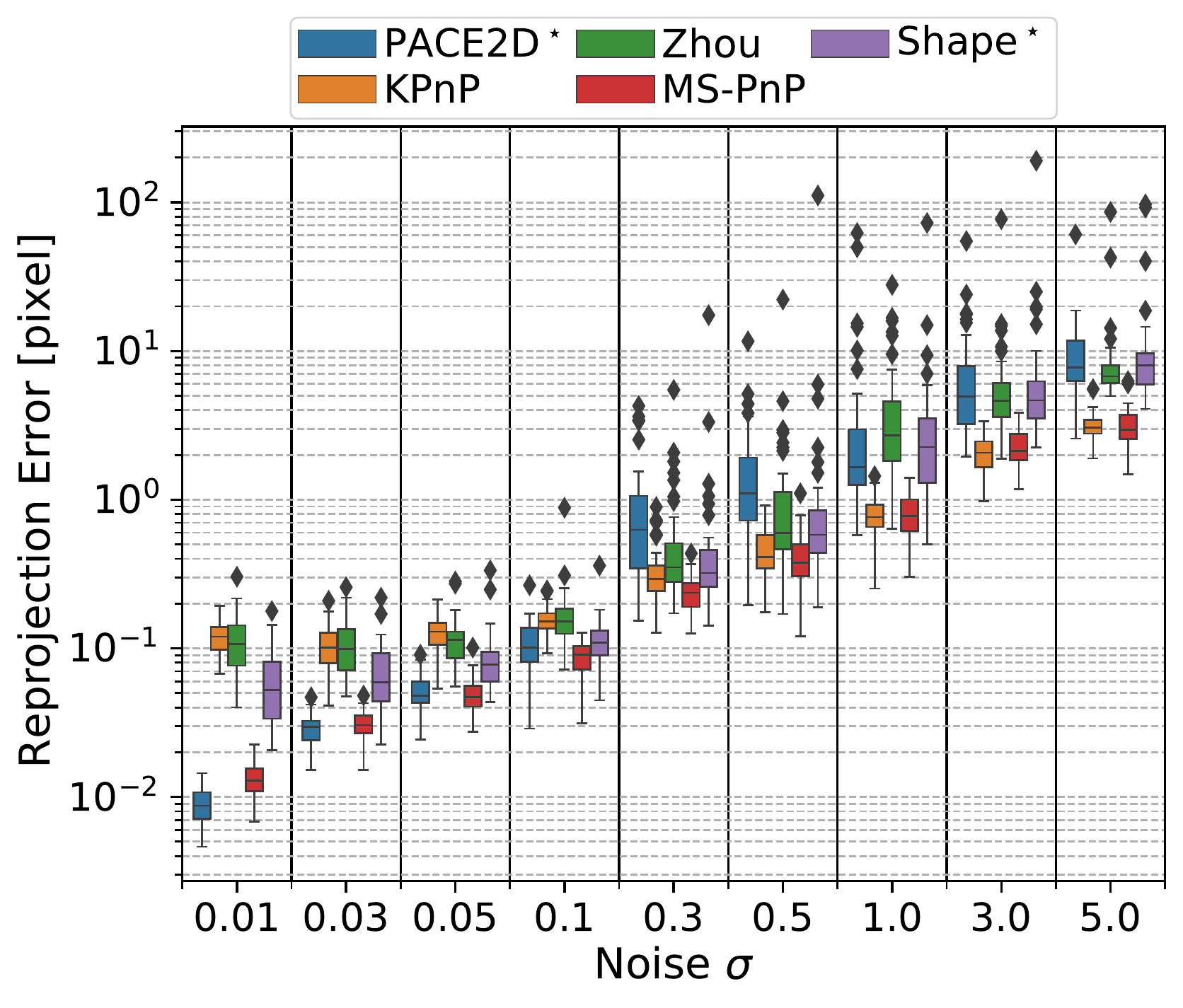}
			\end{minipage}
		&   \myhspace
			\begin{minipage}{\mpwfour}%
			\centering%
			\includegraphics[width=\columnwidth]{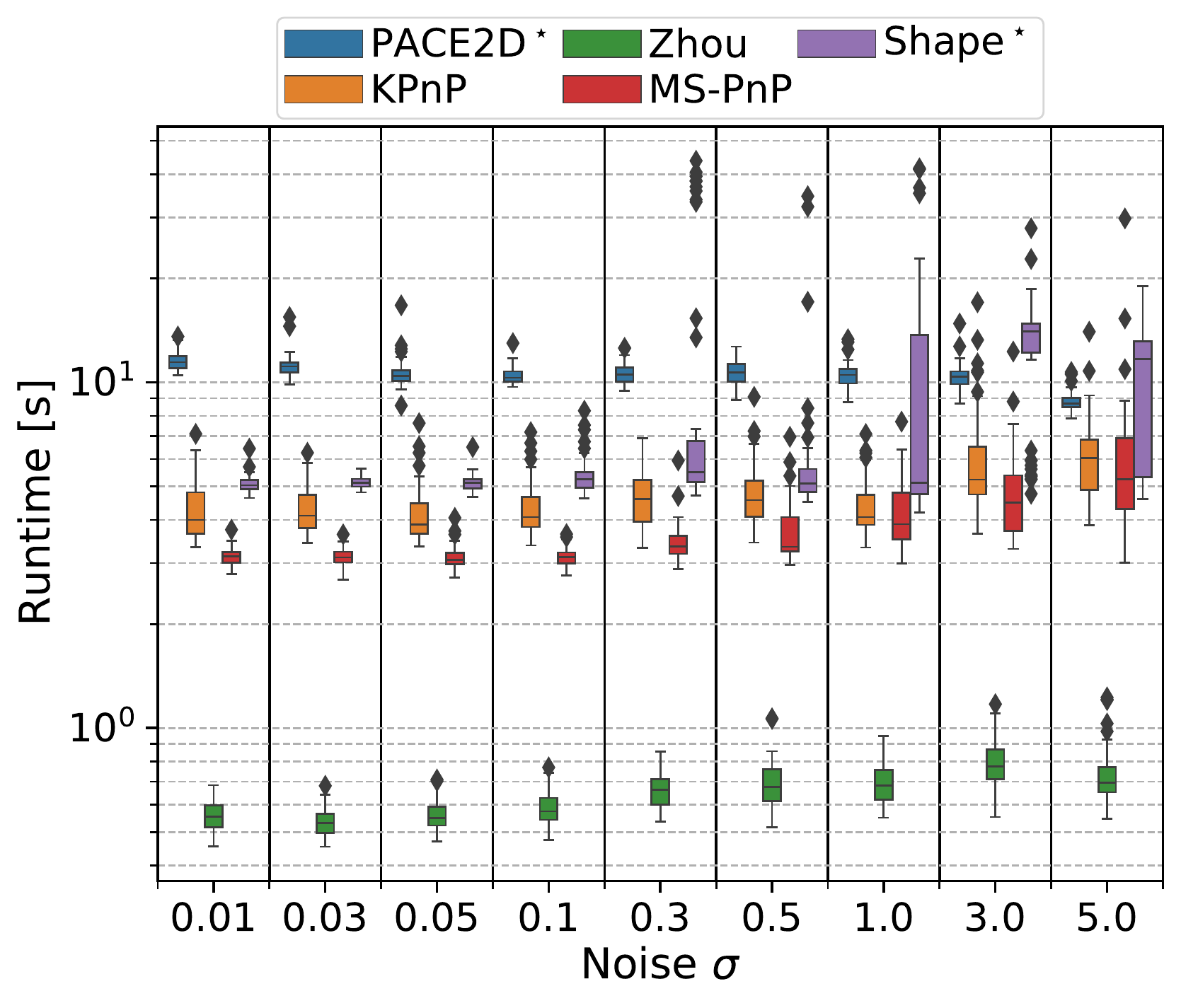}
			\end{minipage}
		\\
		\multicolumn{2}{c}{\smaller \JS{(a) Varying $K$; $\vc$ sampled from $\Delta_{K}$ uniformly at random.}} &
		\multicolumn{2}{c}{\smaller \JS{(b) Varying noise; $\vc$ sampled from $\Delta_{K}$ uniformly at random.}}
	\end{tabular}
	\end{minipage}
	\caption{\JS{Reprojection errors and runtimes of \PACETwoLTwo with respect to varying $K$ and noises. (a) and (b): $\vc$ sampled from $\Delta_{K}$ uniformly at random; (c) and (d): $\vc$ sampled as a random one-hot vector; $N=8$ for all runs. Each boxplot reports statistics computed over 50 Monte Carlo runs.}
	  \label{fig:app-simulation-optimality-residuals-runtime-2d}}
	\vspace{-5mm}
	\end{center}
\end{figure*}

\renewcommand{\mpwfour}{4.6cm}
\renewcommand{\myhspace}{\hspace{-3.5mm}}
\begin{figure*}[hbtp!]
	\begin{center}
	\begin{minipage}{\textwidth}
	\begin{tabular}{cccc}%
		\myhspace \hspace{-3mm}
			\begin{minipage}{\mpwfour}%
			\centering%
			\includegraphics[width=\columnwidth]{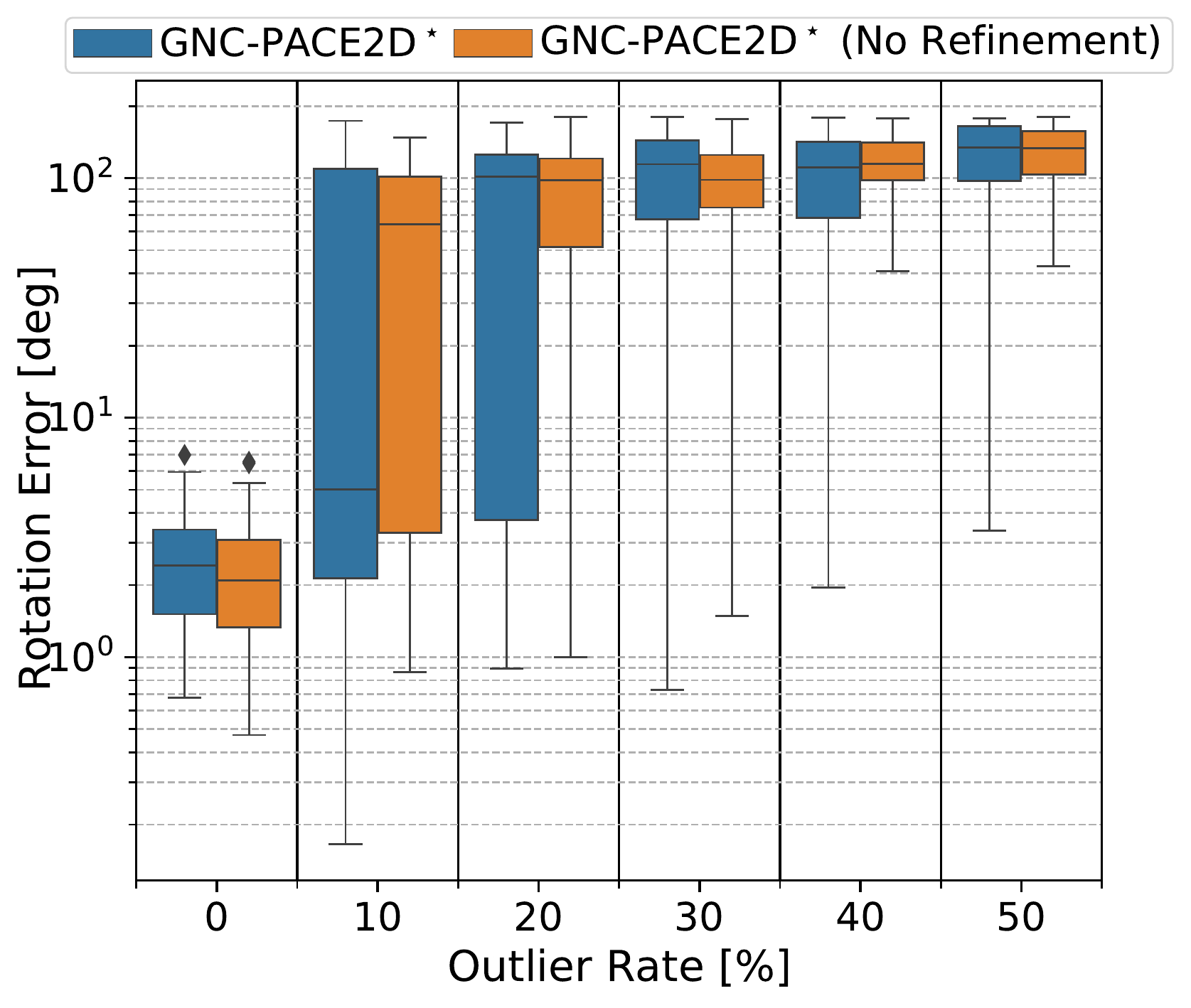}
			\end{minipage}
		&   \myhspace
			\begin{minipage}{\mpwfour}%
			\centering%
			\includegraphics[width=\columnwidth]{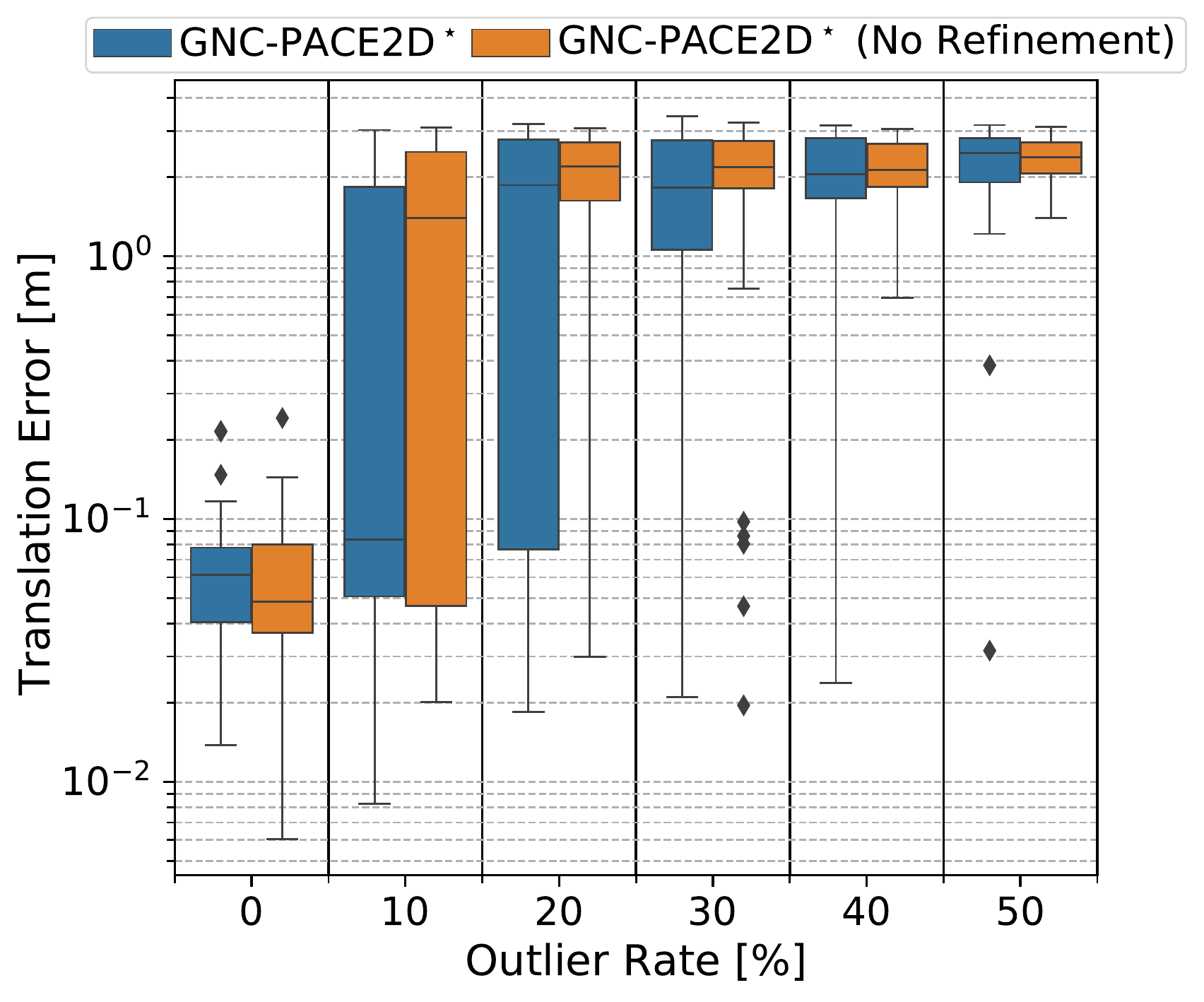}
			\end{minipage}
		&   \myhspace
			\begin{minipage}{\mpwfour}%
			\centering%
			\includegraphics[width=\columnwidth]{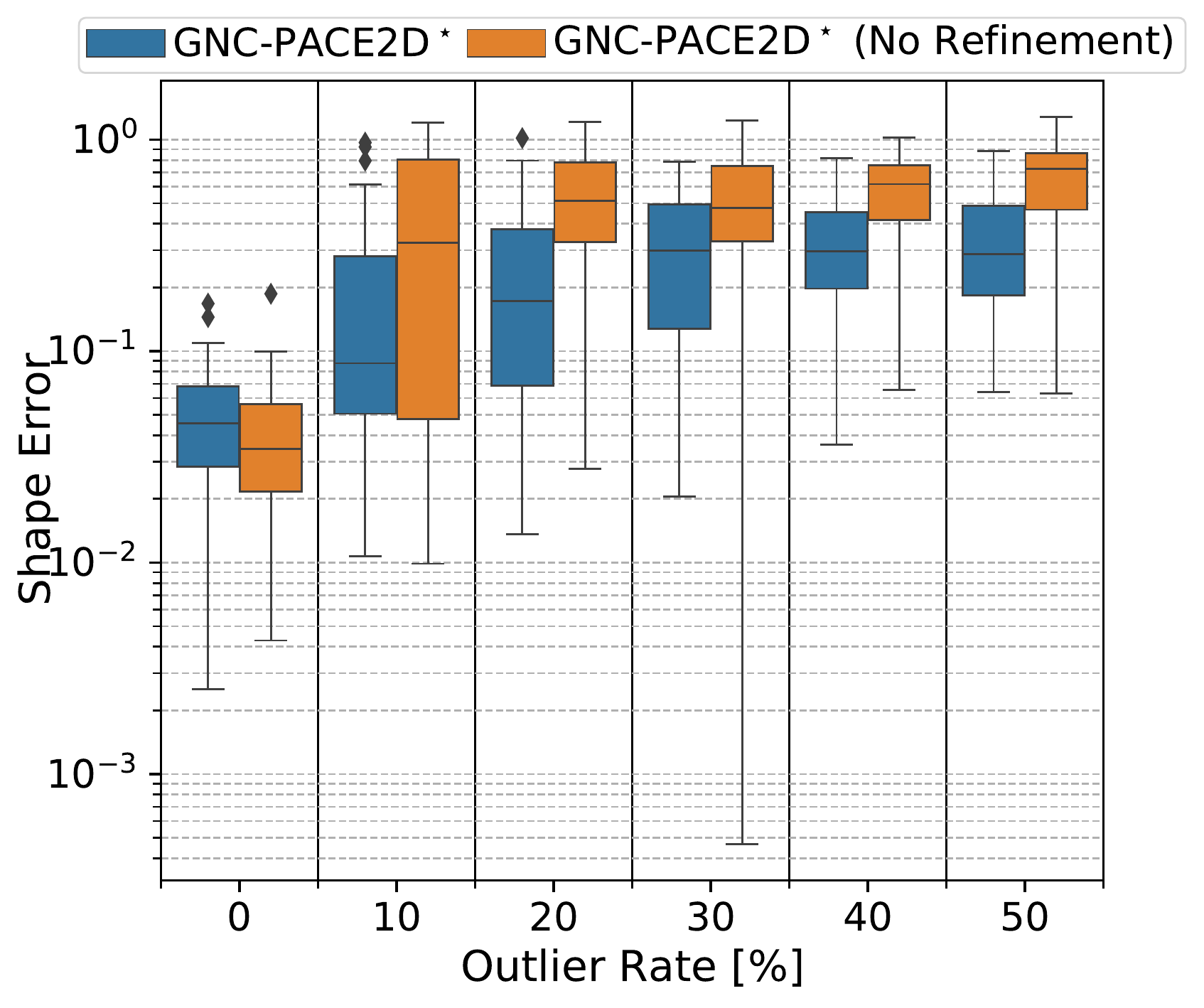}
			\end{minipage}
		&   \myhspace
			\begin{minipage}{\mpwfour}%
			\centering%
			\includegraphics[width=\columnwidth]{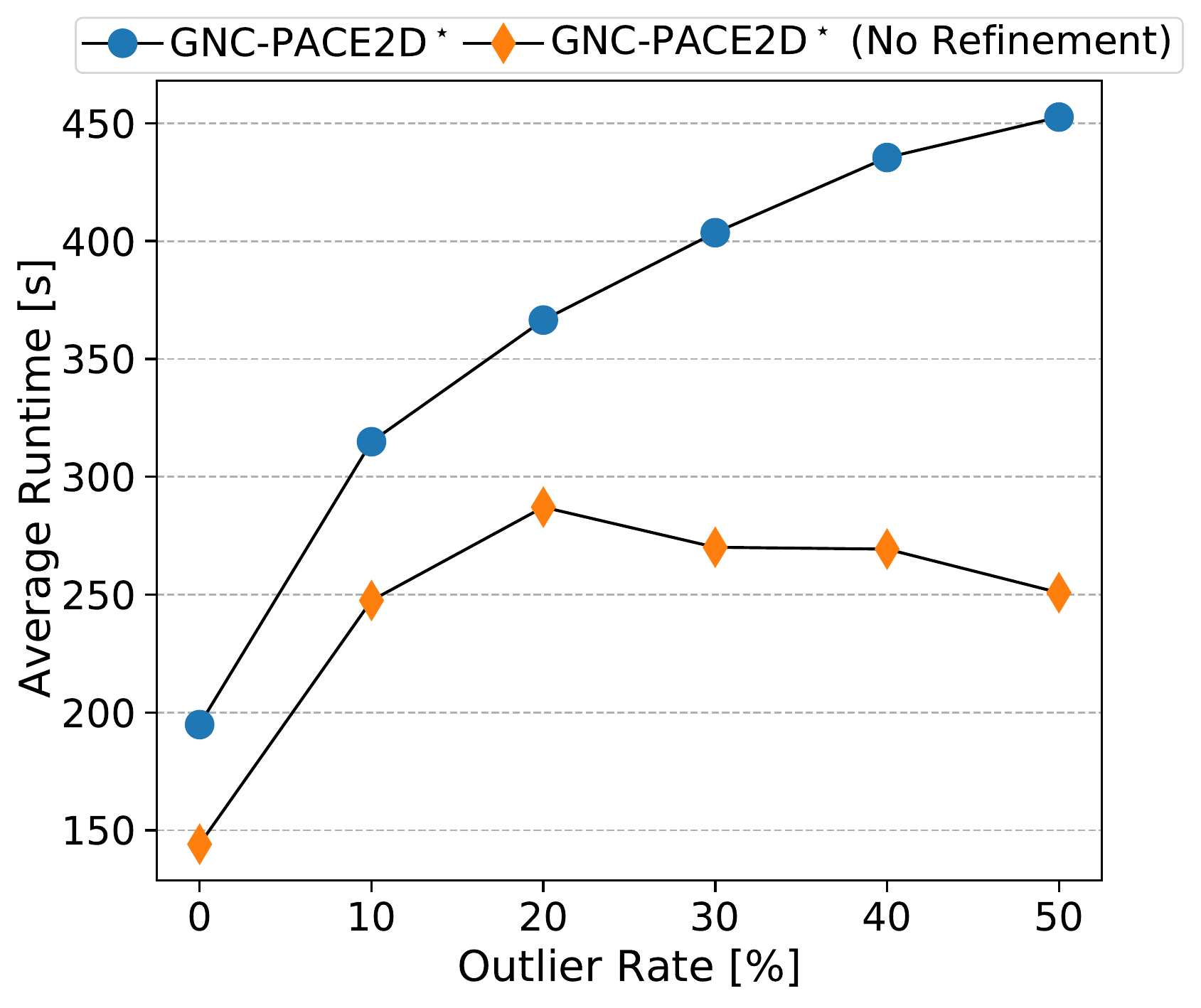}
			\end{minipage}
	\end{tabular}
	\end{minipage} 
	\caption{Performance of \PACETwoGNCLTwo with and without local refinement, see Appendix~\ref{sec:app-gnc-category}.	\label{fig:app-simulation-robust-2d}}
	\end{center}
\end{figure*}

\begin{figure*}[hbtp!]
	\begin{center}
	\begin{minipage}{\textwidth}
	\begin{tabular}{cccc}%
		\myhspace \hspace{-3mm}
			\begin{minipage}{\mpwfour}%
			\centering%
			\includegraphics[width=\columnwidth]{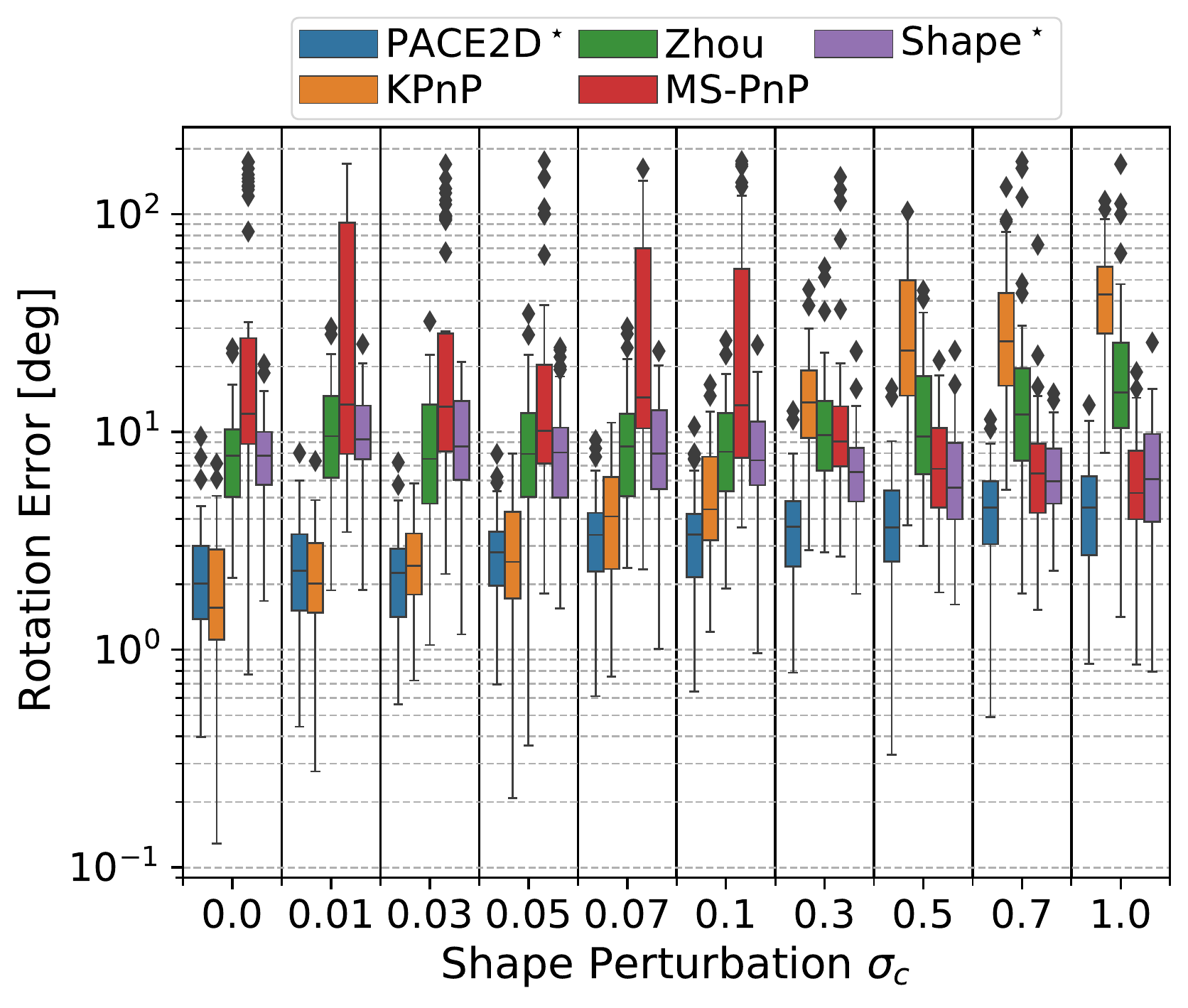}
			\end{minipage}
		&   \myhspace
			\begin{minipage}{\mpwfour}%
			\centering%
			\includegraphics[width=\columnwidth]{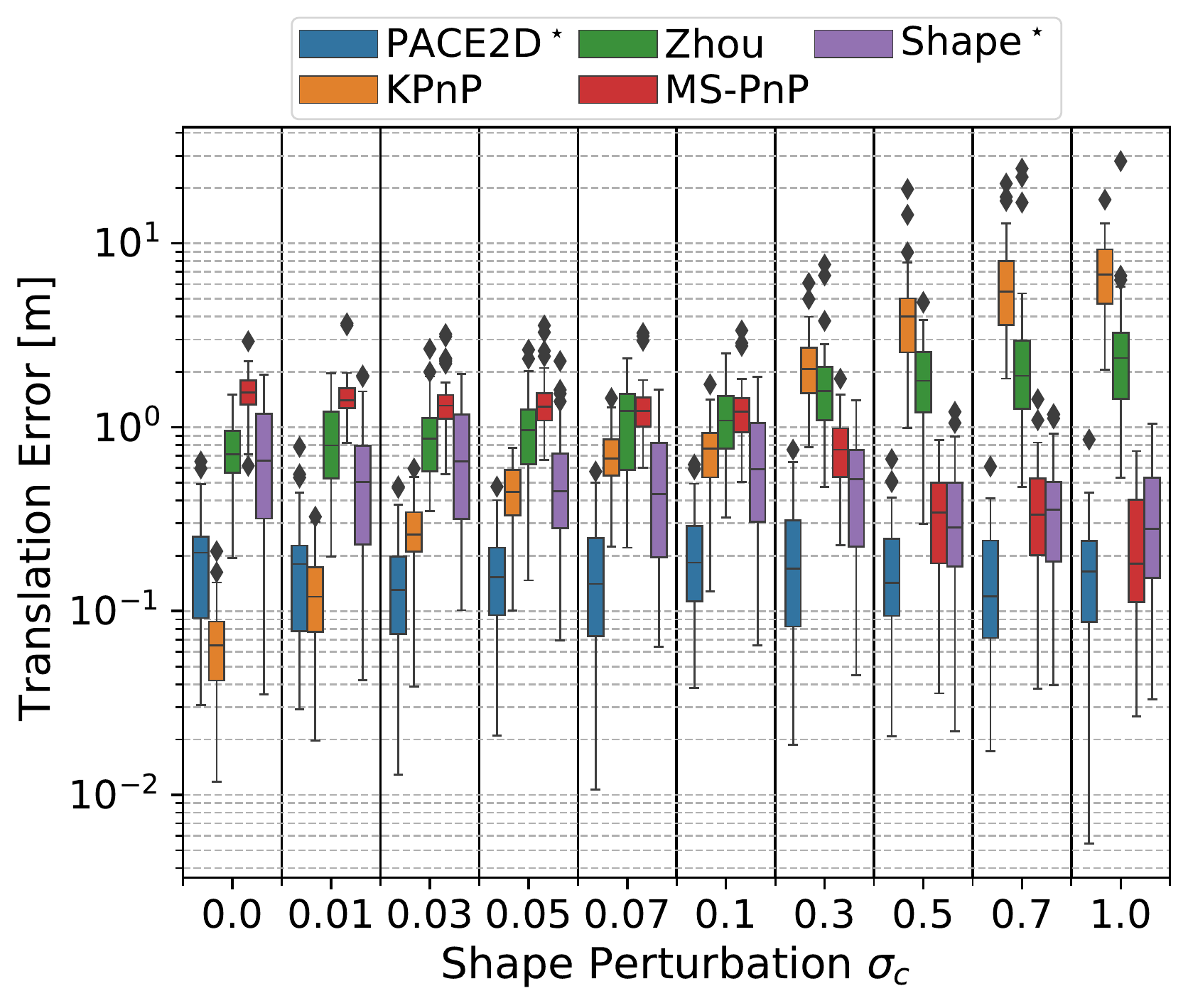}
			\end{minipage}
		&   \myhspace
			\begin{minipage}{\mpwfour}%
			\centering%
			\includegraphics[width=\columnwidth]{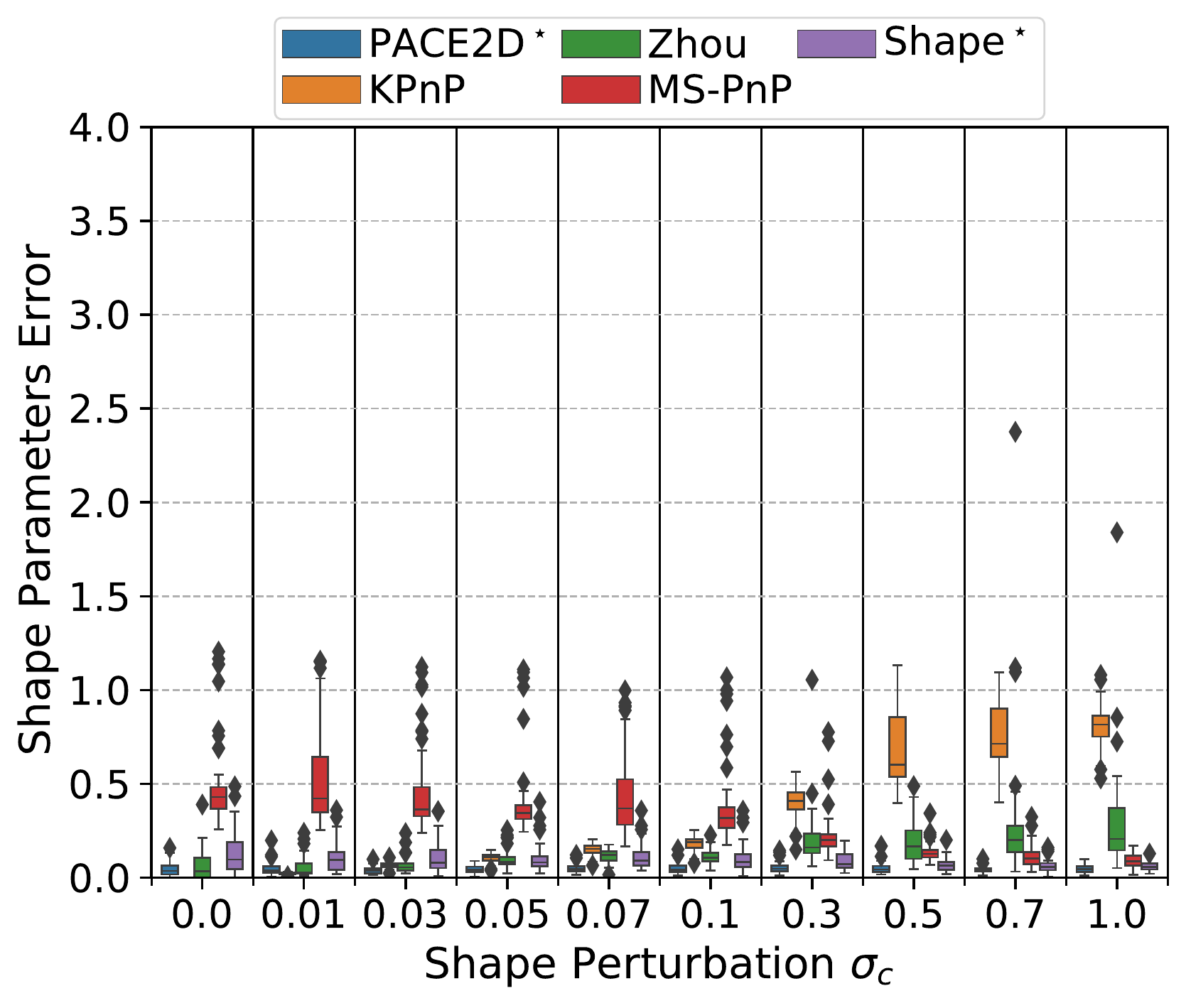}
			\end{minipage}
		&   \myhspace
			\begin{minipage}{\mpwfour}%
			\centering%
			\includegraphics[width=\columnwidth]{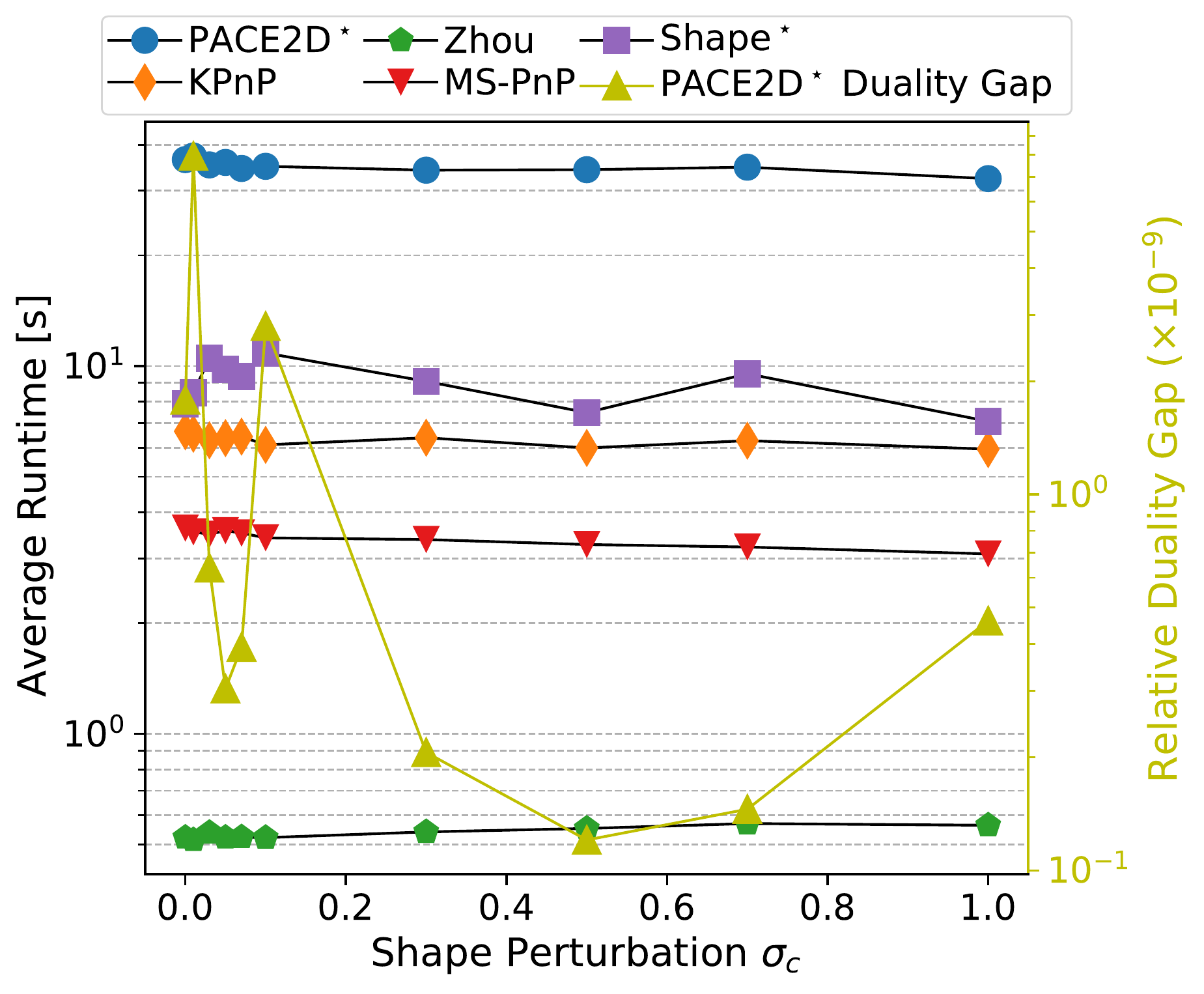}
			\end{minipage}
	\end{tabular}
	\end{minipage}
	\caption{\JS{Performance of \PACETwoLTwo with different levels of perturbation to an one-hot $\vc$.}
	  \label{fig:app-pace2d-shape-perturb}}
	\vspace{-5mm}
	\end{center}
\end{figure*}

\subsection{Results on \PACETwo and \PACErobustTwo}\label{sec:app-experiments-pace-2d}

\JS{
In Section~\ref{sec:exp-optimality-robustness-2d}, we demonstrated the optimality of \PACETwoLTwo under varying $K$.
Fig.~\ref{fig:app-pace2d-noise} shows the optimality of \PACETwoLTwo under varying noise.
Similarly to results shown in Section~\ref{sec:exp-optimality-robustness-2d}, \PACETwoLTwo achieves lower rotation, translation and shape errors when the shape vectors $\vc$ are sampled from $\Delta_{K}$ uniformly at random.

Fig.~\ref{fig:app-simulation-optimality-residuals-runtime-2d} shows the reprojection errors and runtime of \PACETwoLTwo under varying noise and $K$ for $\vc$ sampled from $\Delta_{K}$ uniformly at random.
Under fixed noise with varying $K$,
\PACETwoLTwo consistently achieves lowest reprojection errors across all tests.
Under varying noise with fixed $K$,
\PACETwoLTwo achieves lowest reprojection errors at noise $\sigma=0.01$,
and experiences degradation of performance starting from $\sigma=0.03$.
}

In addition, Fig.~\ref{fig:app-simulation-robust-2d} (b) shows  \PACETwoGNCLTwo running with and without local refinement minimizing the reprojection cost.
Evidently, \PACETwoGNCLTwo without refinement has consistently more failure cases,
especially in terms of shape errors.
Note that using refinement incurs a higher runtime cost, as in each \gnc iteration
we need to run additional iterations of the local solver minimizing the reprojection error.

\JS{
Last but not least, we conduct additional experiments to understand how the distribution of one-hot shape vectors $\vc$ affects the performance of \PACETwoLTwo.
For this experiment, we first sample $\vc$ as a random one-hot vector.
We then sample $K$ random values in the interval $[0, \sigma_{\vc}]$, and add them to $\vc$ component-wise.
$\vc$ is finally rescaled so that $\one\tran\vc =1$.
Fig.~\ref{fig:app-pace2d-shape-perturb} shows the performance of \PACETwoLTwo under different $\sigma_{\vc}$ values.
\PACETwoLTwo maintains low rotation, translation and rotation errors, together with low relative duality gap, across $\sigma_{\vc}$ values.
As expected, \meanPnP's performance increases steadily with the increase of $\sigma_{\vc}$, as the mean shape assumption gives better initialization with $\vc$ closer to being sampled uniformly at random.
\kpnp's performance degrades steadily with the increase of $\sigma_{\vc}$.
However, if $\vc$ are known to be one-hot vectors, \kpnp can achieve lower median errors than \PACETwoLTwo.
}

\subsection{Visualization of Results on \apollo}\label{sec:app-apollo-vis}

Fig.~\ref{fig:app-pace3d-vis} shows results obtained with \PACErobustThree on \apollo.
Fig.~\ref{fig:app-pace2d-vis} shows results obtained with \PACErobustTwo on \apollo.
In both figures, first four rows show successful results, while the last row demonstrates failure cases.

\renewcommand{\mpwthree}{5.8cm}
\renewcommand{\myhspace}{\hspace{-3.5mm}}

\begin{figure*}[h]\ContinuedFloat
	\begin{center}{}
	\begin{minipage}{\textwidth}
	\begin{tabular}{cccc}%
		
& 
\myhspace
	\begin{minipage}{\mpwthree}%
	\centering%
	\includegraphics[width=\columnwidth]{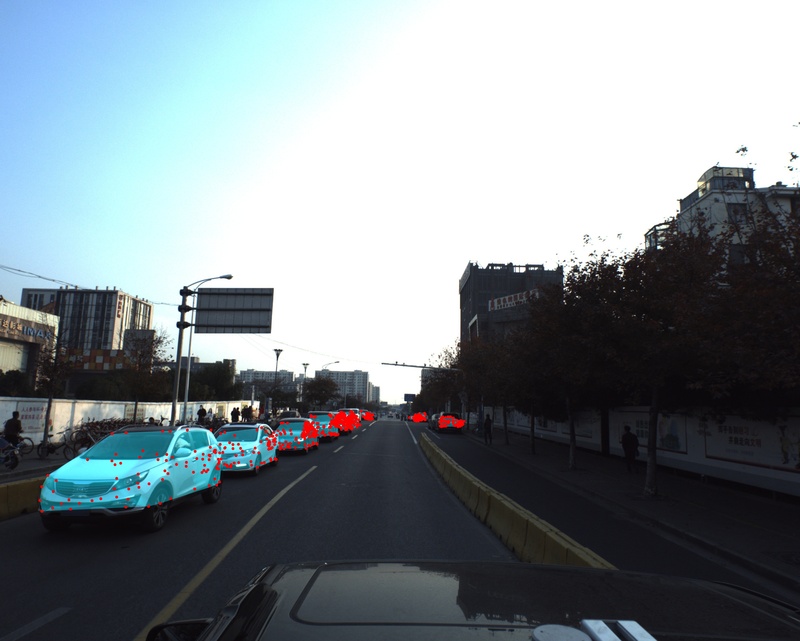} \\
	\vspace{1mm}
	\end{minipage}
& \myhspace
	\begin{minipage}{\mpwthree}%
	\centering%
	\includegraphics[width=\columnwidth]{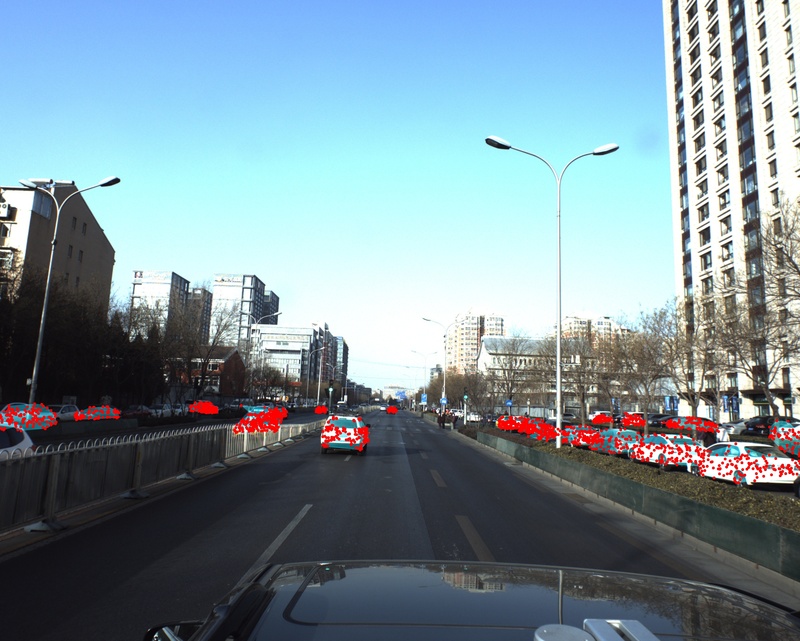} \\
	\vspace{1mm}
	\end{minipage}
& \myhspace
	\begin{minipage}{\mpwthree}%
	\centering%
	\includegraphics[width=\columnwidth]{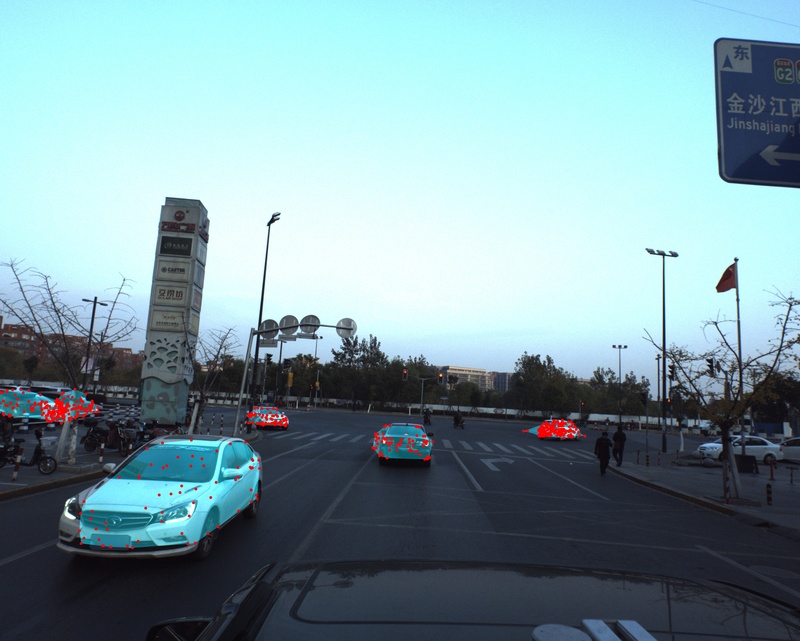} \\
	\vspace{1mm}
	\end{minipage} 
\\
&
\myhspace
	\begin{minipage}{\mpwthree}%
	\centering%
	\includegraphics[width=\columnwidth]{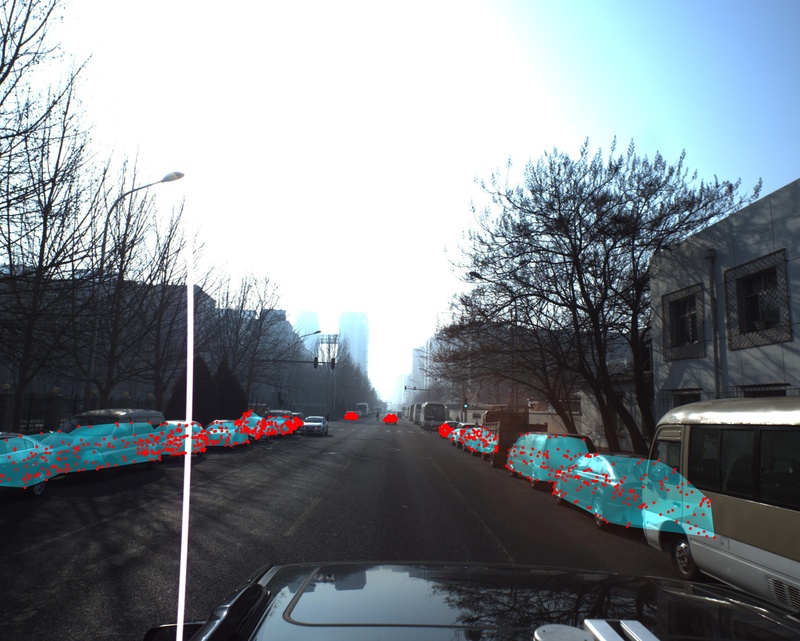} \\
	\vspace{1mm}
	\end{minipage}
& \myhspace
	\begin{minipage}{\mpwthree}%
	\centering%
	\includegraphics[width=\columnwidth]{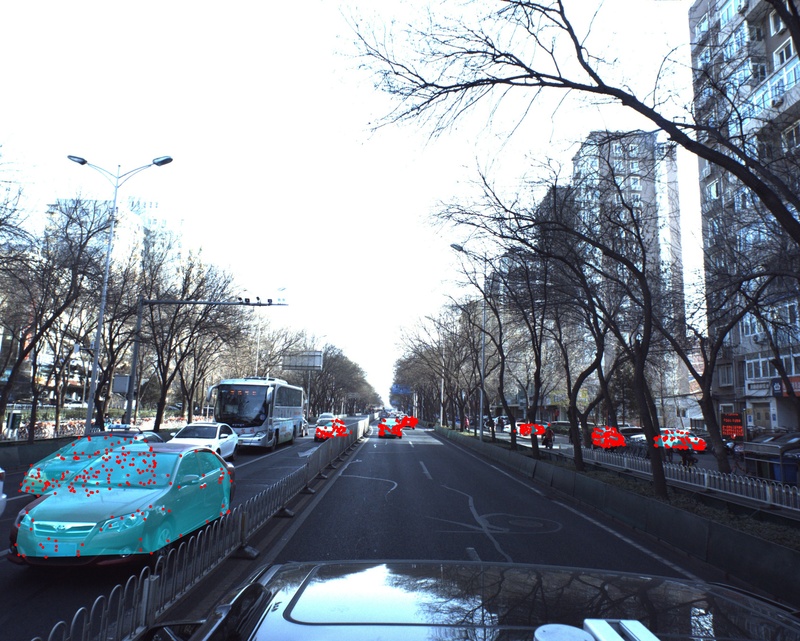} \\
	\vspace{1mm}
	\end{minipage}
& \myhspace
	\begin{minipage}{\mpwthree}%
	\centering%
	\includegraphics[width=\columnwidth]{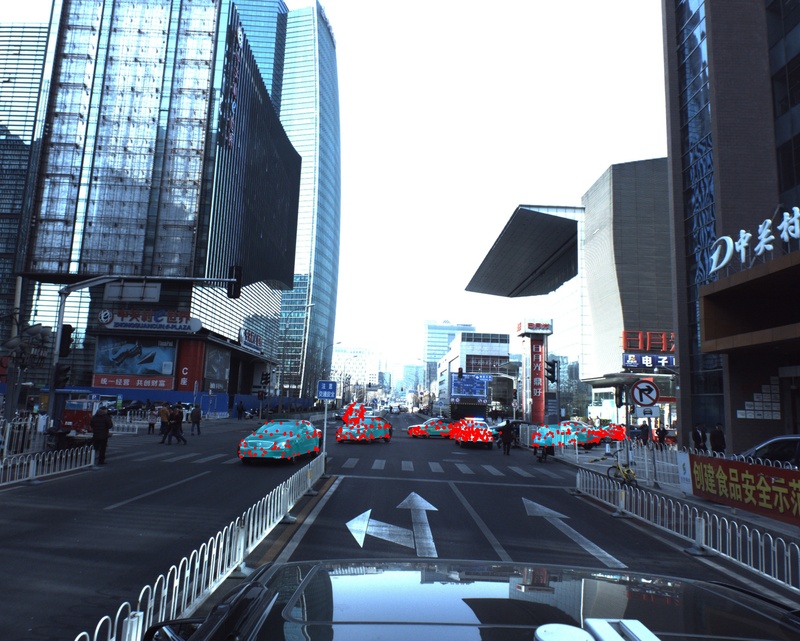} \\
	\vspace{1mm}
	\end{minipage} 
\\
&
\myhspace
	\begin{minipage}{\mpwthree}%
	\centering%
	\includegraphics[width=\columnwidth]{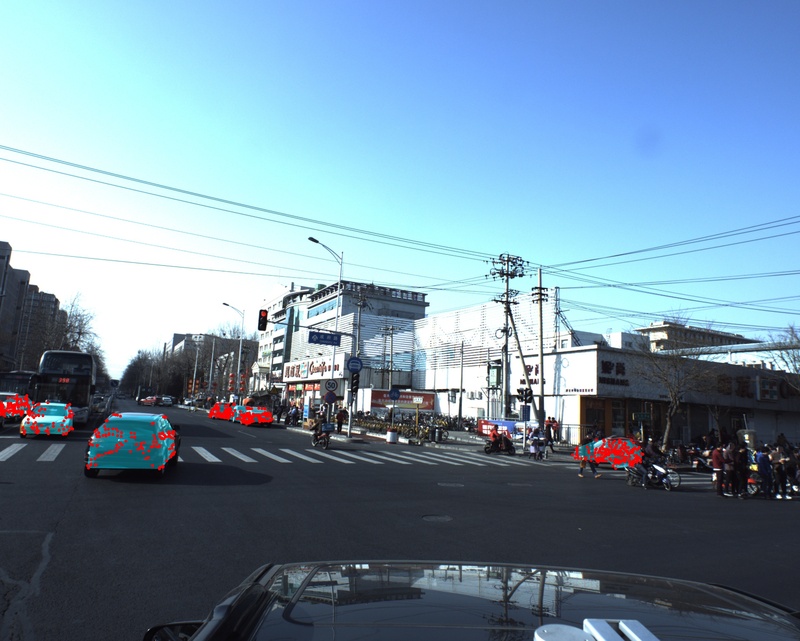} \\
	\vspace{1mm}
	\end{minipage}
& \myhspace
	\begin{minipage}{\mpwthree}%
	\centering%
	\includegraphics[width=\columnwidth]{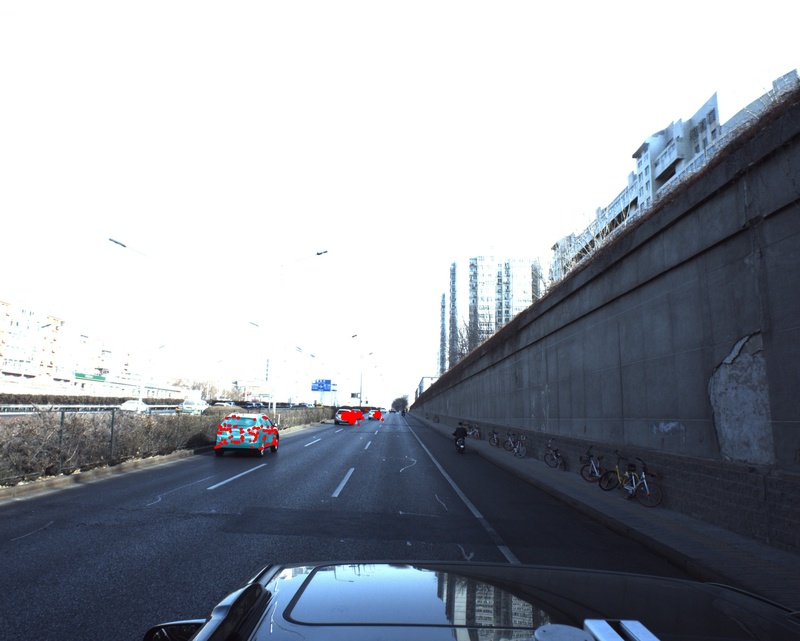} \\
	\vspace{1mm}
	\end{minipage}
& \myhspace
	\begin{minipage}{\mpwthree}%
	\centering%
	\includegraphics[width=\columnwidth]{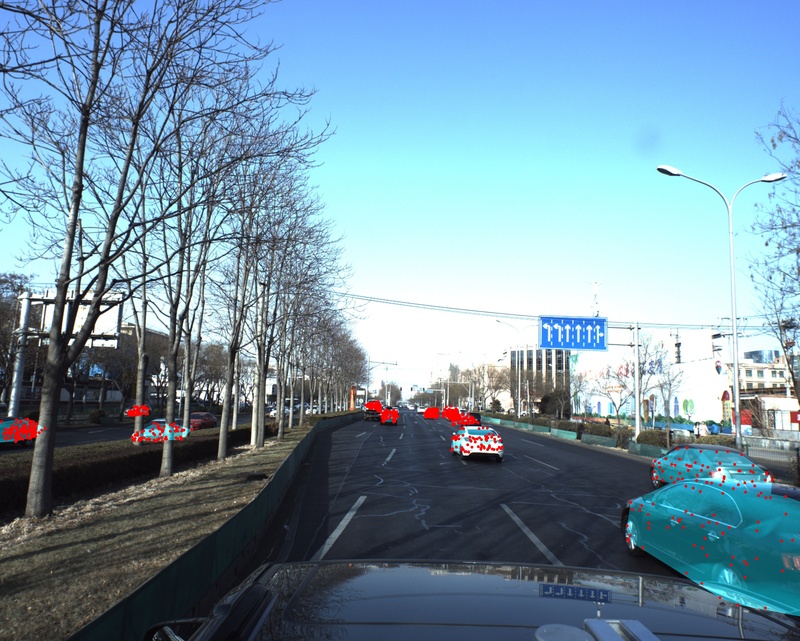} \\
	\vspace{1mm}
	\end{minipage} 
\\
&
\myhspace
	\begin{minipage}{\mpwthree}%
	\centering%
	\includegraphics[width=\columnwidth]{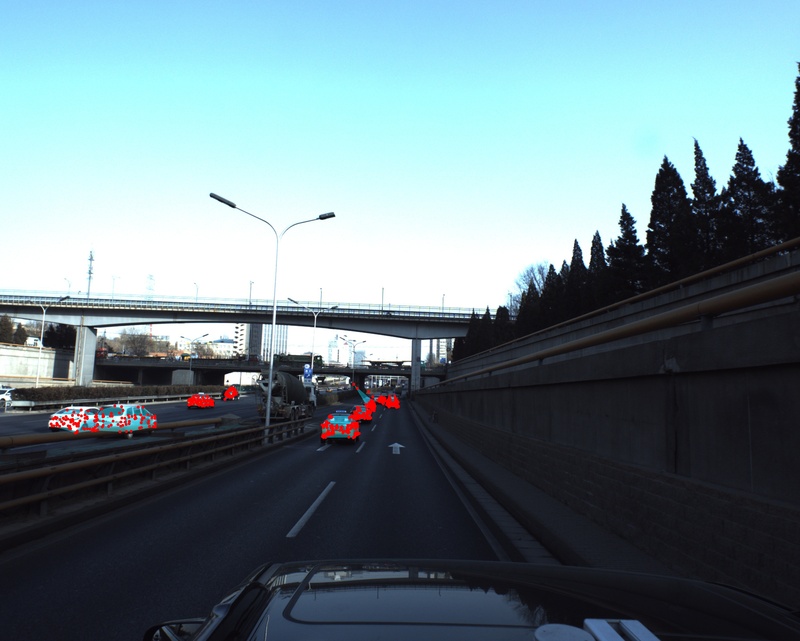} \\
	\vspace{1mm}
	\end{minipage}
& \myhspace
	\begin{minipage}{\mpwthree}%
	\centering%
	\includegraphics[width=\columnwidth]{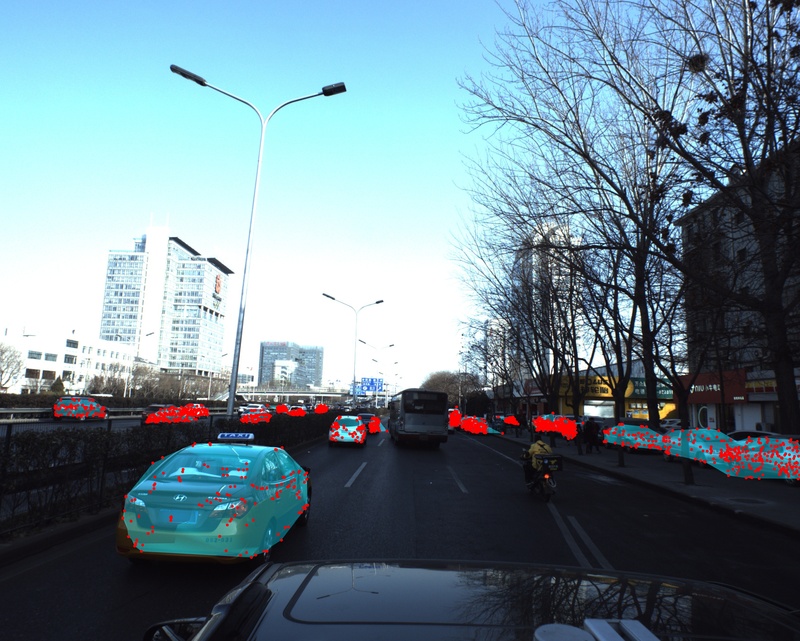} \\
	\vspace{1mm}
	\end{minipage}
& \myhspace
	\begin{minipage}{\mpwthree}%
	\centering%
	\includegraphics[width=\columnwidth]{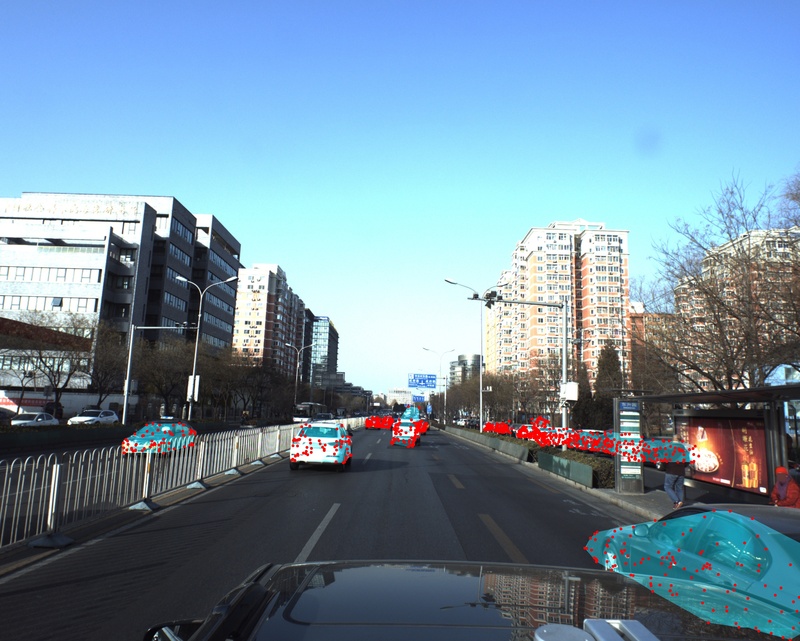} \\
	\vspace{1mm}
	\end{minipage} 
\\
\hline
\vspace{-2mm}
\\
	\myhspace \myhspace %
	\rotatebox{90}{\hspace{-7mm} \red{failures} } 
	&
\myhspace
	\begin{minipage}{\mpwthree}%
	\centering%
	\includegraphics[width=\columnwidth]{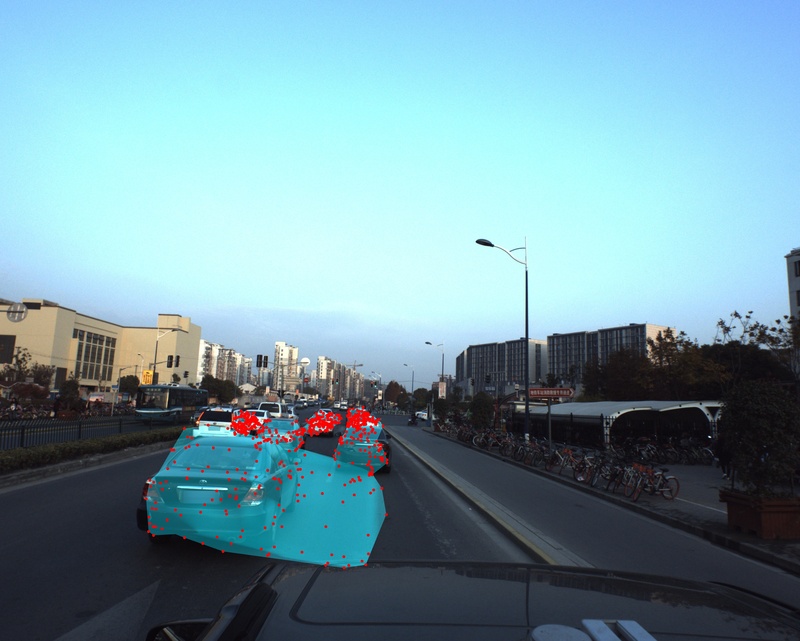} \\
	\vspace{1mm}
	\end{minipage}
& \myhspace
	\begin{minipage}{\mpwthree}%
	\centering%
	\includegraphics[width=\columnwidth]{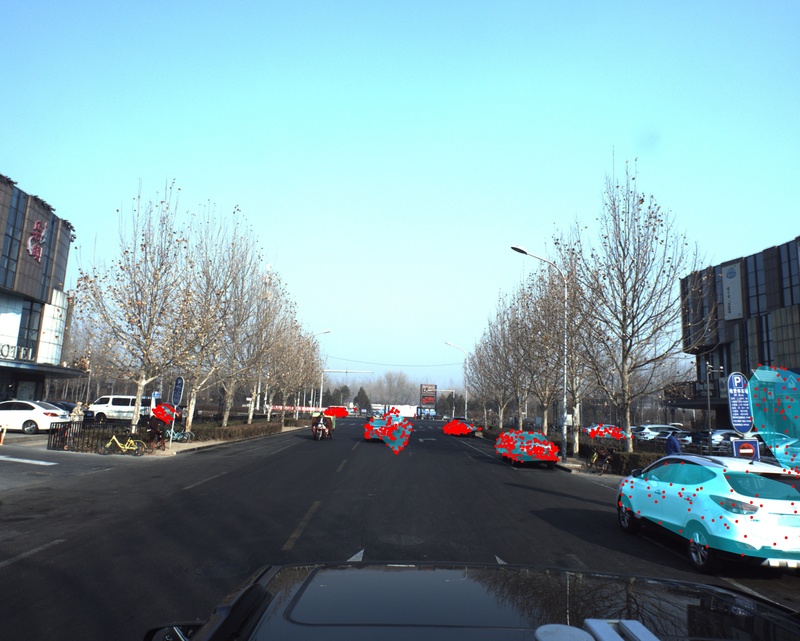} \\
	\vspace{1mm}
	\end{minipage}
& \myhspace
	\begin{minipage}{\mpwthree}%
	\centering%
	\includegraphics[width=\columnwidth]{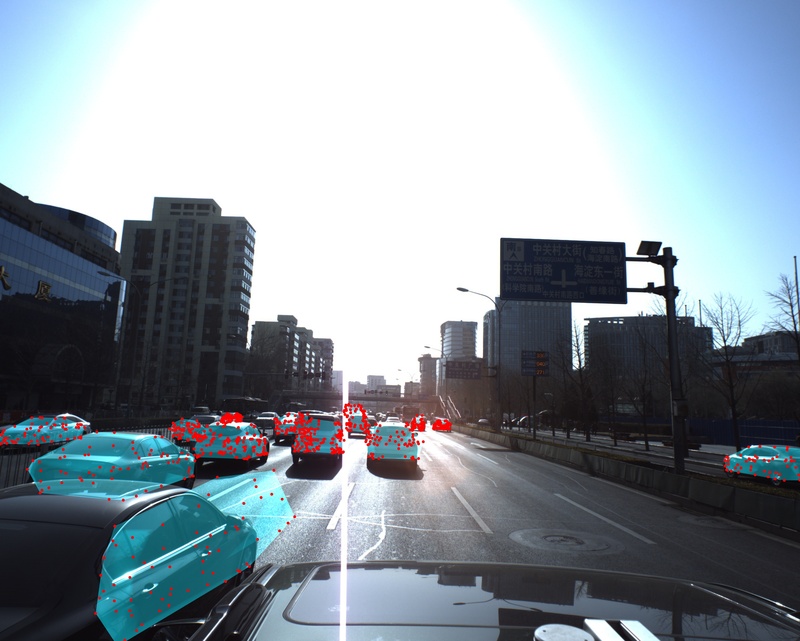} \\
	\vspace{1mm}
	\end{minipage} 
	\end{tabular}
	\end{minipage}
	\vspace{-2mm} 
	\caption{Qualitative results for \PACErobustThree: overlay of estimated vehicle pose and shape on the images from the \apollo dataset. 
	The images are manually selected out of the validation set in the dataset to showcase successful vehicle
	localization (top 4 rows) as well as failure cases (last row). }
  \label{fig:app-pace3d-vis}
	\end{center}
\end{figure*}

\renewcommand{\mpwthree}{5.8cm}
\renewcommand{\myhspace}{\hspace{-3.5mm}}

\begin{figure*}[h]\ContinuedFloat
	\begin{center}{}
	\begin{minipage}{\textwidth}
	\begin{tabular}{cccc}%
		
& 
\myhspace
	\begin{minipage}{\mpwthree}%
	\centering%
	\includegraphics[width=\columnwidth]{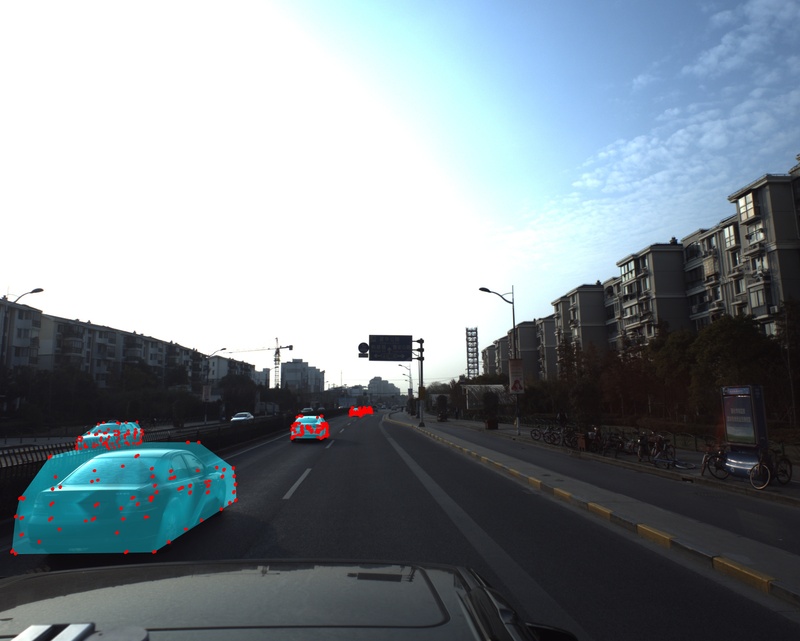} \\
	\vspace{1mm}
	\end{minipage}
& \myhspace
	\begin{minipage}{\mpwthree}%
	\centering%
	\includegraphics[width=\columnwidth]{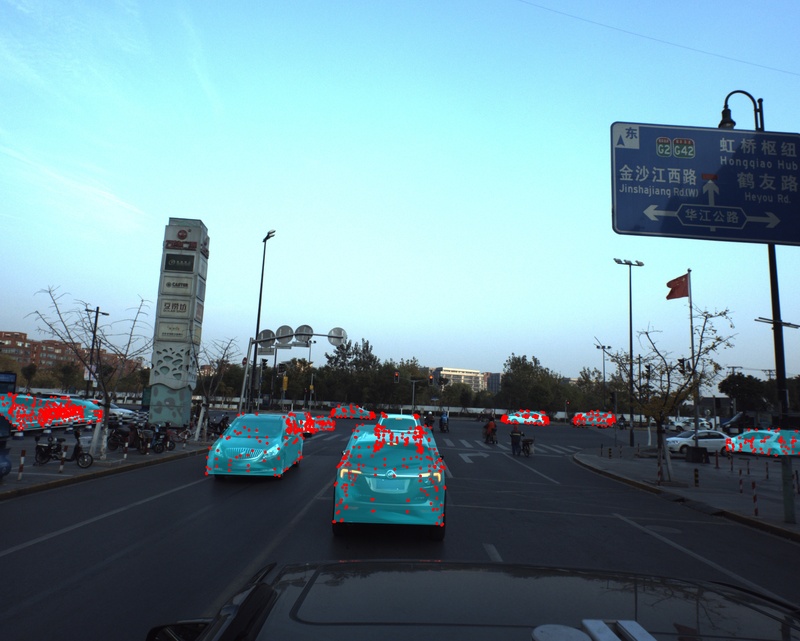} \\
	\vspace{1mm}
	\end{minipage}
& \myhspace
	\begin{minipage}{\mpwthree}%
	\centering%
	\includegraphics[width=\columnwidth]{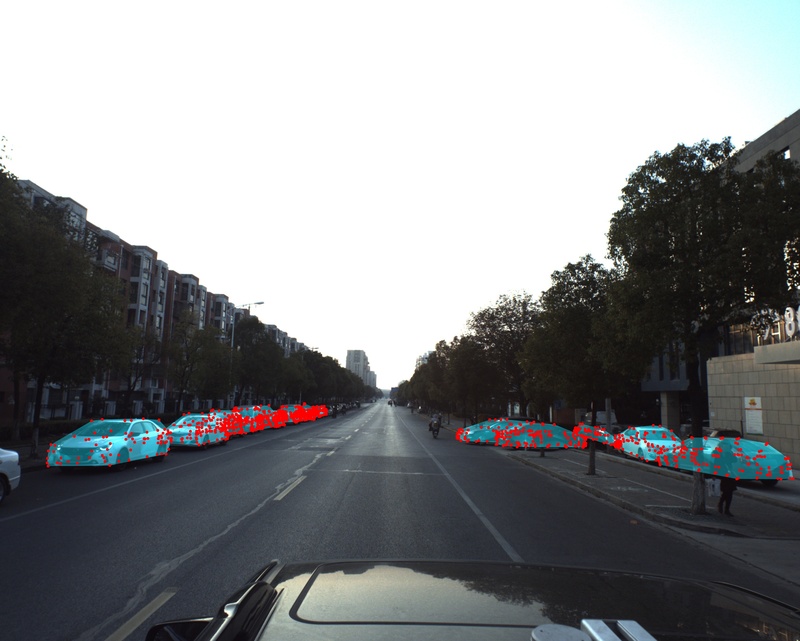} \\
	\vspace{1mm}
	\end{minipage} 
\\
&
\myhspace
	\begin{minipage}{\mpwthree}%
	\centering%
	\includegraphics[width=\columnwidth]{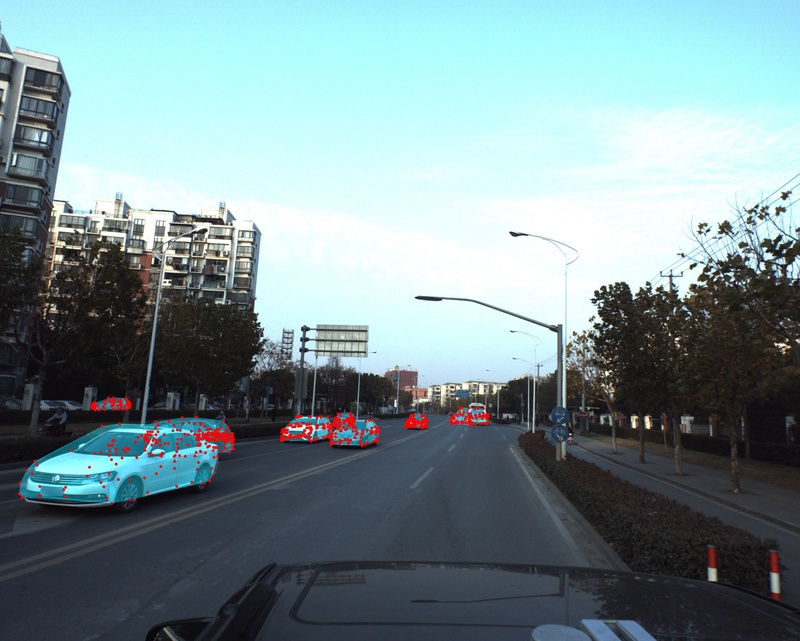} \\
	\vspace{1mm}
	\end{minipage}
& \myhspace
	\begin{minipage}{\mpwthree}%
	\centering%
	\includegraphics[width=\columnwidth]{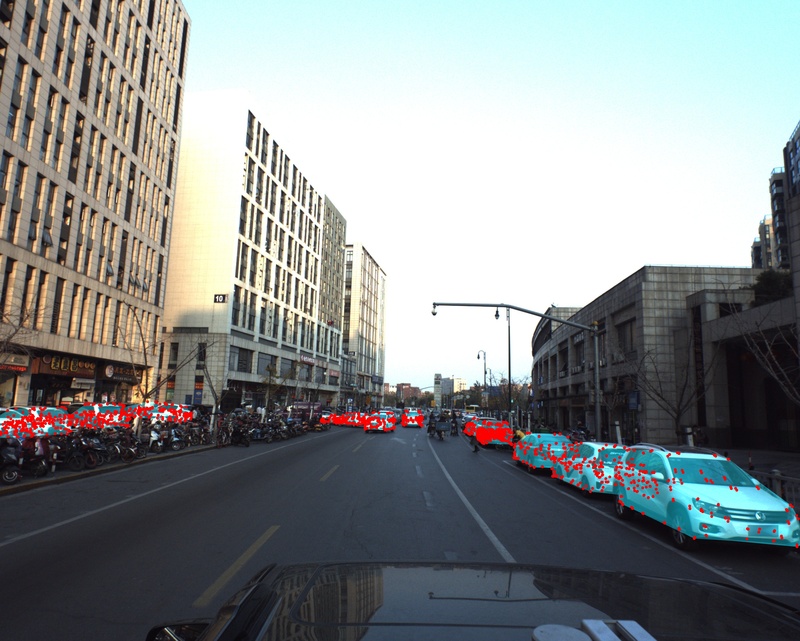} \\
	\vspace{1mm}
	\end{minipage}
& \myhspace
	\begin{minipage}{\mpwthree}%
	\centering%
	\includegraphics[width=\columnwidth]{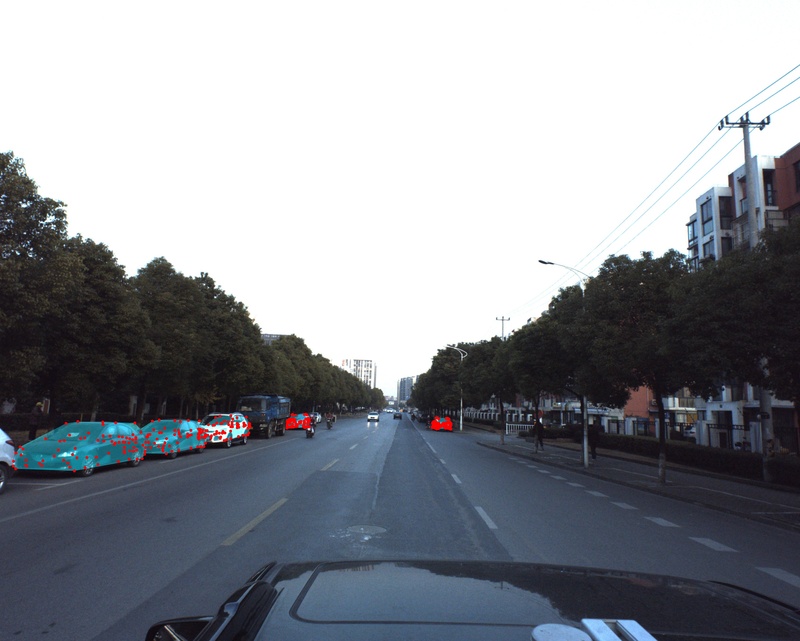} \\
	\vspace{1mm}
	\end{minipage} 
\\
&
\myhspace
	\begin{minipage}{\mpwthree}%
	\centering%
	\includegraphics[width=\columnwidth]{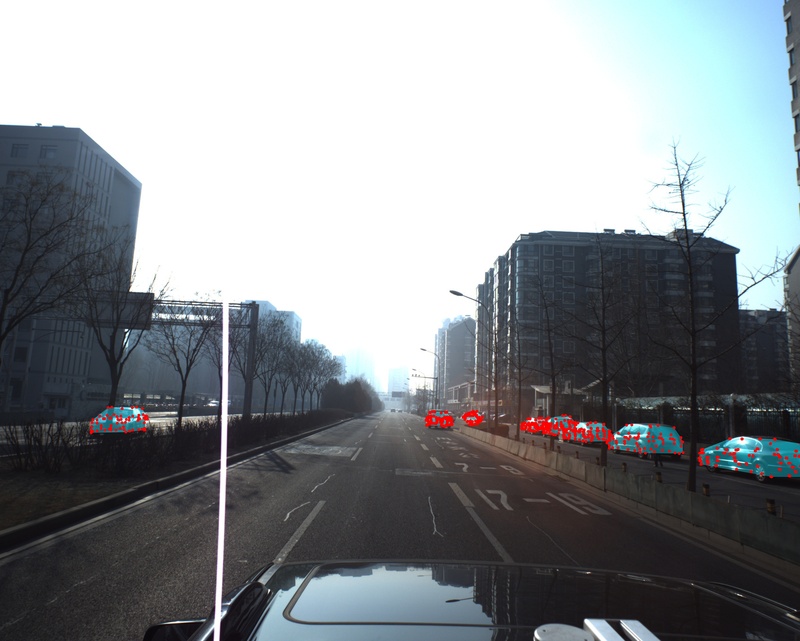} \\
	\vspace{1mm}
	\end{minipage}
& \myhspace
	\begin{minipage}{\mpwthree}%
	\centering%
	\includegraphics[width=\columnwidth]{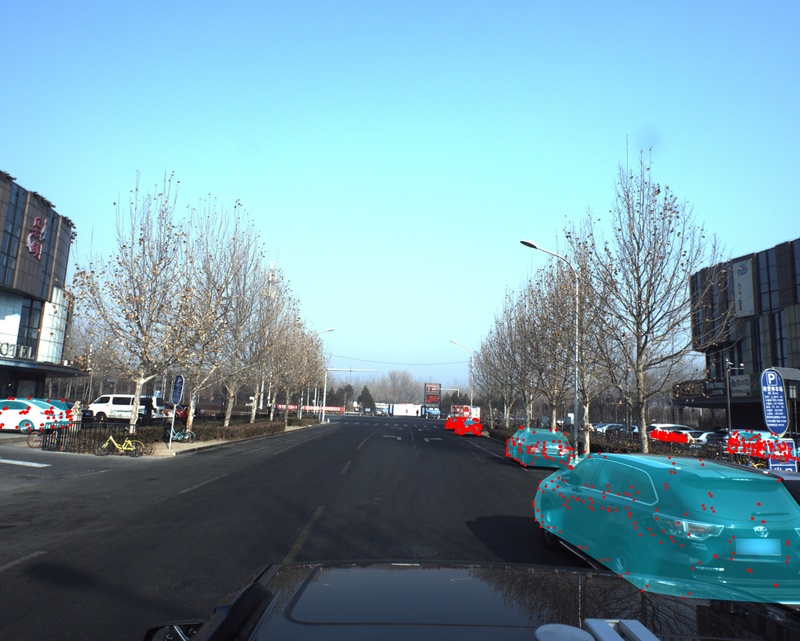} \\
	\vspace{1mm}
	\end{minipage}
& \myhspace
	\begin{minipage}{\mpwthree}%
	\centering%
	\includegraphics[width=\columnwidth]{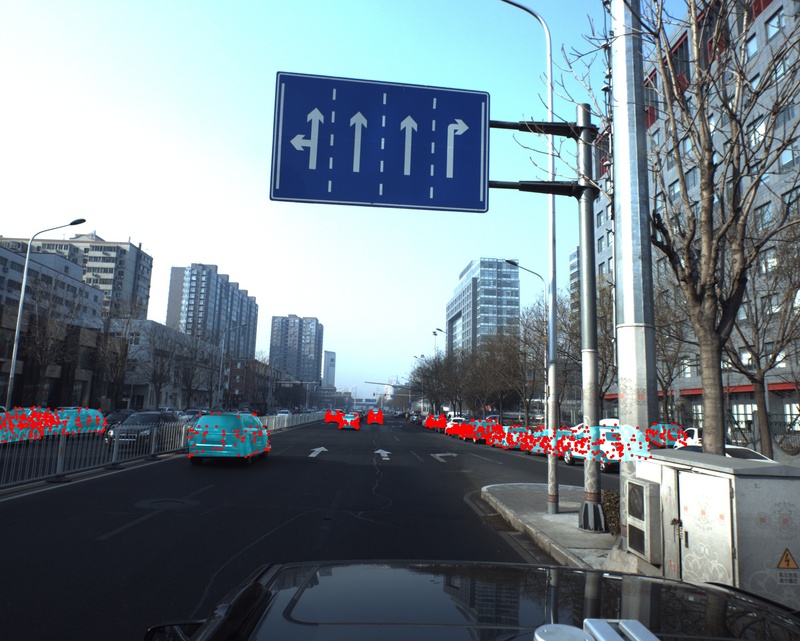} \\
	\vspace{1mm}
	\end{minipage} 
\\
&
\myhspace
	\begin{minipage}{\mpwthree}%
	\centering%
	\includegraphics[width=\columnwidth]{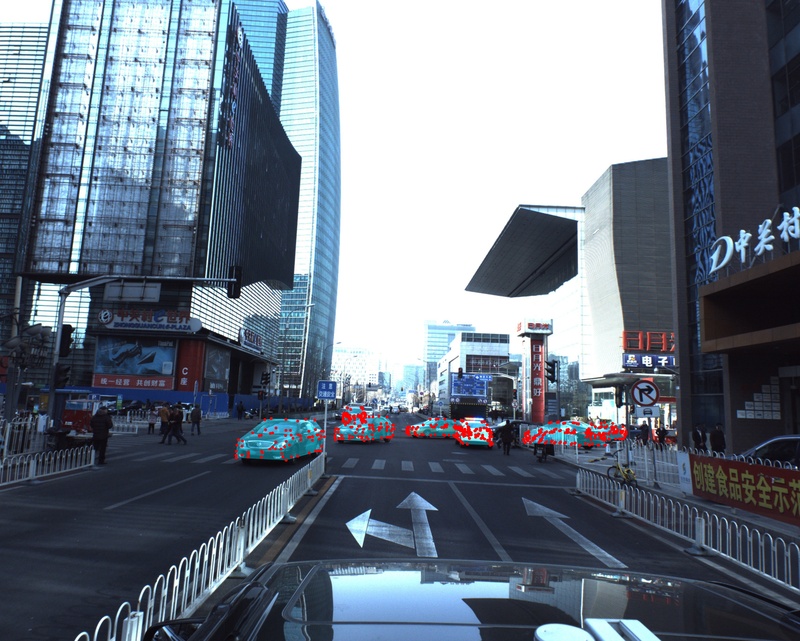} \\
	\vspace{1mm}
	\end{minipage}
& \myhspace
	\begin{minipage}{\mpwthree}%
	\centering%
	\includegraphics[width=\columnwidth]{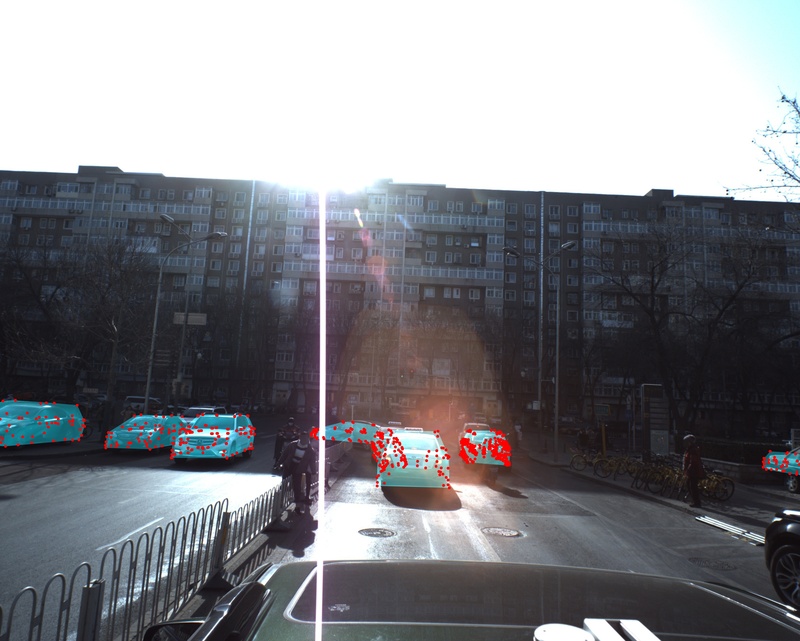} \\
	\vspace{1mm}
	\end{minipage}
& \myhspace
	\begin{minipage}{\mpwthree}%
	\centering%
	\includegraphics[width=\columnwidth]{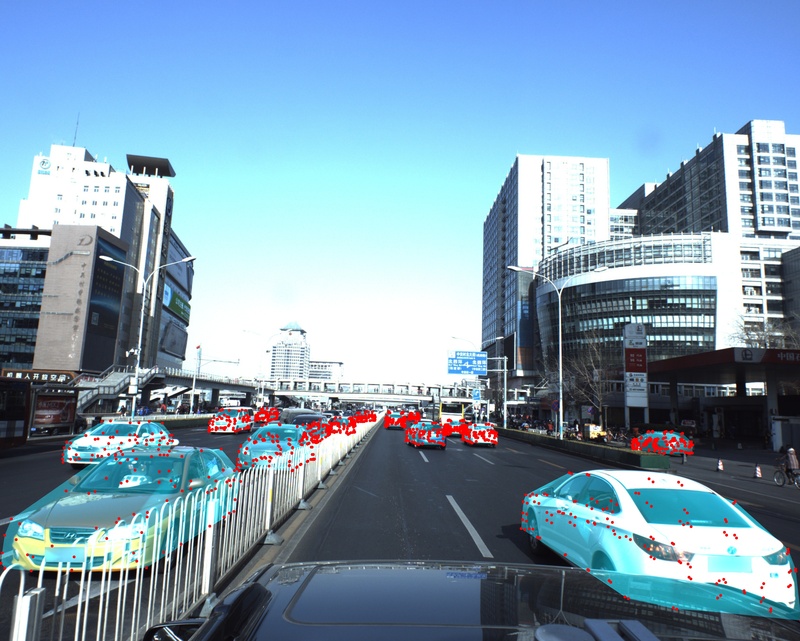} \\
	\vspace{1mm}
	\end{minipage} 
\\
\hline
\vspace{-2mm}
\\
	\myhspace \myhspace %
	\rotatebox{90}{\hspace{-7mm} \red{failures} } 
	&
\myhspace
	\begin{minipage}{\mpwthree}%
	\centering%
	\includegraphics[width=\columnwidth]{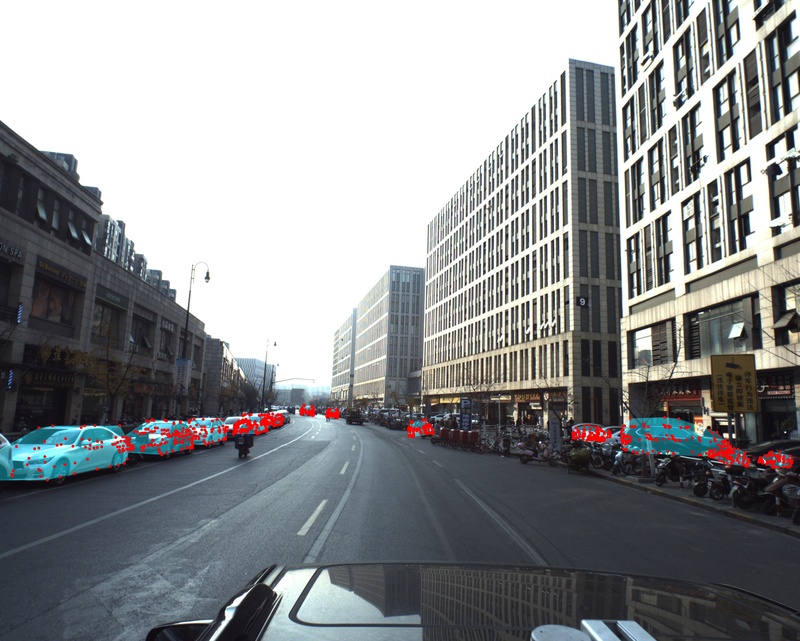} \\
	\vspace{1mm}
	\end{minipage}
& \myhspace
	\begin{minipage}{\mpwthree}%
	\centering%
	\includegraphics[width=\columnwidth]{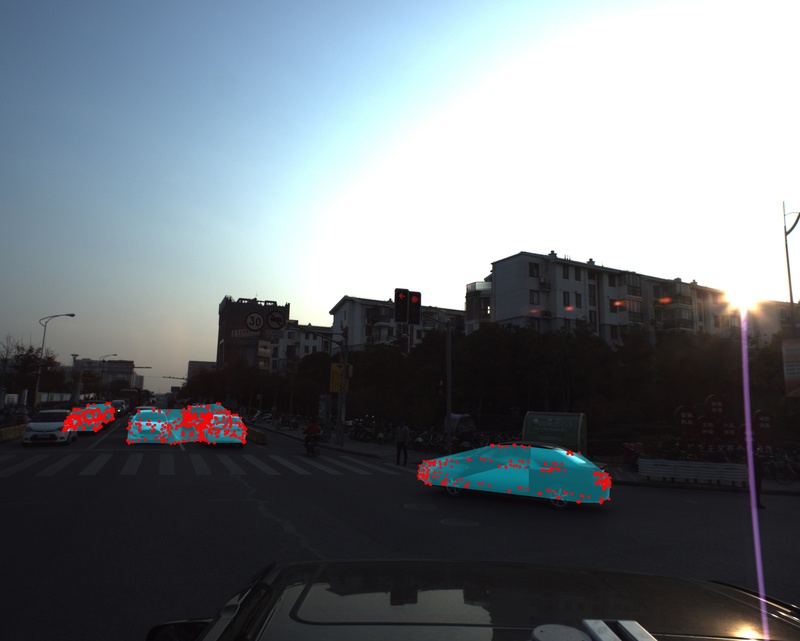} \\
	\vspace{1mm}
	\end{minipage}
& \myhspace
	\begin{minipage}{\mpwthree}%
	\centering%
	\includegraphics[width=\columnwidth]{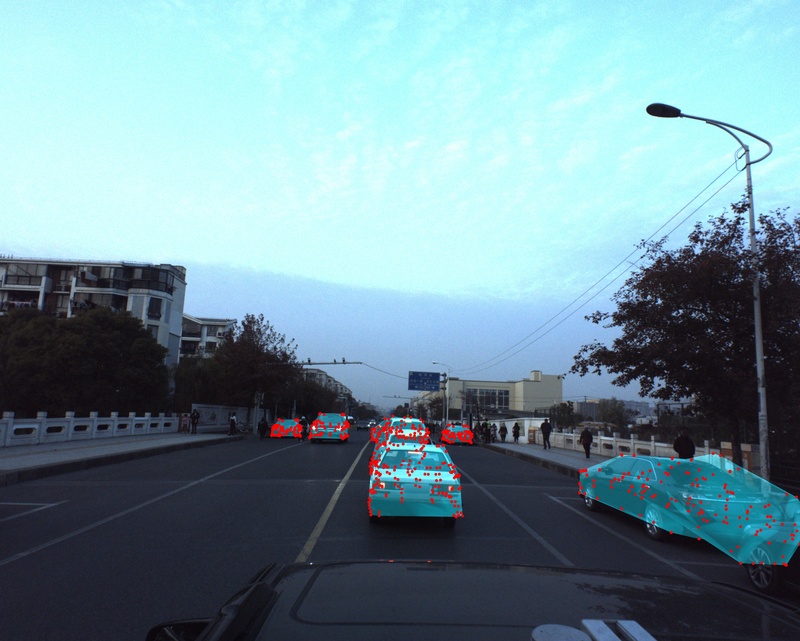} \\
	\vspace{1mm}
	\end{minipage} 
	\end{tabular}
	\end{minipage}
	\vspace{-2mm} 
	\caption{Qualitative results for \PACErobustTwo: overlay of estimated vehicle pose and shape on the images from the \apollo dataset. 
	The images are manually selected out of the validation set in the dataset to showcase successful vehicle
	localization (top 4 rows) as well as failure cases (last row).}
  \label{fig:app-pace2d-vis}
	\end{center}
\end{figure*}

\clearpage

\section{Using \robin to Uncover Mislabeled Ground-truth Annotations in \apollo}
\label{sec:apollo-gt-mislabel}

This section shows that \robin is also able to detect incorrect ground-truth keypoint labels in the  \apollo dataset.
In particular, we tested \robin on the ground-truth keypoints provided by \apollo for training/validation/testing and manually inspected the points discarded as outliers by \robin.
Intuitively, if the ground-truth annotations are correct (hence consistent with the CAD models),
then all ground-truth keypoints must be included in the maximum hyperclique of the compatibility hypergraph,
 and \robin would detect zero outliers.
On the other hand, if there are keypoints not included in the maximum hyperclique, then they might be potentially mislabeled keypoints.

We ran \robin on the ground-truth annotated keypoints in the validation set, which consists of 200 images with a total of 2674 car instances.
For each instance in which \robin detected outliers (1566 counts), we then manually checked the keypoints against their semantic descriptions provided in~\cite{Song19-apollocar3d}.
We uncovered a total of 31 instances of incorrect ground-truth annotations, which is about 1.9\% of the number of instances where \robin detects outliers.\footnote{Since in this case we build the winding order dictionary via ray-tracing and due to the approximations induced by the discretization, \robin also returns false positives, where the ground-truth is correct while \robin detects outliers.}

\begin{table}[h!]
  \scriptsize
  \centering
\begin{tabular}{@{}lll@{}}
\toprule
Image Name &  \begin{tabular}{@{}l@{}}Instance \\ Index\end{tabular} & Description \\ \midrule
\texttt{171206_034625454_Camera_5} & 3 & \#42 should be \#38.  \\ \hline
\texttt{171206_034625454_Camera_5} & 11& \#65 should be \#22.  \\ \hline
\texttt{171206_035907482_Camera_5} & 3 & \#15 should be \#19.  \\ \hline
\texttt{171206_065841148_Camera_6} & 0 & \#7 should be \#50. \\ \hline
\texttt{171206_070959313_Camera_5} & 0 & \#54 should be \#8. \\ \hline
\texttt{171206_071422746_Camera_5} & 4 & \begin{tabular}{@{}l@{}}\#45 should be \#41; \\ \#38 should be \#37; \\ \#37 should be 36.\end{tabular}  \\ \hline
\texttt{171206_074637632_Camera_5} & 3 & \#34 should be \#26. \\ \hline
\texttt{171206_080929510_Camera_5} & 5 & \#49 and \#8 should be switched. \\ \hline
\texttt{171206_082047084_Camera_5} & 0 & \#61 and \#60 should be switched. \\ \hline
\texttt{180114_023107908_Camera_5} & 4 & \begin{tabular}{@{}l@{}}\#29 should be placed at the\\ right exhaust below \#31.\end{tabular} \\ \hline
\texttt{180114_025215740_Camera_5} & 5 & \#36 should be \#21. \\ \hline
\texttt{180114_025714151_Camera_5} & 2 & \begin{tabular}{@{}l@{}}\#62, \#63, \#64, \#65 should \\ be \#59, \#58, \#60, \#61. \end{tabular}\\ \hline
\texttt{180114_030620413_Camera_5} & 18& \#19 should be \#24. \\ \hline
\texttt{180116_031340200_Camera_5} & 18& \#11 should be \#9. \\ \hline
\texttt{180116_032115845_Camera_5} & 4 & \#46 should be \#33. \\ \hline
\texttt{180116_040654119_Camera_5} & 6 & \#10 and \#15 misplaced. \\ \hline
\texttt{180116_041204649_Camera_5} & 0 & \#35 should be \#22. \\ \hline
\texttt{180116_041204649_Camera_5} & 5 & \#35 should be \#22. \\ \hline
\texttt{180116_053637003_Camera_5} & 4 & \#16 should be \#41. \\ \hline
\texttt{180116_053637003_Camera_5} & 10& \#30 should be \#24. \\ \hline
\texttt{180116_053637003_Camera_5} & 11& \#9 should be \#48. \\ \hline
\texttt{180116_053945714_Camera_5} & 6 & \#46 should be \#11. \\ \hline
\texttt{180116_064533064_Camera_5} & 17 & \#9 should be \#12. \\ \hline
\texttt{180116_065033600_Camera_5} & 11 & \#30 should be \#33. \\ \hline
\texttt{180118_070001729_Camera_5} & 11 & Keypoints labeled are on two cars. \\ \hline
\texttt{180118_070451170_Camera_5} & 5 & \#8 and \#49 should be switched. \\ \hline
\texttt{180118_070451170_Camera_5} & 13& \#28 should be \#33. \\ \hline
\texttt{180118_071218867_Camera_5} & 5 & \#11 should be \#29. \\ \hline
\texttt{180118_071316772_Camera_5} & 4 & \#9 is mislabeled on another car. \\ \hline
\texttt{180118_072006080_Camera_5} & 2 & \#42 should be \#10. \\ \hline
\texttt{180118_072006080_Camera_5} & 6 & \#11 should be \#29. \\
\bottomrule
\end{tabular}
\caption{Details on all 31 mislabeled instances where \robin returns non-zero outliers.} \label{tbl:app-apollo-gt-mislabel}
\end{table}

\begin{table}[h!]
  \scriptsize
  \centering
\begin{tabular}{@{}lll@{}}
\toprule
Image Name &  \begin{tabular}{@{}l@{}}Instance \\ Index\end{tabular} & Description \\ \midrule
\texttt{171206_072104589_Camera_5} & 8 & \#3 misplaced.  \\ \hline
\texttt{171206_074637632_Camera_5} & 5 & \#25 should be \#35.  \\ \hline
\texttt{171206_080255599_Camera_5} & 1 & \#42 misplaced.  \\ \hline
\texttt{171206_080827230_Camera_5} & 3 & \#31 and \#34 placed on another car. \\ \hline
\texttt{180114_031156951_Camera_5} & 3 & \#9 should be \#11. \\ \hline
\texttt{180116_061243174_Camera_5} & 4 & \#1 should be \#3. \\ 
\bottomrule
\end{tabular}
\caption{Details on 6 mislabeled instances where \robin returns zero outliers.} \label{tbl:app-apollo-gt-mislabel-not-robin}
\end{table}

Table~\ref{tbl:app-apollo-gt-mislabel} details them with the specific frames in which they are found.
A common failure mode is the misplacement of keypoints between the two sides of the car.
For example, placing \#35 (defined as the top right corner of right rear car light in~\cite{Song19-apollocar3d}) instead of \#22 (defined as the top left corner of left rear car light).
Fig.~\ref{fig:app-apolloscape-wrong-gt-another-car} and~\ref{fig:app-apolloscape-wrong-gt-collage} show cropped images of the 31 mislabeled instances, with keypoints annotated.
In particular, Fig.~\ref{fig:app-apolloscape-wrong-gt-another-car} shows an egregious case, with  keypoint \#9 being mislabeled as belonging to a completely different car on the other side of the frame (in this case, \robin successfully detects \#9 as the sole outlier).
We want to point out that due to the inaccuracies in the winding order dictionary, outliers determined by \robin might contain false positives.
For example, in image \texttt{171206_071422746_Camera_5} (see Fig.~\ref{fig:app-apolloscape-wrong-gt-collage}), 6 keypoints are flagged by \robin~as outliers, yet only 3 of them are actual outliers.
Nevertheless, \robin still provides a viable way to validate the quality of 2D keypoints annotations based on 3D models.

A natural question to ask is how many mislabled keypoints are there in instances where \robin returns zero outliers (see Fig.~\ref{fig:app-apolloscape-wrong-gt-collage-not-detected} for some examples).
After all, it is possible for outliers to exist in degenerate configurations that pass \robin's compatibility checks consistently,
making their ways into the maximum hyperclique.
In addition, the dictionary of feasible winding orders obtained through ray-tracing is inherently only an approximation,
due to the use of discretization.
There are a total of 993 instances where \robin returns zero outliers.
We manually checked all of them, and 6 of them contains mislabled keypoints, which is about 0.6\%.
This is 3 times lower than that with non-zero \robin outliers,
suggesting that \robin is much more effective than a baseline method that randomly samples instances. 

We want to stress that we are not trying to discredit the work behind \apollo.
We sympathize with the authors and workers behind~\cite{Song19-apollocar3d}, as labeling 66 keypoints for all cars in the dataset is a herculean task.
Our intent is to demonstrate that \robin can effectively detects outliers, and that it may also be used to validate human labels.
In addition, exploring whether \robin can be used to enable self-supervision for detecting semantic keypoints will be an interesting topic for future research.
For consistency with prior works, when calculating metrics in Section~\ref{sec:experiments}, we used the original annotations, including mislabeled ground-truth keypoints.

\clearpage

\begin{figure*}[th!]
    \centering%
    \includegraphics[width=\textwidth]{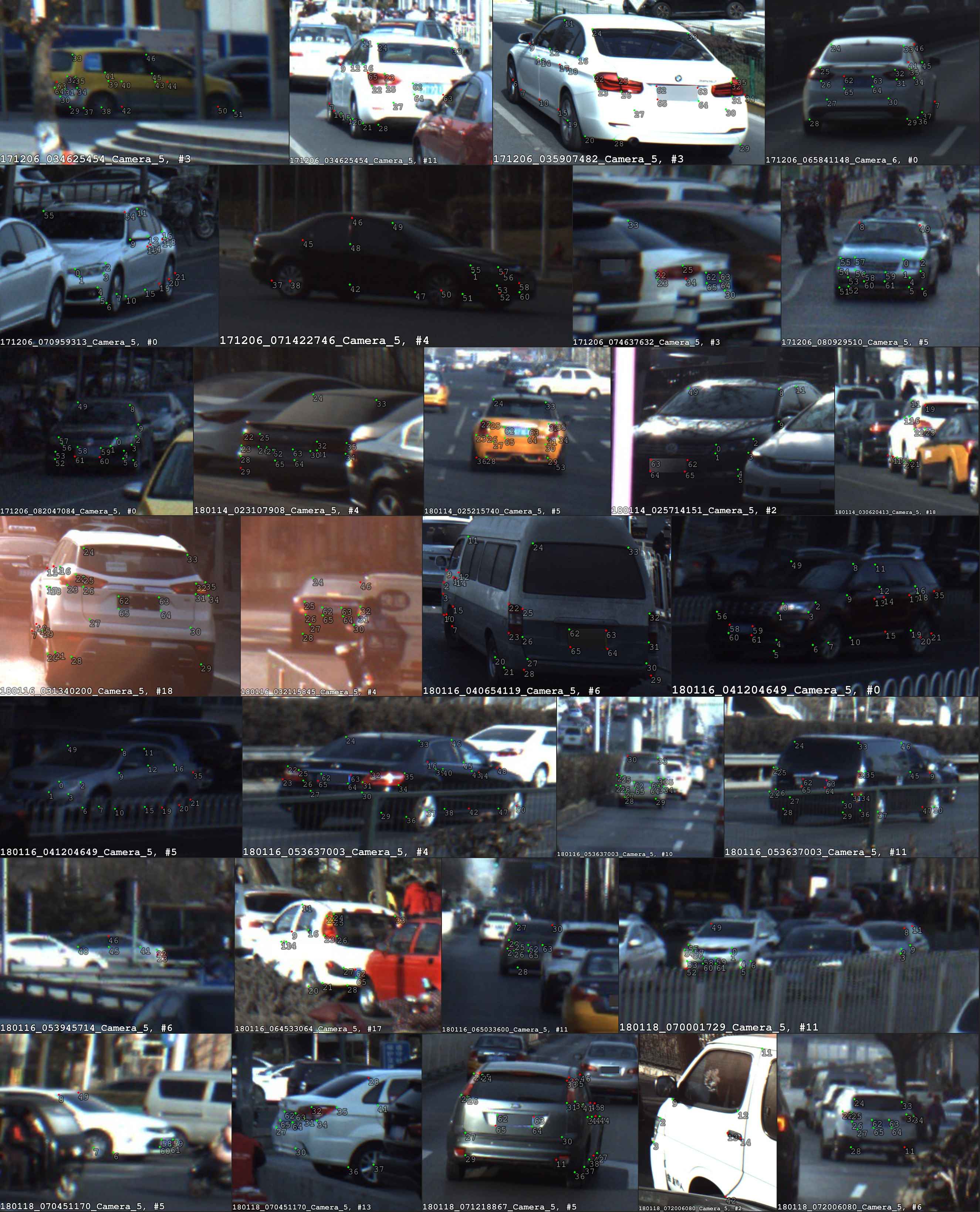}
  \caption{
      Mislabeled ground-truth keypoint annotations in \apollo detected by \robin. Green and red dots represent inliers and outliers determined by \robin, respectively.
      For a detailed description of the incorrect keypoint annotations for each image, we refer the reader to Table~\ref{tbl:app-apollo-gt-mislabel}.
    }
\label{fig:app-apolloscape-wrong-gt-collage}
\end{figure*}

\begin{figure*}[t!]
    \centering%
    \includegraphics[width=\textwidth]{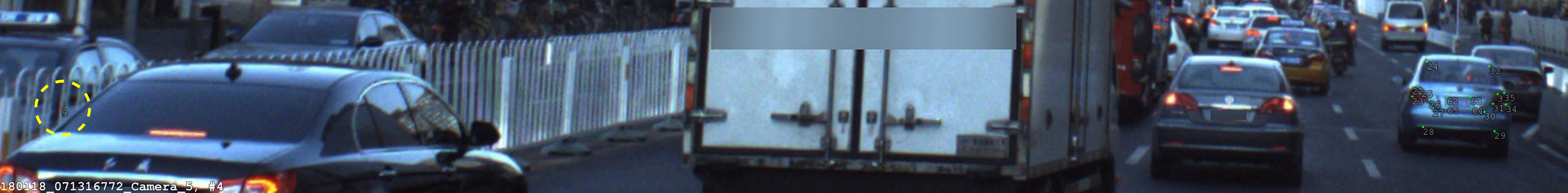}
    \caption{Keypoint \#9  (circled in yellow) is mislabeled in the 4th labeled instance in image 180118_071316772_Camera_5.
      Instead of being on the same car as the rest of the keypoints, \#9 is misplaced as belonging to  another car on the far left side.
    }
\label{fig:app-apolloscape-wrong-gt-another-car}
\end{figure*}

\begin{figure*}[th!]
    \centering%
    \includegraphics[width=0.5\textwidth]{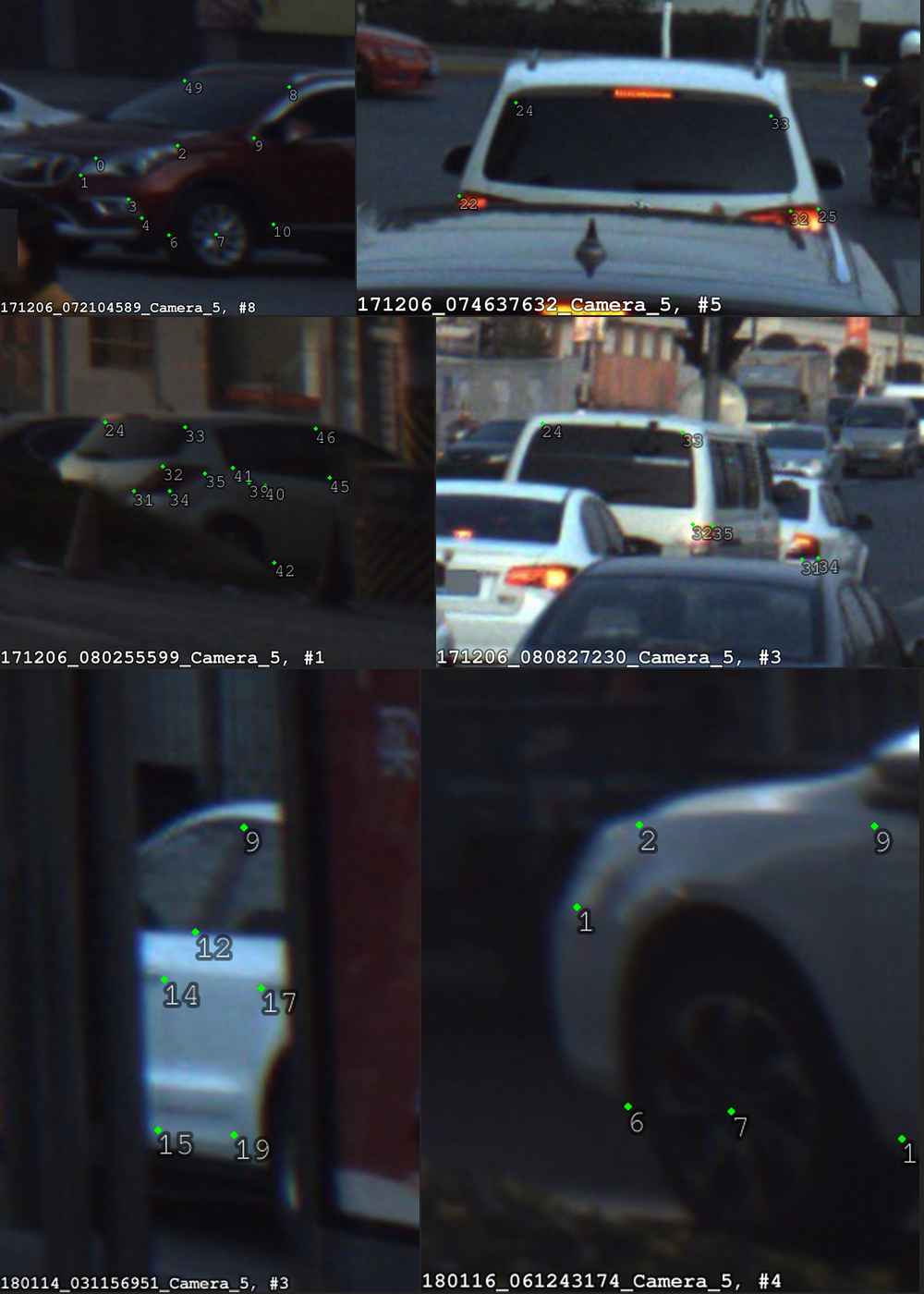}
  \caption{
      Mislabeled ground-truth keypoint annotations in \apollo that were not detected by \robin. Green dots represent inliers determined by \robin.
      For details, we refer the reader to Table~\ref{tbl:app-apollo-gt-mislabel-not-robin}.
    }
\label{fig:app-apolloscape-wrong-gt-collage-not-detected}
\end{figure*}

}{
}

\end{document}